\newtheorem{proposition}{Proposition}
\newtheorem{theorem}{Theorem}
\theoremstyle{definition}
\newtheorem{definition}{Definition}
\newtheorem{lemma}{Lemma}
\newtheorem*{assumption*}{\assumptionnumber}
\providecommand{\assumptionnumber}{}
\newenvironment{assumption}[1]
 {%
  \renewcommand{\assumptionnumber}{Assumption #1}%
  \begin{assumption*}%
  \protected@edef\@currentlabel{#1}%
 }
 {%
  \end{assumption*}
 }
\DeclarePairedDelimiter{\norm}{\lVert}{\rVert}
\DeclarePairedDelimiter{\abs}{\lvert}{\rvert}%
\DeclarePairedDelimiterX{\inp}[2]{\langle}{\rangle}{#1, #2}
\let\oldabs\abs
\def\abs{\@ifstar{\oldabs}{\oldabs*}}
\begin{document}

\title{Rethinking Graph Auto-Encoder Models\\ for Attributed Graph Clustering}

\author{Nairouz~Mrabah,~
        Mohamed~Bouguessa,~
        Mohamed~Fawzi~Touati,~
        Riadh~Ksantini
\IEEEcompsocitemizethanks{\IEEEcompsocthanksitem N. Mrabah, M. Bouguessa, M. F. Touati are with the Department
of Computer Science, University of Quebec at Montreal, Montreal, QC, Canada.

\IEEEcompsocthanksitem R. Ksantini is with the Department of Computer Science, College of IT, University of Bahrain, Kingdom of Bahrain.\protect\\ 
}
}

\markboth{Manuscript under review}%
{Shell \MakeLowercase{\textit{et al.}}: Bare Demo of IEEEtran.cls for Computer Society Journals}

\IEEEtitleabstractindextext{
\begin{abstract}
Most recent graph clustering methods have resorted to Graph Auto-Encoders (GAEs) to perform joint clustering and embedding learning. However, two critical issues have been overlooked. First, the accumulative error, inflicted by learning with noisy clustering assignments, degrades the effectiveness and robustness of the clustering model. This problem is called Feature Randomness. Second, reconstructing the adjacency matrix sets the model to learn irrelevant similarities for the clustering task. This problem is called Feature Drift. Furthermore, the theoretical relation between the aforementioned problems has not yet been investigated. We study these issues from two aspects: (1) there is a trade-off between Feature Randomness and Feature Drift when clustering and reconstruction are performed at the same level, and (2) the problem of Feature Drift is more pronounced for GAE models, compared with vanilla auto-encoder models, due to the graph convolutional operation and the graph decoding design. Motivated by these findings, we reformulate the GAE-based clustering methodology. Our solution is two-fold. First, we propose a sampling operator $\Xi$ that triggers a protection mechanism against the noisy clustering assignments. Second, we propose an operator $\Upsilon$ that triggers a correction mechanism against Feature Drift by gradually transforming the reconstructed graph into a clustering-oriented one. As principal advantages, our solution grants a considerable improvement in clustering effectiveness and can be easily tailored to existing GAE models.  
\end{abstract}

\begin{IEEEkeywords}
Unsupervised Learning, Graph Clustering, Graph Auto-Encoders.
\end{IEEEkeywords}}

\maketitle

\IEEEdisplaynontitleabstractindextext

\IEEEpeerreviewmaketitle

\IEEEraisesectionheading{\section{Introduction}\label{S:1}}

Most recent attributed graph clustering methods leverage graph embedding \cite{paper91}. This strategy consists of projecting the graph structure and the node content in a low-dimensional compact space to harness the complementary modalities of attributed graphs. Graph embedding usually achieves exploitable representations for the clustering task. A significant part of the graph embedding literature revolves around edge modeling \cite{paper30}, matrix factorization \cite{paper115}, and random walks \cite{paper12}. Yet, these methods fall short of the expressive power of deep learning.

The last years witnessed the emergence of a promising graph embedding strategy, referred to as Graph Neural Networks (GNNs) \cite{paper92, paper114}. GNNs extend the deep learning framework to graph-structured data. Among the prominent categories of GNNs, we find Graph Convolutional Networks (GCNs) \cite{paper9}, which generalize the convolution operation to graph data. Specifically, the intuition of the graph convolutional operation is to exploit the graph structure in smoothing the content features of each node over its neighborhood. Motivated by GCNs, Graph Auto-Encoders (GAEs) \cite{paper11} and Variational Graph Auto-Encoders (VGAEs) \cite{paper11} have shown notable achievements in several attributed graph clustering applications \cite{paper25, paper26, paper27}. Typical GAE-based clustering methods project the input data in a low dimensional space using graph convolutional layers and then reconstruct the adjacency matrix. Minimizing the reconstruction objective for the clustering task rules out the situation where the encoder is \emph{only} trained based on noisy clustering assignments. The accumulated error makes the trained model capture non-representative features \cite{paper17}, which in turn corrupt the latent structure of the data. In our analysis, we adopt the terminology of Feature Randomness (FR) from our previous work \cite{paper14} for investigating this problem in the context of GAEs.

As mentioned before, adding the decoder component is key to optimizing the reconstruction objective, which is a handy way to lower FR's effect. However, the nature of the reconstructed graph is generally problematic to the clustering task. First, real-world graphs carry noisy and clustering-irrelevant links that can mislead the model into grouping together nodes from different clusters. This aspect can cause an under-segmentation problem. Second, it is also common for real-world graphs to come in a highly sparse structure. As a result, poor connectivity within the same cluster gives rise to an over-segmentation problem. Besides, the controversial relationship between clustering and reconstruction makes it hard to identify a static balance between them, during the training process. This problem, which is referred to as Feature Drift (FD) in our previous work \cite{paper14}, remains unexplored for GNNs.

To address the aforementioned issues, we reformulate the GAE-based clustering methodology from an FR and FD perspective. We start by organizing the existing approaches into two groups, and we provide abstract formulations for each one. Next, we leverage the abstract description to examine the limitations of existing methods. Then, we provide formal characterizations to problems associated with the analyzed formulations on the authority of recent insights. After that, we propose a new conceptual design, which can mitigate the impact of FR and FD.

To put our conceptual design into action, we propose two operators, which can be easily integrated into GAE-based clustering methods. Possible options for addressing FR are: (1) operationalizing a correction mechanism that can reverse the randomness effect, (2) supplying the model with a protection mechanism that can exclude the sources of randomness as much as possible. Recently, the authors of \cite{paper64, paper65} have observed that pretraining a network with random labels then fine-tuning with clean ones leads to considerably lower test accuracy, compared with a network trained with clean labels from scratch. From this standpoint, we advocate accounting for FR using a protection strategy. Specifically, we design a sampling operator, prioritizing correctness by considering the difference between the first high-confidence and second high-confidence clustering assignment scores. 

Additionally, we conceive a second operator that can control the effect of FD. Our design capitalizes on converting a general-purpose objective function into a task-specific one. Unlike previous GAE-based approaches, which optimize static objective functions during the whole clustering process, we \emph{gradually} eliminate the graph reconstruction cost in favor of a clustering-oriented graph construction objective. Furthermore, our second operator contributes to preventing the over-segmentation and under-segmentation problems. More specifically, we gradually update the self-supervision graph by adding clustering-friendly edges and dropping clustering-irrelevant links. 

The algorithmic intuitions behind our conceptual design and operators are supported by theoretical and empirical results. Theoretically, we demonstrate the existence of a trade-off between FR and FD for GAE-based clustering. Under mild assumptions, we prove that the graph convolutional operation and performing clustering and reconstruction at the same level aggravate the FD problem. Experimentally, we show that our operators can significantly improve the clustering effectiveness of existing GAE models without causing run-time overheads. Moreover, we show that our operators can mitigate the impact of FR and FD, and we provide empirical evidence that the improvement is imputed to the capacity of our operators in handling the trade-off between FR and FD. 
The significance of this work can be summarized as follows:

\begin{itemize}
  \item Analysis: We organize GAE-based clustering approaches into two groups, and we provide abstract formulations for each one. Accordingly, we analyze and formalize the problems associated with the examined formulations. Then, we present a new conceptual design that can favorably control the trade-off between FR and FD. From a theoretical standpoint, we prove the existence of this trade-off, and we study two important aspects that differentiate GAE models from vanilla auto-encoder methods. Specifically, we investigate the impact of performing clustering and reconstruction at different layers on FR and FD. Moreover, we inspect the influence of the graph convolutional operation on FD.
  \item Methods: First, we propose a sampling operator $\Xi$ that triggers a protection mechanism against FR. Second, we propose an operator $\Upsilon$ that triggers a correction mechanism against FD by gradually transforming the reconstructed graph into a clustering-oriented one.
  \item Experiments: We conduct extensive experiments to investigate the behavior and profit from using our operators. Our empirical results provide strong evidence that the proposed operators improve the clustering performance of GAE models in effectiveness, by mitigating the effect of FR and FD.
\end{itemize}

\section{A new vision for GAE-based clustering}

This section advocates a new vision for building GAE-based clustering models beyond the classical perception of designing better clustering objectives. We begin by describing existing GAE methods, which we organize into two groups. While the first group contains models that separate clustering from embedding learning, the second group only considers the methods that perform joint clustering and embedding learning. For each group, we devise abstract formulations, and we study their associated limitations. Finally, we propose a new conceptual design to mitigate the examined problems. We consider our work to be the first initiative to analyze GAE-based clustering models from FR and FD perspectives.

We consider a non-directed attributed graph $\mathcal{G} = (\mathcal{V}, \mathcal{E}, X)$, where $\mathcal{V} = \left \{v_{1}, v_{2}, ..., v_{N} \right \}$ is a set of nodes with $\left | \mathcal{V} \right | = N$, and $N$ is the number of nodes. $\mathcal{E} = \left \{ e_{ij} \right \}$ represents the set of edges. The topological structure of the graph $\mathcal{G}$ is denoted by the adjacency matrix $A = \left ( a_{ij} \right ) \in \mathbb{R}^{N \times N}$, where $a_{ij} = 1$ if $(v_{i}, v_{j}) \in \mathcal{E}$ and $a_{ij} = 0$ otherwise. $X = \left \{ x_{1}, ...,  x_{N} \right \}$ represents the matrix of features, where $x_{i} \in \mathbb{R}^{J}$ is the feature vector associated with the node $v_{i}$, and $J$ is the dimensionality of the input space. We consider that the graph $\mathcal{G}$ can be clustered into $K$ clusters $\left \{ C_{k}^{clus} \right \}_{k=1}^{K}$.

Our study investigates the auto-encoding architecture for attributed graph clustering. Consequently, two functions should be specified. The first one is a non-linear encoder, which takes as inputs $X$ and $A$, and outputs low-dimensional latent representations denoted by the matrix $Z \in \mathbb{R}^{N \times d}$. $d$ denotes the dimension of the latent space. $\theta$ stands for the set of learnable weights. The second function is a decoder, which outputs a matrix $\hat{A} = \text{sigmoid}(ZZ^{T})$; $\hat{A}$ measures the pairwise similarities between the latent codes.

The idea of self-supervision involves solving a \textit{pretext task} that requires a high-level understanding of the data. Specifically, the reconstruction loss is among the standard self-supervision methods for pretraining GAE models. It is generally expressed as a binary cross-entropy $L_{bce}(\hat{A}(Z(\theta)), A^{self})$, where $A^{self}$ is a self-supervision graph, and it is set equal to $A$, and the order in $\hat{A}(Z(\theta))$ describes dependencies between $\hat{A}$, $Z$, and $\theta$. Let $\mathbb{A}_{C}$ be a clustering algorithm and $P\in \mathbb{R}^{N \times K}$ be the clustering assignment matrix obtained by applying $\mathbb{A}_{C}$ to the embedded representations. $L_{clus}(P(Z(\theta)))$ is the clustering loss associated with algorithm $\mathbb{A}_{C}$. Without a pretraining stage, the clustering algorithm would be applied to random latent representations. 

As mentioned at the beginning of this section, we organize existing approaches into two groups. For the first group, clustering is performed separately from embedding learning. Thus, we express the formulation associated with models from the first group as:

\begin{equation} 
 \begin{aligned}
    P^{*} = \text{arg}\min_{_{P}}L_{clus}(P(Z(\theta))),
  \end{aligned}
\label{eq:first_category}
\end{equation}

\noindent where $\theta$ is initialized by the pretraining task, and $P^{*}$ is a solution to Equation (\ref{eq:first_category}). Examples from the first group include MGAE (Marginalized Graph Auto-Encoder) \cite{paper80}, which improves the clustering performance by increasing robustness to small input disturbances. From the same group, ARGAE (Adversarially Regularized Graph Auto-Encoder) \cite{paper25} leverages an adversarial regularization technique that enforces the embeddings to match a prior distribution using a discriminator network. Nevertheless, methods from the first group, such as MGAE and ARGAE, lack the capacity to learn clustering-oriented features. 

In another perspective, Ansuini et al. \cite{paper47} have shown that the embedded representations of a deep network lie on \emph{highly} curved manifolds. This aspect implies that the Euclidean geometry is not suitable to assess the embedded similarities after the pretraining phase. To alleviate this problem, the second group of GAE-based clustering methods achieves \emph{joint} clustering and embedded learning. In this regard, we reformulate Equation (\ref{eq:first_category}) in a way that enforces the embedded representations to follow the clustering assumptions based on euclidean geometry. To ensure this quality, the formulation of the second group is articulated as:

\begin{equation} 
  \begin{aligned}
   \theta^{*}, \, P^{*} = \text{arg}\min_{_{\theta , P}}L_{clus}(P(Z(\theta))),
  \end{aligned}
\label{eq:second_category}
\end{equation}

\noindent where $\theta ^{*}$ and $P^{*}$ are solutions to Equation (\ref{eq:second_category}). Typical clustering losses aim at decreasing the intra-cluster variances and increasing the inter-cluster variances. By optimizing $\theta$, the embedded points move in a way that establishes a clustering-oriented distribution. Therefore, the choice of the clustering cost becomes less important. However, the formulation of Equation (\ref{eq:second_category}) is still problematic because the embedded points can move in a way that violates their semantic categories, while still decreasing the embedded clustering penalty. 

Let $Q$ be the matrix of true hard-clustering assignments. A supervised deep clustering problem can be described by Equation (\ref{eq:second_category_sup}). Compared with Equation (\ref{eq:second_category}), the supervised objective pushes the latent codes to be clustering-friendly according to the true clustering assignment matrix $Q$. Let $\mathbb{A}_{H}$ be the Hungarian algorithm \cite{paper77}, which finds the best linear mapping from a true clustering assignment matrix $Q$ to a predicted clustering assignment matrix $P$. The algorithm $\mathbb{A}_{H}$ outputs a matrix $Q' = \mathbb{A}_{H}(Q, P)$. By analogy with pseudo-supervision, $y(P) = \text{arg}\max_{_{j \in  \left \{ 0,...,K \right \}}}(P_{:,j})$ can be considered as pseudo-labels for solving Equation (\ref{eq:second_category}), and $y(Q) = \text{arg}\max_{_{j \in  \left \{ 0,...,K \right \}}}(Q_{:,j})$ can be considered as ground-truth labels for solving Equation (\ref{eq:second_category_sup}). The ultimate goal of deep clustering is to formulate an optimization problem, where a solution for the clustering assignment matrix is $P^{*}$, such that $y(P^{*})=y(Q')$.

\begin{equation} 
  \begin{aligned}
   \theta^{*} = \text{arg}\min_{_{\theta}}L_{clus}(Q(Z(\theta))).
  \end{aligned}
\label{eq:second_category_sup}
\end{equation}

Under the extreme condition of entirely random labels, Zhang et al. \cite{paper79} have shown that an over-parameterized neural network can perfectly fit the training set. This finding inspired the scientific community to investigate the difference between ``training with true labels" and ``training with random labels". Keskar et al. \cite{paper50} have proposed a metric for measuring the sharpness of a minimizer to assess generalization. In \cite{paper47, paper48}, the authors have investigated the intrinsic dimensionality of the embedded representations to understand the impact of random labels. In our previous work \cite{paper14}, we have proposed to measure the effect of randomness by computing the cosine similarity between the gradient of the supervised loss and the gradient of the pseudo-supervised loss. 
However, the impact of random labels on Graph Neural Networks for graph datasets remains unexplored. As previously explained, embedded graph clustering can be considered a pseudo-supervised task. Thus, we can exploit our previously proposed metric \cite{paper14} to assess the impact of FR as described by:

\begin{equation} 
    \Lambda_{FR} = cos \bigg( \frac{\partial L_{clus}(P(Z(\theta)))}{\partial \theta}, \frac{\partial L_{clus}(Q'(Z(\theta)))}{\partial \theta} \bigg).\\
    \label{eq:delta_FR}
\end{equation}

$\Lambda_{FR}$ lies within the range $\left [ -1, 1 \right ]$. Higher values are associated with less FR. Possible strategies for countering random projections are: (1) performing pseudo-supervision based on self-paced training, and (2) pretraining by self-supervision (pretext task), and finetuning by combining pseudo-supervision (main task) and self-supervision (pretext task). An example of the first strategy is AGE \cite{paper82}, which constructs a pseudo-supervised graph by linking high similarity pairs and disconnecting the low similarity ones. However, two limitations are associated with the first strategy. First, it does not involve pretraining using a pretext task, and the pseudo-labels are initially constructed from the input data. Hence, the first strategy is limited to datasets, where the node features have low-semantic similarities that can be extracted without neural networks. Second, the first strategy does not combine pseudo-supervision and self-supervision during the clustering phase. You et al. \cite{paper113} have shown that combining the main task with a self-supervised pretext brings more generalizability and robustness to GCNs. For the second strategy, adjacency reconstruction constitutes the standard self-supervised technique for GAE models. In this work, we focus on the relation between pseudo-supervision (i.e., main task: clustering) and self-supervision (i.e., pretext task: reconstruction), which is governed by FR and FD. Accordingly, we reformulate Equation (\ref{eq:second_category_sup}) to take into consideration the reconstruction loss:

\begin{equation} 
  \begin{aligned}
   \theta^{*}, \, P^{*} = \text{arg}\min_{_{\theta , P}} L_{clus}(P(Z(\theta))) + \gamma L_{bce}(\hat{A}(Z(\theta)), \,A),
  \end{aligned}
\label{eq:third_category}
\end{equation}

where $\theta^{*}$ and $P^{*}$ are solutions to Equation (\ref{eq:third_category}). $\gamma$ is a balancing hyper-parameter that controls the trade-off between clustering and reconstruction. Examples from this category include DAEGC (Deep Attentional Embedded Graph Clustering) \cite{paper27}, which employs an attention mechanism to adjust the influence of the neighboring nodes. Another example is GMM-VGAE (Variational Graph Auto-Encoder with Gaussian Mixture Models) \cite{paper26}, which harnesses Gaussian Mixture Models to capture variances between the different clusters. However, the strong competition between embedded clustering and reconstruction causes FD. On the one hand, clustering aims at decreasing intra-cluster variances and increasing inter-cluster variances. On the other hand, the reconstruction objective pushes the latent representations to maintain all variances (i.e., intra-cluster and inter-cluster variances). The features learned by embedded clustering can be destroyed by the reconstruction cost, which captures clustering-irrelevant similarities. 

FD is an artificially-created problem to counter random projections. Thus, it can be completely solved by excluding the self-supervised loss from the optimized objective. However, abrupt elimination of the reconstruction loss aggravates FR. Among two existing methods for alleviating FD is ADEC (Adversarial Deep Embedded Clustering) \cite{paper14}. Instead of minimizing vanilla reconstruction, ADEC optimizes an adversarially constrained reconstruction. More specifically, the vanilla reconstruction is circumvented by blocking the direct connection between the encoder and the decoder, using an additional discriminator network. Nevertheless, ADEC inherits the stability limitations of adversarial training such as mode collapse \cite{paper87}, failure to converge \cite{paper89}, and memorization. In another work, DynAE (Dynamic Auto-Encoder) \cite{paper13} leverages the decoder to gradually construct images of the latent centers, instead of reconstructing the input images. However, DynAE was designed to generate euclidean representations (i.e., images corresponding to the embedded centroids) and can not generate graph-structured data. Added to that, DynAE is considered an improved version of DeepCluster \cite{paper88}. Both models (i.e., DynAE and DeepCluster) perform hard clustering using K-means and do not consider covariances of the embedded clusters. Enforcing pseudo-labels obtained by a hard clustering algorithm may destroy relevant similarities and hence give rise to FR. Last but not least, none of these methods can take both topological structure and content information as input signals.

To overcome the limitations of previous methods, we propose a new conceptual design. Our solution fixes the deficiency of existing GAE models from the perspective of FR and FD. In Figure \ref{fig:fourth_category}, we illustrate the generic framework of this conceptual design. More precisely, our formulation depends on two operators and does not require adversarial training. First, we develop a sampling operator $\Xi$ to gradually spot the nodes with reliable clustering assignments, denoted by the set $\Omega$. We exploit the reliable nodes for optimizing the embedded clustering objective. Second, we propose a graph-specific operator $\Upsilon$ to gradually transform the general-purpose self-supervisory signal $A$ into a clustering-oriented self-supervisory signal $A_{clus}^{self}$. The formulation of our conceptual design is expressed by:

\begin{equation} 
\begin{split}
\theta^{*}, \, P^{*} &= \text{arg}\min_{_{\theta, \, P}} L_{clus}(P(\Xi(Z(\theta)))) \\
& + \; \gamma L_{bce}(\hat{A}(Z(\theta)), \, \Upsilon(A, P(\Xi(Z(\theta))), \Omega)).
\end{split}
\label{eq:fourth_category}
\end{equation}

\begin{figure}
  \centering
  \includegraphics[width=0.55\textwidth, height=4.7cm]{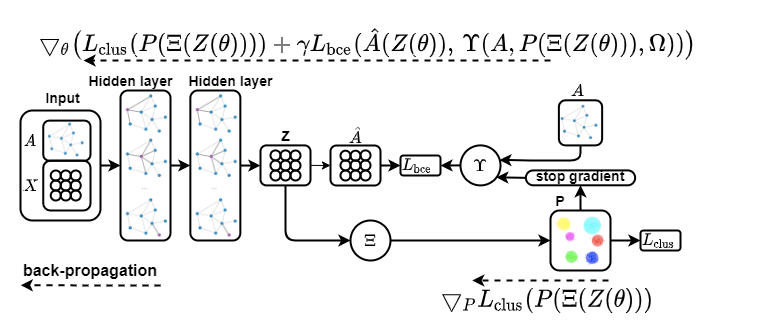}
  \caption{The proposed conceptual design for GAE-based clustering.}
  \label{fig:fourth_category}
  \vspace{-2.5mm}
\end{figure}

To estimate the impact of FD, \cite{paper14} measures the cosine similarity between the gradient of the self-supervised loss and the gradient of the clustering loss. This metric was only tested when clustering and reconstruction losses are computed based on Euclidean distances. However, in the graph case, we found that this metric does not reveal any interpretive pattern, probably because the structure of the output signal is non-euclidean. Thus, we propose a new metric that measures the cosine similarity between two gradients of the same loss but with different configurations, namely, the gradient of the self-supervised loss $L_{bce}(\hat{A}(Z(\theta)), \, \Upsilon(A, P(\Xi(Z(\theta))), \Omega))$, and the gradient of its supervised version $L_{bce}(\hat{A}(Z(\theta)), \, \Upsilon(A, Q'(Z(\theta)), \mathcal{V}))$. Our new metric is described by:

\begin{equation} 
    \begin{split}
    \Lambda_{FD} = cos \bigg( \frac{\partial L_{bce}(\hat{A}(Z(\theta)), \, \Upsilon(A, P(\Xi(Z(\theta))), \Omega))}{\partial \theta}, \\ \frac{\partial L_{bce}(\hat{A}(Z(\theta)), \, \Upsilon(A, Q'(Z(\theta)), \mathcal{V}))}{\partial \theta} \bigg ).
    \end{split}
   \label{eq:delta_FD}
\end{equation}

$\Lambda_{FD}$ lies in the range $\left [ -1, 1 \right ]$. Higher values are associated with less FD. $\Upsilon(A, P(\Xi(Z(\theta))), \Omega)$ makes a single-step modification to the input graph $A$. It only affects the sub-graph defined by the reliable nodes $\Omega$. As opposed to that, $\Upsilon(A, Q'(Z(\theta)), \mathcal{V})$ outputs the clustering-oriented graph that we want to obtain at the end of the training process. It is generated by transforming the whole graph using the supervisory signal. A small discrepancy between both graphs, in terms of gradient direction implies that the two signals have the same impact at the level of gradient computation. Thus, $\Lambda_{FD}$ assesses to what extent the generated self-supervision graph is clustering-oriented.

\section{Theoretical analysis}
Our conceptual design aims at reducing the impact of FD without causing excessive FR. An intuitive explanation of the trade-off between FR and FD is provided in the previous section. In this section, we discuss the problems of FR and FD for GAE models from a theoretical standpoint. Our formal analysis includes three points: (1) proving the existence of a trade-off between FR and FD for GAE models, (2) understanding the impact of performing clustering and reconstruction at different layers on FR and FD, and (3) understanding the impact of the graph convolutional operation, which is performed by all encoding layers, on FD. All mathematical proofs and derivations are provided in the Appendices. 

We start our theoretical analysis by showing that it is possible to write the loss function of a typical GAE model in a way that explicitly demonstrates the trade-off between FR and FD. More specifically, we found that solving a typical GAE-based clustering problem is equivalent to smoothing the embedded representations over a linear combination between the input graph and a clustering graph, using a graph Laplacian regularization loss \cite{paper96, paper97, paper98}. The two aforementioned graphs are different in nature and suffer from different problems. While the clustering graph has several random edges, the input graph is sparse and comes with clustering-irrelevant links. 

Virtually, the problem of clustering Euclidean data using an auto-encoder model appears to be similar to clustering graphs using a GAE model. However, there are two important differences between the two approaches. First, vanilla auto-encoders are not designed to deal with graph-structured data, whereas GAE models can capitalize the structural information thanks to the graph convolutional operation. Therefore, it is important to investigate the impact of this operation on FR and FD. Second, vanilla auto-encoder models have symmetric decoders. As opposed to that, the decoder of a GAE model is nothing more than the sigmoid of an inner product. We analyze the impact of these differences (i.e., graph decoding design and graph convolutional operation) on FR and FD from a theoretical standpoint. Under mild assumptions, our formal analysis demonstrates that the problem of FD is more pronounced for GAE models compared with the FD problem for vanilla auto-encoder models due to the graph convolutional operation and the graph decoding design. Furthermore, we provide sufficient conditions for comparing a typical GAE model against a GAE with a multi-layer decoder, in terms of FR and FD. 

\subsection{Trade-off between FR and FD}

Given a GAE-based clustering model and an attributed graph $\mathcal{G}$, we consider that the nodes of $\mathcal{G}$ are associated with $K$ ground-truth labels defining $K$ real clusters $\left \{ C_{k}^{sup} \right \}_{k=1}^{K}$, and that the embedded representations $Z$ can be clustered into $K$ clusters $\left \{ C_{k}^{clus} \right \}_{k=1}^{K}$ according to an algorithm $\mathbb{A}_{C}$. Let $L_{\mathcal{C}}$ be a generic loss, which takes as input an adjacency matrix $A' = (a'_{ij}) \in \mathbb{R}^{N \times N}$ and a feature matrix $Z' \in \mathbb{R}^{N \times d}$, and can be written in the form:

$$L_{\mathcal{C}}(Z', A') = \frac{1}{2}  \sum_{1 \leqslant i,j \leqslant N} a'_{ij} \: \norm{z'_{i} - z'_{j}}_{2}^{2}.$$

We define three graphs based on their adjacency matrices: a self-supervision graph $A^{self} = (a_{ij}^{self}) \in \mathbb{R}^{N \times N}$, a clustering graph $A^{clus} = (a_{ij}^{clus}) \in \mathbb{R}^{N \times N}$, and a supervision graph $A^{sup} = (a_{ij}^{sup}) \in \mathbb{R}^{N \times N}$. For the reconstruction loss, the self-supervision signal $A^{self}$ is equal to the input graph $A$. The clustering and supervision graphs are expressed as follows:

$$a_{ij}^{clus} = \left\{\begin{matrix}
\frac{1}{\left | C_{k}^{clus} \right |} \:\: \text{if} \:\: \exists\: k \text{ such that } i,j \in C_{k}^{clus}  \\ 
0 \:\:\:\:\:\:\:\:\:\:\:\:\:\:\:\:\:\:\:\:\:\:\:\:\:\:\:\: \text{otherwise,}
\end{matrix}\right.$$

$$a_{ij}^{sup} = \left\{\begin{matrix}
\frac{1}{\left | C_{k}^{sup} \right |} \:\: \text{if} \:\: \exists\: k \text{ such that } i,j \in C_{k}^{sup}  \\ 
0 \:\:\:\:\:\:\:\:\:\:\:\:\:\:\:\:\:\:\:\:\:\:\:\:\:\:\:\: \text{otherwise.}
\end{matrix}\right.$$

\begin{proposition} \label{proposition_1}
The reconstruction loss for a GAE model can be expressed as:
$$ L_{bce}(\hat{A}(Z(\theta)), A^{self}) = L_{\mathcal{C}}(Z(\theta), A^{self}) + L_{\mathcal{R}}(Z(\theta), A^{self}), $$
\begin{equation*}
\begin{split}
L_{\mathcal{R}}(Z(\theta), A^{self}) &=  \sum_{i,j}  \Big(log(1 + exp(z_{i}^{T} z_{j})) \\
&- \frac{1}{2}  a_{ij}^{self} (\norm{z_{i}}_{2}^{2} + \norm{z_{j}}_{2}^{2})\Big).
\end{split}
\end{equation*}
\end{proposition}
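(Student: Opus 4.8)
The plan is to verify the identity directly by first collapsing the binary cross-entropy into a single closed-form summand, and then manufacturing the Laplacian term $L_{\mathcal{C}}$ through a symmetric completion of squares. Writing $s_{ij} = z_i^{T} z_j$ so that $\hat{a}_{ij} = \sigma(s_{ij})$, I would start from the entrywise definition
$$L_{bce}(\hat{A}(Z(\theta)), A^{self}) = -\sum_{i,j}\Big(a_{ij}^{self}\log\sigma(s_{ij}) + (1-a_{ij}^{self})\log(1-\sigma(s_{ij}))\Big).$$
The first move is purely the two sigmoid--logarithm identities $\log\sigma(s) = s - \log(1+\exp(s))$ and $\log(1-\sigma(s)) = -\log(1+\exp(s))$. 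Substituting these, the $(1-a_{ij}^{self})$ factor contributes $+\log(1+\exp(s_{ij}))$, the $a_{ij}^{self}$ factor contributes $a_{ij}^{self}\log(1+\exp(s_{ij})) - a_{ij}^{self}s_{ij}$, and the two logarithmic pieces recombine. This leaves the compact expression $L_{bce} = \sum_{i,j}\big(\log(1+\exp(z_i^{T}z_j)) - a_{ij}^{self}\,z_i^{T}z_j\big)$, which already isolates the bilinear term that will become the Laplacian piece.

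The second step is to reshape the bilinear term using polarization of the squared distance, $z_i^{T}z_j = \tfrac{1}{2}(\norm{z_i}_2^2 + \norm{z_j}_2^2) - \tfrac{1}{2}\norm{z_i - z_j}_2^2$. Multiplying by $-a_{ij}^{self}$ gives $-a_{ij}^{self}z_i^{T}z_j = \tfrac{1}{2}a_{ij}^{self}\norm{z_i-z_j}_2^2 - \tfrac{1}{2}a_{ij}^{self}(\norm{z_i}_2^2 + \norm{z_j}_2^2)$. Summed over all ordered pairs $(i,j)$, the first contribution is exactly $L_{\mathcal{C}}(Z(\theta), A^{self})$, the factor $\tfrac12$ matching its definition verbatim. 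Collecting the remaining $\log(1+\exp(z_i^{T}z_j))$ terms together with the $-\tfrac{1}{2}a_{ij}^{self}(\norm{z_i}_2^2 + \norm{z_j}_2^2)$ terms reproduces $L_{\mathcal{R}}(Z(\theta), A^{self})$ precisely as stated, completing the decomposition $L_{bce} = L_{\mathcal{C}} + L_{\mathcal{R}}$.

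The argument is essentially algebraic, so the main obstacle is careful bookkeeping rather than a conceptual difficulty. Two subtleties deserve attention. First, sign tracking through the sigmoid--logarithm identities must be exact, since a single misplaced sign would flip the roles of $L_{\mathcal{C}}$ and $L_{\mathcal{R}}$. Second, the cross-entropy sum ranges over all ordered pairs, including the diagonal $i=j$ and both orderings of each off-diagonal pair; I would verify that the symmetrization is harmless, which it is because $\hat{A} = \mathrm{sigmoid}(ZZ^{T})$ is symmetric ($z_i^{T}z_j = z_j^{T}z_i$) and both $A^{self}$ and the distance form $\norm{z_i - z_j}_2^2$ are symmetric, so no spurious factor of two arises on either side, and the diagonal terms contribute consistently to both the logarithmic term and the (vanishing) distance term. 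The key insight making the proof clean is recognizing that polarization converts the inner-product penalty exactly into the Laplacian-smoothing form with the correct $\tfrac12$ prefactor, which is what links GAE reconstruction to graph Laplacian regularization.
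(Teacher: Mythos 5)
Your proof is correct and takes essentially the same route as the paper's: both expand the binary cross-entropy through the explicit form of the sigmoid and then use the polarization identity $z_{i}^{T}z_{j} = \tfrac{1}{2}(\norm{z_{i}}_{2}^{2}+\norm{z_{j}}_{2}^{2})-\tfrac{1}{2}\norm{z_{i}-z_{j}}_{2}^{2}$ to manufacture the Laplacian term $L_{\mathcal{C}}$. The only difference is bookkeeping: you reduce immediately to $\sum_{i,j}\bigl(\log(1+\exp(z_{i}^{T}z_{j}))-a_{ij}^{self}\,z_{i}^{T}z_{j}\bigr)$ and polarize once, whereas the paper polarizes the $(1-a_{ij}^{self})\,z_{i}^{T}z_{j}$ term and separately converts $\log(1+e^{-z_{i}^{T}z_{j}})$ into $\log(1+e^{z_{i}^{T}z_{j}})$ by a second application of the same identity, so your ordering is marginally more economical but mathematically identical.
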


In Proposition \ref{proposition_1}, we write the reconstruction loss of a GAE model in the form of a linear combination between a graph Laplacian regularization term $L_{\mathcal{C}}(Z(\theta), A^{self})$ and another loss $L_{\mathcal{R}}(Z(\theta), A^{self})$. A trivial solution to minimize $L_{\mathcal{C}}(Z(\theta), A^{self})$ consists of mapping the features of all nodes to the same latent code. State-of-the-art self-supervised methods rely on negative pairs \cite{paper100, paper101} or a cross-model supplementary loss function \cite{paper102} to avoid degenerate solutions. In our case, the trivial solutions are ruled out by the function $L_{\mathcal{R}}(Z(\theta), A^{self})$. More precisely, minimizing $log(1 + exp(z_{i}^{T} z_{j}))$ implies an increase in the angle between the two vectors $z_{i}$ and $z_{j}$ and/or a decrease in their norms if their angle is lower than $90^{\circ}$ or an increase in their norms if their angle is greater than $90^{\circ}$. However, minimizing the second part of $L_{\mathcal{R}}$ (i.e., $- \frac{1}{2} a_{ij}^{self} \big(\norm{z_{i}}_{2}^{2} + \norm{z_{j}}_{2}^{2} \big)$) increases the norm of $z_{i}$ and $z_{j}$ when there is a link between the two nodes $i$ and $j$. Hence, we can conclude that $L_{\mathcal{R}}$ increases the angle between each couple of vectors $z_{i}$ and $z_{j}$ if there is a link between them. Otherwise, decreasing the norm of both vectors might be sufficient. 

\begin{proposition} \label{proposition_2}
The k-means clustering loss applied to the embedded representations can be expressed as:
\begin{equation*} 
\begin{split}
L_{clus}(Z(\theta)) = L_{\mathcal{C}}(Z(\theta), A^{clus}).
\end{split}
\end{equation*}
\end{proposition}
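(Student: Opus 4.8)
The plan is to start from the explicit form of the k-means objective and reduce it to $L_{\mathcal{C}}(Z(\theta), A^{clus})$ by exploiting a classical within-cluster identity. Writing $\mu_k = \frac{1}{|C_k^{clus}|} \sum_{i \in C_k^{clus}} z_i$ for the centroid of the $k$-th embedded cluster, the k-means loss applied to the latent codes is
$$L_{clus}(Z(\theta)) = \sum_{k=1}^{K} \sum_{i \in C_k^{clus}} \norm{z_i - \mu_k}_2^2.$$
First I would substitute the definition of $A^{clus}$ into $L_{\mathcal{C}}$. Since $a_{ij}^{clus}$ vanishes unless $i$ and $j$ share a cluster, the double sum collapses to a sum over intra-cluster pairs, each weighted by $1/|C_k^{clus}|$, giving
$$L_{\mathcal{C}}(Z(\theta), A^{clus}) = \frac{1}{2} \sum_{k=1}^{K} \frac{1}{|C_k^{clus}|} \sum_{i,j \in C_k^{clus}} \norm{z_i - z_j}_2^2.$$

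The core of the argument is the identity, for each fixed cluster $C_k^{clus}$,
$$\frac{1}{2 |C_k^{clus}|} \sum_{i,j \in C_k^{clus}} \norm{z_i - z_j}_2^2 = \sum_{i \in C_k^{clus}} \norm{z_i - \mu_k}_2^2.$$
To establish it I would expand the left-hand side via $\norm{z_i - z_j}_2^2 = \norm{z_i}_2^2 - 2 z_i^{T} z_j + \norm{z_j}_2^2$, carry out the two independent summations over $i$ and $j$, and recognize the cross term $\sum_{i,j \in C_k^{clus}} z_i^{T} z_j = \norm{\sum_{i \in C_k^{clus}} z_i}_2^2 = |C_k^{clus}|^2 \norm{\mu_k}_2^2$. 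After dividing by $|C_k^{clus}|$, the left-hand side becomes $\sum_{i \in C_k^{clus}} \norm{z_i}_2^2 - |C_k^{clus}| \norm{\mu_k}_2^2$, which is precisely the expansion of $\sum_{i \in C_k^{clus}} \norm{z_i - \mu_k}_2^2$ obtained by completing the square, once one substitutes $\sum_{i \in C_k^{clus}} z_i = |C_k^{clus}| \mu_k$.

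Summing this identity over $k = 1, \dots, K$ matches the collapsed form of $L_{\mathcal{C}}(Z(\theta), A^{clus})$ to $\sum_{k} \sum_{i \in C_k^{clus}} \norm{z_i - \mu_k}_2^2 = L_{clus}(Z(\theta))$, completing the proof. I do not expect a genuine obstacle here: the only delicate point is correct bookkeeping in the expansion, in particular treating the diagonal terms $i = j$ (which contribute zero to the distance sum) consistently and tracking the factor $|C_k^{clus}|$ that is freed when one summation index is removed. The weighting $1/|C_k^{clus}|$ built into $A^{clus}$ is exactly what cancels this cluster-size factor, so the equality holds on the nose rather than up to a cluster-dependent constant.
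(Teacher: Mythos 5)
Your proposal is correct, and it reaches the identity by a genuinely different route than the paper. The paper starts from the k-means side: it substitutes $\mu_k = \frac{1}{|C_k^{clus}|}\sum_{j \in C_k^{clus}} z_j$ to write $z_i - \mu_k = \frac{1}{|C_k^{clus}|}\sum_{j \in C_k^{clus}}(z_i - z_j)$, expands the squared norm into a triple sum over $i, j, j'$, handles the cross terms $(z_i - z_j)^{T}(z_i - z_{j'})$ with the law of cosines, and then counts multiplicities (the $j' \neq i, j' \neq j$ terms contribute a factor $|C_k^{clus}| - 2$) so that everything collapses to $\frac{1}{2|C_k^{clus}|}\sum_{i,j \in C_k^{clus}} \norm{z_i - z_j}_2^2$. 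You instead work from the pairwise side, expanding $\norm{z_i - z_j}_2^2$ via inner products and recognizing $\sum_{i,j \in C_k^{clus}} z_i^{T} z_j = |C_k^{clus}|^2 \norm{\mu_k}_2^2$, which reduces both sides to the common form $\sum_{i \in C_k^{clus}} \norm{z_i}_2^2 - |C_k^{clus}| \norm{\mu_k}_2^2$. Your route is shorter and less error-prone: a single quadratic expansion plus the centroid identity replaces the paper's triple-indexed bookkeeping with exclusion conditions, and it makes the role of the $1/|C_k^{clus}|$ weighting transparent. What the paper's approach buys in exchange is a derivation that proceeds purely by rewriting differences $z_i - z_j$, never introducing the intermediate quantity $\sum_i \norm{z_i}_2^2 - |C_k^{clus}|\norm{\mu_k}_2^2$, which keeps every line in the same pairwise-distance vocabulary used by $L_{\mathcal{C}}$ throughout the rest of the theoretical analysis. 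Your handling of the diagonal terms $i = j$ (zero contribution, so including them in $A^{clus}$ is harmless) is also the right observation and matches the paper's implicit treatment.
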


In Proposition \ref{proposition_2}, we write the embedded k-means loss in the form of a graph Laplacian regularization loss $L_{\mathcal{C}}(Z(\theta), A^{clus})$. As we can see, the graph required for embedded k-means is different from the graph required for the reconstruction loss. Furthermore, training the encoder to minimize embedded k-means without a reconstruction loss can easily lead to degenerate solutions.

\begin{theorem}\label{theorem_1}
The linear combination between reconstruction and embedded k-means for a GAE model can be expressed as:
\begin{equation*} 
\begin{split}
L_{clus}(Z(\theta)) & + \;  \; \gamma \; L_{bce}(\hat{A}(Z(\theta)), \,A^{self}) = \\
& L_{\mathcal{C}}(Z(\theta), A^{clus} + \gamma A^{self} ) + \; \gamma \;  L_{\mathcal{R}}(Z(\theta), A^{self}).
\end{split}
\end{equation*}
\end{theorem}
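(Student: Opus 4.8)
The plan is to combine Propositions~\ref{proposition_1} and~\ref{proposition_2} and then exploit the linearity of the generic loss $L_{\mathcal{C}}$ in its adjacency argument. First I would substitute the two decompositions into the left-hand side. By Proposition~\ref{proposition_2}, the embedded k-means term rewrites as $L_{clus}(Z(\theta)) = L_{\mathcal{C}}(Z(\theta), A^{clus})$, and by Proposition~\ref{proposition_1} the reconstruction term rewrites as $L_{bce}(\hat{A}(Z(\theta)), A^{self}) = L_{\mathcal{C}}(Z(\theta), A^{self}) + L_{\mathcal{R}}(Z(\theta), A^{self})$. After this substitution the left-hand side reads
\begin{equation*}
L_{\mathcal{C}}(Z(\theta), A^{clus}) + \gamma\, L_{\mathcal{C}}(Z(\theta), A^{self}) + \gamma\, L_{\mathcal{R}}(Z(\theta), A^{self}),
\end{equation*}
so the only remaining task is to merge the two $L_{\mathcal{C}}$ contributions.

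The key step is the observation that $L_{\mathcal{C}}(Z', A') = \tfrac{1}{2}\sum_{i,j} a'_{ij}\,\norm{z'_i - z'_j}_2^2$ is linear in the adjacency matrix $A'$ for a fixed feature matrix $Z'$, since each entry $a'_{ij}$ enters exactly once and multiplicatively against the fixed weight $\norm{z'_i - z'_j}_2^2$. Consequently, with $Z' = Z(\theta)$ held fixed across both terms, I would write $L_{\mathcal{C}}(Z(\theta), A^{clus}) + \gamma\, L_{\mathcal{C}}(Z(\theta), A^{self}) = L_{\mathcal{C}}(Z(\theta), A^{clus} + \gamma A^{self})$, which folds the scalar $\gamma$ into the second adjacency matrix and superposes the two graphs entrywise. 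Adding back the untouched term $\gamma\, L_{\mathcal{R}}(Z(\theta), A^{self})$ then reproduces the right-hand side exactly.

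There is no genuine analytic obstacle here: the statement is essentially a corollary of the two preceding propositions, and the only ingredient beyond them is the elementary additivity of the quadratic form defining $L_{\mathcal{C}}$ with respect to its edge weights. The one point that will require care is to verify that the feature argument is indeed identical in both $L_{\mathcal{C}}$ terms — both are evaluated at the same embedding $Z(\theta)$ — so that the pairwise distances $\norm{z_i - z_j}_2^2$ coincide and the double sum over $i,j$ can be regrouped by a single combined adjacency weight $a^{clus}_{ij} + \gamma\, a^{self}_{ij}$; were the embeddings to differ across the two terms, this superposition would no longer be valid. Granting this, the identity follows immediately.
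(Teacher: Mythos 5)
Your proposal is correct and matches the paper's own proof: the appendix likewise invokes Propositions~\ref{proposition_1} and~\ref{proposition_2} and then merges the two sums into $\frac{1}{2}\sum_{i,j}(a_{ij}^{clus} + \gamma a_{ij}^{self})\norm{z_i - z_j}_2^2$, which is exactly your linearity-in-the-adjacency-matrix step written out explicitly. Your added remark that both $L_{\mathcal{C}}$ terms must share the same embedding $Z(\theta)$ is a sound (if implicit in the paper) point of care.
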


In Theorem \ref{theorem_1}, we have a typical GAE-based clustering model that optimizes a linear combination between embedded k-means and reconstruction. Based on Proposition \ref{proposition_1} and Proposition \ref{proposition_2}, we can write the loss function of this GAE model in the form of a linear combination between a graph-weighted loss $L_{\mathcal{C}}(Z(\theta), A^{clus} + \gamma A^{self})$ and a regularization term $L_{\mathcal{R}}(Z(\theta), A^{self})$. The regularization term $L_{\mathcal{R}}$ enables the training process to avoid degenerate solutions. The graph associated with $L_{\mathcal{C}}$ is a combination between the clustering graph and the self-supervision graph. Based on this result, we can clearly spot the trade-off between FR and FD, which is caused by combining two graphs of different nature. On the one hand, decreasing the balancing hyper-parameter $\gamma$ reinforces the impact of the clustering graph on the optimization process, which in turn gives rise to FR. On the other hand, increasing $\gamma$ leads to higher levels of FD due to the high-sparsity and clustering-irrelevant links within the self-supervision graph. It is important to highlight that our previous work \cite{paper14} has shown the trade-off between FR and FD only for the specific case, where the encoder and decoder are linear functions and the weight matrices are constrained to the Stiefel manifold. As opposed to that, Theorem \ref{theorem_1} holds for all GAE models.

\subsection{Impact of performing clustering and reconstruction at different layers on FR and FD}

To understand the impact of a GAE model on FR and FD compared with a vanilla auto-encoder model, we analyze $\Lambda_{FR}$ and $\Lambda_{FD}$ in a variety of contexts. To this end, we start by computing the gradient of the clustering and reconstruction losses w.r.t. the embedded representations. 
 
\begin{proposition} \label{proposition_3}
The gradient of the reconstruction loss $L_{bce}(\hat{A}(Z(\theta)), A^{self})$ w.r.t. the embedded representation $z_{i}$ can be expressed as:
\begin{equation*} 
\begin{split}
\frac{\partial L_{bce}(\hat{A}(Z(\theta)), A^{self})}{\partial z_{i}} = \sum_{1 \leqslant j \leqslant N} (\hat{a}_{ij}-a_{ij}^{self}) z_{j}.
\end{split}
\end{equation*}
\end{proposition}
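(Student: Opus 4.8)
The plan is to reduce the binary cross-entropy to an explicit function of the inner products $s_{ij} = z_{i}^{T} z_{j}$, differentiate at the level of these scalars, and then transport the derivative back to $z_{i}$ via the chain rule. First I would rewrite the loss. Starting from $L_{bce} = -\sum_{i,j}\big(a_{ij}^{self}\log \hat{a}_{ij} + (1-a_{ij}^{self})\log(1-\hat{a}_{ij})\big)$ with $\hat{a}_{ij} = \sigma(s_{ij})$ and $\sigma$ the logistic sigmoid, I would use $\log\sigma(s) = -\log(1+e^{-s})$, $\log(1-\sigma(s)) = -\log(1+e^{s})$, and $\log(1+e^{-s}) = -s + \log(1+e^{s})$ to collapse each summand into $\log(1+e^{s_{ij}}) - a_{ij}^{self}\, s_{ij}$. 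This is precisely the rearrangement underlying Proposition~\ref{proposition_1}, so I can cite it rather than redo the algebra.

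The second step is the scalar derivative, and it is the conceptual heart of the statement. Since $\frac{d}{ds}\log(1+e^{s}) = \frac{e^{s}}{1+e^{s}} = \sigma(s) = \hat{a}$, differentiating the summand $\log(1+e^{s_{ij}}) - a_{ij}^{self}\, s_{ij}$ with respect to $s_{ij}$ yields the clean ``prediction-minus-target'' residual $\hat{a}_{ij} - a_{ij}^{self}$. Equivalently, working directly from the original form, $\frac{\partial \ell_{ij}}{\partial \hat{a}_{ij}} = \frac{\hat{a}_{ij}-a_{ij}^{self}}{\hat{a}_{ij}(1-\hat{a}_{ij})}$ while $\frac{\partial \hat{a}_{ij}}{\partial z_{i}} = \hat{a}_{ij}(1-\hat{a}_{ij})\,z_{j}$, and the $\hat{a}_{ij}(1-\hat{a}_{ij})$ factors cancel. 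Either route shows that the sigmoid nonlinearity is arranged so that its composition with BCE has a gradient linear in the residual.

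The third step is the chain rule through $s_{kl} = z_{k}^{T} z_{l}$, for which $\frac{\partial s_{kl}}{\partial z_{i}} = \delta_{ki}\,z_{l} + \delta_{li}\,z_{k}$. Collecting every term in which $z_{i}$ appears gives a row contribution $\sum_{l}(\hat{a}_{il}-a_{il}^{self})\,z_{l}$ from the summands with $k=i$ and a column contribution $\sum_{k}(\hat{a}_{ki}-a_{ki}^{self})\,z_{k}$ from those with $l=i$. Because $\hat{A} = \sigma(ZZ^{T})$ is symmetric and the undirected self-supervision graph makes $A^{self}$ symmetric, these two sums coincide after renaming the index, so both collapse to $\sum_{j}(\hat{a}_{ij}-a_{ij}^{self})\,z_{j}$.

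I expect the only delicate point to be reconciling this symmetric double-counting with the stated result: summing the BCE over all ordered pairs would leave a spurious factor of two, whereas counting each undirected pair once folds the two orientations into a single copy and reproduces $\sum_{1\le j\le N}(\hat{a}_{ij}-a_{ij}^{self})\,z_{j}$ exactly, with the diagonal $j=i$ either excluded (no self-loops) or contributing nothing. I would therefore fix the unordered-pair convention at the outset so that the row and column contributions combine into precisely one copy, matching the claim; the remainder is the standard logistic-regression gradient.
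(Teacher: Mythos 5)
Your proposal is correct, and it reaches the result by a somewhat different route than the paper. The paper's proof in Appendix~D starts from the decomposed form established in Proposition~\ref{proposition_1}, i.e.\ $L_{bce} = L_{\mathcal{C}}(Z,A^{self}) + L_{\mathcal{R}}(Z,A^{self})$, and differentiates each piece term by term: it expands $\norm{z_i-z_j}_2^2 = z_i^Tz_i - 2z_i^Tz_j + z_j^Tz_j$, lets the $z_i^Tz_i$ contributions cancel against the norm terms of $L_{\mathcal{R}}$, and recognizes $\frac{e^{z_i^Tz_j}}{1+e^{z_i^Tz_j}} = \hat{a}_{ij}$ in the derivative of the log terms. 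You instead collapse each summand to the logit form $\log(1+e^{s_{ij}}) - a_{ij}^{self}s_{ij}$ with $s_{ij}=z_i^Tz_j$, differentiate at the scalar level to get the residual $\hat{a}_{ij}-a_{ij}^{self}$, and push through the chain rule $\partial s_{kl}/\partial z_i = \delta_{ki}z_l + \delta_{li}z_k$ --- the standard logistic-regression gradient, which avoids the squared-norm expansion and its cancellations entirely. Your version is arguably cleaner, and it is also more careful on one point the paper silently elides: in the paper's sum over \emph{ordered} pairs $1\leqslant i,j\leqslant N$, the variable $z_i$ appears both as a row and a column index, so by the symmetry of $\hat{A}$ and $A^{self}$ a literal differentiation yields $2\sum_j(\hat{a}_{ij}-a_{ij}^{self})z_j$; the paper's derivation implicitly fixes the summation index $i$ to the differentiation variable and only collects the row terms, which amounts to the unordered-pair convention you state explicitly. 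Since the constant $2$ is immaterial everywhere the proposition is used (the gradients enter only through inner products and the subsequent theorems compare like with like), fixing the convention at the outset, as you do, fully reconciles your computation with the stated formula.
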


\begin{proposition}\label{proposition_4}
The gradient of the clustering loss $L_{clus}(Z(\theta))$ w.r.t. the embedded representation $z_{i}$ can be expressed as:
\begin{equation*} 
\begin{split}
\frac{\partial L_{clus}(Z(\theta))}{\partial z_{i}} = \sum_{1 \leqslant j \leqslant N} a_{ij}^{clus} (z_{i} - z_{j}).
\end{split}
\end{equation*}
\end{proposition}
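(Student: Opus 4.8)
The plan is to avoid differentiating the raw $k$-means objective (whose centroids depend on the very points we differentiate) and instead differentiate the closed form handed to us by Proposition \ref{proposition_2}. By that proposition, $L_{clus}(Z(\theta)) = L_{\mathcal{C}}(Z(\theta), A^{clus}) = \frac{1}{2}\sum_{1 \leqslant p,q \leqslant N} a_{pq}^{clus}\,\norm{z_p - z_q}_2^2$, a smooth quadratic in the latent codes. Since the statement is phrased as a gradient with respect to $z_i$, I would treat each $z_j$ as an independent variable, so no chain rule through $\theta$ is required; the whole task collapses to computing $\partial/\partial z_i$ of this double sum.

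First I would isolate the summands that actually depend on $z_i$: a term $a_{pq}^{clus}\,\norm{z_p - z_q}_2^2$ involves $z_i$ exactly when $p = i$ or $q = i$, and the diagonal term $p=q=i$ (counted in both) vanishes because $\norm{z_i - z_i}_2^2 = 0$. Differentiating a single block gives $\partial_{z_i}\norm{z_i - z_q}_2^2 = 2(z_i - z_q)$ and $\partial_{z_i}\norm{z_p - z_i}_2^2 = 2(z_i - z_p)$, so collecting the $p=i$ block and the $q=i$ block yields
\[
\frac{\partial L_{\mathcal{C}}}{\partial z_i}
= \frac{1}{2}\Big(\sum_{q} a_{iq}^{clus}\,2(z_i - z_q) + \sum_{p} a_{pi}^{clus}\,2(z_i - z_p)\Big).
\]
The decisive step is then to exploit the symmetry of $A^{clus}$: because $a_{ij}^{clus}$ equals $1/\abs{C_k^{clus}}$ whenever $i,j$ lie in a common cluster and $0$ otherwise, it satisfies $a_{iq}^{clus} = a_{qi}^{clus}$, so the two blocks coincide after relabelling the dummy index, and we recover the single sum $\sum_{j} a_{ij}^{clus}(z_i - z_j)$ claimed in the proposition.

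The one delicate point—and the step I would check most carefully—is the bookkeeping of the scalar prefactor, since $z_i$ occurs twice in every relevant summand: the factor $2$ from each squared norm, the global $\tfrac12$ in the definition of $L_{\mathcal{C}}$, and the doubling produced by symmetry all have to be tracked together, and this is exactly where an off-by-a-factor slip is easiest to make. A clean way to audit it is the Laplacian form $L_{\mathcal{C}}(Z,A^{clus}) = \mathrm{tr}(Z^{T}\mathcal{L}Z)$ with $\mathcal{L} = D - A^{clus}$, whence $\nabla_Z\,\mathrm{tr}(Z^{T}\mathcal{L}Z) = (\mathcal{L}+\mathcal{L}^{T})Z$, and a cross-check against the centroid identity $\partial_{z_i}\sum_{j \in C_k}\norm{z_j - \mu_k}_2^2 = 2(z_i - \mu_k)$ together with $\sum_j a_{ij}^{clus}(z_i - z_j) = z_i - \mu_{k(i)}$. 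Both routes reproduce the stated direction $\sum_j a_{ij}^{clus}(z_i - z_j)$ up to an overall positive constant fixed by the symmetrization convention; this constant is immaterial for the analysis that follows, as the gradient enters only through the cosine-similarity functionals $\Lambda_{FR}$ and $\Lambda_{FD}$, in which any uniform positive rescaling cancels.
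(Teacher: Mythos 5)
Your proposal is correct and follows the same basic route as the paper: both differentiate the closed form $L_{\mathcal{C}}(Z,A^{clus})=\frac{1}{2}\sum_{p,q}a_{pq}^{clus}\norm{z_p-z_q}_2^2$ supplied by Proposition \ref{proposition_2} term by term, rather than attacking the raw k-means objective with its moving centroids. The difference is in the bookkeeping, and here your version is actually the more careful one. The paper's Appendix E derivation reuses $i$ simultaneously as the dummy summation index and the differentiation variable, which in effect differentiates only the summands where $z_i$ occupies the first slot; this is precisely the step that silently discards the second block you track explicitly. Your full ordered-pair accounting, using the symmetry $a_{pi}^{clus}=a_{ip}^{clus}$, gives $2\sum_j a_{ij}^{clus}(z_i-z_j)$ --- consistent with your Laplacian cross-check $\nabla_Z\,\mathrm{tr}(Z^T\mathcal{L}Z)=2\mathcal{L}Z$ and with the centroid identity $\partial_{z_i}\sum_{j\in C_k}\norm{z_j-\mu_k}_2^2=2(z_i-\mu_k)$ --- so the statement as printed holds under the paper's row-only convention, and literally up to the factor $2$ you flag. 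Your closing justification that this constant is immaterial is essentially right, with one small refinement: the primed metrics $\Lambda'_{FR}$ and $\Lambda'_{FD}$ of Definitions \ref{definition1} and \ref{definition2} are inner products, not cosines, so a uniform rescaling does not cancel outright; it does, however, multiply both sides of every comparison (e.g., in Theorems \ref{theorem_2}--\ref{theorem_5}) by the same positive constant, leaving all conclusions intact, and in the genuine cosine metrics $\Lambda_{FR}$ and $\Lambda_{FD}$ it cancels exactly as you say.
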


In Proposition \ref{proposition_3}, we compute the gradient of the reconstruction loss, and in Proposition \ref{proposition_4}, we compute the gradient of the embedded k-means loss. To facilitate the theoretical analysis of FR and FD, we perform three simplifications. Since the trade-off between FR and FD is \emph{only} related to the graph-weighted functions $L_{\mathcal{C}}$, we exclude the regularization term $L_{\mathcal{R}}$ from the gradient computation. Restraining our analysis to the $L_{\mathcal{C}}$ functions simplifies the analytical computation for evaluating FR and FD. In another simplification, we use the inner product for measuring the similarity between the gradient vectors instead of using the cosine function. Using the inner product overcomes the need to deal with the gradient norms. The final simplification consists of using normalized graphs. We denote the normalized self-supervised adjacency matrix by $\Tilde{A}^{self} = D^{-\frac{1}{2}}A^{self}D^{-\frac{1}{2}} = (\tilde{a}_{ij}^{self})_{ij}$, where $D = \text{diag}(d_{1}, ..., d_{n})$ is the degree matrix of $A^{self}$ such that $d_{i} = \sum_{j=1}^{n} A_{ij}^{self}$. Furthermore, $A^{clus}$ and $A^{sup}$ are normalized matrices by definition. Based on the aforementioned simplifications, we can obtain elementary metrics for assessing FR and FD as explained by Definition \ref{definition1} and Definition \ref{definition2} respectively.

\begin{definition} \label{definition1}
For a GAE model $\mathcal{Q}$, we define a metric $\Lambda'_{FR}(\mathcal{Q}, z_{i})$ to evaluate the impact of FR at the level of an embedded point $z_{i}$ as follows:
$$\Lambda'_{FR}(\mathcal{Q}, z_{i}) = \inp*{\frac{\partial L_{\mathcal{C}}(Z(\theta), A^{clus})}{\partial z_{i}}}{\frac{\partial L_{\mathcal{C}}(Z(\theta), A^{sup})}{\partial z_{i}}}.$$
\end{definition}

\begin{definition} \label{definition2}
For a GAE model $\mathcal{Q}$, we define a metric $\Lambda'_{FD}(\mathcal{Q}, z_{i})$ to evaluate the impact of FD at the level of an embedded point $z_{i}$ as follows:
$$\Lambda'_{FR}(\mathcal{Q}, z_{i}) = \inp*{\frac{\partial L_{\mathcal{C}}(Z(\theta), \Tilde{A}^{self})}{\partial z_{i}}}{\frac{\partial L_{\mathcal{C}}(Z(\theta), A^{sup})}{\partial z_{i}}}.$$
\end{definition}

\begin{figure}
  \centering
  \includegraphics[width=0.55\textwidth, height=4.7cm]{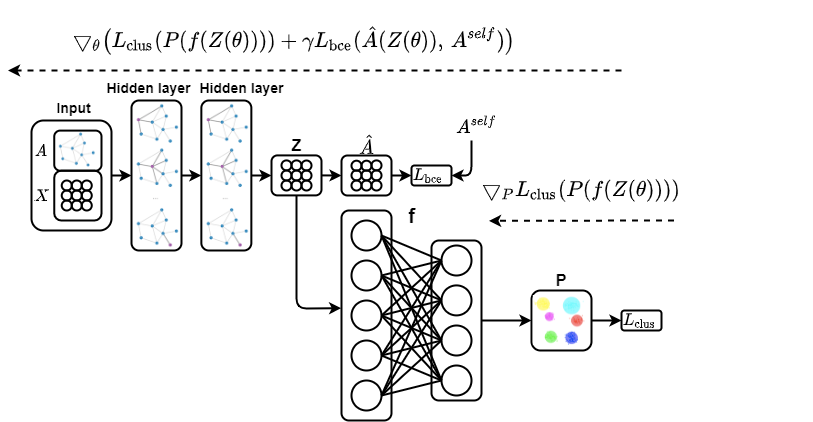}
  \caption{Adding fully-connected encoding layers on top of the last graph convolutional layer, and performing clustering at the level of the last encoding layer.}
  \label{fig:scenario_1}
  \vspace{-2.5mm}
\end{figure}

Modern neural networks are Lipschitz functions. The Lipschitz constant of a function informs how much the output can change in proportion to an input change. Constraining the Lipschitz constant of a neural network is connected to several interesting aspects. For instance, reducing this constant enhances adversarial robustness \cite{paper106}. For classification, reducing the Lipschitz constant induces better generalization bounds as shown by several previous works \cite{paper104, paper105, paper103}. In this section, we show the impact of constraining the Lipchitz constant on FR and FD for two specific situations. In the subsequent analysis, we make use of the following definition: 

\begin{definition} \label{definition3}
Given two metric spaces $(\mathcal{X}, d_{\mathcal{X}})$ and $(\mathcal{Y}, d_{\mathcal{Y}})$, where $d_{\mathcal{X}}$ is a metric on set $\mathcal{X}$ and $d_{\mathcal{Y}}$ is a metric on set $\mathcal{Y}$, a function $f: \mathcal{X} \rightarrow  \mathcal{Y}$ is called Lipschitz continuous if:
$$\exists \:\tau_{1} \geqslant 0, \:\:\:\:\: \forall \: x_{1}, \: x_{2}  \in \mathcal{X} \:\:\: \norm{f(x_{2}) - f(x_{1})}_{d_{\mathcal{Y}}} \leqslant \tau_{1} \norm{x_{2} - x_{1}}_{d_{\mathcal{X}}},$$
and the Lipschitz constant $\tau_{1}^{*}$ of $f$ is defined as:
$$\tau_{1}^{*} = \sup_{x_{1} \neq x_{2}}\bigg(  \frac{\norm{f(x_{2}) - f(x_{1})}_{d_{\mathcal{Y}}}}{\norm{x_{2} - x_{1}}_{d_{\mathcal{X}}}} \bigg).$$
If $f$ is a Lipschitz function and there exists $\tau_{2} \geqslant 0$ such that for all $x_{1}, \: x_{2}  \in \mathcal{X} \:\:\: \norm{f(x_{2}) - f(x_{1})}_{d_{\mathcal{Y}}} \geqslant \frac{1}{\tau_{2}} \norm{x_{2} - x_{1}}_{d_{\mathcal{X}}}$, then $f$ is bi-Lipschitz. We denote the Lipschitz constant of $f^{-1}:f(\mathcal{X}) \rightarrow \mathcal{X}$ as $\tau_{2}^{*}$.
\end{definition}

Unlike typical auto-encoder models for euclidean data clustering, GAE models perform clustering and reconstruction at the same level (i.e., same layer). We study the impact of performing clustering and reconstruction at different layers on FR and FD. To this end, we consider two possible scenarios. Let $\mathcal{NN}(d, d', L)$ be a family of fully-connected layers denoted by $f$: 

\begin{equation*} 
\begin{split}
 f \; : \; & \mathbb{R}^{d} \to \mathbb{R}^{d'} \\
   & z \mapsto ReLU(W_{l}...ReLU(W_{1}z + b_{1})...+b_{l}),  
\end{split}
\end{equation*}

\noindent such that $l \: = \: 1,...,L$ indexes the different layers of the network $f$, $W_{l} \in \mathbb{R}^{d^{(l)} \times d^{(l-1)}}$, $b_{l} \in \mathbb{R}^{d^{(l)}}$, $d=d^{(1)}$, and $d'=d^{(l)}$. The first scenario consists of adding fully-connected encoding layers on top of the last graph convolutional layer, and performing clustering at the level of the last encoding layer. This scenario is illustrated in Figure \ref{fig:scenario_1}. The second scenario consists of adding fully-connected decoding layers on top of the last graph convolutional layer, and performing reconstruction at the level of the last decoding layer. This scenario is illustrated in Figure \ref{fig:scenario_2}. Accordingly, we compare the behaviour of a typical GAE-based clustering model with the two versions described by Figure \ref{fig:scenario_1} and Figure \ref{fig:scenario_2}, in terms of FR and FD, at the level of the embedded representations.

\begin{figure}
  \centering
  \includegraphics[width=0.55\textwidth, height=4.7cm]{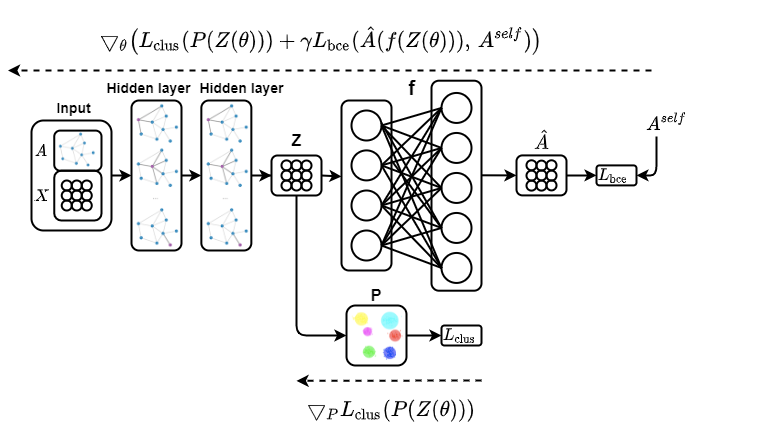}
  \caption{Adding fully-connected decoding layers on top of the last graph convolutional layer, and performing reconstruction at the level of the last decoding layer.}
  \label{fig:scenario_2}
  \vspace{-2.5mm}
\end{figure}


\begin{theorem} \label{theorem_2}
Given two GAE models $\mathcal{Q}_{1}$ and $\mathcal{Q}_{2}$, which have the same graph convolutional layers. $\mathcal{Q}_{1}$ optimizes the objective function in Equation (\ref{Q_1}) and $\mathcal{Q}_{2}$ minimizes the loss function in Equation (\ref{Q_2}), where $f \in \mathcal{NN}(d, d', L)$ and $d' \ll d$. Let $\tau_{1}^{*}$ be the Lipschitz constant of $f$, $\bar{Z}_{i} = (z_{jj'} - z_{ij'})_{j,j'}  \in \mathbb{R}^{N \times d}$, $\zeta_{i} = (\left\| z_{j} - z_{i} \right\|_{2})_{j} \in \mathbb{R}^{N}$, and $a_{i}$ is the $i^{th}$ row of A. 

\begin{equation}  \label{Q_1}
L_{\mathcal{Q}_{1}} = L_{clus}(Z(\theta)) + \gamma L_{bce}(\hat{A}(Z(\theta)), \,A^{self}),
\end{equation}
\begin{equation}  \label{Q_2}
L_{\mathcal{Q}_{2}} = L_{clus}(f(Z(\theta))) + \gamma L_{bce}(\hat{A}(Z(\theta)), \,A^{self}).
\end{equation}

\vspace{5mm}

\begin{itemize}
  \setlength\itemsep{1em}
  \item $ \Lambda'_{FD}(\mathcal{Q}_{2}, z_{i}) = \Lambda'_{FD}(\mathcal{Q}_{1}, z_{i}).$
  \item $ \text{If }$ $$\tau_{1}^{*} \leqslant \sqrt{\frac{(\bar{Z}_{i}^{T} a_{i}^{sup})^{T}  (\bar{Z}_{i}^{T} a_{i}^{clus})}{(\zeta_{i}^{T} \: a_{i}^{sup})(\zeta_{i}^{T}a_{i}^{clus})}},$$ 
$\text{then } \;\;$  $$\Lambda'_{FR}(\mathcal{Q}_{2}, z_{i}) \leqslant \Lambda'_{FR}(\mathcal{Q}_{1}, z_{i}).$$
\end{itemize}

\end{theorem}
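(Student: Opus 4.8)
The plan is to reduce both bullets to explicit gradient expressions, isolate exactly where the extra layers $f$ enter, and then control the $f$-branch with the Lipschitz constant. Using the gradient form of Proposition \ref{proposition_4} for a generic graph, write the clustering and supervision gradients of $\mathcal{Q}_1$ as $g_i^{clus} = \sum_{j} a_{ij}^{clus}(z_i - z_j)$ and $g_i^{sup} = \sum_{j} a_{ij}^{sup}(z_i - z_j)$. Since the $j$-th row of $\bar{Z}_i$ is $z_j - z_i$, one has $g_i^{clus} = -\bar{Z}_i^T a_i^{clus}$ and $g_i^{sup} = -\bar{Z}_i^T a_i^{sup}$, so by Definition \ref{definition1} the first model satisfies $\Lambda'_{FR}(\mathcal{Q}_1, z_i) = \inp*{g_i^{sup}}{g_i^{clus}} = (\bar{Z}_i^T a_i^{sup})^T(\bar{Z}_i^T a_i^{clus})$, which is precisely the numerator in the hypothesis. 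Pinning this identification down first is what lets the final comparison be stated entirely in terms of $\mathcal{Q}_1$'s own quantities.

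For the first bullet I would argue by inspection of where $f$ acts. The operator $f$ is inserted only in the clustering term of $\mathcal{Q}_2$; reconstruction is still performed at the embedded level in both $\mathcal{Q}_1$ and $\mathcal{Q}_2$, with the same self-supervision graph $A^{self}$. Hence the two losses $L_{\mathcal{C}}(Z(\theta), \Tilde{A}^{self})$ and $L_{\mathcal{C}}(Z(\theta), A^{sup})$ that define $\Lambda'_{FD}$ in Definition \ref{definition2} are literally the same functions of $Z$ for both models, so their gradients with respect to $z_i$ coincide term by term. This yields $\Lambda'_{FD}(\mathcal{Q}_2, z_i) = \Lambda'_{FD}(\mathcal{Q}_1, z_i)$ with no further computation, since $f$ never appears in the reconstruction branch on which $\Lambda'_{FD}$ depends.

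For the second bullet I would work with the clustering FR gradients at the representation actually fed to the clustering algorithm, namely $u_i = f(z_i)$ for $\mathcal{Q}_2$. There the clustering and supervision gradients are $h_i^{clus} = \sum_{j} a_{ij}^{clus}(f(z_i) - f(z_j))$ and $h_i^{sup} = \sum_{j} a_{ij}^{sup}(f(z_i) - f(z_j))$, so $\Lambda'_{FR}(\mathcal{Q}_2, z_i) = \inp*{h_i^{clus}}{h_i^{sup}}$. Applying Cauchy--Schwarz gives $\Lambda'_{FR}(\mathcal{Q}_2, z_i) \leqslant \norm{h_i^{clus}}_{2}\,\norm{h_i^{sup}}_{2}$, and the triangle inequality together with the Lipschitz property $\norm{f(z_i) - f(z_j)}_{2} \leqslant \tau_1^* \norm{z_i - z_j}_{2} = \tau_1^* (\zeta_i)_j$ of Definition \ref{definition3} yields $\norm{h_i^{clus}}_{2} \leqslant \tau_1^*\,(\zeta_i^T a_i^{clus})$ and $\norm{h_i^{sup}}_{2} \leqslant \tau_1^*\,(\zeta_i^T a_i^{sup})$. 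Combining these produces
$$\Lambda'_{FR}(\mathcal{Q}_2, z_i) \leqslant (\tau_1^*)^2\,(\zeta_i^T a_i^{sup})(\zeta_i^T a_i^{clus}).$$
The hypothesis $\tau_1^* \leqslant \sqrt{(\bar{Z}_i^T a_i^{sup})^T(\bar{Z}_i^T a_i^{clus}) / (\zeta_i^T a_i^{sup})(\zeta_i^T a_i^{clus})}$ is exactly the statement that the right-hand side is at most $\Lambda'_{FR}(\mathcal{Q}_1, z_i)$, so chaining the two inequalities closes the proof.

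The hard part will be the bookkeeping of the powers of $\tau_1^*$. The Lipschitz constant must enter quadratically, once per Cauchy--Schwarz factor, which forces me to measure $\mathcal{Q}_2$'s clustering FR at the clustering-level codes $f(z_i)$ so that only the increment bound $\norm{f(z_i)-f(z_j)}_{2} \leqslant \tau_1^*\norm{z_i-z_j}_{2}$ is used; differentiating all the way back to $z_i$ would introduce an additional Jacobian operator-norm factor and inflate the power beyond the square root appearing in the hypothesis, so identifying the correct level is the crux rather than a formality. Two secondary points need attention: for \textsc{ReLU} networks $f$ is only piecewise differentiable, so I would rely on the increment form of the Lipschitz bound rather than a global Jacobian; and the comparison is only meaningful in the regime $\Lambda'_{FR}(\mathcal{Q}_1, z_i) \geqslant 0$, i.e. when $\mathcal{Q}_1$'s clustering and supervision gradients are positively aligned, which I would flag as the implicit condition under which the stated square-root quantity is real.
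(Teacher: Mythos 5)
Your proposal is correct and takes essentially the same route as the paper's proof: the FD equality follows because $f$ never enters the reconstruction branch on which $\Lambda'_{FD}$ depends, and the FR inequality follows from Cauchy--Schwarz plus the Lipschitz bound, giving $\Lambda'_{FR}(\mathcal{Q}_{2}, z_{i}) \leqslant (\tau_{1}^{*})^{2}\,(\zeta_{i}^{T} a_{i}^{sup})(\zeta_{i}^{T} a_{i}^{clus})$, which the hypothesis then caps by $\Lambda'_{FR}(\mathcal{Q}_{1}, z_{i}) = (\bar{Z}_{i}^{T} a_{i}^{sup})^{T}(\bar{Z}_{i}^{T} a_{i}^{clus})$. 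Your only deviations are cosmetic and arguably improvements: you apply Cauchy--Schwarz globally to the inner product of the aggregated gradients where the paper applies it termwise inside the double sum and then unwinds the hypothesis through trace and Hadamard-product identities, and you make explicit (where the paper is silent, writing $\partial/\partial z_{i}$ but computing as if at $f(z_{i})$) that $\Lambda'_{FR}(\mathcal{Q}_{2},\cdot)$ must be evaluated at the clustering-level codes $f(z_{i})$ to avoid an extra Jacobian factor --- which is precisely how the paper's computation implicitly proceeds.
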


In Theorem \ref{theorem_2}, we study the first scenario where a bunch of encoding layers is added on top of the last graph convolutional layer, and the clustering loss is applied at the level of the last encoding layer. We know that reducing the Lipchitz constant is linked to a better generalization capacity \cite{paper103}. Based on Theorem \ref{theorem_2}, we found that a constrained Lipchitz constant of the network $f$ leads to more FR compared with the initial GAE-based clustering model. Furthermore, we found that FD is not affected by the added encoding layers. Hence, we conclude that adding encoding layers independently from the decoding operation increases FR without affecting FD. An intuitive interpretation of this result comes from the fact that the gradient of the reconstruction loss does not back-propagate through the added encoding layers. Therefore, the clustering loss becomes more prone to random projections.

\begin{theorem} \label{theorem_3}
Given two GAE models $\mathcal{Q}_{1}$ and $\mathcal{Q}_{2}$, which have the same graph convolutional layers. $\mathcal{Q}_{1}$ optimizes the objective function in Equation (\ref{Q_3}) and $\mathcal{Q}_{2}$ minimizes the loss function in Equation (\ref{Q_4}), where $f \in \mathcal{NN}(d, d', L)$ an injective function and $d' \gg d$. Let $\tau_{2}^{*}$ be the Lipschitz constant of $f^{-1}:f(\mathbb{R}^{d}) \rightarrow \mathbb{R}^{d}$, $\bar{Z}_{i}^{'} = ((f(z_{j}))_{j'} - (f(z_{i}))_{j'})_{j,j'} \in \mathbb{R}^{N \times d'}$, $\zeta_{i}^{'} = (\left\| f(z_{j}) - f(z_{i}) \right\|_{2})_{j} \in \mathbb{R}^{N} $, and $a_{i}$ is the $i^{th}$ row of A.

\begin{equation}  \label{Q_3}
L_{\mathcal{Q}_{1}} = L_{clus}(Z(\theta)) + \gamma L_{bce}(\hat{A}(Z(\theta)), \,A^{self}),
\end{equation}
\begin{equation}  \label{Q_4}
L_{\mathcal{Q}_{2}} = L_{clus}(Z(\theta)) + \gamma L_{bce}(\hat{A}(f(Z(\theta))), \,A^{self}).
\end{equation}

\vspace{5mm}

\begin{itemize}
  \setlength\itemsep{1em}
  \item $ \Lambda'_{FR}(\mathcal{Q}_{2}, z_{i}) = \Lambda'_{FR}(\mathcal{Q}_{1}, z_{i}).$
  \item $ \text{If }$ $$\tau_{2}^{*} \leqslant  \sqrt{\frac{(\bar{Z}_{i}^{'T} a_{i}^{sup})^{T}  (\bar{Z}_{i}^{'T} \tilde{a}_{i}^{self})}{(\zeta_{i}^{'T} \: a_{i}^{sup})(\zeta_{i}^{'T}\tilde{a}_{i}^{self})}},$$ 
  $\text{then } \;\;$  $$\Lambda'_{FD}(\mathcal{Q}_{2}, z_{i}) \geqslant \Lambda'_{FD}(\mathcal{Q}_{1}, z_{i}).$$
\end{itemize}
\end{theorem}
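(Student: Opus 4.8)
The plan is to reduce both claims to the explicit graph-Laplacian gradient from Proposition \ref{proposition_4}, namely $\frac{\partial L_{\mathcal{C}}(Z,A')}{\partial z_{i}} = \sum_{j} a'_{ij}(z_{i}-z_{j})$, and then to combine an elementary Cauchy--Schwarz estimate with the bi-Lipschitz inequality granted by Definition \ref{definition3}. Throughout I write $\nabla^{sup}_{z}=\sum_{j}a^{sup}_{ij}(z_{i}-z_{j})$ and $\nabla^{self}_{z}=\sum_{j}\tilde{a}^{self}_{ij}(z_{i}-z_{j})$ for the $Z$-level gradients, and $\nabla^{sup}_{u},\nabla^{self}_{u}$ for the analogous quantities built from the $f(z_{j})$ in place of the $z_{j}$.

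The first bullet is immediate by inspection. The only structural difference between $\mathcal{Q}_{1}$ and $\mathcal{Q}_{2}$ is the representation at which $L_{bce}$ is evaluated ($Z$ versus $f(Z)$); the clustering term $L_{clus}(Z(\theta))$ and the supervision reference $A^{sup}$ entering Definition \ref{definition1} are identical in both models and both live at the embedded level. Since $\Lambda'_{FR}$ depends only on $\frac{\partial L_{\mathcal{C}}(Z,A^{clus})}{\partial z_{i}}$ and $\frac{\partial L_{\mathcal{C}}(Z,A^{sup})}{\partial z_{i}}$, neither of which sees $f$, we obtain $\Lambda'_{FR}(\mathcal{Q}_{2},z_{i})=\Lambda'_{FR}(\mathcal{Q}_{1},z_{i})$.

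For the second bullet, the organizing observation is that $\mathcal{Q}_{2}$ performs reconstruction at the last decoding layer, so its FD metric is read off at the level $f(Z)$: $\Lambda'_{FD}(\mathcal{Q}_{2},z_{i})=\inp{\nabla^{self}_{u}}{\nabla^{sup}_{u}}$, whereas $\Lambda'_{FD}(\mathcal{Q}_{1},z_{i})=\inp{\nabla^{self}_{z}}{\nabla^{sup}_{z}}$. Unfolding the definitions of $\bar{Z}_{i}^{'}$ and $a_{i}^{sup}$ gives $\bar{Z}_{i}^{'T}a_{i}^{sup}=-\nabla^{sup}_{u}$ and $\bar{Z}_{i}^{'T}\tilde{a}_{i}^{self}=-\nabla^{self}_{u}$, so the numerator in the hypothesis is exactly $(\bar{Z}_{i}^{'T}a_{i}^{sup})^{T}(\bar{Z}_{i}^{'T}\tilde{a}_{i}^{self})=\inp{\nabla^{sup}_{u}}{\nabla^{self}_{u}}=\Lambda'_{FD}(\mathcal{Q}_{2},z_{i})$. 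It therefore suffices to bound $\mathcal{Q}_{1}$'s metric above by $(\tau_{2}^{*})^{2}$ times the denominator. Expanding $\inp{\nabla^{self}_{z}}{\nabla^{sup}_{z}}=\sum_{j,k}\tilde{a}^{self}_{ij}a^{sup}_{ik}\inp{z_{i}-z_{j}}{z_{i}-z_{k}}$ and applying Cauchy--Schwarz to each $\inp{z_{i}-z_{j}}{z_{i}-z_{k}}\leqslant\norm{z_{i}-z_{j}}_{2}\norm{z_{i}-z_{k}}_{2}$ — where the nonnegativity of every weight $\tilde{a}^{self}_{ij},a^{sup}_{ik}$ preserves the inequality under summation — factorizes the bound as $\big(\sum_{j}\tilde{a}^{self}_{ij}\norm{z_{i}-z_{j}}_{2}\big)\big(\sum_{k}a^{sup}_{ik}\norm{z_{i}-z_{k}}_{2}\big)$. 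Invoking Definition \ref{definition3} with $f$ injective, so that $\norm{z_{i}-z_{j}}_{2}\leqslant\tau_{2}^{*}\norm{f(z_{i})-f(z_{j})}_{2}$, replaces each $Z$-distance by $\tau_{2}^{*}$ times the corresponding $f(Z)$-distance and collapses the two sums to $\tau_{2}^{*}(\zeta_{i}^{'T}\tilde{a}_{i}^{self})$ and $\tau_{2}^{*}(\zeta_{i}^{'T}a_{i}^{sup})$. Hence $\Lambda'_{FD}(\mathcal{Q}_{1},z_{i})\leqslant(\tau_{2}^{*})^{2}(\zeta_{i}^{'T}\tilde{a}_{i}^{self})(\zeta_{i}^{'T}a_{i}^{sup})$, and squaring the hypothesis on $\tau_{2}^{*}$ bounds the right-hand side by $\inp{\nabla^{sup}_{u}}{\nabla^{self}_{u}}=\Lambda'_{FD}(\mathcal{Q}_{2},z_{i})$, which closes the chain.

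The main obstacle is conceptual rather than computational: justifying the level at which $\mathcal{Q}_{2}$'s FD metric is evaluated. Because the decoding block $f$ sits between the embedding $Z$ and the reconstruction target, one must argue that the drift induced by reconstruction is correctly measured at $f(Z)$, so that the Jacobian of $f$ never enters the metric; pulling the gradient back to $z_{i}$ through the chain rule would instead introduce an extra $J_{f}(z_{i})^{T}J_{f}(z_{i})$ factor and destroy the clean $\tau_{2}^{*}$ comparison. A secondary detail is checking that the radicand in the hypothesis is well defined, i.e. that $\inp{\nabla^{sup}_{u}}{\nabla^{self}_{u}}\geqslant 0$ and that the weighted distance sums $\zeta_{i}^{'T}a_{i}^{sup}$ and $\zeta_{i}^{'T}\tilde{a}_{i}^{self}$ are strictly positive, which holds provided node $i$ has some nonzero supervision and self-supervision weight to a distinct point.
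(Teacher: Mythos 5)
Your proposal is correct and follows essentially the same route as the paper's Appendix G proof: the first bullet by noting that $\Lambda'_{FR}$ never sees $f$, and the second by combining term-wise Cauchy--Schwarz with the bi-Lipschitz bound $\norm{z_{i}-z_{j}}_{2} \leqslant \tau_{2}^{*}\norm{f(z_{i})-f(z_{j})}_{2}$ and then invoking the hypothesis on $\tau_{2}^{*}$, under the same implicit convention that $\mathcal{Q}_{2}$'s FD metric is read off at the level of $f(Z)$ (so no Jacobian of $f$ appears). Your direct identification of $(\bar{Z}_{i}^{'T} a_{i}^{sup})^{T}(\bar{Z}_{i}^{'T} \tilde{a}_{i}^{self})$ with $\Lambda'_{FD}(\mathcal{Q}_{2}, z_{i})$ is just a cleaner shortcut through the paper's trace-and-Hadamard-product manipulations, and your remarks on the nonnegativity of the radicand and positivity of the denominator flag a tacit assumption the paper leaves unstated.
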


In Theorem \ref{theorem_3}, we study the second scenario where a bunch of decoding layers is added on top of the last graph convolutional layer, and the reconstruction loss is applied at the level of the last decoding layer. This case is similar to the typical auto-encoder, where the decoder has several layers. Based on Theorem \ref{theorem_3}, we found that a constrained Lipchitz constant of $f^{-1}$ leads to less FD compared with the initial GAE-based clustering model. Intuitively, it is expected that the decoding layers attenuate the effect of FD when the gradient of the reconstruction loss has to back-propagate through several layers.

\subsection{Impact of the graph convolutional operation on FD}

The graph convolutional operation constitutes a principal difference between a typical auto-encoder model and a GAE model. For this reason, we study the impact of this operation on the clustering task from the perspective of FD. Features propagation for a single GCN layer is expressed by the rule $X^{(k+1)} = \phi(\Tilde{A}^{self}X^{(k)}W_{k})$, where $X^{(k)}$ represents the node features of the $k^{th}$ layer, $W_{k}$ is the matrix of trainable weights associated with this layer, and $\phi$ is an activation function. The multiplication of the graph filter $\Tilde{A}^{self}$ with the graph signal $X^{(k)}$ defines the graph convolutional operation. Let $h$ be an aggregation function such that $h^{sup}(x_{i}) = \sum_{j} \tilde{a}_{ij}^{sup} x_{j}$ is the center of the true cluster associated with $x_{i}$ (computed based on ground-truth assignments), and $h^{self}(x_{i}) = \sum_{j} \tilde{a}_{ij}^{self} x_{j}$ is the center of the immediate neighbors of $x_{i}$ according to $A^{self}$. In Equation (\ref{P(x)}), we define a function $\mathcal{P}$ to locally assess the impact of the graph filtering operation on the clustering task.

\begin{equation}  \label{P(x)}
\mathcal{P}(x_{i}) = \norm{x_{i} - h^{sup}(x_{i})}_{2} - \norm{h^{self}(x_{i}) - h^{sup}(x_{i})}_{2}.
\end{equation}

If $\mathcal{P}(x_{i}) \geqslant 0$, we say that the graph filtering operation has a positive impact on clustering the node $v_{i}$. To understand the impact of the filtering operation on FD, we consider two possible scenarios. 

\begin{assumption}{1}{} \label{assumption_1}
The self-supervision adjacency matrix $\Tilde{A}^{self}$ represents the immediate neighbors with a small error, that is,
$$\forall i, j \in \left [|1, N | \right ], \; \; \text{such that} \; \; \tilde{a}_{ij}^{self} \neq 0, \; \; \; x_{i} = x_{j} + \epsilon_{ij},$$ where $\epsilon_{ij} \in \mathbb{R}^{J}$ is a small error (i.e., $\epsilon_{ij} \text{ almost equal to zero}$).
\end{assumption}

\begin{assumption}{2}{} \label{assumption_2}
The immediate neighbors of a node $v_{i}$ are assumed to activate the same neurons for a well-trained ReLU-Affine layer with a training weight $W$, that is,
$$\forall i, j  \;\; \text{if} \;\; \tilde{a}_{ij}^{self} \neq 0 \;\; \text{then} \;\; \text{Sign}(W^{T}x_{i}) = \text{Sign}(W^{T}x_{j}).$$
\end{assumption}

\begin{theorem} \label{theorem_4}
Given two models $\mathcal{Q}_{1}$ and $\mathcal{Q}_{2}$, which optimize the same objective function as described by Equation (\ref{Q_5}). $\mathcal{Q}_{1}$ has a single fully-connected encoding layer characterized by the function $f_{1}(X)=ReLU(XW)$, where $W$ represents the learning weights of this layer. $\mathcal{Q}_{2}$ has a single graph convolutional layer characterized by the function $f_{2}(X)=ReLU(\Tilde{A}^{self}XW)$.

\begin{equation}  \label{Q_5}
L_{\mathcal{Q}_{1}} = L_{\mathcal{Q}_{2}} = L_{clus}(Z(\theta)) + \gamma \: L_{bce}(\hat{A}(Z(\theta)), \,A^{self}).
\end{equation}

\noindent Under Assumption \ref{assumption_1} and Assumption \ref{assumption_2}, we have:
$$\text{If} \;\; \mathcal{P}(f_{1}(x_{i})) \geqslant 0 \;\; \text{then} \;\; \Lambda'_{FD}(\mathcal{Q}_{2}, x_{i}) \leqslant \Lambda'_{FD}(\mathcal{Q}_{1}, x_{i}).$$
\end{theorem}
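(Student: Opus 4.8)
Write $y_i := f_1(x_i) = \mathrm{ReLU}(W^{\top} x_i)$ for the embedding produced by $\mathcal{Q}_1$, and recall that the single graph-convolutional layer of $\mathcal{Q}_2$ outputs $f_2(x_i) = \mathrm{ReLU}\big(W^{\top} \sum_j \tilde{a}^{self}_{ij} x_j\big)$; since each model has one layer, the ``embedded point'' at which the metric of Definition~\ref{definition2} is evaluated is exactly the layer output, so $\Lambda'_{FD}(\mathcal{Q}_1,x_i)$ uses $z_i=y_i$ and $\Lambda'_{FD}(\mathcal{Q}_2,x_i)$ uses $z_i=f_2(x_i)$. The plan is to collapse the comparison of the two metrics to a single scalar relation by (i) identifying $f_2$ as a filtered copy of $f_1$, and (ii) linearizing the centers using Assumption~\ref{assumption_1}.

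\textbf{Step 1: the commutation identity.} First I would prove $f_2(x_i) = h^{self}(f_1(x_i))$, i.e. the graph-convolutional embedding is precisely the graph-filtered fully-connected embedding. This is where Assumption~\ref{assumption_2} does the work: all immediate neighbors of $v_i$ share the activation pattern $\mathrm{Sign}(W^{\top}x_i)$, and the weights $\tilde{a}^{self}_{ij}$ are nonnegative, so the aggregated pre-activation $\sum_j \tilde{a}^{self}_{ij} W^{\top} x_j$ inherits that same sign pattern; hence the $\mathrm{ReLU}$ mask is constant over the neighborhood and commutes with the nonnegative aggregation, giving $f_2(x_i)=\sum_j \tilde{a}^{self}_{ij}\,\mathrm{ReLU}(W^{\top}x_j) = h^{self}(y_i)$. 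Thus $z_i^{(2)}=h^{self}(y_i)$ while $z_i^{(1)}=y_i$.

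\textbf{Step 2: reducing both metrics.} Next I would rewrite both metrics through the Laplacian-gradient form $\partial_{z_i} L_{\mathcal{C}}(Z,A') = \sum_j a'_{ij}(z_i - z_j)$ of Proposition~\ref{proposition_4}. Because $A^{sup}$ is row-stochastic, the supervision gradient is the distance-to-center vector $z_i - h^{sup}(z_i)$; setting $c_i := h^{sup}(y_i)$ and $g_i := y_i - h^{self}(y_i)$ (the self-supervision residual, small by Assumption~\ref{assumption_1}) gives $\Lambda'_{FD}(\mathcal{Q}_1,x_i) = \inp{g_i}{y_i - c_i}$. For $\mathcal{Q}_2$ I substitute $z_i^{(2)}=h^{self}(y_i)=y_i-g_i$. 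Assumption~\ref{assumption_1} then supplies the two linearizations needed: filtering preserves each node's embedding up to $O(\epsilon)$, hence leaves the supervised cluster averages unchanged, $h^{sup}(z_i^{(2)}) \approx c_i$, so the $\mathcal{Q}_2$ supervision gradient becomes $z_i^{(2)}-c_i = (y_i-c_i)-g_i$; and the self gradient reduces (to leading order) to $g_i$ as well, because neighbors carry near-equal residuals. Substituting yields $\Lambda'_{FD}(\mathcal{Q}_2,x_i) = \inp{g_i}{(y_i - c_i) - g_i} = \Lambda'_{FD}(\mathcal{Q}_1,x_i) - \norm{g_i}_2^2$.

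\textbf{Step 3: invoking the hypothesis, and the main obstacle.} Since $z_i^{(2)}=h^{self}(y_i)$, the premise $\mathcal{P}(f_1(x_i)) \geq 0$ reads $\norm{y_i - c_i}_2 \geq \norm{h^{self}(y_i) - c_i}_2 = \norm{(y_i-c_i)-g_i}_2$, i.e. the filter moves the embedding toward its true cluster center; expanding squares this is $\inp{g_i}{y_i-c_i} \geq \tfrac{1}{2}\norm{g_i}_2^2 \geq 0$, which is exactly the regime in which the subtracted $\norm{g_i}_2^2$ is charged against a genuine positive alignment rather than an artifact. Combined with Step~2 this gives $\Lambda'_{FD}(\mathcal{Q}_2,x_i) = \Lambda'_{FD}(\mathcal{Q}_1,x_i) - \norm{g_i}_2^2 \leq \Lambda'_{FD}(\mathcal{Q}_1,x_i)$, which is the claim. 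The hard part will be the bookkeeping around Steps~1 and~2: the commutation identity is the linchpin and must be argued carefully (nonnegativity of $\tilde{a}^{self}_{ij}$ plus the shared sign pattern), and, because the self residual $g_i$ and the center corrections are both of order $\epsilon$, one must track the $O(\epsilon)$ and $\tilde{A}^{self}$-normalization terms quantitatively and use $\mathcal{P}(f_1(x_i))\ge 0$ precisely at the point where it certifies the sign of the residual difference $h^{self}(g_i)-g_i$ contracted against the supervision direction, so that the leading-order relation $\Lambda'_{FD}(\mathcal{Q}_2)=\Lambda'_{FD}(\mathcal{Q}_1)-\norm{g_i}_2^2$ indeed controls the true difference.
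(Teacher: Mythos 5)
Your skeleton matches the paper's proof: your Step 1 commutation identity $f_2(x_i)=h^{self}(f_1(x_i))$ is exactly the paper's Lemma 2, proved the same way (shared sign pattern from Assumption 1's companion Assumption 2, nonnegativity of $\tilde{a}^{self}_{ij}$, via an auxiliary sign-preservation lemma), Lemma 3 ($x_i \approx h^{self}(x_i)$) plays the role of your linearization, and both arguments reduce the metrics to inner products of the residual $g_i = f_1(x_i)-h^{self}(f_1(x_i))$ against distance-to-center vectors. The genuine gap is in your Step 2 bookkeeping, and it is fatal as stated: your claimed identity $\Lambda'_{FD}(\mathcal{Q}_2,x_i)=\Lambda'_{FD}(\mathcal{Q}_1,x_i)-\norm{g_i}_2^2$ would make the theorem true \emph{unconditionally} — the hypothesis $\mathcal{P}(f_1(x_i))\geqslant 0$ never enters your deduction (your Step 3 inequality $\inp{g_i}{y_i-c_i}\geqslant \tfrac{1}{2}\norm{g_i}_2^2$ is derived but logically idle), which should itself be a red flag. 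Concretely, the exact $\mathcal{Q}_2$ self-supervision gradient is not $g_i$ but $\sum_j \tilde{a}^{self}_{ij} g_j$, and nothing in Assumptions 1--2 forces the residuals $g_j$ to agree across a neighborhood to precision $o(\norm{g_i}_2)$; they are each $O(\epsilon)$, and their differences can be of the same order. That replacement error is contracted against the $O(1)$ supervision direction $y_i-c_i$, producing a first-order term that can dominate the second-order quantity $-\norm{g_i}_2^2$ on which your conclusion rests. Your closing remark shows you sensed this obstacle, but $\mathcal{P}(f_1(x_i))\geqslant 0$ does not certify the sign of that contraction, so the proposal does not close the hole.

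The paper's proof avoids the exact-cancellation claim. It converts each gradient inner product, via polarization and row-normalization, into the node-$i$-centered distance sums $\mathcal{D}_1^{self}+\mathcal{D}_1^{sup}-\mathcal{D}_2$, Taylor-expands these at $f_1(x_i)$ using Lemmas 2 and 3, and obtains $\Lambda'_{FD}(\mathcal{Q}_2,x_i)-\Lambda'_{FD}(\mathcal{Q}_1,x_i) = -\norm{g_i}_2^2 - \inp{y_i-c_i}{g_i} + o\big(\norm{g_i}_2\big)$: a first-order cross term of indeterminate sign \emph{survives}, and the hypothesis is precisely what kills it. By the law of cosines this difference equals $-\tfrac{3}{2}\norm{g_i}_2^2 - \tfrac{1}{2}\norm{y_i-c_i}_2^2 + \tfrac{1}{2}\norm{h^{self}(f_1(x_i))-c_i}_2^2$ up to the remainder, and $\mathcal{P}(f_1(x_i))\geqslant 0$ states exactly that $\norm{h^{self}(f_1(x_i))-c_i}_2 \leqslant \norm{y_i-c_i}_2$, making every term nonpositive. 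So in the paper the hypothesis is load-bearing at leading order, whereas your cancellation silently discards the cross term it is needed for. To repair your route, keep the $\mathcal{Q}_2$ sup-gradient as $(y_i-c_i)-g_i+\sum_j a^{sup}_{ij}g_j$ and the self-gradient as $\sum_j \tilde{a}^{self}_{ij}g_j$ (or follow the paper's expansion of the polarized form), and then use $\mathcal{P}(f_1(x_i))\geqslant 0$ to sign the surviving first-order term rather than asserting an exact $-\norm{g_i}_2^2$ drop.
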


In Theorem \ref{theorem_4}, we study the first scenario, which consists of comparing a one-layer graph convolutional encoder against a one-layer fully-connected encoder. Our proof depends on two reasonable properties of $\Tilde{A}^{self}$. Specifically, we know by definition that $\Tilde{A}^{self}$ connects each node with few immediate neighbors as opposed to $A^{sup}$, which connects each node with all nodes from the same true cluster. Assumption \ref{assumption_1} states that the immediate neighbors of a node $v_{i}$ are represented with small errors. The second Assumption \ref{assumption_2} asserts that the immediate neighbors of a node $v_{i}$ activate the same neurons for a well-trained layer. Under these mild assumptions, Theorem \ref{theorem_4} indicates that performing a graph convolutional operation before a fully-connected layer increases the effect of FD on a node $v_{i}$, if the graph convolutional operation has a positive impact on clustering $v_{i}$. 
Intuitively, $\Tilde{A}^{self}$ only considers the immediate neighbors (due to the sparsity of $\Tilde{A}^{self}$) and maintains some clustering-irrelevant links. For every layer, we know that the graph convolutional operation is equivalent to minimizing the loss function $L_{\mathcal{C}}(X^{(k)}, \Tilde{A}^{self})$ \cite{paper9}, which implies an increase of FD at the level of the same layer.

\begin{theorem} \label{theorem_5}
Given two models $\mathcal{Q}_{1}$ and $\mathcal{Q}_{2}$, which optimize the same objective function as described by Equation (\ref{Q_6}). $\mathcal{Q}_{1}$ has a single graph convolutional layer characterized by the function $f_{1}(X)=ReLU(\Tilde{A}^{self}XW_{1})$, where $W_{1}$ represents the learning weights of this layer. $\mathcal{Q}_{2}$ has two graph convolutional layers characterized by the function $f_{2}(X)=ReLU(\Tilde{A}^{self} \; ReLU(\Tilde{A}^{self} X W_{1}) \; W_{2})$, where $W_{2}$ represents the learning weights of the second layer. We suppose that the Lipschitz constant $\tau_{1}^{*}$ of the second graph convolutional layer is less or equal to 1.

\begin{equation}  \label{Q_6}
L_{\mathcal{Q}_{1}} = L_{\mathcal{Q}_{2}} = L_{clus}(Z(\theta)) + \gamma L_{bce}(\hat{A}(Z(\theta)), \,A^{self}).
\end{equation}

\noindent Under Assumption \ref{assumption_1} and Assumption \ref{assumption_2}, we have:
$$\text{If} \;\; \mathcal{P}(f_{1}(x_{i})) \geqslant 0 \;\; \text{then} \;\; \Lambda'_{FD}(\mathcal{Q}_{2}, x_{i}) \leqslant \Lambda'_{FD}(\mathcal{Q}_{1}, x_{i}).$$
\end{theorem}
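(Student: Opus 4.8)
The plan is to compare the two scores by isolating the effect of the second graph convolutional layer, since $\mathcal{Q}_1$ and $\mathcal{Q}_2$ share the first one. First I would rewrite the simplified FD metric of Definition \ref{definition2} via Proposition \ref{proposition_4}: at node $i$ it equals $\langle \sum_j \tilde{a}^{self}_{ij}(z_i-z_j),\, \sum_j a^{sup}_{ij}(z_i-z_j)\rangle$, i.e. a bilinear form $u^{T}v$ in the self-supervision gradient $u=\sum_j \tilde{a}^{self}_{ij}(z_i-z_j)$ and the supervision gradient $v=\sum_j a^{sup}_{ij}(z_i-z_j)$. For $\mathcal{Q}_1$ the embedding is $Y=f_1(X)$, so $z_i=y_i$; for $\mathcal{Q}_2$ it is the second graph convolution applied to $Y$, so $z_i=ReLU(W_2^{T}h^{self}(y_i))$. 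Hence the whole comparison reduces to quantifying how one extra graph convolution acting on $Y$ changes this bilinear form.

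Next I would linearize the added layer using the two assumptions. By Assumption \ref{assumption_2} the immediate neighbors of $i$ fire the same ReLU pattern, so on the relevant local neighborhood the layer acts as a fixed linear map $z_i=\Phi\,h^{self}(y_i)$, where $\Phi=M_iW_2^{T}$ is the masked weight and $\norm{\Phi}_{op}\leqslant\tau_1^{*}\leqslant 1$ by hypothesis; Assumption \ref{assumption_1} guarantees that $h^{self}$ is a genuine local averaging, so the smoothing/affine split is well defined. This decomposes the second layer into (i) graph smoothing by $\tilde{A}^{self}$ and (ii) the non-expansive map $\Phi$. For step (i) I would reuse the core computation of Theorem \ref{theorem_4} with $Y$ as the feature matrix: filtering drags $y_i$ toward the immediate-neighbor center $h^{self}(y_i)$, and exactly when this filtering improves clustering, i.e. when $\mathcal{P}(f_1(x_i))\geqslant 0$, it rotates $u$ away from $v$ and therefore lowers $u^{T}v$. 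For step (ii) the hypothesis $\tau_1^{*}\leqslant 1$ is what prevents the affine part from expanding the codes enough to reverse the decrease obtained in step (i). Chaining the two steps yields $\Lambda'_{FD}(\mathcal{Q}_2,x_i)\leqslant\Lambda'_{FD}(\mathcal{Q}_1,x_i)$.

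The main obstacle is step (ii): $\norm{\Phi}_{op}\leqslant 1$ only gives $\Phi^{T}\Phi\preceq I$, which controls quadratic forms $w^{T}\Phi^{T}\Phi w$ but does \emph{not} by itself imply $\langle\Phi u,\Phi v\rangle\leqslant\langle u,v\rangle$ for the bilinear form defining $\Lambda'_{FD}$. To close this gap I would not treat $\Phi$ abstractly but exploit the geometry forced by the assumptions: under Assumption \ref{assumption_1} the vectors $u$ and $v$ are both assembled from differences $y_i-y_j$ and inherit a controlled relative orientation, and the sign condition $\mathcal{P}(f_1(x_i))\geqslant 0$ pins down that orientation in precisely the regime where a contraction lowers, rather than raises, their inner product. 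A secondary difficulty is that the supervision term of $v$ sums over \emph{all} same-cluster nodes whereas Assumption \ref{assumption_2} is stated for immediate neighbors, so the ReLU mask must be handled carefully, or the non-neighbor contributions absorbed into the $\tau_1^{*}$ bound, before the smoothing estimate borrowed from Theorem \ref{theorem_4} can be applied at the second layer.
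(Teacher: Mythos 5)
Your skeleton matches the paper's: you split $f_{2}(x_{i}) = ReLU(W_{2}^{T}\, h^{self}(f_{1}(x_{i})))$ at the intermediate point $h^{self}(f_{1}(x_{i}))$, handle the smoothing step by re-running Theorem 4 at the level of $Y = f_{1}(X)$ (the paper does this through its commutation lemma $f_{1}(h^{self}(x_{i})) = h^{self}(f_{1}(x_{i}))$, a consequence of Assumption 2, which turns that telescoped term $\mathcal{J}$ into exactly the quantity shown to be $\leqslant 0$ in Theorem 4 when $\mathcal{P}(f_{1}(x_{i})) \geqslant 0$), and you invoke $\tau_{1}^{*} \leqslant 1$ for the remaining step. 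But the obstacle you honestly flag in your step (ii) is a genuine gap, and your plan to ``exploit the geometry forced by the assumptions'' does not close it: nothing in Assumptions 1--2 pins down the relative orientation of $u = \sum_{j}\tilde{a}_{ij}^{self}(z_{i}-z_{j})$ and $v = \sum_{j}a_{ij}^{sup}(z_{i}-z_{j})$ well enough to make a non-expansive map $\Phi$ decrease $\inp{u}{v}$, and indeed $\norm{\Phi}_{op}\leqslant 1$ does not imply $\inp{\Phi u}{\Phi v}\leqslant\inp{u}{v}$ in general, exactly as you observe.

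The missing idea is that the paper never propagates the bilinear form through the contraction at all. Instead it polarizes: by the law of cosines, $(z_{i}-z_{j})^{T}(z_{i}-z_{j'}) = \frac{1}{2}\big(\norm{z_{i}-z_{j}}_{2}^{2} + \norm{z_{i}-z_{j'}}_{2}^{2} - \norm{z_{j}-z_{j'}}_{2}^{2}\big)$, so the FD score at $x_{i}$ decomposes as $\mathcal{D}_{1}^{self}(z_{i}) + \mathcal{D}_{1}^{sup}(z_{i}) - \mathcal{D}_{2}(z_{i})$, where $\mathcal{D}_{1}^{self}(z_{i}) = \frac{1}{2}\sum_{j}\tilde{a}_{ij}^{self}\norm{z_{i}-z_{j}}_{2}^{2}$, $\mathcal{D}_{1}^{sup}$ is the analogous supervised sum, and $\mathcal{D}_{2}$ collects the cross terms $\norm{z_{j}-z_{j'}}_{2}^{2}$. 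A $1$-Lipschitz layer contracts every pairwise distance, hence decreases $\mathcal{D}_{1}^{self}$ and $\mathcal{D}_{1}^{sup}$ term by term; note this requires no ReLU-mask bookkeeping and applies to arbitrary same-cluster pairs, which also dissolves your ``secondary difficulty'' about $a^{sup}$ reaching beyond immediate neighbors (Assumption 2 is only needed for the commutation lemma, which involves only neighbors). The single term with the unfavorable sign, $\mathcal{D}_{2}$, involves squared distances between pairs of immediate neighbors of $x_{i}$, and Assumption 1 forces it to be approximately zero both at $h^{self}(f_{1}(x_{i}))$ and at $f_{2}(x_{i})$. Combining $\mathcal{J}\leqslant 0$ with these contraction and vanishing estimates yields $\Lambda'_{FD}(\mathcal{Q}_{2}, x_{i}) \leqslant \Lambda'_{FD}(\mathcal{Q}_{1}, x_{i})$. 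Without the squared-distance decomposition your step (ii) remains open, so the proposal as written does not constitute a proof.
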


In Theorem \ref{theorem_5}, we study the second scenario, which consists of comparing a one-layer graph convolutional encoder against a two-layer graph convolutional encoder. Similar to Theorem \ref{theorem_4}, our proof relies on Assumption \ref{assumption_1} and Assumption \ref{assumption_2}. As a result, we found that adding a graph convolutional layer increases the effect of FD on a node $v_{i}$, if the graph convolutional operation has a positive impact on clustering $v_{i}$. Intuitively,  the smoothing effect of each layer propagates to the embedded representations $Z$, which in turn drift the clustering-oriented structures. For instance, an infinite-depth graph convolutional network produces the same embedded vector for each node \cite{paper107}. Mapping all nodes to the same embedded point renders the clustering irrelevant.

\section{Proposed operators}

Our theoretical analysis indicates the limitations of GAE models in tackling the FR and FD problems. Motivated by these limitations, we propose two operators that can be easily integrated into existing models. Most importantly, our operators gradually transform the general-purpose self-supervised graph into a clustering-oriented graph. 
Firstly, we design a sampling operator $\Xi$ that triggers a protection mechanism against FR. More precisely, $\Xi$ can delay FR from quickly taking place. Secondly, we propose an operator $\Upsilon$ that triggers a correction mechanism against FD. $\Upsilon$ revokes the impact of FD by gradually transforming the reconstructed graph into a clustering-oriented one. 

\subsection{A protection mechanism against FR}

Some supervised methods \cite{paper56, paper57, paper58} handle the impact of corrupted labels by an iterative selection protocol. Specifically, samples with clean labels are selected to train the model. Then, this latter is progressively used to select more samples with clean labels. In most cases, consistently high-confidence predictions, during training, are generally associated with uncorrupted samples. This strategy can be considered a correction mechanism as identifying the noisy samples requires training with them in advance. However, it is not clear to what extent the predictions of a noisy classifier (i.e., trained with random labels) are sufficient to recognize samples with corrupted labels. 

Compared with existing sampling techniques for supervised learning, our strategy is motivated by two additional insights. The first idea consists of using a protection mechanism against FR, instead of a correction one. In fact, it has been observed that fine-tuning a model by training it on ground-truth labels, once the pretraining phase is performed on random labels, can not reverse the impact of labels' randomness \cite{paper64}. Since a correction mechanism can not reverse the effect of labels' randomness, we opt for a protection mechanism that prioritizes the selection of samples with uncorrupted labels, before using them for training. Our sampling technique is initiated directly after the pretraining phase and exploits two strong criteria to collect a sufficient portion of nodes with reliable clustering assignments. Second, we argue that it is important to control the selection process according to the difference between the first high-confidence and second high-confidence clustering assignment scores. This aspect is quite useful when the labels can be flipped between two similar clusters.

We propose three guidelines to develop our sampling operator $\Xi$. The first guideline consists of transforming hard clustering assignments into soft assignments. To this end, we compute the matrix $(p'_{ij})_{i,j} \in\mathbb{R}^{N \times K}$. If $(p_{ij})_{i,j}$ is already a soft assignment matrix, then we set $p'_{ij} = p_{ij}$. If the matrix $(p_{ij})_{i,j}$ is a hard assignment matrix, then we measure the similarity between the embedded points and the clustering representatives according to:

\begin{equation} 
  \begin{aligned}
    p'_{ij} =  \frac{exp(- \frac{1}{2}  \left ( z_{i}-\mu_{j}\right )^{T} \Sigma_{j}^{-1}  \left ( z_{i}-\mu_{j}\right ))}{\sum_{j=1}^{K} exp(- \frac{1}{2}  \left ( z_{i}-\mu_{j}\right )^{T} \Sigma_{j}^{-1}  \left ( z_{i}-\mu_{j}\right ))},
  \end{aligned}
  \label{eq:p'}
\end{equation}

\noindent where $\mu_{j}$ stands for the center of cluster $C_{j}^{clus}$, and $\Sigma_{j}$ is a diagonal matrix representing the cluster variances. The second guideline consists of extracting the first and second high-confidence assignment scores from matrix $(p'_{ij})_{i,j}$ for each node. The first score associated with $z_{i}$ is denoted by $\lambda_{i}^{1}$:

\begin{equation}
  \begin{aligned}
   \lambda_{i}^{1} = \max_{_{j \in \left \{ 1,...,K \right \}}}(p'_{ij}).
  \end{aligned}
\label{eq:lambda_1}
\end{equation}

The second high-confidence assignment score for the embedded representation $z_{i}$ is denoted by $\lambda_{i}^{2}$:

\begin{equation}
  \begin{aligned}
   \lambda_{i}^{2} = \max_{_{j \in \left \{ 1,...,K \right \}}}(p'_{ij} \,\, | \,\, p'_{ij} < \lambda_{i}^{1}).
  \end{aligned}
\label{eq:lambda_2}
\end{equation}

The third guideline consists of constructing a set $\Omega(t)$ that contains nodes, whose clustering assignments at iteration $t$ are reliable enough to decide to which cluster they belong. 
Points from $\Omega$ are selected according to two criteria as described by Eqn. (\ref{eq:Omega1}). 
\begin{equation}\label{eq:Omega1}   
    \begin{aligned}
        \Omega=\left \{i \in \mathcal{V} | \; \lambda_{i}^{1} \geq   \alpha_{1} \; \textbf{and}  \;  (\lambda_{i}^{1}-\lambda_{i}^{2}) \geq  \alpha_{2}  \right \}.
    \end{aligned}
\end{equation}

First, a node from $\Omega$ is situated close to its closest cluster representative. Consequently, its first high-confidence assignment score is greater than a threshold $\alpha_{1}$, where $\alpha_{1}$ is a tunable hyper-parameter within the range $\left [ 0, \, 1 \right ]$. Second, a point from $\Omega$ is located far from the borderline between neighbor clusters as described by Equation (\ref{eq:Omega1}). Consequently, the difference between the first and second high-confidence assignment scores is greater than a threshold $\alpha_{2}$. We set $\alpha_{2} = \frac{\alpha_{1}}{2}$. Our sampling operator $\Xi$ is summarized in Algorithm \ref{algorithm1}. The computational complexity of Algorithm \ref{algorithm1} is $\mathcal{O}(NK^{2}d)$.

\begin{algorithm}[t]
\caption{Operator $\Xi$.}
\label{algorithm1}
\begin{algorithmic}[1]
\State {\bfseries Input:} Embedded data: $Z$, Number of clusters: $K$, First confidence threshold: $\alpha_{1}$, Second confidence threshold: $\alpha_{2}$.
\State {\bfseries Output:} Embedded representations of decidable nodes: $Z[\Omega]$.
\State Compute the matrix $(p'_{ij})_{i,j}\in\mathbb{R}^{N \times K }$ according to Eqn. (\ref{eq:p'}).
\For{$i=0$ {\bfseries to} $\left | X \right |$}
    \State Compute $\lambda_{i}^{1}$ according to Equation (\ref{eq:lambda_1}).
    \State Compute $\lambda_{i}^{2}$ according to Equation (\ref{eq:lambda_2}).
\EndFor
\State Construct $\Omega$ according to Equation (\ref{eq:Omega1}).
\State \textbf{Return} $Z[\Omega]$.
\end{algorithmic}
\end{algorithm}

\subsection{A correction mechanism against FD}

Real-world graphs carry edges that connect nodes from different clusters. Reconstructing the input graph structure is not suitable for learning clustering-oriented embeddings. To attenuate FD, we use the embeddings of reliable nodes $\Xi(Z(\theta))$ to gradually transform the reconstruction objective into a clustering-oriented cost. This can be done by gradually substituting the self-supervisory signal $A^{self}$ with a task-specific signal $\Upsilon(A, P(\Xi(Z(\theta))), \Omega)$. 

We propose two guidelines for developing the graph transforming operator $\Upsilon$. The first guideline consists of identifying a centroid node for each cluster. To this end, we compute $\tilde{\mu}_{j}$, which averages the embedded representations of reliable nodes from cluster $C_{j}^{clus}$. Then, for each $\tilde{\mu}_{j}$, we search for its nearest node, in the embedded space, among the set $\Omega$. The list of obtained nodes is denoted by $\Pi = \left [i \in \mathcal{V} | \; i= \text{1-NN}(\tilde{\mu}_{j}, \Omega) \text{ and } j \in \left \{ 1,...,K \right \} \right ]$, where 1-NN represents the nearest neighbor algorithm. 

The second guideline consists of constructing a new self-supervisory signal $A_{clus}^{self}$ based on the original graph structure $A$. To this end, we start by connecting each node from $\Omega$ with its associated centroid from $\Pi$. Then, we drop edges between nodes from $\Omega$, which are members of different clusters. As a result, the obtained graph $A_{clus}^{self}$ contains $K$ star-shaped sub-graphs representing the different clusters. Algorithm \ref{algorithm2} summarizes our proposed operator $\Upsilon$. The worst-case complexity of Algorithm \ref{algorithm2} is $\mathcal{O}(N(d+K)+|\mathcal{E}|(N+K))$. 

\begin{algorithm}[t]
\caption{Operator $\Upsilon$.}
\label{algorithm2}
\begin{algorithmic}[1]
\State {\bfseries Input:} Original sparse graph: $A$, Clustering assignment: $P$, Set of decidable nodes: $\Omega$.
\State {\bfseries Output:} Clustering-oriented self-supervision graph: $A_{clus}^{self}$.
\State $\Pi \leftarrow  \left [i \in \mathcal{V} | \; i= \text{1-NN}(\tilde{\mu}_{j}, \Omega) \text{ and } j \in \left \{ 1,...,K \right \} \right ]$.
\State $A_{clus}^{self} \leftarrow  A$
\For{$i$ in $\Omega$}
    \State $k_{1} \leftarrow  \text{arg}\max_{_{k}}(P[i, k])$
    \State $j \leftarrow  \Pi[k_{1}]$
    \State $k_{2} \leftarrow  \text{arg}\max_{_{k}}(P[j, k])$
    \If{($j \notin A[i]\text{.indices}$) and ($k_{1}=k_{2}$)} \Comment{$A[i]\text{.indices}$ indicates the list of nodes connected to node $i$.}  
        \State $A_{clus}^{self}[i, j] \leftarrow 1$
    \EndIf
    \For{$l$ in $A[i].indices$}
        \State $k_{2} \leftarrow \text{arg}\max_{_{k}}(P[l, k])$
        \If{($l \in \Omega$) and ($k_{1} \neq k_{2}$)}
            \State $A_{clus}^{self}[i, l] \leftarrow 0$
        \EndIf
    \EndFor
\EndFor 
\State \textbf{Return}  $A_{clus}^{self}$.
\end{algorithmic}
\end{algorithm}

A protection mechanism against FD can be established by transforming the self-supervisory signal $A$ into a clustering-oriented signal $\Upsilon(A, P(Z(\theta)), \mathcal{V})$, in a single step. This is done by applying $\Upsilon$ to the whole set of nodes $\mathcal{V}$, instead of $\Omega$. We argue that a correction mechanism, which allows FD to take place then gradually attenuates this problem, is a more advantageous
solution. 

\section{Experiments}

In order to validate the suitability of our conceptual design and our proposed operators, we conduct an extensive experimental protocol\footnote{We bring to the attention of the reader that our code can be found at: \url{https://github.com/nairouz/R-GAE}}. We show that it is possible to substantially improve the clustering performance of several GAE-based clustering models by integrating operators that can control FR and FD. We obtain promising results, which calls for further research in this direction. 

\subsection{Experimental settings}

Due to the limited number of second-group models, we propose a new approach entitled DGAE from this group. For the sake of reproducibility, we provide a technical description of this method in Appendix B. Our experimental protocol covers six models (GAE \cite{paper11}, VGAE \cite{paper11}, ARGAE \cite{paper25}, ARVGAE \cite{paper25}, GMM-VGAE \cite{paper26}, and DGAE). GAE, VGAE, ARGAE, and ARVGAE belong to the first GAE-based clustering group, which, as discussed in Section 2, establish clustering and embedding learning separately. DGAE and GMM-VGAE are members of the second group, which ensures joint clustering and embedding learning. For GAE, VGAE, ARGAE, and ARVGAE, we use the publicly available implementations. For GMM-VGAE, we reproduce their reported results by performing our implementation. We integrate our operators $\Xi$ and $\Upsilon$ into the aforementioned models. For the first group, we use $\Xi$ and $\Upsilon$ to gradually transform the reconstruction loss into a clustering-oriented objective, during the pretraining phase. We keep the original settings (optimizer, hyper-parameters, architecture) of each model for fairness of comparison. The obtained methods are abbreviated by (R-GAE, R-VGAE, R-ARGAE, R-ARVGAE, R-GMM-VGAE, R-DGAE). ``R-$\mathcal{D}$" stands for Rethinking the model $\mathcal{D}$ (i.e., GAE, VGAE, ARGAE, ARVGAE, GMM-VGAE, DGAE) from the perspective of FR and FD. To avoid training instability due to the consistent modification of the self-supervisory signal, we update  $\Omega$ and $A_{clus}^{self}$ every $M_{1}$ and $M_{2}$ iterations, respectively. We train the obtained models until meeting the convergence criterion $| \Omega | \geq  0.9  * |\mathcal{V}|$. Compared with the original approaches, that is, GAE, VGAE, ARGAE, ARVGAE, GMM-VGAE, and DGAE, three additional hyper-parameters, namely $M_{1}$, $M_{2}$ and $\alpha_{1}$, should be specified. The values of these parameters are provided in Appendix C. We assess the proposed operators on six benchmark datasets. Our evaluation includes three citation networks (Cora, Citeseer, and Pubmed \cite{paper84}) and three air-traffic networks (USA, Europe, and Brazil \cite{paper28}). Since the air-traffic networks go without node attributes, we leverage the one-hot encoding of node degrees to construct the feature matrix $X$ similar to \cite{paper85}. For all datasets, $X$ is (row-)normalized with the Euclidean norm. 




\subsection{Results}
We present the principal results of our experiments in this section. However, due to limited space, we provide further experiments and results in Appendix \ref{appendix_J}. 

\textbf{Effectiveness:} 
In Tables \ref{Table:best_acc_nmi_ari}, \ref{Table:mean_acc_nmi_ari}, \ref{Table:best_acc_nmi_ari_GMM-VGAE_R-GMM-VGAE}, and \ref{Table:mean_acc_nmi_ari_GMM-VGAE_R-GMM-VGAE}, we report the best and average clustering results among three trials on six datasets. For all tables, we mark the best methods in bold and the clustering performances in \%. For fairness of comparison, we ensure that each couple of methods $\mathcal{D}$ and R-$\mathcal{D}$ share the same pretraining weights before starting the clustering phase. Table \ref{Table:best_acc_nmi_ari} provides the best clustering performances on three citation networks. From this table, we observe that the second GAE group methods yield considerably better results than methods from the first group. These results confirm that performing joint clustering and embedding learning is advantageous to the clustering task. Among the first group, we can see that (R-GAE, R-VGAE, R-ARGAE, R-ARVGAE) generally have better ACC, NMI, and ARI compared with their counterparts (GAE, VGAE, ARGAE, ARVGAE). The embedded representations of (GAE, VGAE, ARGAE, ARVGAE) are optimized using the reconstruction objective. These methods do not suffer from FR and FD. By gradually transforming the graph reconstruction into a clustering-oriented loss, during the training process, (R-GAE, R-VGAE, R-ARGAE, R-ARVGAE) make the embedded representations more clustering-oriented. Among the second group, we observe that (R-GMM-VGAE, R-DGAE) outperform their counterparts (GMM-VGAE, DGAE) consistently by a significant margin. To confirm these results, we extend the performed experiments to three additional datasets as shown in Table \ref{Table:best_acc_nmi_ari_GMM-VGAE_R-GMM-VGAE}. Our results offer strong evidence that the proposed operators can improve the clustering effectiveness of GAE models in terms of ACC, NMI, and ARI. Since this manuscript aims at investigating the impact of FR and FD, we focus on (R-GMM-VGAE, R-DGAE) and their counterparts (GMM-VGAE, DGAE) in the subsequent experiments. Moreover, we provide a comprehensive comparison against several recent graph clustering methods in Appendix \ref{appendix_J}.

\begin{table*}[!h]
  \caption{Best clustering performance for the original and proposed GAE models on Cora, Citeseer and Pubmed.}
  \vspace{-0.5\baselineskip}
  \begin{center}
  \begin{small}
  \begin{tabular}{|p{2.3cm}|c|c|c|c|c|c|c|c|c|}
    \hline
    {Method} & \multicolumn{3}{c|}{Cora} & \multicolumn{3}{c|}{Citeseer} & \multicolumn{3}{c|}{Pubmed}  \\
    \cline{2-10}
    & ACC & NMI & ARI & ACC & NMI & ARI & ACC & NMI & ARI \\ \hline
    \hline
    GAE & 61.3 & 44.4 & 38.1 & 48.2 & 22.7 & 19.2 & 64.2 & 22.5 & 22.1  \\ \hline
    \textbf{R-GAE}  & \textbf{65.8} & \textbf{51.6} & \textbf{44.1} & \textbf{50.1} & \textbf{24.6} & \textbf{20.0} & \textbf{69.6} & \textbf{31.4} & \textbf{31.6}  \\ \hline
    \hline
    
    VGAE  & 64.7 & 43.4 & 37.5 & \textbf{51.9} & \textbf{24.9} & \textbf{23.8} & \textbf{69.6} & 28.6& \textbf{31.7} \\ \hline
    \textbf{R-VGAE}  & \textbf{71.3} & \textbf{49.8} & \textbf{48.0} & 44.9 & 19.9 & 12.5 & 69.2 & \textbf{30.3} & 30.9 \\ \hline
    \hline
    
    ARGAE & 64.0 & 44.9 & 35.2 & \textbf{57.3} & \textbf{35.0} & \textbf{34.1} & 68.1 & 27.6 & 29.1 \\ \hline
    \textbf{R-ARGAE}  & \textbf{72.0} & \textbf{51.5} & \textbf{49.5} & 49.3 & 28.4 & 17.4 & \textbf{70.2} & \textbf{31.4} & \textbf{32.6}\\ \hline
    \hline

    ARVGAE & 63.8 & 45.4	& 40.1 & 54.4 & 26.1 & 24.5 & 63.5 & 23.2 & 22.5 \\ \hline
    \textbf{R-ARVGAE} & \textbf{67.2} & \textbf{47.4} & \textbf{44.0} & \textbf{59.4} & \textbf{32.5} & \textbf{31.4} & \textbf{65.9} & \textbf{24.3} & \textbf{25.2}\\ \hline
    \hline
    
    DGAE & 70.2 & 50.7 & 47.2 & 67.7 & 40.9 & 42.5 & 68.4 & 29.0 & 29.1 \\ \hline
    \textbf{R-DGAE} & \textbf{73.7} & \textbf{56.0} & \textbf{54.1} & \textbf{70.5} & \textbf{45.0} & \textbf{47.1} & \textbf{71.4} & \textbf{34.4} & \textbf{34.6}  \\ \hline
    \hline
    
    GMM-VGAE & 71.9 & 53.3 & 48.2 & 67.5 & 40.7 & 42.4 & 71.1 & 29.9 & 33.0 \\ \hline
    \textbf{R-GMM-VGAE} & \textbf{76.7} & \textbf{57.3} & \textbf{57.9} & \textbf{68.9} & \textbf{42.0} & \textbf{43.9} & \textbf{74.0} & \textbf{33.4} & \textbf{37.9} \\ \hline
 
  \end{tabular}
  \end{small}
  \end{center}
  \label{Table:best_acc_nmi_ari}
\end{table*}

\begin{table*}[!h]
  \caption{Mean and standard deviation of evaluation metrics for the original and proposed GAE models on Cora, Citeseer and Pubmed.}
  \vspace{-0.5\baselineskip}
  \begin{center}
  \begin{small}
  \begin{tabular}{|p{2.3cm}|c|c|c|c|c|c|c|c|c|}
    \hline
    {Method} & \multicolumn{3}{c|}{Cora} & \multicolumn{3}{c|}{Citeseer} & \multicolumn{3}{c|}{Pubmed}  \\
    \cline{2-10}
    & ACC & NMI & ARI & ACC & NMI & ARI & ACC & NMI & ARI \\ \hline
    \hline

    GAE & 55.6 $\pm$ 4.9 & 41.2 $\pm$ 2.8 & 33.2 $\pm$ 4.5 & 42.5	$\pm$ 5.2 & 19.9 $\pm$ 2.6  & 13.7 $\pm$ 5.6 &  63.7 $\pm$ 0.5 & 23.3 $\pm$ 1.4 & 22.7	$\pm$ 1.6  \\ \hline
    \textbf{R-GAE} & \textbf{65.0 $\pm$ 1.0} & \textbf{50.2 $\pm$ 1.3} & \textbf{43.3 $\pm$ 0.7} & \textbf{49.8 $\pm$ 0.5} & \textbf{24.3 $\pm$ 0.3} & \textbf{19.6 $\pm$ 0.6} &  \textbf{68.0 $\pm$ 1.4} & \textbf{28.7 $\pm$ 2.4} &  \textbf{29.3 $\pm$ 2.1} \\ \hline
    \hline

    VGAE  & 58.6 $\pm$ 5.3 & 40.1	$\pm$ 2.9 & 34.2 $\pm$ 2.9 & \textbf{50.3 $\pm$ 1.6} & \textbf{23.6	$\pm$ 1.7} & \textbf{22.1 $\pm$ 2.4} & 68.9	$\pm$ 0.8 & 28.3 $\pm$ 1.1 & 30.6 $\pm$ 1.1 \\ \hline
    \textbf{R-VGAE} & \textbf{70.3 $\pm$ 1.2} & \textbf{48.8 $\pm$ 0.9} & \textbf{46.7 $\pm$ 1.2} & 42.6 $\pm$ 2.1 & 14.9 $\pm$ 4.3 & 12.3 $\pm$ 0.3 & \textbf{68.9 $\pm$ 0.3} & \textbf{29.9 $\pm$ 0.4} & \textbf{30.6 $\pm$ 0.3}\\ \hline
    \hline

    ARGAE & 59.3 $\pm$ 4.0 & 42.2 $\pm$ 2.5 & 31.6 $\pm$ 5.0 & 36.6 $\pm$ 8.4 & 28.4 $\pm$ 4.0 & 16.1 $\pm$ 7.5 &  68 $\pm$ 0.1 & 29.4 $\pm$ 1.6 & 29.3	$\pm$ 0.3 \\ \hline
    \textbf{R-ARGAE} & \textbf{71.2 $\pm$ 0.7} & \textbf{50.73 $\pm$ 0.8} & \textbf{47.1 $\pm$ 2.3} & \textbf{48.6 $\pm$ 0.7} & \textbf{28.5 $\pm$ 0.3} & \textbf{18.9 $\pm$ 1.3} & \textbf{69.2 $\pm$	0.9} & \textbf{30.0 $\pm$ 1.2} & \textbf{30.9 $\pm$	1.4} \\ \hline
    \hline
    
    ARVGAE & 63.4 $\pm$ 0.7 & 45.3 $\pm$ 0.3 & 39.17 $\pm$ 1.5 & 51.5 $\pm$ 2.9 & 26.3	$\pm$ 1.4 & 22.7 $\pm$ 1.8 & 63.4 $\pm$ 0.1 & 23.1 $\pm$ 0.1 & 22.4 $\pm$ 0.2 \\ \hline
    \textbf{R-ARVGAE} & \textbf{67.0 $\pm$	0.2} & \textbf{47.2 $\pm$ 0.1} & \textbf{43.8 $\pm$ 0.5} & \textbf{59.2 $\pm$ 0.3} & \textbf{31.6 $\pm$ 0.8} & \textbf{30.8 $\pm$ 0.6} & \textbf{65.73 $\pm$ 0.2} & \textbf{23.7 $\pm$ 0.5}  & \textbf{24.9 $\pm$ 0.3} \\ \hline
    \hline
    
    DGAE & 69.8 $\pm$ 0.5 & 49.9 $\pm$ 0.7 & 46.3 $\pm$ 0.9 & 66.5 $\pm$ 1.1 & 39.2 $\pm$ 1.5 & 40.3	$\pm$ 1.9 & 67.8 $\pm$ 0.6 & 28.0 $\pm$ 1.0 & 28.0 $\pm$ 1.0 \\ \hline
    \textbf{R-DGAE} & \textbf{73.1 $\pm$ 0.7} & \textbf{55.3 $\pm$ 0.7} & \textbf{53.0 $\pm$ 1.1} & \textbf{69.5 $\pm$ 0.8} & \textbf{43.7 $\pm$ 1.1} & \textbf{45.7 $\pm$ 1.2} & \textbf{71.0 $\pm$ 0.4} & \textbf{33.6 $\pm$ 0.9} &  \textbf{33.9 $\pm$	0.8} \\ \hline
    \hline
    
    GMM-VGAE & 71.7 $\pm$ 0.2 & 53.0 $\pm$ 0.3 & 47.9 $\pm$ 0.4 & 66.3 $\pm$ 0.5 & 39.5 $\pm$ 0.5 & 41.1 $\pm$ 0.6 &  70.6 $\pm$ 0.5 & 28.7 $\pm$ 1.1 & 32.0 $\pm$ 1.0 \\ \hline
    \textbf{R-GMM-VGAE} & \textbf{75.7 $\pm$ 0.9} & \textbf{55.8 $\pm$ 1.3} & \textbf{56.2 $\pm$ 1.5} & \textbf{68.4 $\pm$ 0.4} & \textbf{41.5 $\pm$ 0.4} & \textbf{43.6 $\pm$ 0.3} & \textbf{72.8 $\pm$ 1.5} & \textbf{32.2 $\pm$ 1.9} &  \textbf{35.7 $\pm$ 2.5} \\ \hline
    
  \end{tabular}
  \end{small}
  \end{center}
  \label{Table:mean_acc_nmi_ari}
\end{table*}

\begin{table*}[!h]
  \caption{Best clustering performance for the original and proposed GAE models on Air-Traffic datasets.}
  \vspace{-0.5\baselineskip}
  \begin{center}
  \begin{small}
  \begin{tabular}{|p{2.3cm}|c|c|c|c|c|c|c|c|c|}
    \hline
    {Method} & \multicolumn{3}{c|}{USA Air-Traffic} & \multicolumn{3}{c|}{Europe Air-Traffic} & \multicolumn{3}{c|}{Brazil Air-Traffic} \\
    \cline{2-10}
     & ACC & NMI & ARI & ACC & NMI & ARI & ACC & NMI & ARI \\ \hline
     \hline
    GMM-VGAE & 48.1 & 21.9 & 13.2 & 53.1 & 31.1	& 24.4 & 70.2 & \textbf{46.0} & 41.9 \\ \hline
    \textbf{R-GMM-VGAE} & \textbf{50.8} & \textbf{23.1} & \textbf{15.3} & \textbf{57.4} & \textbf{31.4} & \textbf{25.8} & \textbf{73.3} & 45.6 & \textbf{42.5} \\ \hline
    \hline
    DGAE & 46.4 & \textbf{28.0} & \textbf{18.4} & 53.6 & 33.3 & 23.3 & 71.0 & 48.0 & 41.2\\ \hline
    \textbf{R-DGAE} & \textbf{51.7}	& 24.7 & 16.5 & \textbf{57.1} & \textbf{34.5} & \textbf{25.2} & \textbf{74.0} & \textbf{51.3} & \textbf{45.4} \\ \hline
  \end{tabular}
  \end{small}
  \end{center}
  \label{Table:best_acc_nmi_ari_GMM-VGAE_R-GMM-VGAE}
\end{table*}

\begin{table*}[!h]
   \caption{Mean and standard deviation of the evaluation metrics for the original and proposed GAE models on Air-Traffic datasets.}
  \vspace{-0.5\baselineskip}
  \begin{center}
  \begin{small}
  \begin{tabular}{|p{2.3cm}|c|c|c|c|c|c|c|c|c|}
    \hline
        {Method} & \multicolumn{3}{c|}{USA Air-Traffic} & \multicolumn{3}{c|}{Europe Air-Traffic} & \multicolumn{3}{c|}{Brazil Air-Traffic} \\
    \cline{2-10}
     & ACC & NMI & ARI & ACC & NMI & ARI & ACC & NMI & ARI \\ \hline \hline

    GMM-VGAE & 47.2 $\pm$ 0.9 & 21 $\pm$ 0.8 & 12.7 $\pm$ 0.5 & 52.3 $\pm$ 1.0 & 29.2	$\pm$ 1.80 & 22.6 $\pm$ 1.5 & 69.0 $\pm$ 1.6 & 43.7 $\pm$ 2.6 &  38.8	$\pm$ 3.2 \\ \hline
    \textbf{R-GMM-VGAE} & \textbf{50.4 $\pm$ 0.59} & \textbf{22.6 $\pm$ 0.5} & \textbf{15.2 $\pm$ 0.6} & \textbf{56.4 $\pm$ 1.3} & \textbf{31.2 $\pm$ 0.78} &  \textbf{25.3 $\pm$ 0.8} & \textbf{71.8 $\pm$ 1.6} & \textbf{45.0 $\pm$ 2.7} & \textbf{41.6	$\pm$ 3.4} \\ \hline
    \hline
     
    DGAE & 45.8 $\pm$ 0.6 & \textbf{28.1 $\pm$ 0.2} & \textbf{18.2 $\pm$ 0.3} & 53.2 $\pm$ 0.5 & 33.1 $\pm$ 0.2 & 23.1 $\pm$ 0.2 & 70.7 $\pm$ 0.4 & 48.1 $\pm$ 1.0 & 39.9	$\pm$ 1.3\\ \hline
    \textbf{R-DGAE} & \textbf{51.3 $\pm$ 0.4} & 24.4 $\pm$ 0.4 & 16.2 $\pm$ 0.4 & \textbf{56.7 $\pm$ 0.7} & \textbf{33.2 $\pm$ 1.1} & \textbf{24.3 $\pm$ 0.8} & \textbf{74.1 $\pm$ 0.3} & \textbf{52.4 $\pm$ 1.3} & \textbf{45.7	$\pm$ 0.6} \\ \hline
    
  \end{tabular}
  \end{small}
  \end{center}
  \label{Table:mean_acc_nmi_ari_GMM-VGAE_R-GMM-VGAE}
\end{table*}

\textbf{Efficiency:} In Table \ref{Table:efficiency}, we compare (R-GMM-VGAE, R-DGAE) with their counterparts (GMM-VGAE, DGAE) in terms of run-time. We report the best, the mean, and the variance in execution time over ten trials. Although Pubmed has almost ten times more edges and features than Cora and Citeseer, we observe that the difference in execution time between (R-GMM-VGAE, R-DGAE) and their counterparts (GMM-VGAE, DGAE) remains considerably small on Pubmed. In accordance with the provided complexity analysis for Algorithm \ref{algorithm1} and Algorithm \ref{algorithm2}, our results confirm that the designed operators do not cause any significant overhead in execution time, compared with the original models.

\begin{table*}[!h]
  \caption{Execution time (in seconds) of the couples (GMM-VGAE, R-GMM-VGAE) and (DGAE, R-DGAE). }
  \vspace{-0.5\baselineskip}
  \begin{center}
  \begin{small}
  \begin{tabular}{|p{2.2cm}|c|c|c|c|c|c|c|c|c|}
    \hline
    {Method} & \multicolumn{3}{c|}{Cora} & \multicolumn{3}{c|}{Citeseer} & \multicolumn{3}{c|}{Pubmed}  \\
    \cline{2-10}
    & Best  & Mean & Variance & Best & Mean & Variance & Best & Mean & Variance\\ \hline \hline
    GMM-VGAE & 17.135 & 17.703 & 0.530 & 36.269 & 36.442 & 1.436 & 1341.190 & 1348.960 & 16.056\\ \hline
    \textbf{R-GMM-VGAE} & 21.928 & 24.509 & 2.589 & 40.084 & 41.910 & 2.884 & 1457.188 & 1477.405 & 155.492\\  \hline
    \hline
    DGAE & 19.298 & 20.179 & 0.644 & 38.074 & 38.226 & 0.012 & 1067.301 & 1076.431 & 33.446  \\ \hline
    \textbf{R-DGAE} & 28.981 & 31.053 & 1.464 & 51.363 & 52.976 & 1.850 & 1192.913 & 1215.241 & 361.036 \\ 
    \hline
  \end{tabular}
  \end{small}
  \end{center}
  \label{Table:efficiency}
\end{table*}

\textbf{Visualisation of $A_{clus}^{self}$:} In Figure \ref{fig:vis_graphs_R-GMM-VGAE}, we visualize the self-supervisory graph $A_{clus}^{self}$ constructed by $\Upsilon$, during the training of R-GMM-VGAE on Cora. As the training progresses, more nodes are connected with their associated centroids. Furthermore, we observe that several clustering-unfriendly edges are dropped. At epoch 120, $A_{clus}^{self}$ contains $7$ star-shaped sub-graphs representing the different clusters. These results confirm the ability of our operator $\Upsilon$ to gradually transform the reconstructed graph into a clustering-oriented graph.

\begin{figure*}[!h]
  \begin{subfigure}[b]{0.24\textwidth}
    \includegraphics[width=\linewidth]{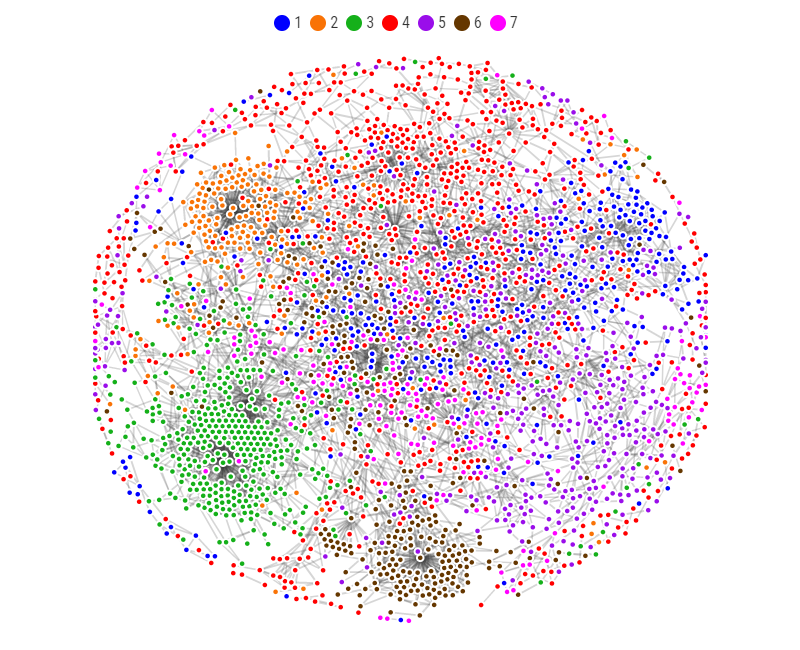}
    \caption{Epoch 0}
  \end{subfigure}
  \begin{subfigure}[b]{0.24\textwidth}
    \includegraphics[width=\linewidth]{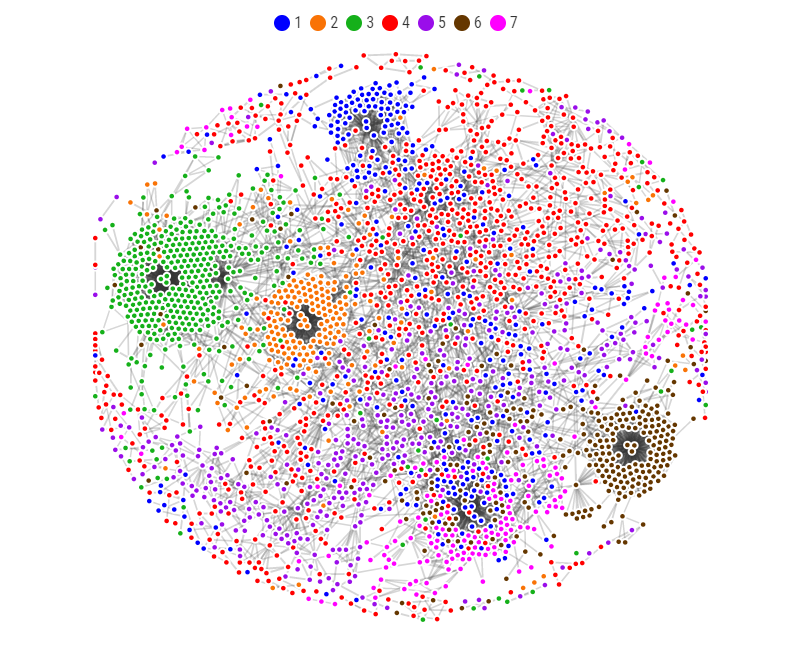}
    \caption{Epoch 40}
  \end{subfigure}
  \begin{subfigure}[b]{0.24\textwidth}
    \includegraphics[width=\linewidth]{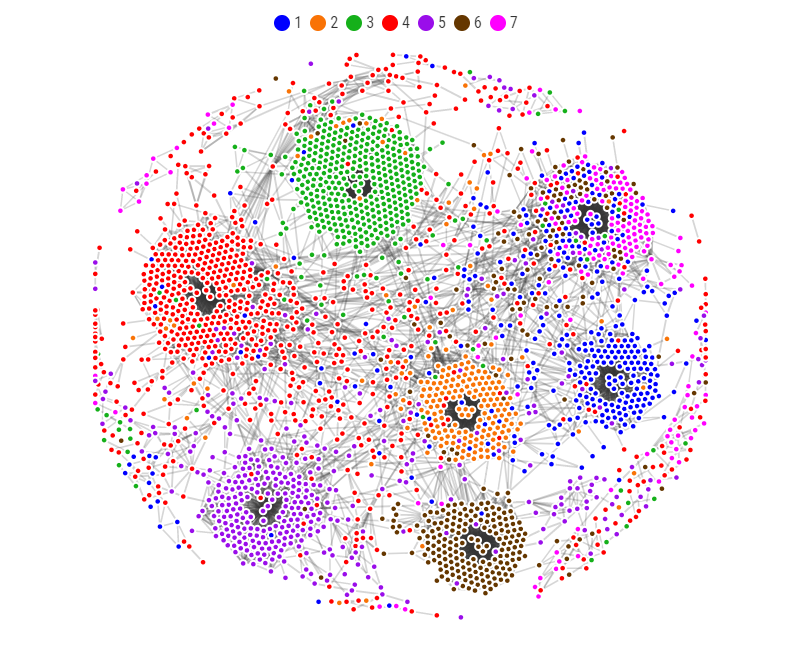}
    \caption{Epoch 80}
  \end{subfigure}
  \begin{subfigure}[b]{0.24\textwidth}
    \includegraphics[width=\linewidth]{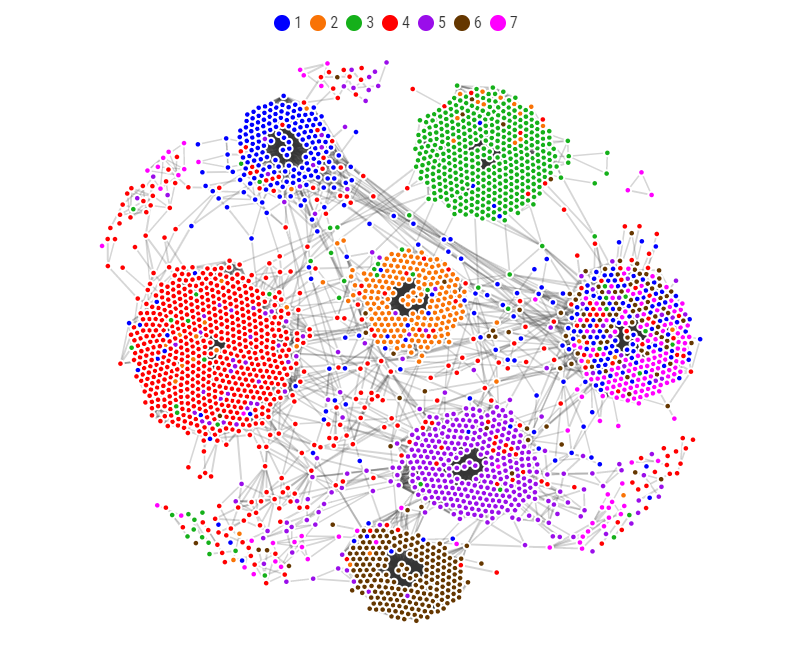}
    \caption{Epoch 120}
  \end{subfigure}
  \caption{Visualizing the self-supervisory graph $A_{clus}^{self}$, on Cora using R-GMM-VGAE.}
  \label{fig:vis_graphs_R-GMM-VGAE}
\end{figure*}

\textbf{Feature Randomness:} In this part, we discuss the evolution of $\Lambda_{FR}$ values for GMM-VGAE and R-GMM-VGAE on Cora. The cosine similarity between the gradient of $L_{clus}(Z(\theta), \,P)$ and the gradient of $L_{clus}(Z(\theta), \,Q')$ is denoted by $\Lambda_{FR} \text{(GMM-VGAE)}$, whereas $\Lambda_{FR}\text{(R-GMM-VGAE)}$ denotes the cosine similarity between the gradient of $L_{clus}(\Xi(Z(\theta)), \,P)$ and the gradient of $L_{clus}(Z(\theta), \,Q')$. We illustrate both metrics, during training of R-GMM-VAGE and GMM-VGAE, in Figures \ref{fig:FR_R-GMM-VGAE} (a) and (b), respectively. To facilitate our analysis, we also provide the normalized cumulative difference between $\Lambda_{FR}\text{(R-GMM-VGAE)}$ and $\Lambda_{FR} \text{(GMM-VGAE)}$, during the training of R-GMM-VAGE and GMM-VGAE, in Figures \ref{fig:FR_R-GMM-VGAE} (d) and (e), respectively. As a general observation from Figures \ref{fig:FR_R-GMM-VGAE} (a), (b), and (c), $\Lambda_{FR} \text{(GMM-VGAE)}$ and $\Lambda_{FR} \text{(R-GMM-VGAE)}$ start from very high values (close to one). This implies that the unsupervised gradient, at an early training stage, has the same direction as the supervised one. This result is congruent with recent findings, which suggest that training with ground-truth or random labels prioritizes learning simple patterns first at the level of the earlier layers \cite{paper52, paper46}. These simple patterns are not dependent on the target labels \cite{paper64}. 

For the first experiment (Figures \ref{fig:FR_R-GMM-VGAE} (a) and (d)), we train R-GMM-VGAE and we report $\Lambda_{FR} \text{(GMM-VGAE)}$, $\Lambda_{FR} \text{(R-GMM-VGAE)}$, and the normalized cumulative difference between both of them. We can see that there are two stages. The first stage ranges from iteration 0 to 60, and the second stage ranges from iteration 60 to 140. For the first stage, we observe that $\Lambda_{FR} \text{(R-GMM-VGAE)}$ is higher than $\Lambda_{FR} \text{(GMM-VGAE)}$. This result is confirmed by observing the cumulative difference between $\Lambda_{FR} \text{(R-GMM-VGAE)}$ and $\Lambda_{FR} \text{(GMM-VGAE)}$ in Figure \ref{fig:FR_R-GMM-VGAE} (d), which has a pronounced increasing tendency. These results demonstrate the ability of our operator $\Xi$ to reduce FR, during the first stage. For the second stage (from iteration 60 to 140 of Figure \ref{fig:FR_R-GMM-VGAE} (a)), the blue and green curves become closer to each other. This observation is confirmed by a lower slope for the curve of Figure \ref{fig:FR_R-GMM-VGAE} (d) compared with the slope of the same curve for the first stage (i.e., between iterations 0 and 60 of Figure \ref{fig:FR_R-GMM-VGAE} (d)). At this point, $\Omega$ gradually approaches $\mathcal{V}$. Therefore, $\Lambda_{FR} \text{(R-GMM-VGAE)}$ becomes approximately equal to $\Lambda_{FR} \text{(GMM-VGAE)}$.

For the second experiment (Figures \ref{fig:FR_R-GMM-VGAE} (b) and (e)), we train GMM-VGAE and we report $\Lambda_{FR} \text{(GMM-VGAE)}$, $\Lambda_{FR} \text{(R-GMM-VGAE)}$, and the normalized cumulative difference between both of them. We observe that $\Lambda_{FR} \text{(R-GMM-VGAE)}$ is consistently close to 1. From Figure \ref{fig:FR_R-GMM-VGAE} (e), we can see that the cumulative difference between $\Lambda_{FR} \text{(R-GMM-VGAE)}$ and $\Lambda_{FR} \text{(R-GMM-VGAE)}$ has almost a constant slope. These results suggest that $\Xi$ can \emph{consistently} select a sufficient amount of reliable nodes even after learning based on unreliable nodes. Thus, $\Xi$ is capable of playing the role of a protection mechanism against FR. 

For the third experiment (Figures \ref{fig:FR_R-GMM-VGAE} (c) and (f)), we train GMM-VGAE and report $\Lambda_{FR} \text{(GMM-VGAE)}$, we train R-GMM-VGAE and report $\Lambda_{FR} \text{(R-GMM-VGAE)}$, and we finally report the normalized cumulative difference between both of them. We can see that there are three stages. The first stage ranges from iteration 0 to 50, the second stage ranges from iteration 50 to 100, and the third stage ranges from iteration 100 to 140. For the first stage, we observe that R-GMM-VGAE outperforms GMM-VGAE in terms of $\Lambda_{FR}$ thanks to our operator $\Xi$. For the second stage, we observe that GMM-VGAE yields better results than R-GMM-VGAE in terms of $\Lambda_{FR}$. To reduce FD, R-GMM-VGAE transforms the reconstruction loss into a clustering-oriented loss. However, eliminating the reconstruction gives rise to FR. Unlike R-GMM-VGAE, GMM-VGAE maintains the reconstruction loss, during the second stage, which is considered an implicit mechanism against FR. Figure \ref{fig:FR_R-GMM-VGAE} (f) shows clearly the trade-off between FR and FD. Although both models have reduced the same amount of FR, delaying the effect of FR has a favorable impact on the clustering performance. For the third stage, both models tie together. This experiment shows that using a protection mechanism delays the effect of FR and does not prevent it from taking place. By delaying the effect of randomness using a protection mechanism, it is possible to improve the clustering performance considerably.

\begin{figure*}[!h]
  \centering
  \begin{subfigure}[b]{0.33\textwidth}
    \includegraphics[width=\linewidth]{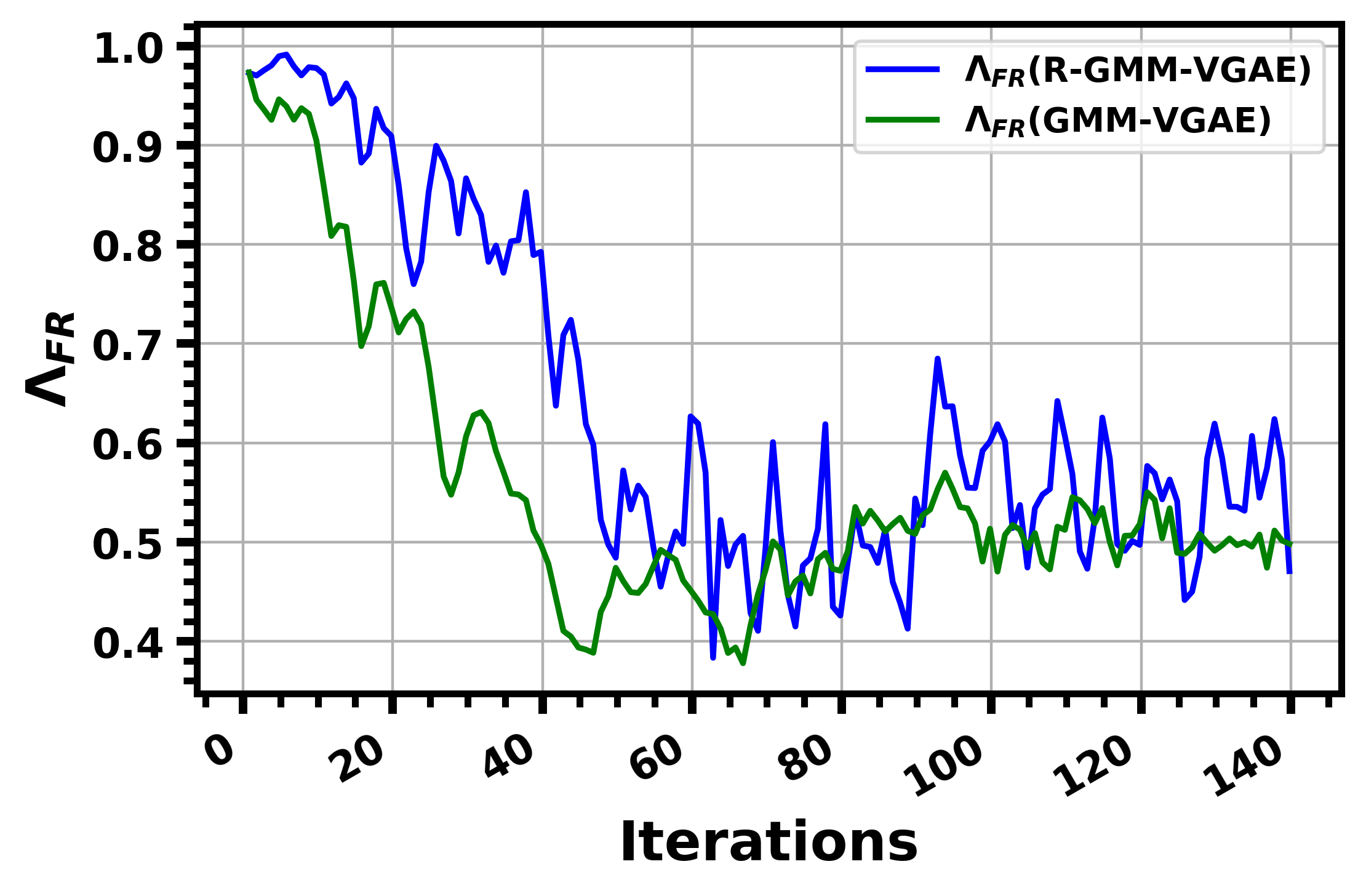}
    \caption{R-GMM-VGAE training}
  \end{subfigure} \hfil
  \begin{subfigure}[b]{0.33\textwidth}
     \includegraphics[width=\linewidth]{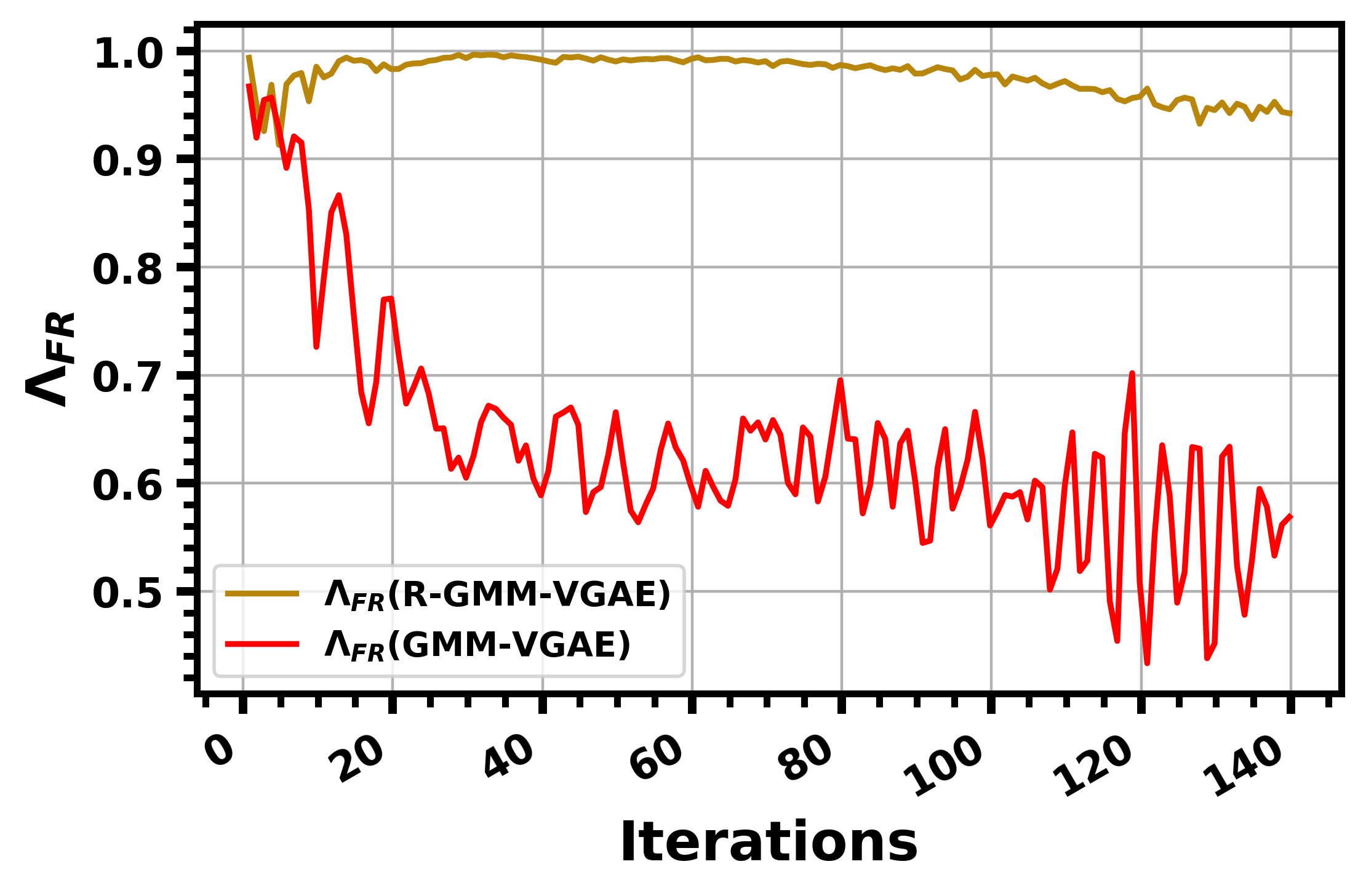}
     \caption{GMM-VGAE training}
  \end{subfigure} \hfil
  \begin{subfigure}[b]{0.33\textwidth}
     \includegraphics[width=\linewidth]{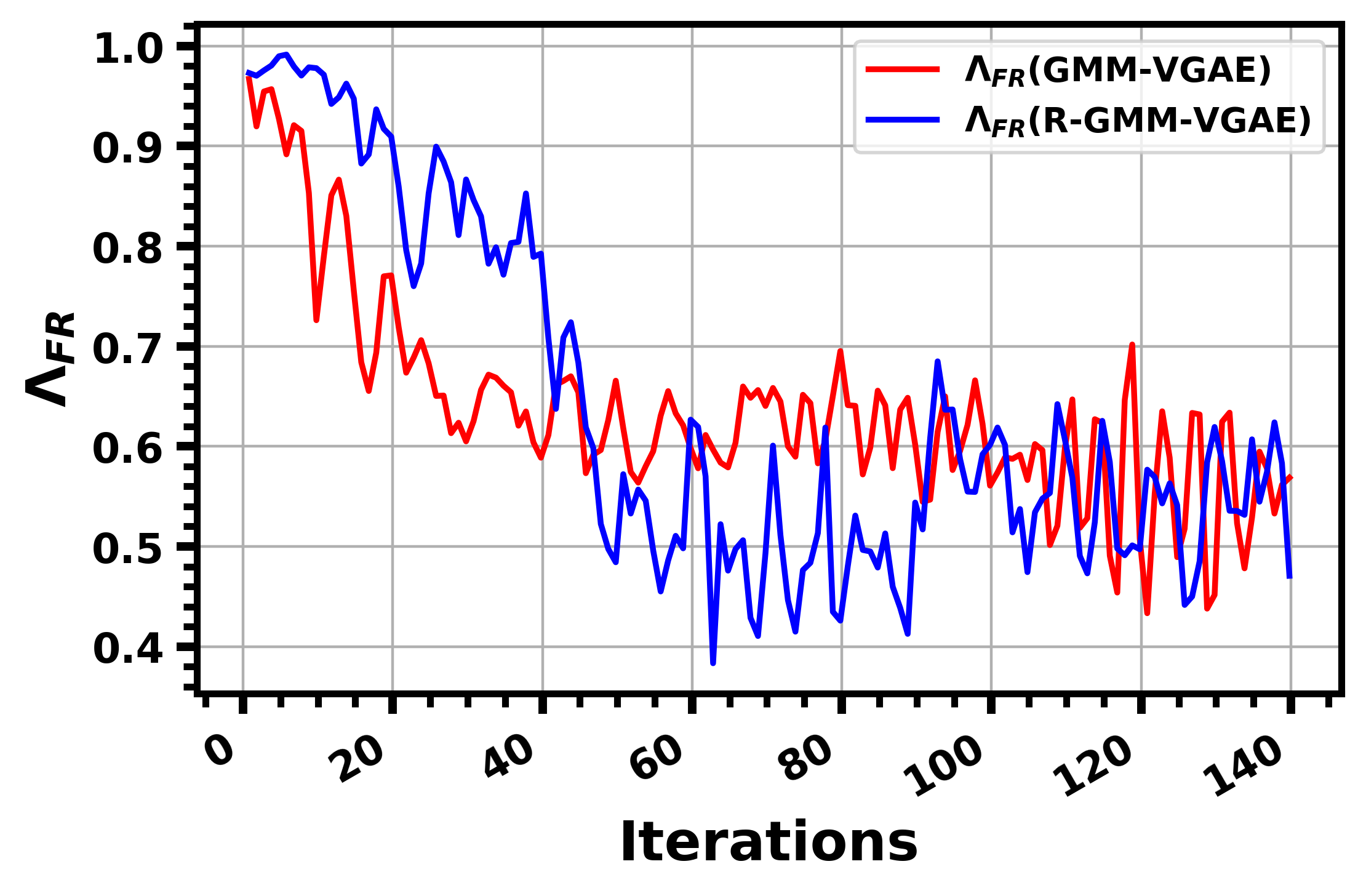}
     \caption{Independent training}
  \end{subfigure} \hfil
  
  \medskip
  \begin{subfigure}[b]{0.33\textwidth}
    \includegraphics[width=\linewidth]{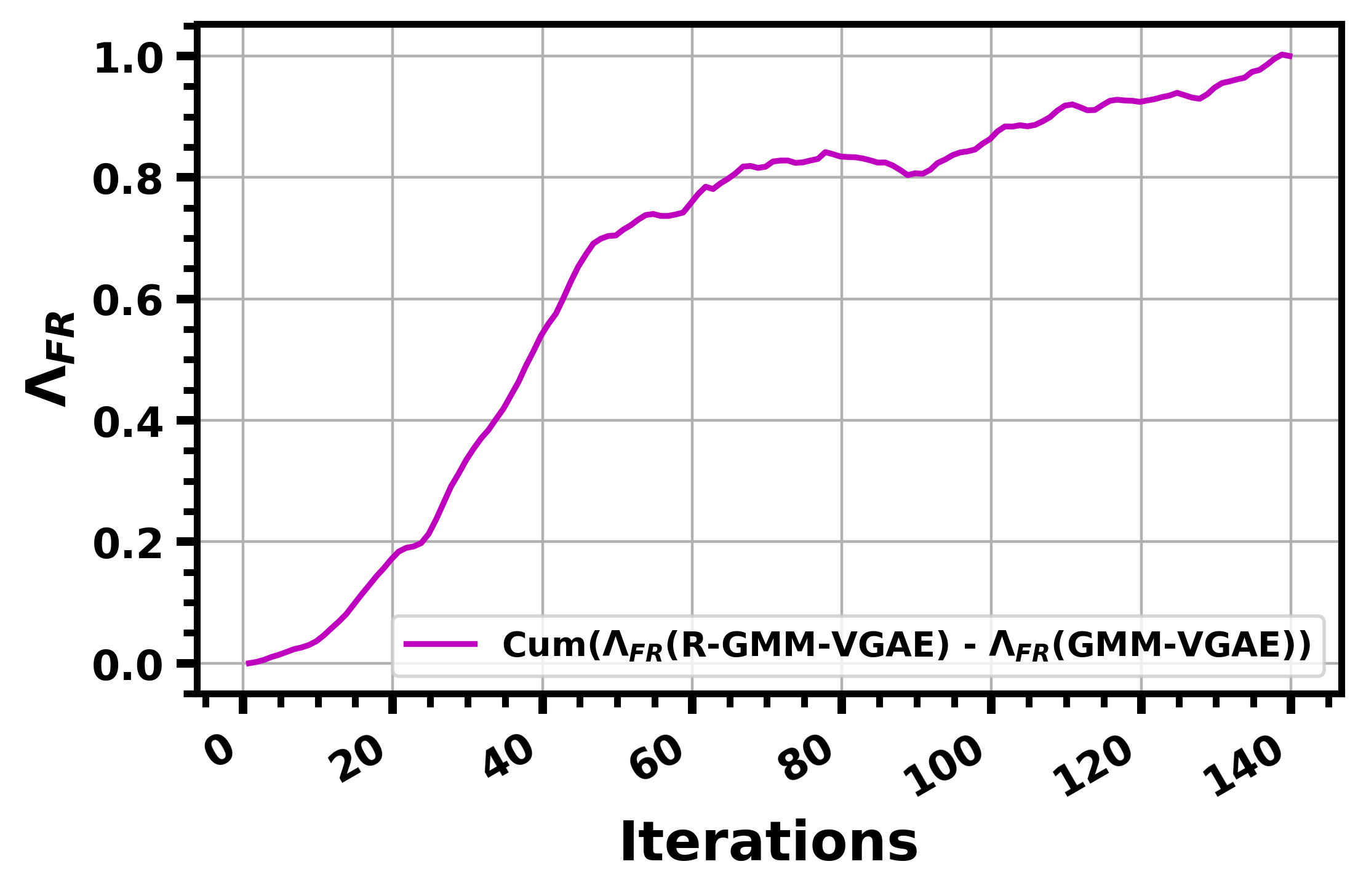}
    \caption{R-GMM-VGAE training}
  \end{subfigure} \hfil
  \begin{subfigure}[b]{0.33\textwidth}
     \includegraphics[width=\linewidth]{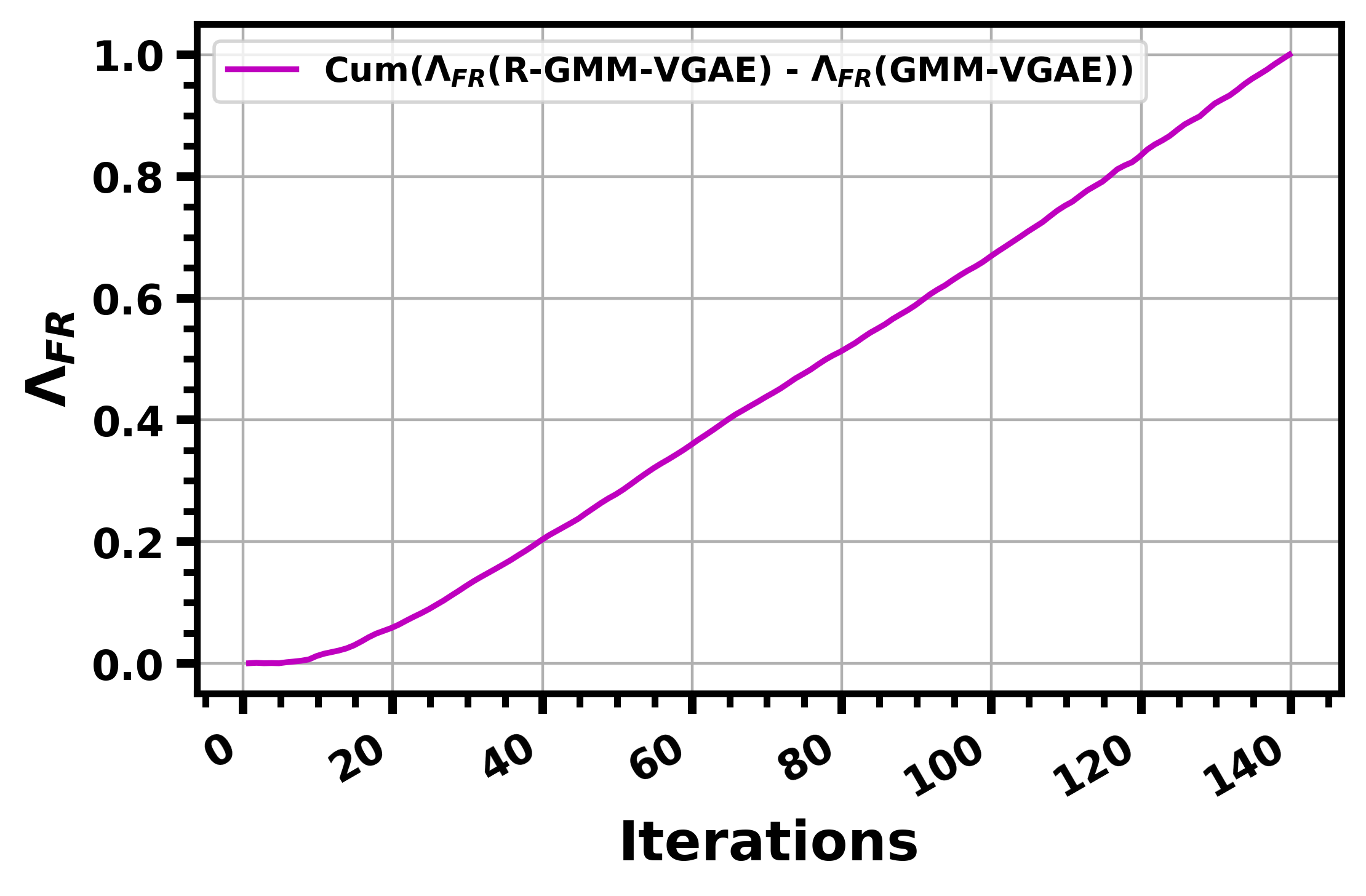}
     \caption{GMM-VGAE training}
  \end{subfigure} \hfil
  \begin{subfigure}[b]{0.33\textwidth}
     \includegraphics[width=\linewidth]{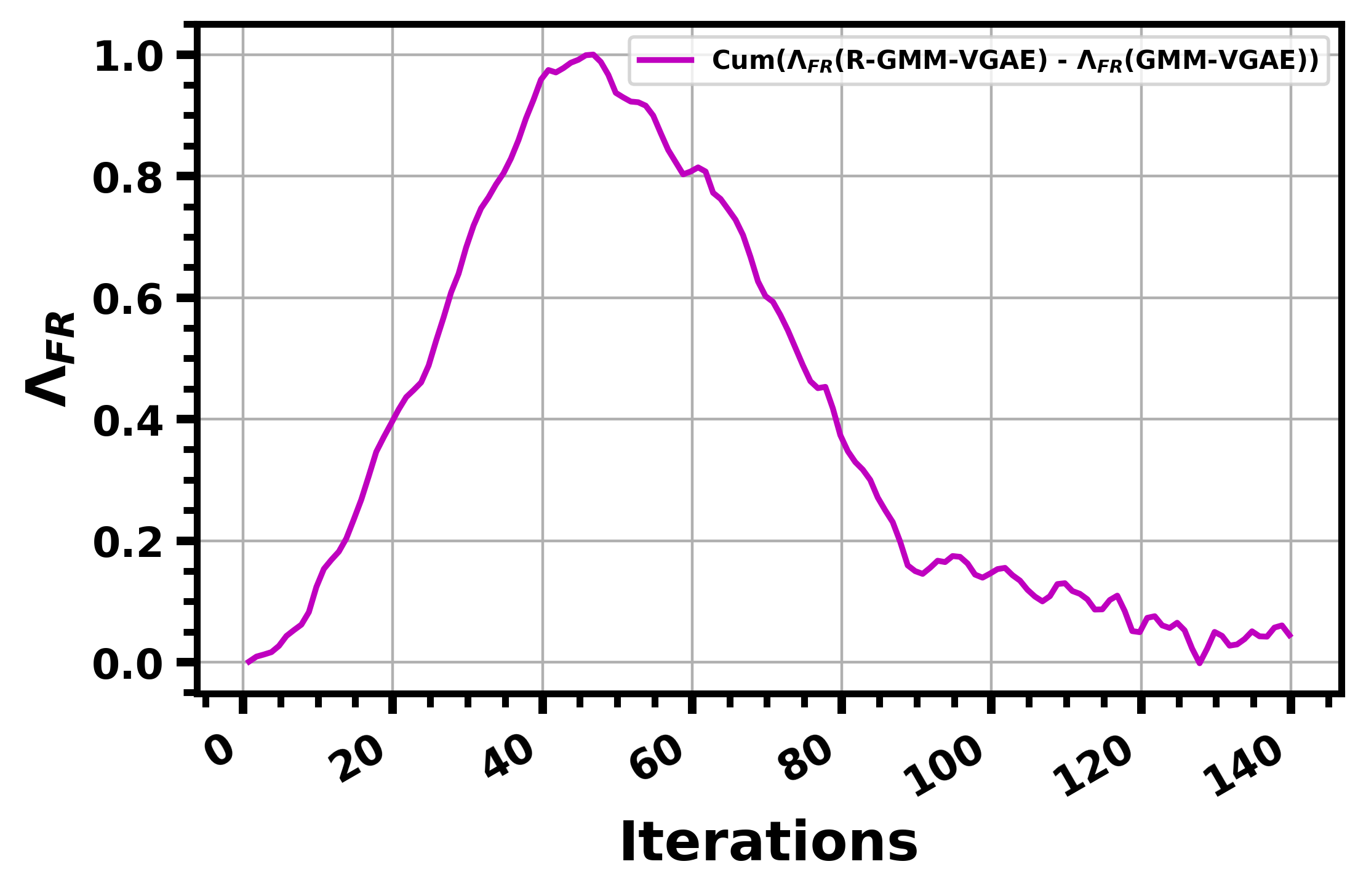}
     \caption{Independent training}
  \end{subfigure}
  
  \caption{Performance of R-GMM-VGAE and GMM-VGAE in terms of $\Lambda_{FR}$ on Cora. Blue line: $\Lambda_{FR}$ values of R-GMM-VGAE, during training of R-GMM-VGAE. Green line: $\Lambda_{FR}$ values of GMM-VGAE, during training of R-GMM-VGAE. Gold line: $\Lambda_{FR}$ values of R-GMM-VGAE, during training of GMM-VGAE. Red line: $\Lambda_{FR}$ values of GMM-VGAE, during training of GMM-VGAE. Purple line: normalized cumulative difference between $\Lambda_{FR}$ values of R-GMM-VGAE and $\Lambda_{FR}$ values of GMM-VGAE.}
  \label{fig:FR_R-GMM-VGAE}
\end{figure*}

\begin{figure*}[!h]
  \centering
  \begin{subfigure}[b]{0.33\textwidth}
    \includegraphics[width=\linewidth]{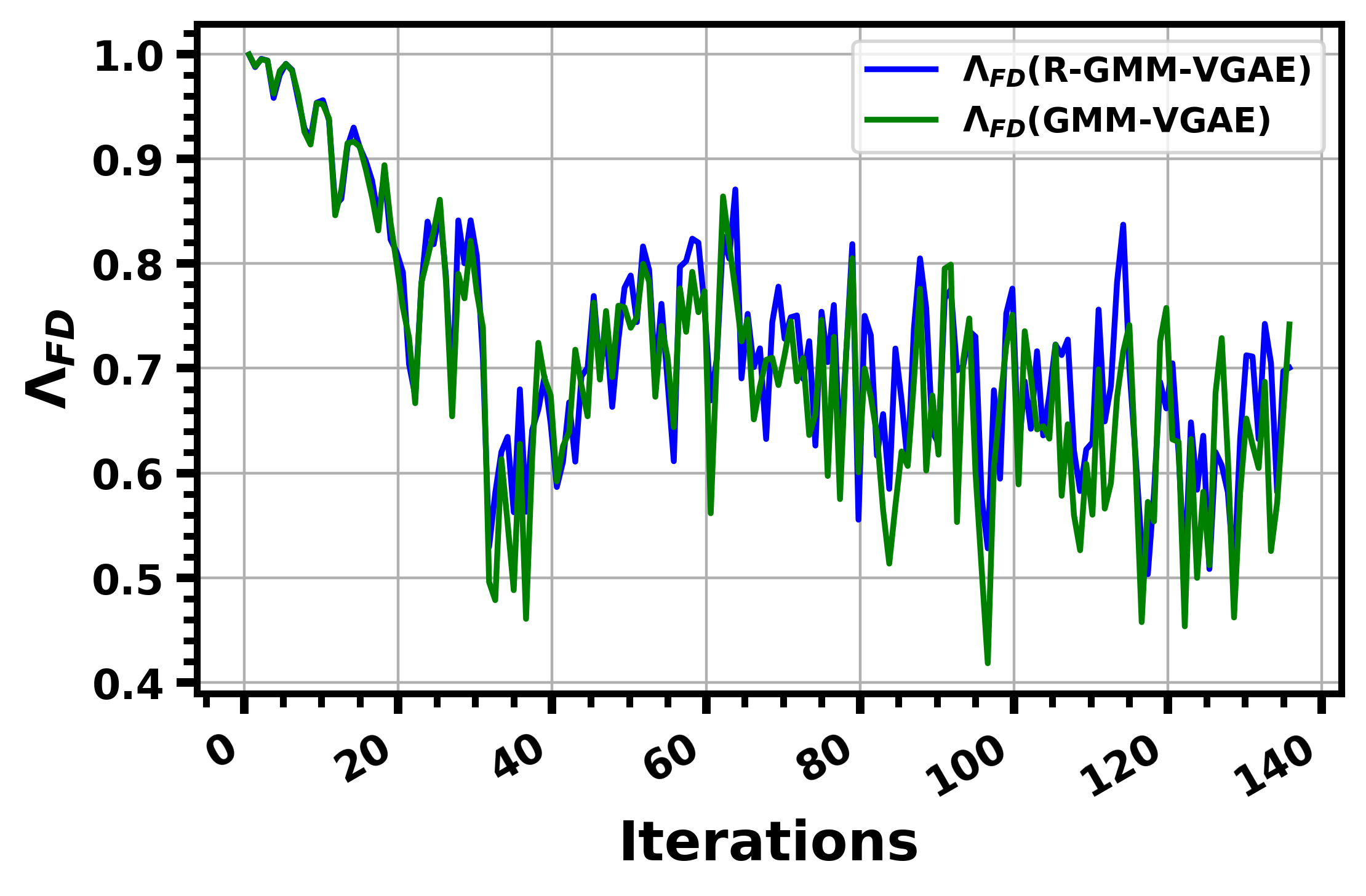}
    \caption{R-GMM-VGAE training}
  \end{subfigure} \hfil
  \begin{subfigure}[b]{0.33\textwidth}
     \includegraphics[width=\linewidth]{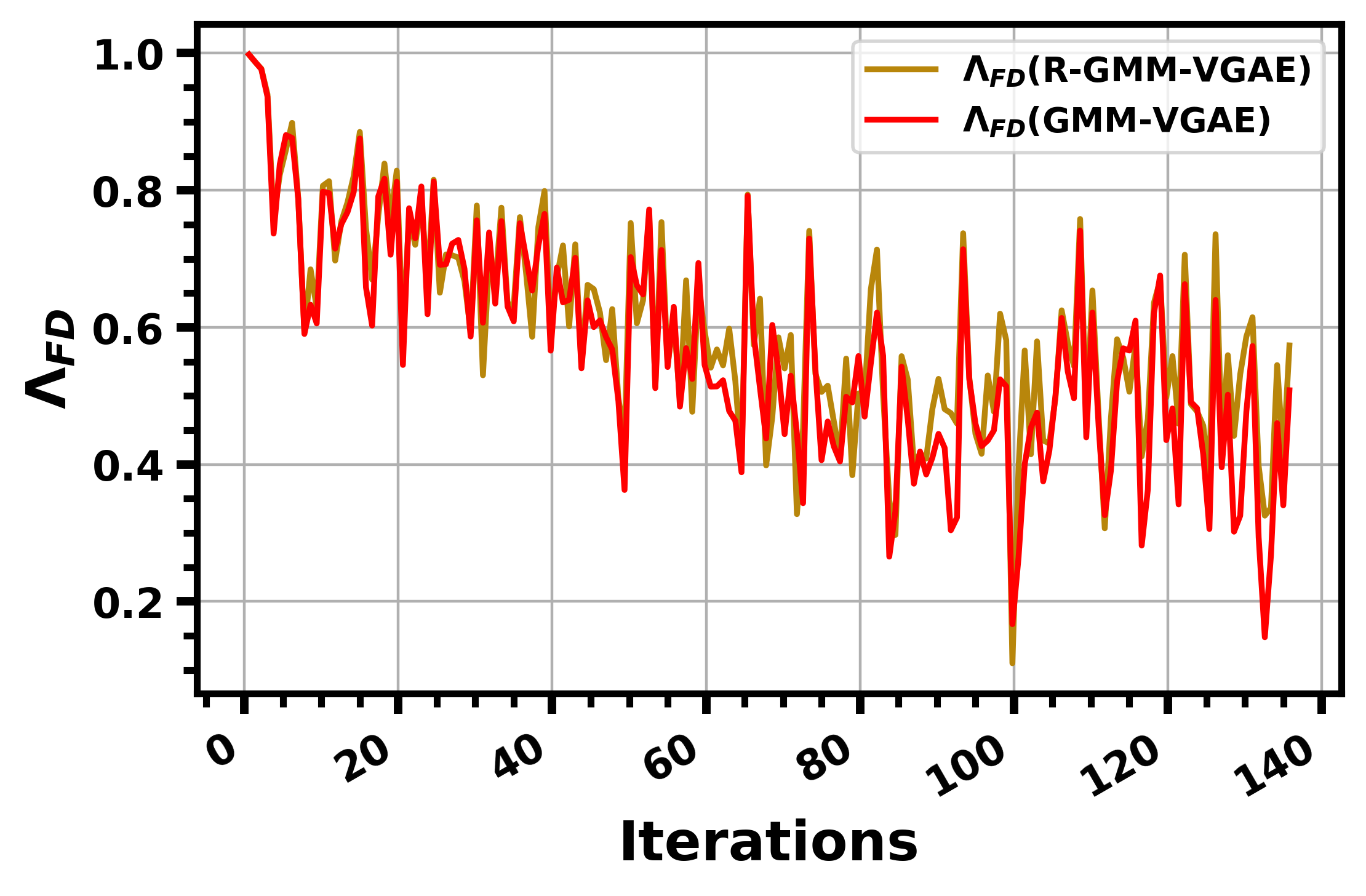}
     \caption{GMM-VGAE training}
  \end{subfigure} \hfil
  \begin{subfigure}[b]{0.33\textwidth}
     \includegraphics[width=\linewidth]{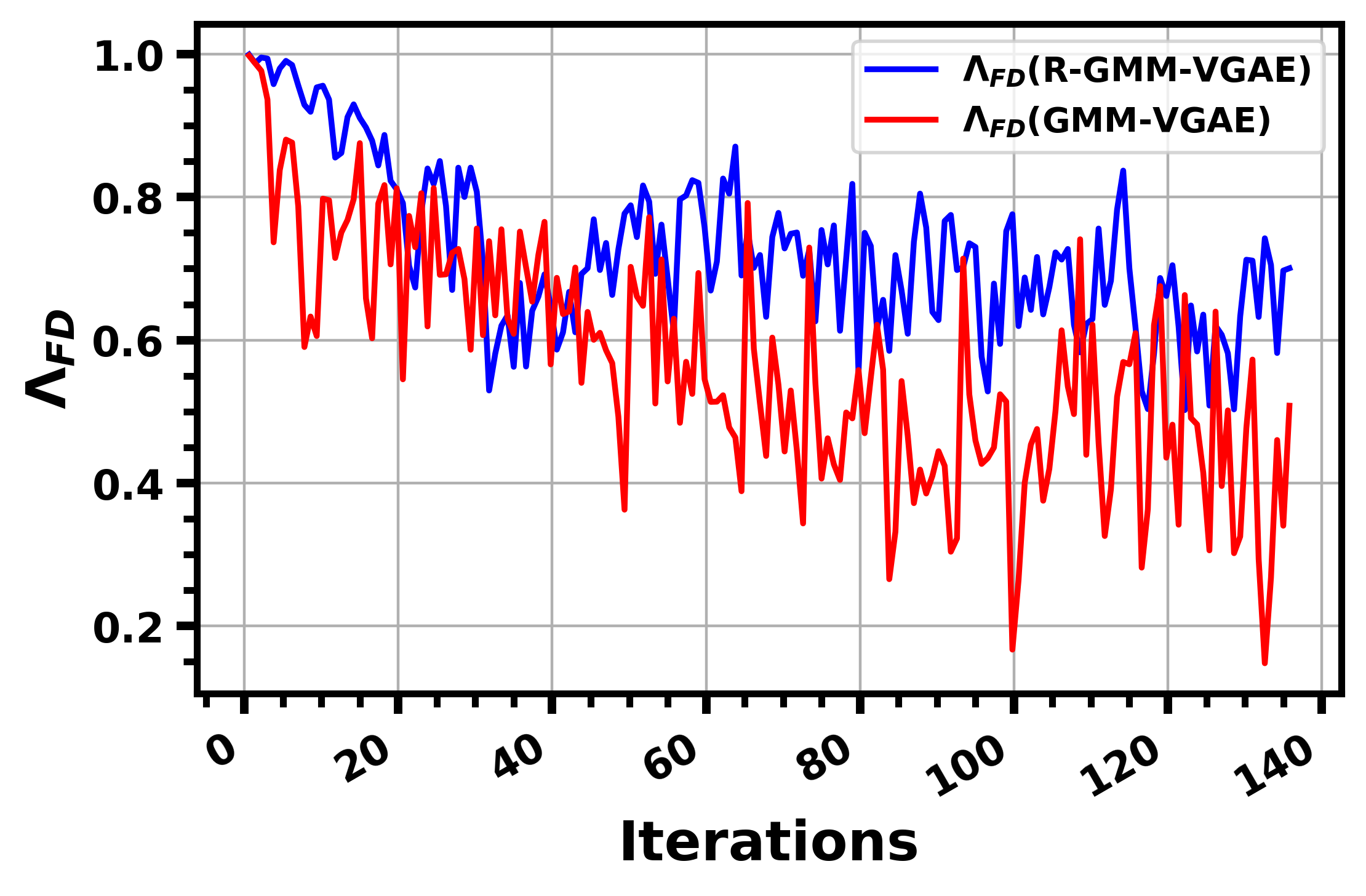}
     \caption{Independent training}
  \end{subfigure} \hfil
  \medskip
  \begin{subfigure}[b]{0.33\textwidth}
    \includegraphics[width=\linewidth]{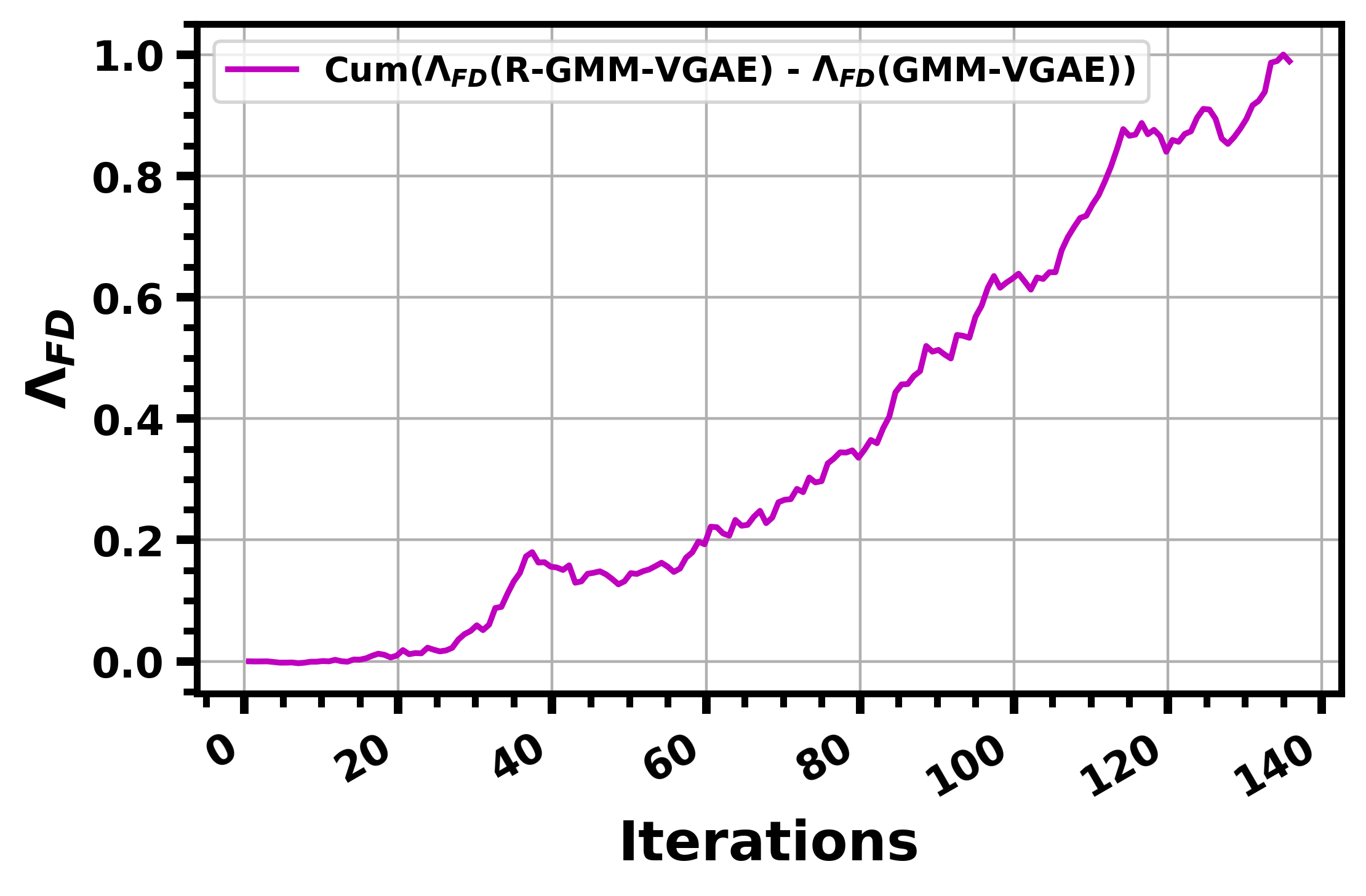}
    \caption{R-GMM-VGAE training}
  \end{subfigure} \hfil
  \begin{subfigure}[b]{0.33\textwidth}
     \includegraphics[width=\linewidth]{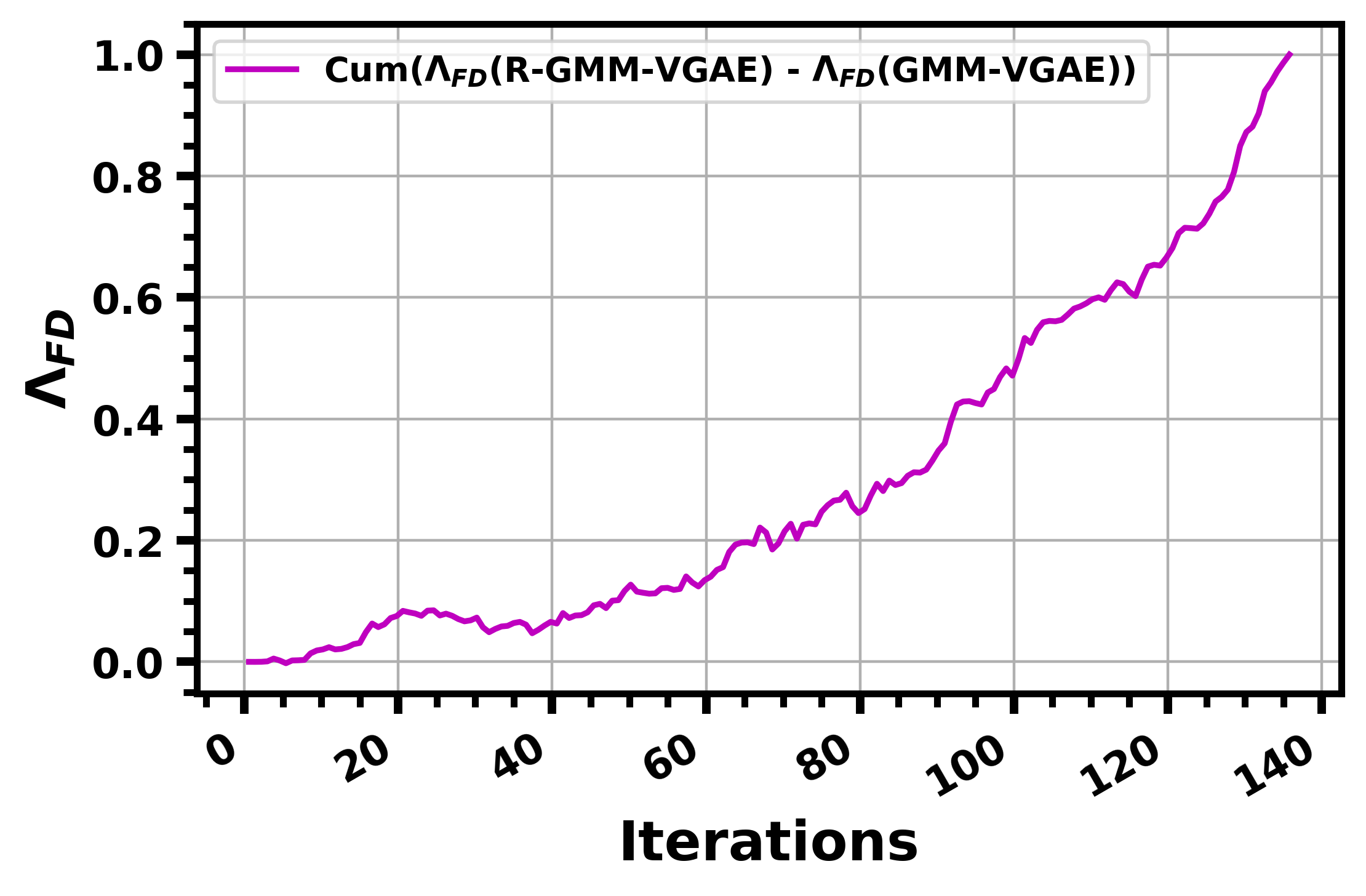}
     \caption{GMM-VGAE training}
  \end{subfigure} \hfil
  \begin{subfigure}[b]{0.33\textwidth}
     \includegraphics[width=\linewidth]{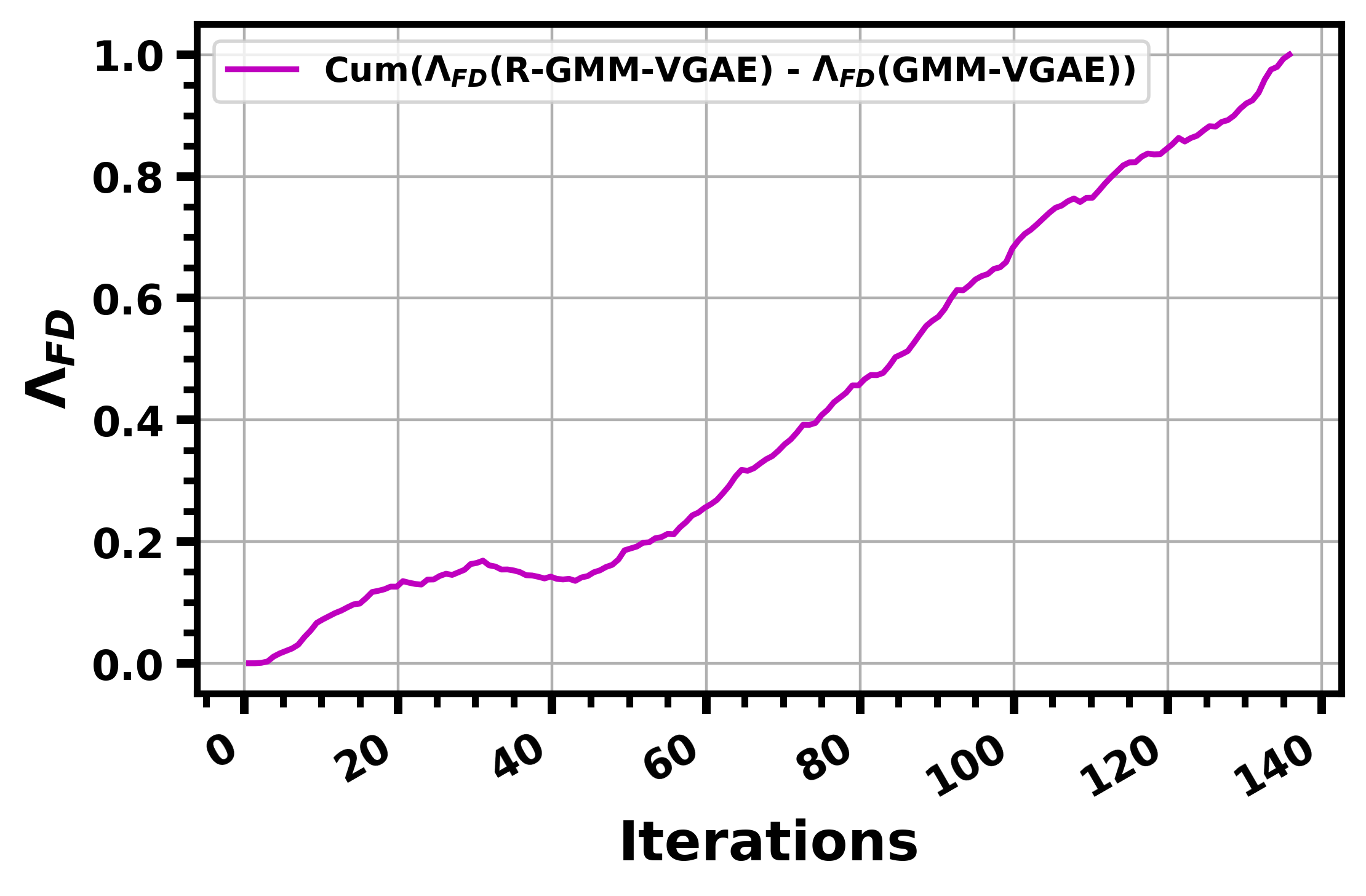}
     \caption{Independent training}
  \end{subfigure}
  
  \caption{Performance of R-GMM-VGAE and GMM-VGAE in terms of $\Lambda_{FD}$ on Cora. Blue line: $\Lambda_{FD}$ values of R-GMM-VGAE, during training of R-GMM-VGAE. Green line: $\Lambda_{FD}$ values of GMM-VGAE, during training of R-GMM-VGAE. Gold line: $\Lambda_{FD}$ values of R-GMM-VGAE during training of GMM-VGAE. Red line: $\Lambda_{FD}$ values of GMM-VGAE, during training of GMM-VGAE. Purple line: normalized cumulative difference between $\Lambda_{FD}$ values of R-GMM-VGAE and $\Lambda_{FD}$ values of GMM-VGAE.}
  \label{fig:FD_R-GMM-VGAE}
\end{figure*}

\begin{table*}[!h]
  \caption{Correction-style mechanism against FR vs. protection-style mechanism against FR for R-GMM-VGAE and R-DGAE on Cora.}
  \begin{center}
  \begin{small}
  \begin{tabular}{|p{2.2cm}|c|c|c|c|c|c|c|c|c|c|c|c|}
  \hline
    {Method} & \multicolumn{2}{c|}{\textbf{Protection}} & \multicolumn{10}{c|}{\textbf{Correction}} \\
    \cline{2-13}
    {\textbf{}} & \multicolumn{2}{c|}{No delay} & \multicolumn{2}{c|}{After 10 epochs}  & \multicolumn{2}{c|}{After 30 epochs} & \multicolumn{2}{c|}{After 50 epochs} & \multicolumn{2}{c|}{After 100 epochs} & \multicolumn{2}{c|}{After 150 epochs} \\
    \cline{2-13}
    & ACC & NMI & ACC & NMI & ACC & NMI  & ACC & NMI & ACC & NMI & ACC & NMI \\ \hline
    \textbf{R-GMM-VGAE} & 76.7 & 57.3 & 74.5 & 53.9 & 73.6 & 54.8 & 70.4 & 51.9 & 71.6 & 52.7 & 70.1 & 51.0 \\ \hline
    \textbf{R-DGAE} & 73.7 & 56.0 & 71.1 & 52.0 & 70.4 & 50.5 & 69.8 & 50.1 & 69.6 & 50.0 & 69.7 & 49.6 \\ \hline
  \end{tabular}
  \end{small}
  \end{center}
  \vskip -0.1in
  \label{Table:FR_protection_vs_correction}
\end{table*}

\begin{table*}[!h]
  \caption{Protection-style mechanism against FD vs. correction-style mechanism against FD for R-GMM-VGAE and R-DGAE on Cora.}
  \begin{center}
  \begin{small}
  \begin{tabular}{|p{2.45cm}|c|c|c|c|c|c|}
  \hline
    {Method} & \multicolumn{3}{c|}{\textbf{Protection}} & \multicolumn{3}{c|}{\textbf{Correction}} \\
    \cline{2-7}
    & ACC & NMI & ARI & ACC & NMI & ARI  \\ \hline
    \textbf{R-GMM-VGAE} & 73.4 & 52.1 & 51.6 & 76.7 & 57.3 & 57.9  \\ \hline
    \textbf{R-DGAE} & 71.3 & 54.5 & 50.4 & 73.7 & 56.0 & 54.1 \\ \hline 
  \end{tabular}
  \end{small}
  \end{center}
  \vskip -0.1in
  \label{Table:FD_protection_vs_correction}
\end{table*}

\begin{table*}[!h]
  \caption{Performance of R-GMM-VGAE and R-DGAE on Cora, after ablation of the confidence thresholds $\alpha_{1}$ and $\alpha_{2}$. }
  \begin{center}
  \begin{small}
  \begin{tabular}{|p{2.45cm}|c|c|c|c|c|c|c|c|c|c|c|c|}
    \hline
    {Method} & \multicolumn{3}{c|}{\textbf{Ablation of $\alpha_{2}$}} & \multicolumn{3}{c|}{\textbf{Ablation of $\alpha_{1}$}} & \multicolumn{3}{c|}{\textbf{Ablation of both}} & \multicolumn{3}{c|}{\textbf{No Ablation}} \\
    \cline{2-13}
    & ACC & NMI & ARI & ACC & NMI & ARI & ACC & NMI & ARI & ACC & NMI & ARI \\ \hline
    \textbf{R-GMM-VGAE} & 74.2 & 53.7 & 53.7 & 73.3 & 52.0 & 51.8 & 71.2 & 52.5 & 48.3 & 76.7 & 57.3 & 57.9 \\ \hline
    \textbf{R-DGAE} & 72.7 & 55.1 & 52.2 & 72.8 & 54.6 & 52.2 & 70.5 & 50.4 & 47.7 & 73.7 & 56.0 & 54.1 \\ \hline
  \end{tabular}
  \end{small}
  \end{center}
  \label{Table:ablation_confidence}
\end{table*}

\begin{table*}[!h]
  \caption{Performance of R-GMM-VGAE and R-DGAE on Cora, after ablation of ``drop\_edge" and ``add\_edge" operations.}
  \begin{center}
  \begin{small}
  \begin{tabular}{|p{2.45cm}|c|c|c|c|c|c|c|c|c|c|c|c|}
    \hline                               
    {Method} & \multicolumn{3}{c|}{\textbf{Ablation of ``drop\_edge"}} & \multicolumn{3}{c|}{\textbf{Ablation of ``add\_edge"}} & \multicolumn{3}{c|}{\textbf{Ablation of both}} & \multicolumn{3}{c|}{\textbf{No Ablation}}  \\
    \cline{2-13}
    & ACC & NMI & ARI & ACC & NMI & ARI & ACC & NMI & ARI & ACC & NMI & ARI \\ \hline
    \textbf{R-GMM-VGAE} & 75.4 & 55.1 & 55.6 & 72.8 & 52.5 & 50.4 & 74.0 & 53.6 & 52.8 & 76.7 & 57.3 & 57.9  \\ \hline
    \textbf{R-DGAE} & 72.4 & 54.6 & 52.5 & 72.5 & 54.4 & 51.9 & 71.7 & 53.5 & 50.5 & 73.7 & 56.0 & 54.1 \\ \hline
  \end{tabular}
  \end{small}
  \end{center}
  \label{Table:ablation_edges}
\end{table*}

\textbf{Feature Drift:} In this part, we discuss the evolution of $\Lambda_{FD}$ values for GMM-VGAE and R-GMM-VGAE on Cora. The cosine similarity between the gradient of $L_{bce}(\hat{A}(Z(\theta)), \, A)$ and the gradient of $L_{bce}(\hat{A}(Z(\theta)), \, \Upsilon(A, Q'(Z(\theta)), \mathcal{V}))$ is denoted by $\Lambda_{FD} \text{(GMM-VGAE)}$, whereas $\Lambda_{FD}\text{(R-GMM-VGAE)}$ denotes the cosine similarity between the gradient of $L_{bce}(\hat{A}(Z(\theta)), \, \Upsilon(A, P(\Xi(Z(\theta))), \Omega))$ and the gradient of $L_{bce}(\hat{A}(Z(\theta)), \, \Upsilon(A, Q'(Z(\theta)), \mathcal{V}))$. We illustrate both metrics, during training of R-GMM-VAGE and GMM-VGAE, in Figures \ref{fig:FD_R-GMM-VGAE} (a) and (b), respectively. To facilitate our analysis, we also provide the normalized cumulative difference between $\Lambda_{FD}\text{(R-GMM-VGAE)}$ and $\Lambda_{FD} \text{(GMM-VGAE)}$, during training of R-GMM-VAGE and GMM-VGAE, in Figures \ref{fig:FR_R-GMM-VGAE} (d) and (e), respectively. As a general observation from Figures \ref{fig:FD_R-GMM-VGAE} (a), (b), and (c), $\Lambda_{FD} \text{(R-GMM-VGAE)}$ and $\Lambda_{FD}$ (GMM-VGAE) start from very high values (close to one) then gradually decrease. This implies that the unsupervised gradient, at an early training stage, has the same direction as the supervised one. A recent body of work \cite{paper52, paper46} has shown that a neural network learns simple patterns first using the early layers. In another work, the authors of \cite{paper95} have shown that these simple patterns can be learned through self-supervision just as well as through real supervision (with ground-truth labels). Thus, optimizing a supervised objective function has the same effect (learning low-level patterns) as optimizing a self-supervised objective function for the first few iterations.

For the first experiment (Figures \ref{fig:FD_R-GMM-VGAE} (a) and (d)), we train R-GMM-VGAE and we report $\Lambda_{FD} \text{(GMM-VGAE)}$, $\Lambda_{FD} \text{(R-GMM-VGAE)}$, and the normalized cumulative difference between both of them. We can see that there are two stages. The first stage ranges from iteration 0 to 40, and the second stage ranges from iteration 40 to 140. For the first stage, we observe that $\Lambda_{FD} \text{(R-GMM-VGAE)}$ values are very close to $\Lambda_{FD} \text{(GMM-VGAE)}$ values. A possible explanation is that $\Upsilon$ can only affect a small part of the self-supervisory graph $A_{clus}^{self}$ at the beginning, and most of the graph remains identical to $A$. Furthermore, we observe that $\Lambda_{FD} \text{(R-GMM-VGAE)}$ is decreasing rapidly for this stage. This aspect is desirable. In fact, $\Upsilon$ allows FD to occur at the beginning to counter random projections. From Figure \ref{fig:FD_R-GMM-VGAE} (d), we can see that the cumulative difference between $\Lambda_{FD} \text{(R-GMM-VGAE)}$ and $\Lambda_{FD} \text{(GMM-VGAE)}$ has a low slope for the first stage. This result confirms that our operator $\Upsilon$ allows FD to take place, during the first stage. For the second stage, we observe that $\Lambda_{FD} \text{(R-GMM-VGAE)}$ is increasing slowly between iterations 40 and 60. After allowing FD to occur, during the first stage, $\Upsilon$ gradually attenuates this problem during the second stage. From Figure \ref{fig:FD_R-GMM-VGAE} (d), we can see that the cumulative difference between $\Lambda_{FD} \text{(R-GMM-VGAE)}$ and $\Lambda_{FD} \text{GMM-VGAE)}$ has a pronounced increasing tendency compared with the first phase. After allowing FD to occur, $\Upsilon$ gradually attenuates this problem, during the second stage.

For the second experiment (Figures \ref{fig:FD_R-GMM-VGAE} (b) and (e)), we train GMM-VGAE and we report $\Lambda_{FD} \text{(GMM-VGAE)}$, $\Lambda_{FD} \text{(R-GMM-VGAE)}$, and the normalized cumulative difference between both of them. From Figure \ref{fig:FD_R-GMM-VGAE} (e), we can see that the cumulative difference between $\Lambda_{FD} \text{(R-GMM-VGAE)}$ and $\Lambda_{FD} \text{(GMM-VGAE)}$ has a pronounced increasing tendency starting from iteration 40. This result suggests that $\Upsilon$ can consistently construct a reliable self-supervisory signal even after learning based on unreliable nodes. Additionally, we observe a decreasing tendency of $\Lambda_{FD}$ between iterations 0 and 100. After 100 iterations, the two curves of $\Lambda_{FD}$ in Figure \ref{fig:FD_R-GMM-VGAE} (b) oscillate around a horizontal line (indicating the stability of FD). The absence of a considerable time slot, where $\Lambda_{FD} \text{(GMM-VGAE)}$ achieves a clear increasing tendency, suggests that GMM-VGAE does not have any implicit or explicit mechanism to reduce FD. Based on the same experiment, we can see that $\Lambda_{FD} \text{(GMM-VGAE)}$ can reach very low values compared with $\Lambda_{FR} \text{(GMM-VGAE)}$ (see Figure \ref{fig:FR_R-GMM-VGAE} (b)). In addition to that, we observe that $\Lambda_{FD} \text{(GMM-VGAE)}$ has more pronounced fluctuations than $\Lambda_{FR} \text{(GMM-VGAE)}$. While GMM-VGAE does not have any explicit mechanism against FR or FD, the reconstruction loss is an implicit mechanism against FR.

For the third experiment (Figures \ref{fig:FD_R-GMM-VGAE} (c) and (f)), we train GMM-VGAE and report $\Lambda_{FD} \text{(GMM-VGAE)}$, we train R-GMM-VGAE and report $\Lambda_{FD} \text{(R-GMM-VGAE)}$, and we finally report the normalized cumulative difference between both of them. We observe that R-GMM-VGAE considerably outperforms GMM-VGAE in terms of $\Lambda_{FD}$. More interestingly, while R-GMM-VGAE can attenuate FD after the initial decrease of $\Lambda_{FD}$, GMM-VGAE falls short of this capacity. 

\textbf{Protection vs correction:} In Table \ref{Table:FR_protection_vs_correction}, we compare between a protection mechanism and a correction mechanism against FR, during the training of R-GMM-VGAE and R-DGAE on Cora. A protection mechanism is established by initiating the sampling technique directly after the pretraining phase. For the correction case, we delay the sampling technique for different epochs (10, 30, 50, 100, and 150) to allow FR to occur. This experiment aims to test if a correction mechanism can reverse the effect of labels' randomness. As we can see from Table \ref{Table:FR_protection_vs_correction}, the protection strategy yields better results than the correction approaches for both models. Moreover, further delay of correction is generally associated with lower clustering performance. These results show that a correction mechanism can not reverse the effect of labels' randomness. In Table \ref{Table:FD_protection_vs_correction}, we compare between a protection mechanism and a correction mechanism against FD, during the training of R-GMM-VGAE and R-DGAE on Cora. A protection mechanism is established by transforming the self-supervisory signal $A$ into a clustering-oriented signal $\Upsilon(A, P(Z(\theta)), \mathcal{V})$, in a single step. This is done by applying $\Upsilon$ to the whole set of nodes $\mathcal{V}$, instead of $\Omega$, to eliminate the reconstruction. We observe that the correction strategy yields better results than the protection approach for both models. We conclude that a correction mechanism, which allows FD to take place then gradually attenuates this problem, is a more advantageous solution.

\textbf{One confidence threshold vs two confidence thresholds:} In this part, we perform an ablation study to investigate the performance overhead provided by $\Xi$. Our investigation includes four cases: ablation of the sampling criteria related to $\alpha_{1}$, ablation of the sampling criteria related to $\alpha_{2}$, ablation of both (i.e., eliminating the operator $\Xi$), and no ablation. As shown in Table \ref{Table:ablation_confidence}, the obtained results show the importance of using two criteria for selecting reliable nodes. Specifically, we observe that ablating the requirement related to $\alpha_{2}$ leads to a degradation in performance. In fact, $\alpha_{2}$ helps in excluding points, which are situated near the borderline of two similar clusters. 

\textbf{Adding edges vs dropping edges:} In this part, we perform an ablation study to investigate the performance contribution of $\Upsilon$. Our investigation includes four cases: ablation of ``drop\_edge", ablation of ``add\_edge", ablation of both (i.e., eliminating the operator $\Upsilon$), no ablation. As shown in Table \ref{Table:ablation_edges}, the obtained results show the importance of ``add\_edge" and ``drop\_edge" operations for building a reliable self-supervisory signal $A_{clus}^{self}$. 

\section{Conclusion}
In this manuscript, we advocate a new vision for building GAE-based clustering models from the perspective of Feature Randomness and Feature Drift. We start by introducing a new conceptual design that gradually reduces Feature Drift without causing an abrupt rise in random features. Our strategy depends on two operators. In this regard, we design a sampling function $\Xi$ that triggers a protection mechanism against random projections. Moreover, we propose a function $\Upsilon$ that triggers a correction mechanism against Feature Drift. As a key advantage, $\Xi$ and $\Upsilon$ can be easily tailored to existing GAE-based clustering models. Experiments on standard benchmarks demonstrate that our operators improve the clustering performance. Furthermore, our results show that: (1) $\Xi$ effectively delays the impact of Feature Randomness, and (2) $\Upsilon$ allows Feature Drift to occur then gradually reduces this problem. Our operators can be viewed as the first initiative to control Feature Randomness and Feature Drift for GAE-based clustering models. For future work, we plan to investigate the extensibility of our operators to multiplex graphs, in which each couple of nodes can be connected by multiple edges.

\ifCLASSOPTIONcaptionsoff
  \newpage
\fi


\bibliographystyle{IEEEtran}
\bibliography{RGAE_arXiv.bbl}

\begin{thebibliography}{10}
\providecommand{\url}[1]{#1}
\csname url@samestyle\endcsname
\providecommand{\newblock}{\relax}
\providecommand{\bibinfo}[2]{#2}
\providecommand{\BIBentrySTDinterwordspacing}{\spaceskip=0pt\relax}
\providecommand{\BIBentryALTinterwordstretchfactor}{4}
\providecommand{\BIBentryALTinterwordspacing}{\spaceskip=\fontdimen2\font plus
\BIBentryALTinterwordstretchfactor\fontdimen3\font minus
  \fontdimen4\font\relax}
\providecommand{\BIBforeignlanguage}[2]{{%
\expandafter\ifx\csname l@#1\endcsname\relax
\typeout{** WARNING: IEEEtran.bst: No hyphenation pattern has been}%
\typeout{** loaded for the language `#1'. Using the pattern for}%
\typeout{** the default language instead.}%
\else
\language=\csname l@#1\endcsname
\fi
#2}}
\providecommand{\BIBdecl}{\relax}
\BIBdecl

\bibitem{paper91}
D.~Bouzas, N.~Arvanitopoulos, and A.~Tefas, ``Graph embedded nonparametric
  mutual information for supervised dimensionality reduction,'' \emph{IEEE
  TNNLS}, vol.~26, no.~5, pp. 951--963, 2015.

\bibitem{paper30}
J.~Tang, M.~Qu, M.~Wang, M.~Zhang, J.~Yan, and Q.~Mei, ``Line: Large-scale
  information network embedding.'' in \emph{WWW}, 2015, pp. 1067--1077.

\bibitem{paper115}
W.~Wu, Y.~Jia, S.~Kwong, and J.~Hou, ``Pairwise constraint propagation-induced
  symmetric nonnegative matrix factorization,'' \emph{IEEE TNNLS}, vol.~29,
  no.~12, pp. 6348--6361, 2018.

\bibitem{paper12}
A.~Grover and J.~Leskovec, ``node2vec: Scalable feature learning for
  networks.'' in \emph{SIGKDD}, 2016, pp. 855--864.

\bibitem{paper92}
Z.~Wu, S.~Pan, F.~Chen, G.~Long, C.~Zhang, and S.~Y. Philip, ``A comprehensive
  survey on graph neural networks,'' \emph{IEEE TNNLS}, 2020.

\bibitem{paper114}
B.~Jiang, L.~Wang, J.~Cheng, J.~Tang, and B.~Luo, ``Gpens: Graph data learning
  with graph propagation-embedding networks,'' \emph{IEEE TNNLS}, pp. 1--14,
  2021.

\bibitem{paper9}
T.~N. Kipf and M.~Welling, ``Semi-supervised classification with graph
  convolutional networks.'' in \emph{ICLR}, 2017.

\bibitem{paper11}
------, ``Variational graph auto-encoders.'' in \emph{NeurIPS workshop}, 2016,
  pp. 1--3.

\bibitem{paper25}
S.~Pan, R.~Hu, G.~Long, J.~Jiang, L.~Yao, and C.~Zhang, ``Adversarially
  regularized graph autoencoder for graph embedding.'' in \emph{IJCAI}, 2018,
  pp. 2609--2615.

\bibitem{paper26}
B.~Hui, P.~Zhu, and Q.~Hu, ``Collaborative graph convolutional networks:
  Unsupervised learning meets semi-supervised learning.'' in \emph{AAAI}, 2020,
  pp. 4215--4222.

\bibitem{paper27}
C.~Wang, S.~Pan, R.~Hu, G.~Long, J.~Jiang, and C.~Zhang, ``Attributed graph
  clustering: A deep attentional embedding approach.'' in \emph{IJCAI}, 2019,
  pp. 2609--2615.

\bibitem{paper17}
X.~Guo, L.~Gao, X.~Liu, and J.~Yin, ``Improved deep embedded clustering with
  local structure preservation.'' in \emph{IJCAI}, 2017, pp. 1753--1759.

\bibitem{paper14}
\BIBentryALTinterwordspacing
N.~Mrabah, M.~Bouguessa, and R.~Ksantini, ``Adversarial deep embedded
  clustering: on a better trade-off between feature randomness and feature
  drift.'' \emph{IEEE TKDE}, 2020. [Online]. Available:
  \url{10.1109/TKDE.2020.2997772}
\BIBentrySTDinterwordspacing

\bibitem{paper64}
H.~Maennel, I.~M. Alabdulmohsin, I.~O. Tolstikhin, R.~Baldock, O.~Bousquet,
  S.~Gelly, and D.~Keysers, ``What do neural networks learn when trained with
  random labels?'' \emph{NeurIPS}, 2020.

\bibitem{paper65}
J.~Frankle, D.~J. Schwab, and A.~S. Morcos, ``The early phase of neural network
  training,'' in \emph{ICLR}, 2020.

\bibitem{paper80}
C.~Wang, S.~Pan, G.~Long, X.~Zhu, and J.~Jiang, ``Mgae: Marginalized graph
  autoencoder for graph clustering,'' in \emph{CIKM}, 2017, pp. 889--898.

\bibitem{paper47}
A.~Ansuini, A.~Laio, J.~H. Macke, and D.~Zoccolan, ``Intrinsic dimension of
  data representations in deep neural networks,'' in \emph{NeurIPS}, 2019, pp.
  6111--6122.

\bibitem{paper77}
H.~W. Kuhn, ``The hungarian method for the assignment problem,'' \emph{Naval
  research logistics quarterly}, vol.~2, no. 1-2, pp. 83--97, 1955.

\bibitem{paper79}
C.~Zhang, S.~Bengio, M.~Hardt, B.~Recht, and O.~Vinyals, ``Understanding deep
  learning requires rethinking generalization,'' in \emph{ICLR}, 2017.

\bibitem{paper50}
N.~S. Keskar, D.~Mudigere, J.~Nocedal, M.~Smelyanskiy, and P.~T.~P. Tang, ``On
  large-batch training for deep learning: Generalization gap and sharp
  minima,'' in \emph{ICLR}, 2017.

\bibitem{paper48}
X.~Ma, Y.~Wang, M.~E. Houle, S.~Zhou, S.~Erfani, S.~Xia, S.~Wijewickrema, and
  J.~Bailey, ``Dimensionality-driven learning with noisy labels,'' in
  \emph{ICML}, 2018, pp. 3355--3364.

\bibitem{paper82}
G.~Cui, J.~Zhou, C.~Yang, and Z.~Liu, ``Adaptive graph encoder for attributed
  graph embedding,'' in \emph{SIGKDD}, 2020, pp. 976--985.

\bibitem{paper113}
Y.~You, T.~Chen, Z.~Wang, and Y.~Shen, ``When does self-supervision help graph
  convolutional networks?'' in \emph{ICML}, 2020, pp. 10\,871--10\,880.

\bibitem{paper87}
M.~Arjovsky, S.~Chintala, and L.~Bottou, ``Wasserstein generative adversarial
  networks,'' in \emph{ICML}, 2017, pp. 214--223.

\bibitem{paper89}
M.~Arjovsky and L.~Bottou, ``Towards principled methods for training generative
  adversarial networks,'' \emph{ICLR}, 2017.

\bibitem{paper13}
N.~Mrabah, N.~M. Khan, R.~Ksantini, and Z.~Lachiri, ``Deep clustering with a
  dynamic autoencoder: From reconstruction towards centroids construction.''
  \emph{Neural Networks}, vol. 130, pp. 206--228, 2020.

\bibitem{paper88}
M.~Caron, P.~Bojanowski, A.~Joulin, and M.~Douze, ``Deep clustering for
  unsupervised learning of visual features,'' in \emph{ECCV}, 2018, pp.
  132--149.

\bibitem{paper96}
X.~Zhu, Z.~Ghahramani, and J.~D. Lafferty, ``Semi-supervised learning using
  gaussian fields and harmonic functions,'' in \emph{ICML}, 2003, pp. 912--919.

\bibitem{paper97}
D.~Zhou, O.~Bousquet, T.~N. Lal, J.~Weston, and B.~Sch{\"o}lkopf, ``Learning
  with local and global consistency,'' in \emph{NeurIPS}, 2004, pp. 321--328.

\bibitem{paper98}
J.~Weston, F.~Ratle, H.~Mobahi, and R.~Collobert, ``Deep learning via
  semi-supervised embedding,'' in \emph{Neural networks: Tricks of the
  trade}.\hskip 1em plus 0.5em minus 0.4em\relax Springer, 2012, pp. 639--655.

\bibitem{paper100}
K.~He, H.~Fan, Y.~Wu, S.~Xie, and R.~Girshick, ``Momentum contrast for
  unsupervised visual representation learning,'' in \emph{CVPR}, 2020, pp.
  9729--9738.

\bibitem{paper101}
P.~Bachman, R.~D. Hjelm, and W.~Buchwalter, ``Learning representations by
  maximizing mutual information across views,'' in \emph{NeurIPS}, 2019, pp.
  15\,509--15\,519.

\bibitem{paper102}
J.-B. Grill, F.~Strub, F.~Altch{\'e}, C.~Tallec, P.~H. Richemond,
  E.~Buchatskaya, C.~Doersch, B.~A. Pires, Z.~D. Guo, M.~G. Azar \emph{et~al.},
  ``Bootstrap your own latent: A new approach to self-supervised learning,'' in
  \emph{NeurIPS}, 2020.

\bibitem{paper106}
M.~Cisse, P.~Bojanowski, E.~Grave, Y.~Dauphin, and N.~Usunier, ``Parseval
  networks: Improving robustness to adversarial examples,'' in \emph{ICML},
  2017, pp. 854--863.

\bibitem{paper104}
B.~Neyshabur, S.~Bhojanapalli, D.~Mcallester, and N.~Srebro, ``Exploring
  generalization in deep learning,'' in \emph{NeurIPS}, vol.~30, 2017, pp.
  5947--5956.

\bibitem{paper105}
B.~Neyshabur, S.~Bhojanapalli, and N.~Srebro, ``A pac-bayesian approach to
  spectrally-normalized margin bounds for neural networks,'' in \emph{ICLR},
  2018.

\bibitem{paper103}
J.~Sokoli{\'c}, R.~Giryes, G.~Sapiro, and M.~R. Rodrigues, ``Robust large
  margin deep neural networks,'' \emph{IEEE TSP}, vol.~65, no.~16, pp.
  4265--4280, 2017.

\bibitem{paper107}
H.~Yang, K.~Ma, and J.~Cheng, ``Rethinking graph regularization for graph
  neural networks,'' in \emph{AAAI}, 2021, pp. 4573--4581.

\bibitem{paper56}
L.~Jiang, Z.~Zhou, T.~Leung, L.-J. Li, and L.~Fei-Fei, ``Mentornet: Learning
  data-driven curriculum for very deep neural networks on corrupted labels,''
  in \emph{ICML}, 2018, pp. 2304--2313.

\bibitem{paper57}
E.~Malach and S.~Shalev-Shwartz, ``Decoupling 'when to update' from 'how to
  update','' in \emph{NeurIPS}, 2017, pp. 960--970.

\bibitem{paper58}
B.~Han, Q.~Yao, X.~Yu, G.~Niu, M.~Xu, W.~Hu, I.~Tsang, and M.~Sugiyama,
  ``Co-teaching: Robust training of deep neural networks with extremely noisy
  labels,'' in \emph{NeurIPS}, 2018, pp. 8527--8537.

\bibitem{paper84}
P.~Sen, G.~Namata, M.~Bilgic, L.~Getoor, B.~Galligher, and T.~Eliassi-Rad,
  ``Collective classification in network data,'' \emph{AI magazine}, vol.~29,
  no.~3, pp. 93--93, 2008.

\bibitem{paper28}
L.~F. Ribeiro, P.~H. Saverese, and D.~R. Figueiredo, ``struc2vec: Learning node
  representations from structural identity.'' in \emph{SIGKDD}, 2017, pp.
  385--394.

\bibitem{paper85}
J.~Wu, J.~He, and J.~Xu, ``Net: Degree-specific graph neural networks for node
  and graph classification,'' in \emph{SIGKDD}, 2019, pp. 406--415.

\bibitem{paper52}
D.~Arpit, S.~Jastrz{\k{e}}bski, N.~Ballas, D.~Krueger, E.~Bengio, M.~S. Kanwal,
  T.~Maharaj, A.~Fischer, A.~Courville, Y.~Bengio \emph{et~al.}, ``A closer
  look at memorization in deep networks,'' in \emph{ICML}, 2017, pp. 233--242.

\bibitem{paper46}
E.~Facco, M.~d’Errico, A.~Rodriguez, and A.~Laio, ``Estimating the intrinsic
  dimension of datasets by a minimal neighborhood information,''
  \emph{Scientific reports}, vol.~7, no.~1, pp. 1--8, 2017.

\bibitem{paper95}
Y.~M. Asano, C.~Rupprecht, and A.~Vedaldi, ``A critical analysis of
  self-supervision, or what we can learn from a single image,'' in \emph{ICLR},
  2020.

\bibitem{paper110}
C.~Yang, Z.~Liu, D.~Zhao, M.~Sun, and E.~Chang, ``Network representation
  learning with rich text information,'' in \emph{IJCAI}, 2015.

\bibitem{paper81}
S.~Pan, R.~Hu, G.~Long, J.~Jiang, L.~Yao, and C.~Zhang, ``Adversarially
  regularized graph autoencoder for graph embedding.'' in \emph{IJCAI}, 2018,
  pp. 2609--2615.

\bibitem{paper112}
S.~Pan, R.~Hu, S.-f. Fung, G.~Long, J.~Jiang, and C.~Zhang, ``Learning graph
  embedding with adversarial training methods,'' \emph{IEEE transactions on
  cybernetics}, vol.~50, no.~6, pp. 2475--2487, 2019.

\bibitem{paper111}
P.~Velickovic, W.~Fedus, W.~L. Hamilton, P.~Li{\`o}, Y.~Bengio, and R.~D.
  Hjelm, ``Deep graph infomax.'' in \emph{ICLR}, 2019.

\bibitem{paper73}
X.~Zhang, H.~Liu, Q.~Li, and X.-M. Wu, ``Attributed graph clustering via
  adaptive graph convolution,'' in \emph{IJCAI}.\hskip 1em plus 0.5em minus
  0.4em\relax AAAI Press, 2019, pp. 4327--4333.

\end{thebibliography}



\clearpage

\appendices

\section{Hardware and Software Configurations} \label{appendix_A}
All experiments are conducted on a server under the same environment.

\subsection*{Hardware:}
\begin{itemize}
    \item Operating System: Ubuntu 18.04.5 LTS
    \item CPU: Intel(R) Xeon(R) CPU E5-2620 V4 @ 2.10GHz
\end{itemize}

\subsection*{Software:}
\begin{itemize}
    \item Python 3.7.4
    \item PyTorch 1.3.1
    \item Sklearn 0.23.1
\end{itemize}

\section{DGAE technical details} \label{appendix_B}
Due to the limited number of second group models, we propose a new approach abbreviated by DGAE (Discriminative Graph Auto-Encoder) from the second group. DGAE has a simple graph auto-encoder architecture with two GCN layers. We pretrain this model using vanilla reconstruction for $200$ epochs. For the clustering phase, DGAE minimizes a linear combination of clustering and reconstruction. The clustering loss of DGAE is the Kullback Leibler divergence between a soft clustering assignment distribution $P=(p_{ij})_{i,j}$ and its associated hard clustering assignment distribution $Q=(q_{ij})_{i,j}$ as described by Equation (\ref{eq:L_clus}):

\begin{equation} \label{eq:L_clus}
    L_{clus}(P(Z(\theta))) = KL(Q||P) = \sum_{i} \sum_{j}q_{ij} log(\frac{q_{ij}}{p_{ij}}).
\end{equation}

The soft clustering assignment $P$ is computed based on the Student’s t-distribution as follows:

\begin{equation} \label{eq:p_ij}
    p_{ij} = \frac{1 + \norm{z_{i} - \mu_{j}}^{2}}{\sum_{j'}(1 + \norm{z_{i} - \mu_{j'}}^{2})},
\end{equation}

where $\mu_{j}$ represent the clustering centers of the embedded representations $Z$. At the beginning of the training process, the embedded centers $\mu_{j}$ are initialized based on K-means. Then, DGAE is trained to jointly optimize the embedded representations and the clustering centers by minimizing $L_{clus}(P(Z(\theta))) + \gamma L_{bce}(\hat{A}(Z(\theta)), \,A)$. All settings of DGAE are described in Table \ref{Table:DGAE}.

\begin{table}[!h]
  \caption{Settings of DGAE.}
  \begin{center}
  \begin{small}
  \begin{tabular}{|p{5cm}|c|}
    \hline 
    {\textbf{Parameter}} & \textbf{Value} \\ \hline
    \textbf{Dimension of the first GCN layer} & 32 \\ \hline
    \textbf{Dimension of the second GCN layer} & 16 \\ \hline
    \textbf{Number of pretraining epochs} & 200 \\ \hline
    \textbf{Pretraining optimizer} & Adam \\ \hline
    \textbf{Learning rate for pretraining} & 0.01 \\ \hline
    \textbf{Number of training epochs} &  200 \\ \hline
    \textbf{Training optimizer} & Adam \\ \hline
    \textbf{Learning rate for training} & 0.01 \\ \hline
    \textbf{Balancing coefficient} & 0.001 \\ \hline
  \end{tabular}
  \end{small}
  \end{center}
  \vskip -0.1in
  \label{Table:DGAE}
\end{table}

\newpage

\section{Hyper-parameter settings} \label{appendix_C}
We report the hyper-parameter settings for R-GAE, R-VGAE, R-ARGAE, R-ARVGAE, R-GMM-VGAE, and R-DGAE on Cora in Table \ref{Table:hyperparameters_Cora}, on Citeseer in Table \ref{Table:hyperparameters_Citeseer}, on Pubmed in Table \ref{Table:hyperparameters_Pubmed}, on Brazil Air traffic in Table \ref{Table:hyperparameters_brasil}, on Europe Air traffic in Table \ref{Table:hyperparameters_europe}, and on USA Air traffic in Table \ref{Table:hyperparameters_usa}.

\begin{table}[!h]
  \caption{Hyper-parameter settings on Cora.}
  \begin{center}
  \begin{small}
  \begin{tabular}{|p{2.7cm}|c|c|c|}
    \hline
    {\textbf{Method}} & \textbf{$\alpha_{1}$} & \textbf{$M_{1}$} & \textbf{$M_{2}$}
     \\ \hline
    \textbf{R-GAE} & 0.3 & 20 epochs & 10 epochs  \\ \hline
    \textbf{R-VGAE} & 0.3 & 20 epochs & 10 epochs \\ \hline
    \textbf{R-ARGAE} & 0.3 & 50 epochs & 1 epoch \\ \hline
    \textbf{R-ARVGAE} & 0.3 & 50 epochs & 1 epoch \\ \hline
    \textbf{R-DGAE} & 0.3 & 20 epochs & 15 epochs \\ \hline
    \textbf{R-GMM-VGAE} & 0.3 & 20 epochs & 10 epochs \\ \hline
  \end{tabular}
  \end{small}
  \end{center}
  \vskip -0.1in
  \label{Table:hyperparameters_Cora}
\end{table}

\begin{table}[!h]
  \caption{Hyper-parameter settings on Citeseer.}
  \begin{center}
  \begin{small}
  \begin{tabular}{|p{2.7cm}|c|c|c|}
    \hline
    {\textbf{Method}} & \textbf{$\alpha_{1}$} & \textbf{$M_{1}$} & \textbf{$M_{2}$}
     \\ \hline
    \textbf{R-GAE} & 0.2 & 20 epochs & 10 epochs \\ \hline
    \textbf{R-VGAE} & 0.2 & 20 epochs & 1 epoch  \\ \hline
    \textbf{R-ARGAE} & 0.1 & 50 epochs & 1 epoch \\ \hline
    \textbf{R-ARVGAE} & 0.1 & 50 epochs & 1 epoch \\ \hline
    \textbf{R-DGAE} & 0.2 & 50 epochs & 1 epoch  \\ \hline
    \textbf{R-GMM-VGAE} & 0.2 & 50 epochs & 1 epoch \\ \hline
  \end{tabular}
  \end{small}
  \end{center}
  \vskip -0.1in
  \label{Table:hyperparameters_Citeseer}
\end{table}

\begin{table}[!h]
  \caption{Hyper-parameter settings on Pubmed.}
  \begin{center}
  \begin{small}
  \begin{tabular}{|p{2.7cm}|c|c|c|}
    \hline
    {\textbf{Method}} & \textbf{$\alpha_{1}$} & \textbf{$M_{1}$} & \textbf{$M_{2}$} \\ \hline
    \textbf{R-GAE} & 0.4 & 50 epochs & 5 epochs \\ \hline
    \textbf{R-VGAE} & 0.4 & 50 epochs & 5 epochs \\ \hline
    \textbf{R-ARGAE} & 0.3 & 50 epochs & 1 epoch \\ \hline
    \textbf{R-ARVGAE} & 0.3 & 50 epochs & 1 epoch \\ \hline
    \textbf{R-DGAE} & 0.3 & 50 epochs & 5 epochs \\ \hline
    \textbf{R-GMM-VGAE} & 0.4 & 50 epochs & 5 epochs \\ \hline
  \end{tabular}
  \end{small}
  \end{center}
  \vskip -0.1in
  \label{Table:hyperparameters_Pubmed}
\end{table}

\begin{table}[!h]
  \caption{Hyper-parameter settings on Brazil Air traffic.}
  \begin{center}
  \begin{small}
  \begin{tabular}{|p{2.7cm}|c|c|c|}
    \hline
    {\textbf{Method}} & \textbf{$\alpha_{1}$} & \textbf{$M_{1}$} & \textbf{$M_{2}$}
     \\ \hline
    \textbf{R-DGAE} & 0.25 & 50 epochs & 1 epoch \\ \hline
    \textbf{R-GMM-VGAE} & 0.25 & 50 epochs & 1 epoch \\ \hline
  \end{tabular}
  \end{small}
  \end{center}
  \vskip -0.1in
  \label{Table:hyperparameters_brasil}
\end{table}

\begin{table}[!h]
  \caption{Hyper-parameter settings on Europe Air traffic.}
  \begin{center}
  \begin{small}
  \begin{tabular}{|p{2.7cm}|c|c|c|}
    \hline
    {\textbf{Method}} & \textbf{$\alpha_{1}$} & \textbf{$M_{1}$} & \textbf{$M_{2}$}
     \\ \hline
    \textbf{R-DGAE} & 0.08 & 20 epochs & 15 epochs \\ \hline
    \textbf{R-GMM-VGAE} & 0.01 & 50 epochs & 1 epoch \\ \hline
  \end{tabular}
  \end{small}
  \end{center}
  \vskip -0.1in
  \label{Table:hyperparameters_europe}
\end{table}

\begin{table}[!h]
  \caption{Hyper-parameter settings on USA Air traffic.}
  \begin{center}
  \begin{small}
  \begin{tabular}{|p{2.7cm}|c|c|c|}
    \hline
    {\textbf{Method}} & \textbf{$\alpha_{1}$} & \textbf{$M_{1}$} & \textbf{$M_{2}$}
     \\ \hline
    \textbf{R-DGAE} & 0.1 & 50 epochs & 1 epoch \\ \hline
    \textbf{R-GMM-VGAE} & 0.3 & 50 epochs & 1 epoch \\ \hline
  \end{tabular}
  \end{small}
  \end{center}
  \vskip -0.1in
  \label{Table:hyperparameters_usa}
\end{table}

\newpage

\section{Further results} \label{appendix_J}
\textbf{Comparison with graph clustering methods:} 
In this part, we compare R-GMM-VGAE and R-DGAE with several recent graph clustering approaches. We report the paper results if the code is not publicly available. Otherwise, we run each experiment $10$ times and we report the best results among these trials. As we can see from Table \ref{Table:acc_nmi_ari_citation_networks_full}, R-DGAE and R-GMM-VGAE yield generally better results than the other methods. Although being competitive on Cora and Citeseer, our methods outperform AGE on Pubmed.  


\begin{table*}[!h]
  \caption{Clustering performance of several graph clustering methods on Cora, Citeseer, and Pubmed. Best in bold, second best underlined.}
  \begin{center}
  \begin{small}
  \begin{tabular}{|p{2.45cm}|c|c|c|c|c|c|c|c|c|c|}
    \hline
    {\textbf{Method}} & {\textbf{Input}} & \multicolumn{3}{c|}{\textbf{Cora}} & \multicolumn{3}{c|}{\textbf{Citeseer}} & \multicolumn{3}{c|}{\textbf{Pubmed}}  \\
    \cline{3-11}
    {\textbf{}} & {\textbf{}} & \textbf{ACC} & \textbf{NMI} & \textbf{ARI} & \textbf{ACC} & \textbf{NMI} & \textbf{ARI} & \textbf{ACC} & \textbf{NMI} & \textbf{ARI}  \\ 
    \hline
    \textbf{TADW \cite{paper110}} & C\&S & 53.6 & 36.6 & 24.0 & 52.9 & 32.0 & 28.6 & 56.5 & 22.4 & 17.7 \\ \hline
    \textbf{GAE \cite{paper11}} & C\&S & 61.3 & 44.4 & 38.1 & 48.2 & 22.7 & 19.2 & 63.2 & 24.9 & 24.6 \\ \hline
    \textbf{VGAE \cite{paper11}} & C\&S & 64.7 & 43.4 & 37.5 & 51.9 & 24.9 & 23.8 & 69.6 & 28.6 & 31.7 \\ \hline
    \textbf{MGAE \cite{paper80}} & C\&S & 68.1 & 48.9 & 43.6 & 66.9 & 41.6 & 42.5 & 59.3 & 28.2 & 24.8 \\ \hline
    \textbf{ARGE \cite{paper81}} & C\&S & 64.0 & 44.9 & 35.2 & 57.3 & 35.0 & 34.1 & 68.1 & 27.6 & 29.1 \\ \hline
    \textbf{ARVGE \cite{paper81}} & C\&S & 63.8 & 45.0 & 37.4 & 54.4 & 26.1 & 24.5 & 63.5 & 23.2 & 22.5 \\ \hline
    \textbf{ARGVA \cite{paper112}} & C\&S & 71.1 & 52.6 & 49.5 & 58.1 & 33.8 & 30.1 & 69.0 & 30.5 & 30.6 \\ \hline
    \textbf{DGI \cite{paper111}}  & C\&S & 71.3 & 56.4 & 51.1 & 68.8 & 44.4 & 45.0 & 53.3 & 18.1 & 16.6 \\ \hline
    \textbf{AGC \cite{paper73}} & C\&S & 68.9 & 53.7 & 48.6 & 67.0 & 41.1 & 41.9 & 69.8 & 31.6 & 31.9 \\ \hline
    \textbf{DAEGC \cite{paper27}} & C\&S & 70.4 & 52.8 & 49.6 & 67.2 & 39.7 & 41.0 & 67.1 & 26.6 & 27.8 \\ \hline
    \textbf{GMM-VGAE \cite{paper26}} & C\&S & 71.5 & 53.1 & 47.4 & 67.5 & 40.7 & 42.4 & 71.1 & 29.9 & 33.0 \\ \hline
    \textbf{AGE \cite{paper82}} & C\&S & \underline{76.1} & \textbf{59.9} & \underline{54.5} & \underline{70.1} & \underline{44.3} & \underline{45.4} & 70.9 & 30.8 & 32.9 \\ \hline
    \hline
    \textbf{R-DGAE} & C\&S & 73.7 & 56.0 & 54.1 & \textbf{70.5} & \textbf{45.0} & \textbf{47.1} & \underline{71.4} & \textbf{34.4} & \underline{34.6} \\  \hline
    \textbf{R-GMM-VGAE} & C\&S & \textbf{76.7} & \underline{57.3} & \textbf{57.9} & 68.9 & 42.0 & 43.9 & \textbf{74.0} & \underline{33.4} & \textbf{37.9} \\ \hline
  \end{tabular}
  \end{small}
  \end{center}
  \vskip -0.1in
  \label{Table:acc_nmi_ari_citation_networks_full}
\end{table*}

\textbf{Robustness:} In this part, we compare R-DGAE to DGAE on Cora after including or dropping edges or features. To establish a fair comparison, we ensure that the randomly included or dropped edges and features are the same for the two compared models. Moreover, we ensure that the evaluated models (i.e., DGAE and R-DGAE) share the same pretraining weights for each experiment. In Figure \ref{fig:robustness_noisy_edges_DGAE}, we present two experiments. For the first one, we randomly connect pairs of unlinked nodes, we run both models, and we report their ACCs and ARIs. We observe that R-DGAE consistently outperforms DGAE with a various number of noisy edges. As the training progresses, we can see that the gap between both models increases. These results can be explained by the ability of our operator $\Upsilon$ to drop random edges from the output graph. For the second experiment, we randomly add Gaussian noise with a mean value equal to zero, and a variance ranging in $\left [0, 0.2 \right]$. We observe that R-DGAE yields better results than DGAE with various amounts of noise. In fact, $\Xi$ only selects the most reliable samples. Therefore, the nodes, which are highly affected by noise, are probably ruled out by $\Xi$. In Figure \ref{fig:robustness_dropped_edges_DGAE}, we present two additional experiments. In the first experiment, we randomly drop pairs of linked nodes. We observe that R-DGAE consistently outperforms DGAE with a various number of dropped edges. Interestingly, dropping few edges (less than 400) does not harm the clustering performance of R-DGAE (it even induces a small improvement). However, this is not the case for DGAE. While DGAE reconstructs the corrupted input graph, R-DGAE is endowed with a correction mechanism $\Upsilon$ that can construct new clustering-friendly edges. For the second experiment, we randomly drop features columns from the matrix $X$. Similar to previous experiments, we observe that R-DGAE surpasses DGAE with various amounts of dropped features. A possible explanation suggests that $\Xi$ can exclude the nodes, which are highly affected by the randomly dropped features.

\begin{figure*}[!h]
  \begin{subfigure}[b]{0.24\textwidth}
    \includegraphics[width=\linewidth]{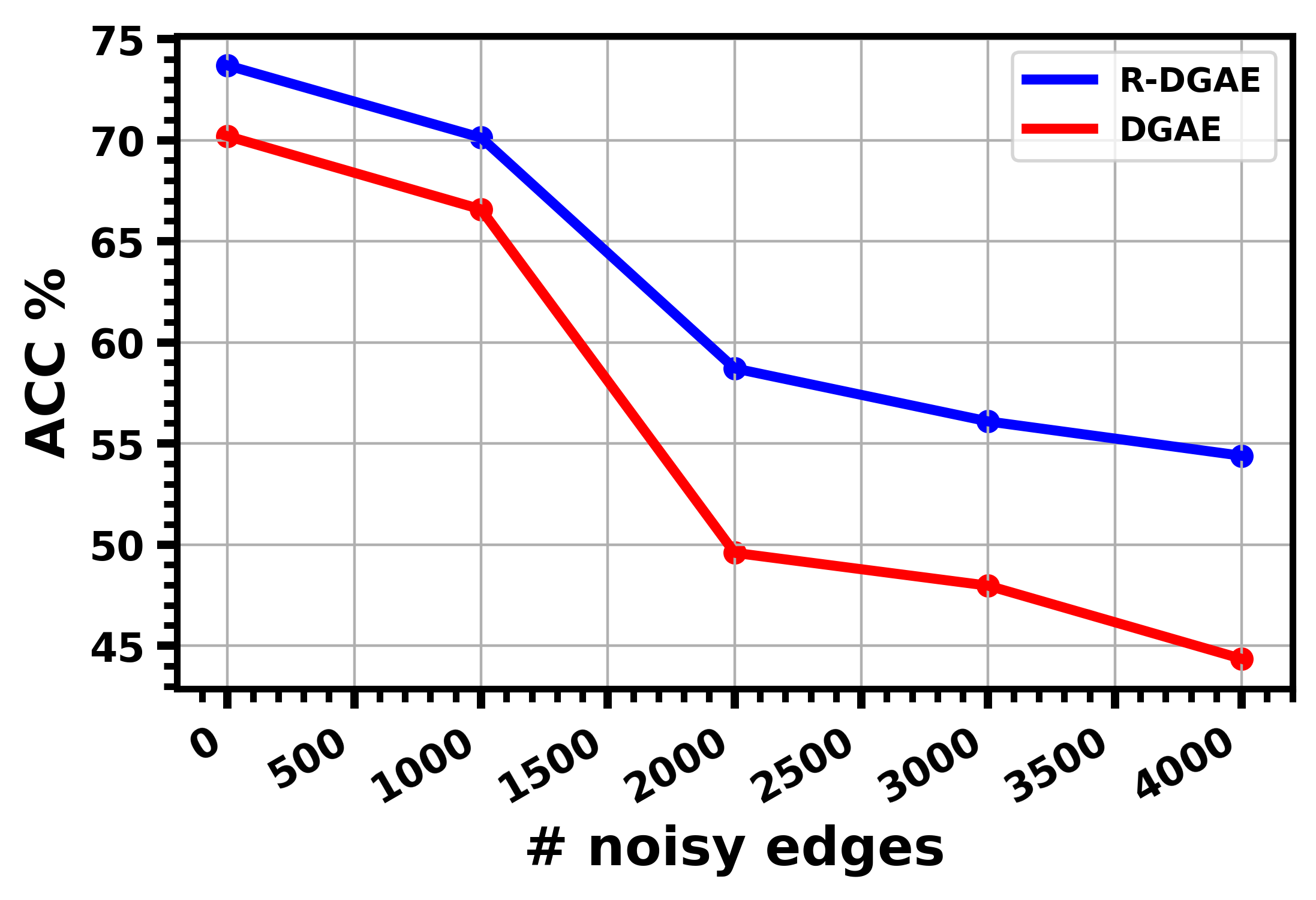}
  \end{subfigure}
  \begin{subfigure}[b]{0.24\textwidth}
     \includegraphics[width=\linewidth]{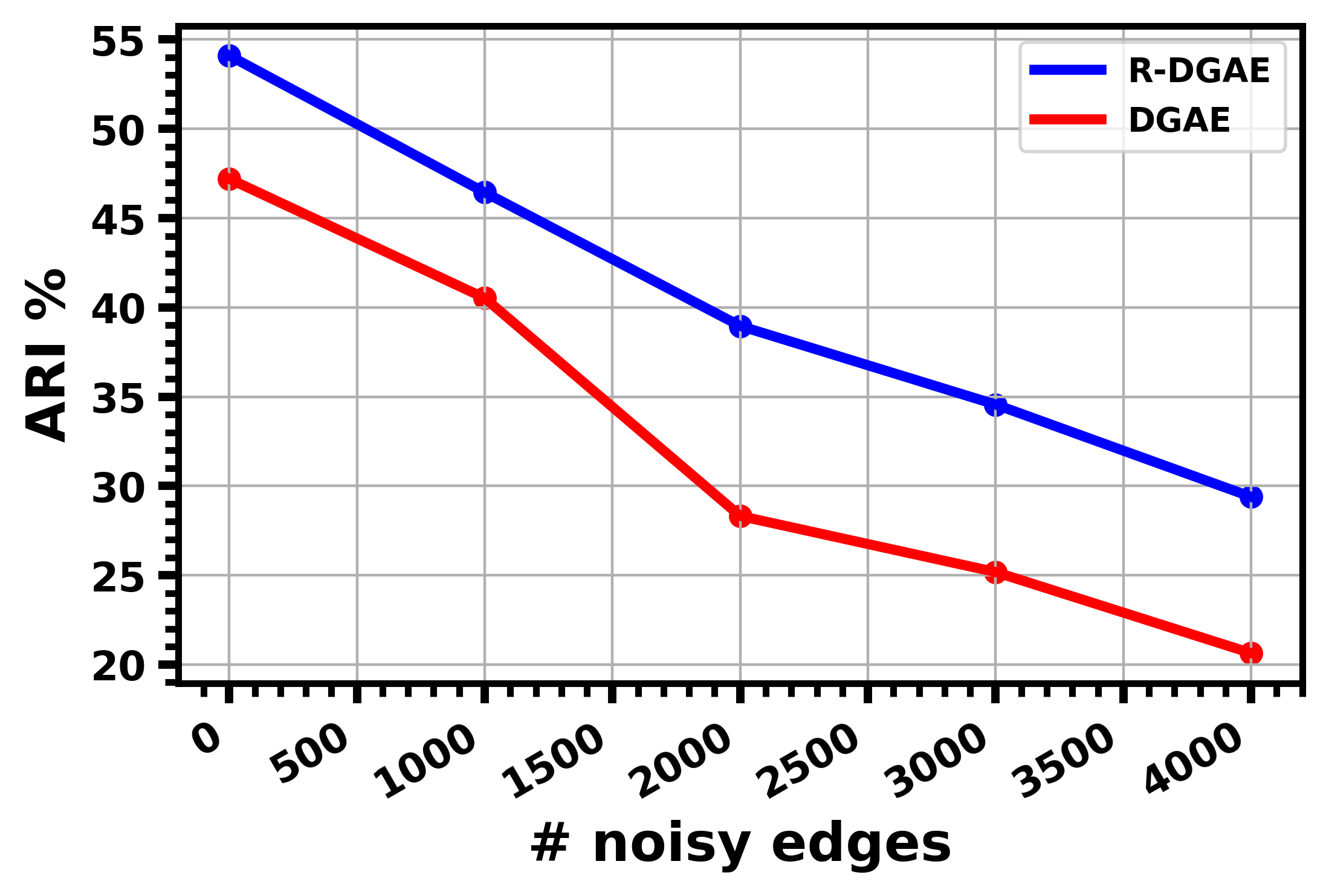}
  \end{subfigure}
  \begin{subfigure}[b]{0.24\textwidth}
     \includegraphics[width=\linewidth]{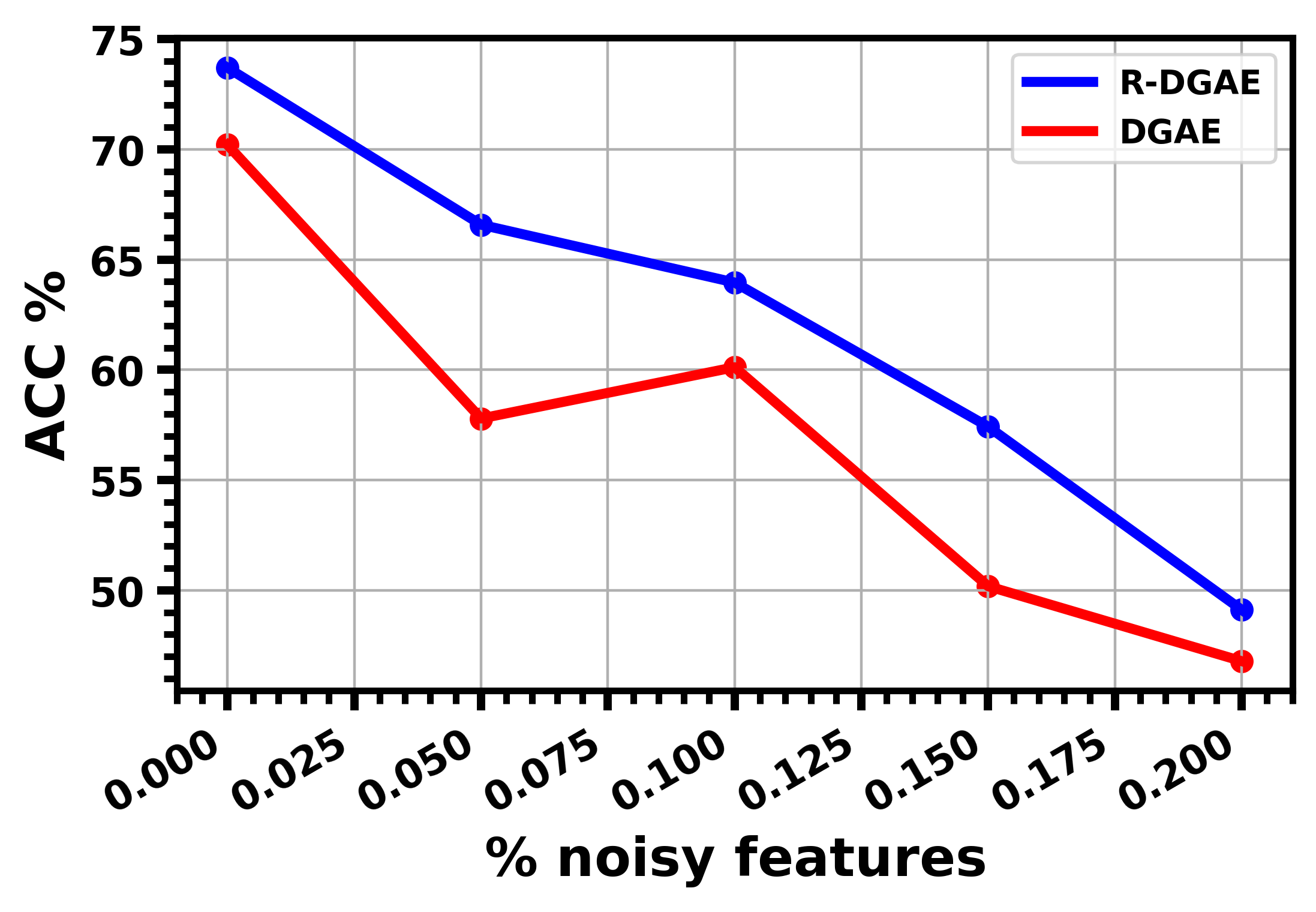}
  \end{subfigure}
  \begin{subfigure}[b]{0.24\textwidth}
     \includegraphics[width=\linewidth]{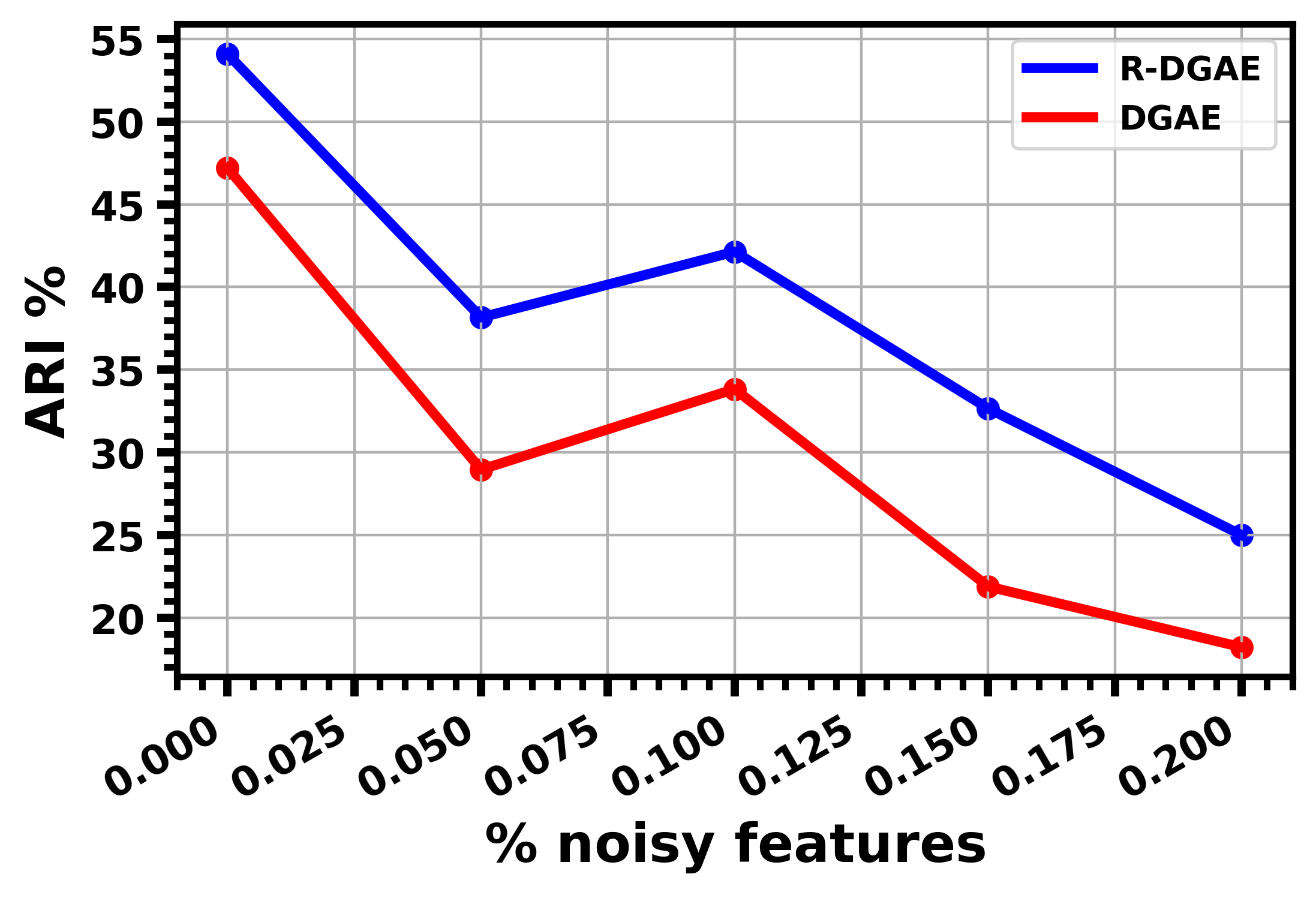}
  \end{subfigure}
  \vskip 0.1in
  \caption{Performance of R-DGAE and DGAE on Cora, in terms of ACC and ARI, after adding noisy edges and features.}
  \label{fig:robustness_noisy_edges_DGAE}
\end{figure*}

\begin{figure*}[!h]
  \begin{subfigure}[b]{0.24\textwidth}
    \includegraphics[width=\linewidth]{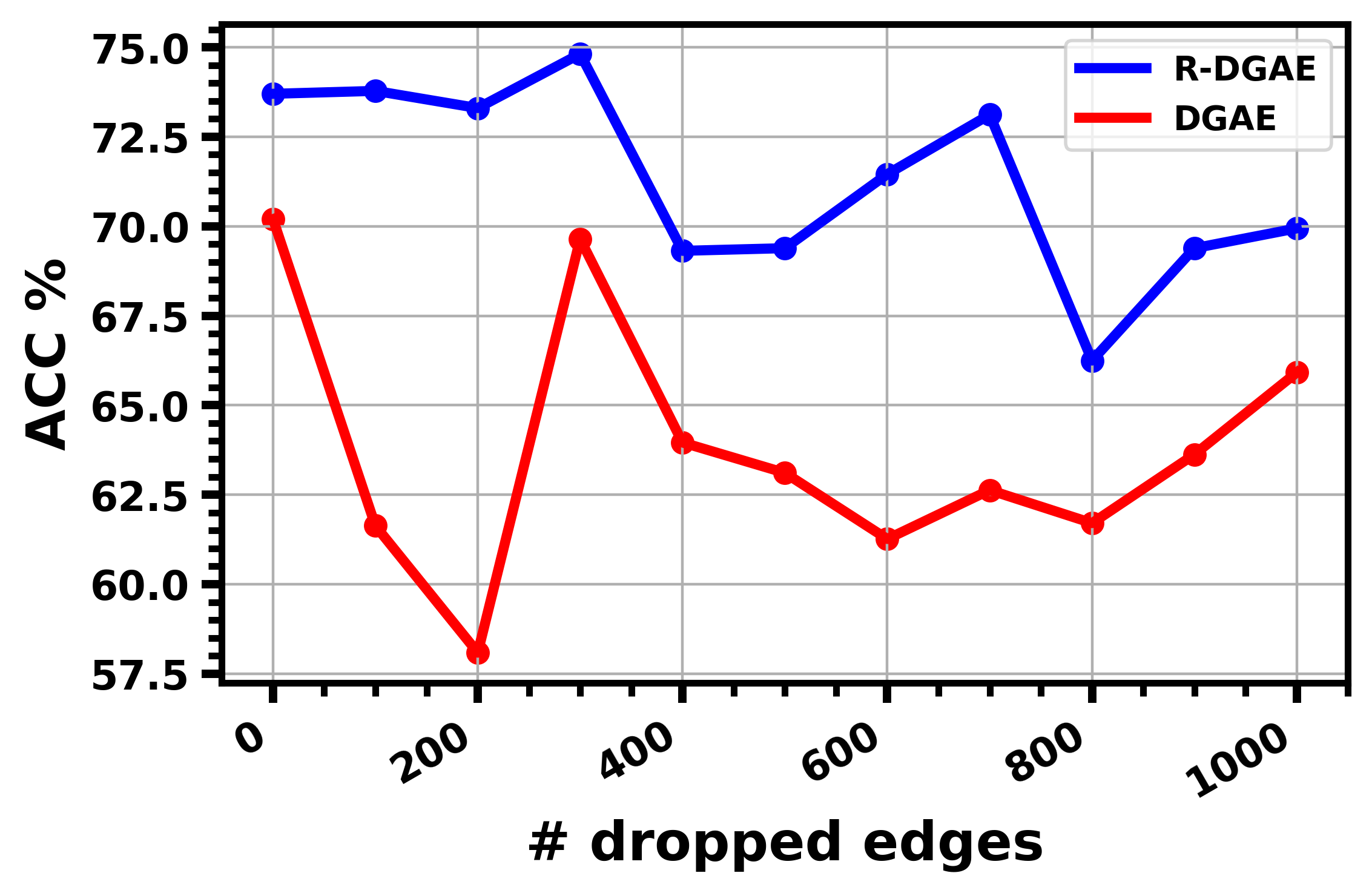}
  \end{subfigure}
  \begin{subfigure}[b]{0.24\textwidth}
     \includegraphics[width=\linewidth]{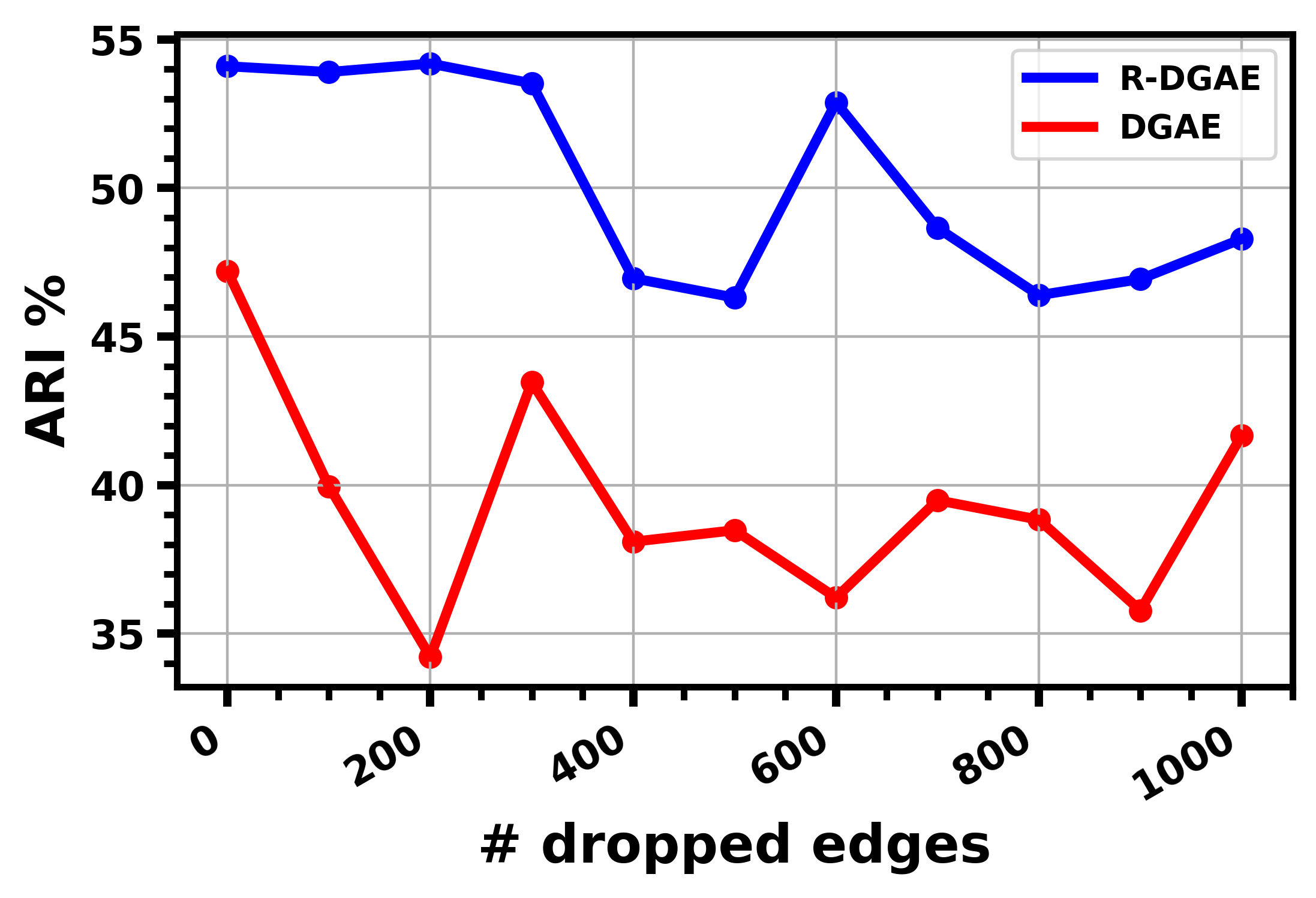}
  \end{subfigure}
  \begin{subfigure}[b]{0.24\textwidth}
     \includegraphics[width=\linewidth]{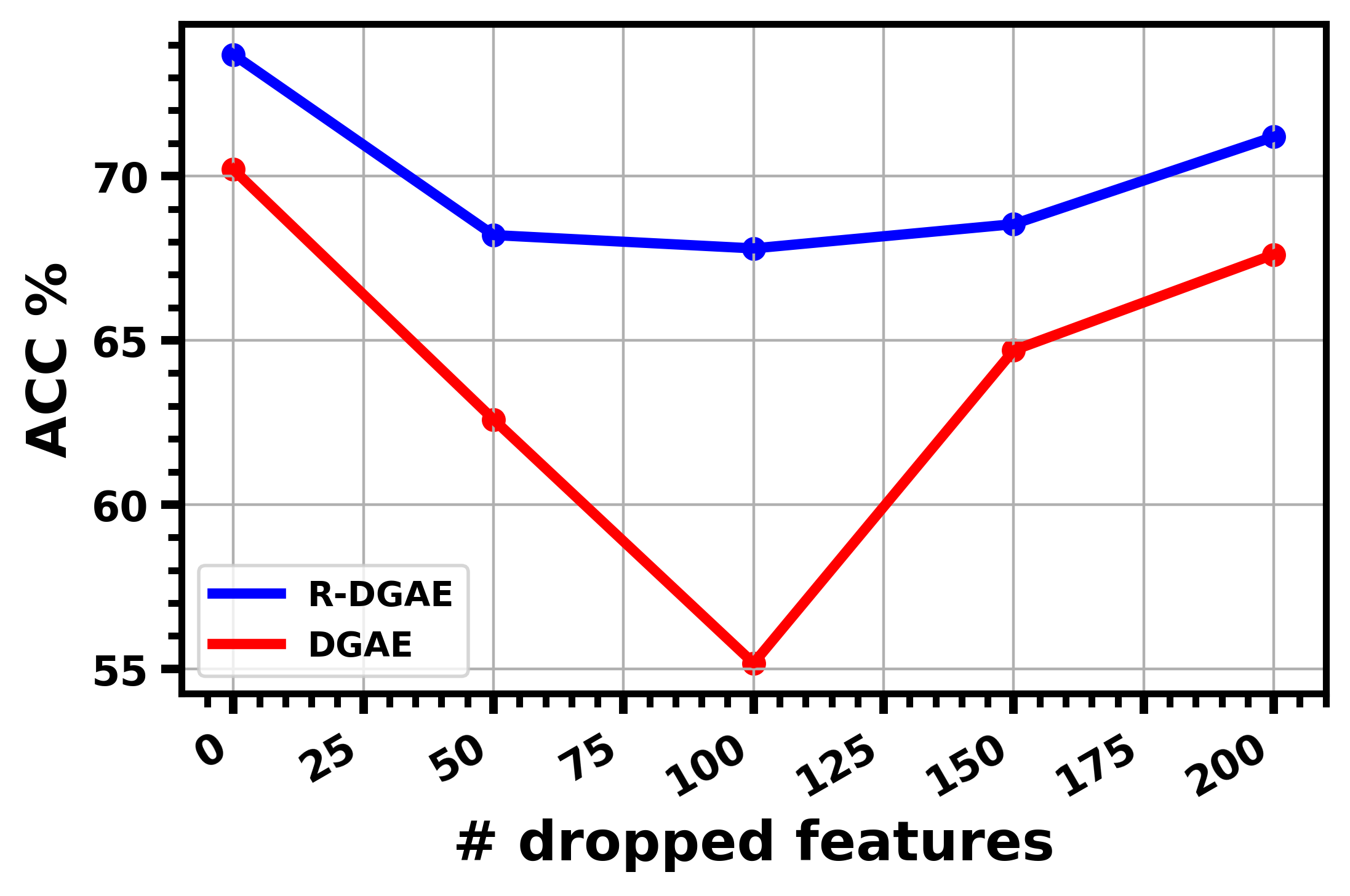}
  \end{subfigure}
  \begin{subfigure}[b]{0.24\textwidth}
     \includegraphics[width=\linewidth]{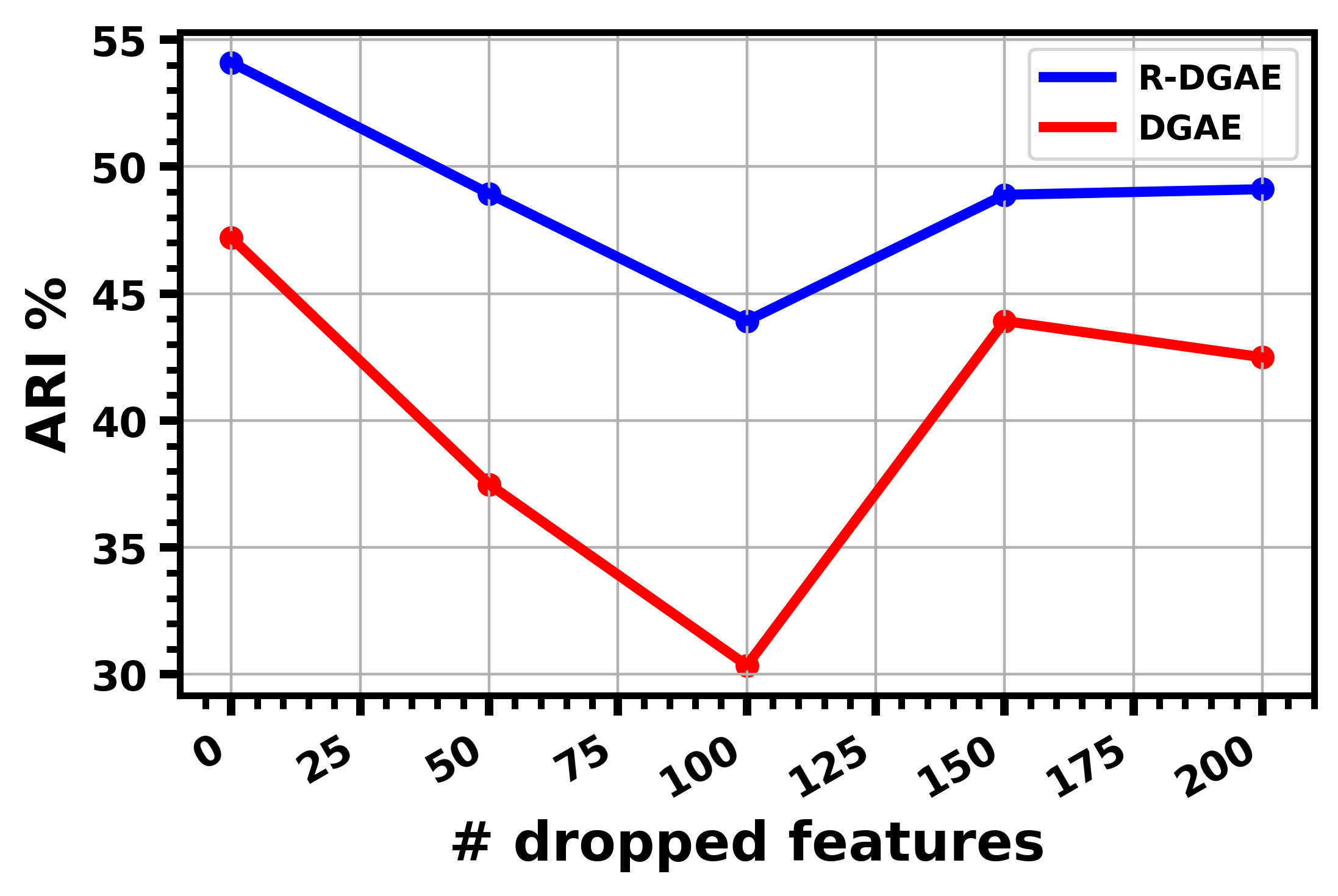}
  \end{subfigure}
  \caption{Performance of R-DGAE and DGAE on Cora, in terms of ACC and ARI, after dropping edges and features.}
  \label{fig:robustness_dropped_edges_DGAE}
\end{figure*}

\textbf{Learning dynamics:} In this part, we discuss the learning dynamics of R-GMM-VGAE on Cora. As we can see from Figure \ref{fig:lr_dyn_R-GMM-VGAE} (a), the number of decidable nodes (i.e., nodes in $\Omega$) increases gradually. The gradual increase of $\Omega$ demonstrates that performing embedded clustering with reliable nodes allows to gradually capture more challenging nodes. In Figure \ref{fig:lr_dyn_R-GMM-VGAE} (b), we illustrate the evolution of ACC for the whole set of nodes $\mathcal{V}$, and in Figure \ref{fig:lr_dyn_R-GMM-VGAE} (c), we illustrate the evolution of ACC for $\Omega$ and $\mathcal{V} - \Omega$ (i.e., undecidable nodes whose clustering assignments are not sufficient to decide to which clusters they belong). In the beginning, the number of decidable nodes is equal to $586$, and its ACC is around $0.88$. At the end of the training, the accuracy of $\Omega$ remains higher than $0.8$, and the size of $\Omega$ constitutes more than $90\%$ of $\mathcal{V}$. These results provide evidence that $\Xi$ can collect a sufficient portion of nodes with reliable clustering assignments.  

To investigate the role of our graph-transforming operator $\Upsilon$, we conduct a series of experiments to understand the evolution of the constructed graph $A_{clus}^{self}$. The obtained results are illustrated in Figures \ref{fig:lr_dyn_R-GMM-VGAE} (d), (e), and (f). As we can see from Figure \ref{fig:lr_dyn_R-GMM-VGAE} (d), the number of links for $A_{clus}^{self}$ increases gradually. At the end of the training process, the number of links exceeds $10,000$. Most importantly, the number of false links (i.e., links between nodes with different labels) remains small compared to the number of true links (i.e., links between nodes with the same labels). From Figure \ref{fig:lr_dyn_R-GMM-VGAE} (e), we can see that most of the added links are true links, and the number of false links among the added links is considerably inferior to the number of added true links. From Figure \ref{fig:lr_dyn_R-GMM-VGAE} (f), we observe that the number of deleted links is one order of magnitude smaller than the number of added links. Thus, we expect that the impact of adding edges on clustering effectiveness is much stronger than the impact of dropping edges.  We have investigated this aspect in our ablation study. Starting from epoch 60 of Figure \ref{fig:lr_dyn_R-GMM-VGAE} (f), we observe that the number of false links among the deleted links is not always inferior to the number of deleted true links. This result indicates the possibility of improving our results by early stopping the operation "dropping edges". In the absence of a clear explanation to this observation, and to keep our solution as simple as possible, we refrain from adjusting the "dropping edges" operation according to the obtained results. Globally, our analysis suggests that $\Upsilon$ gradually constructs a more clustering-oriented graph $A_{clus}^{self}$ compared with the initial graph $A$.

\begin{figure*}[!h]
  \centering
  \begin{subfigure}[b]{0.33\textwidth}
    \includegraphics[width=\linewidth]{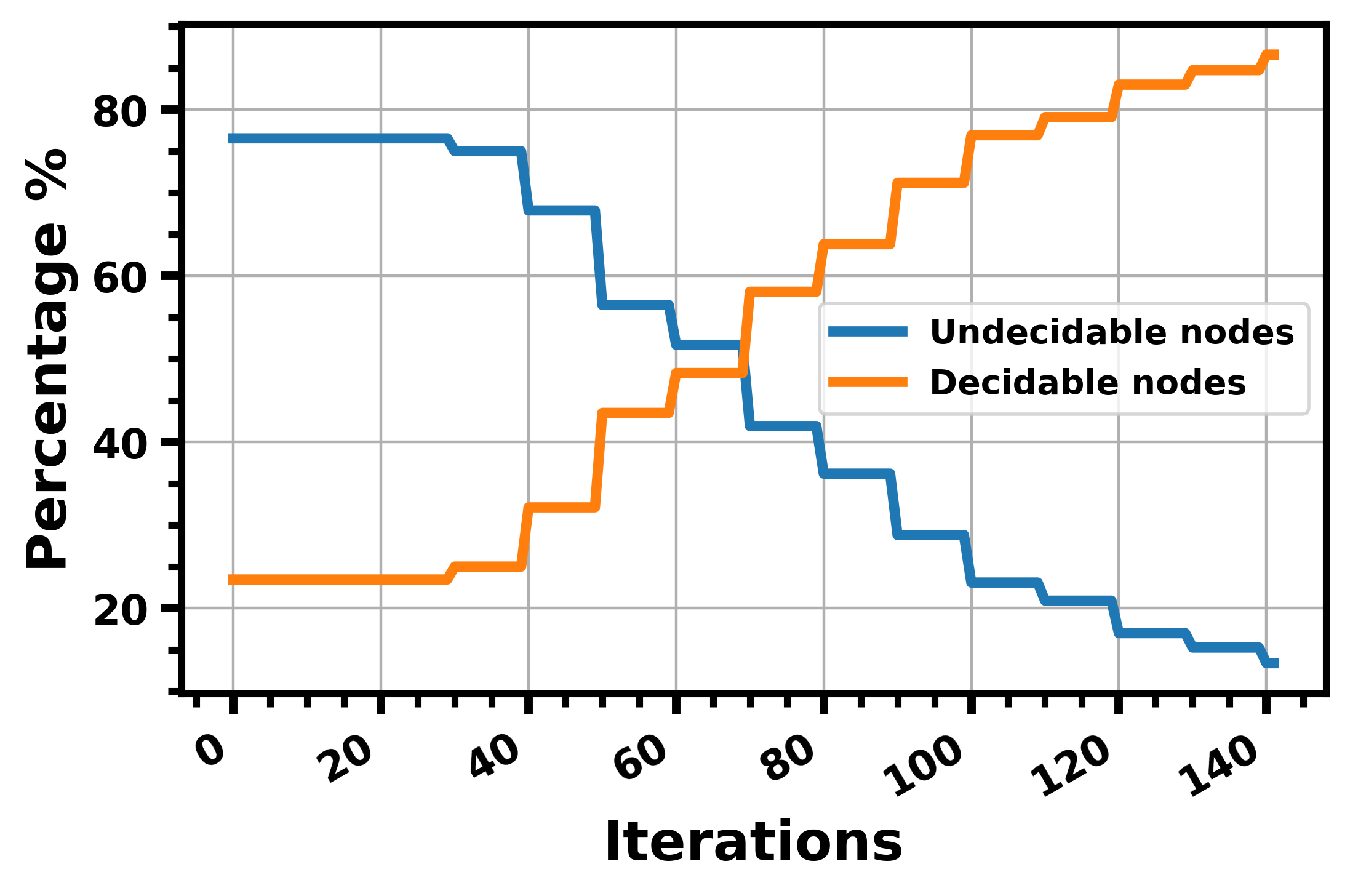}
    \caption{\% of decidable and undecidable nodes}
  \end{subfigure} \hfil
  \begin{subfigure}[b]{0.33\textwidth}
     \includegraphics[width=\linewidth]{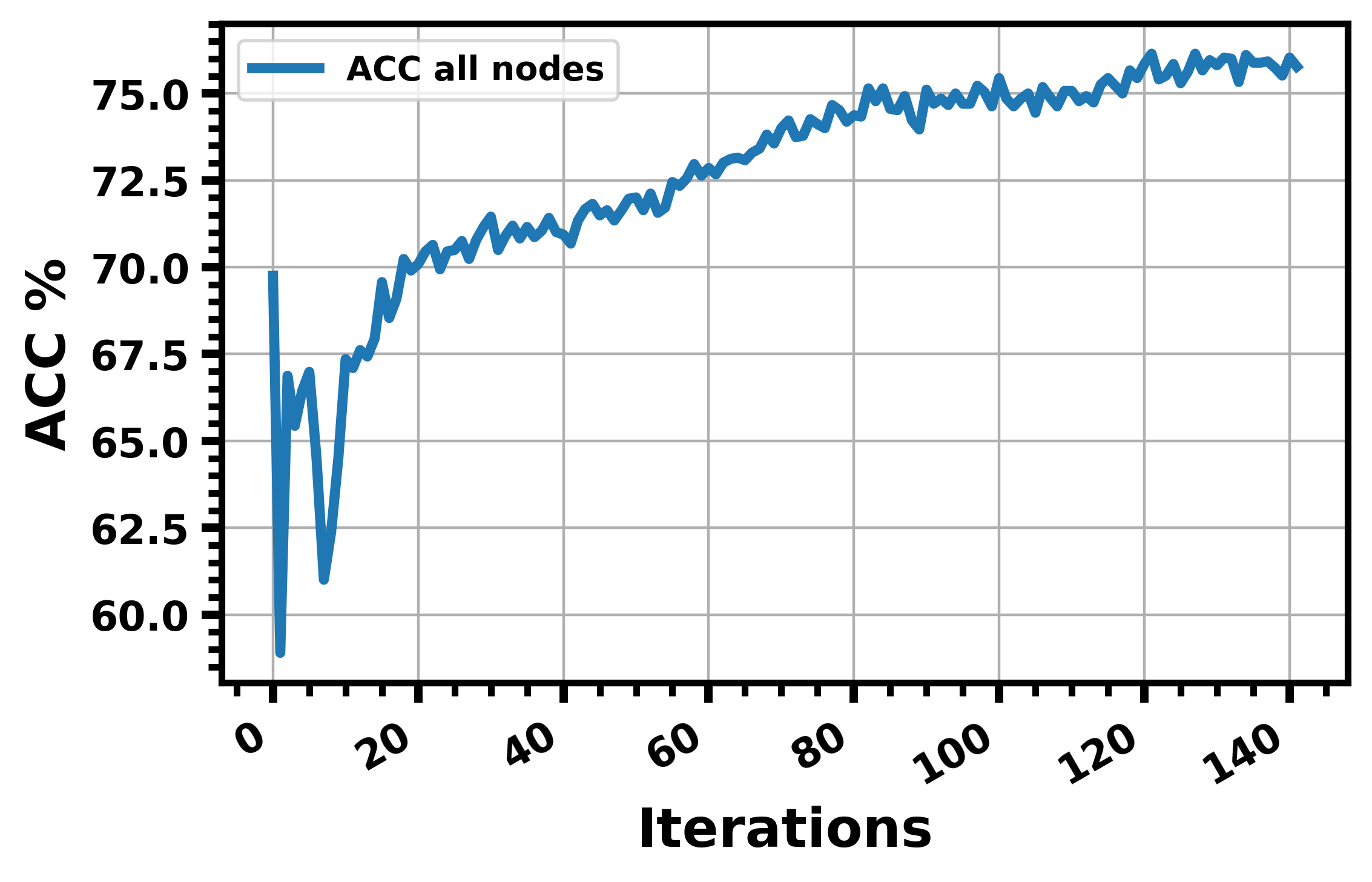}
     \caption{ACC: all nodes}
  \end{subfigure} \hfil
  \begin{subfigure}[b]{0.33\textwidth}
    \includegraphics[width=\linewidth]{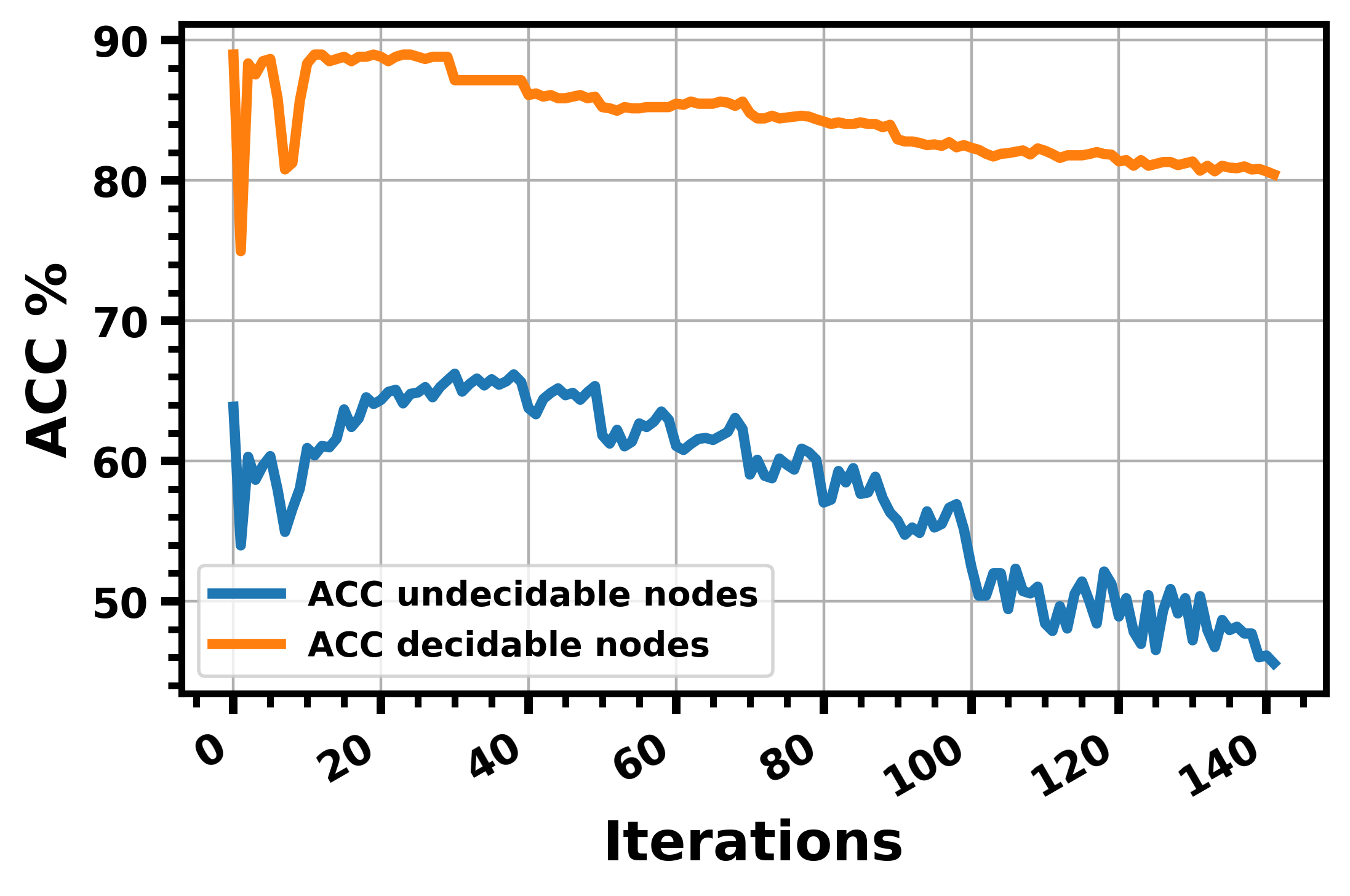}
    \caption{ACC: decidable and undecidable nodes}
  \end{subfigure} \hfil
  
  \medskip
  \begin{subfigure}[b]{0.33\textwidth}
    \includegraphics[width=\linewidth]{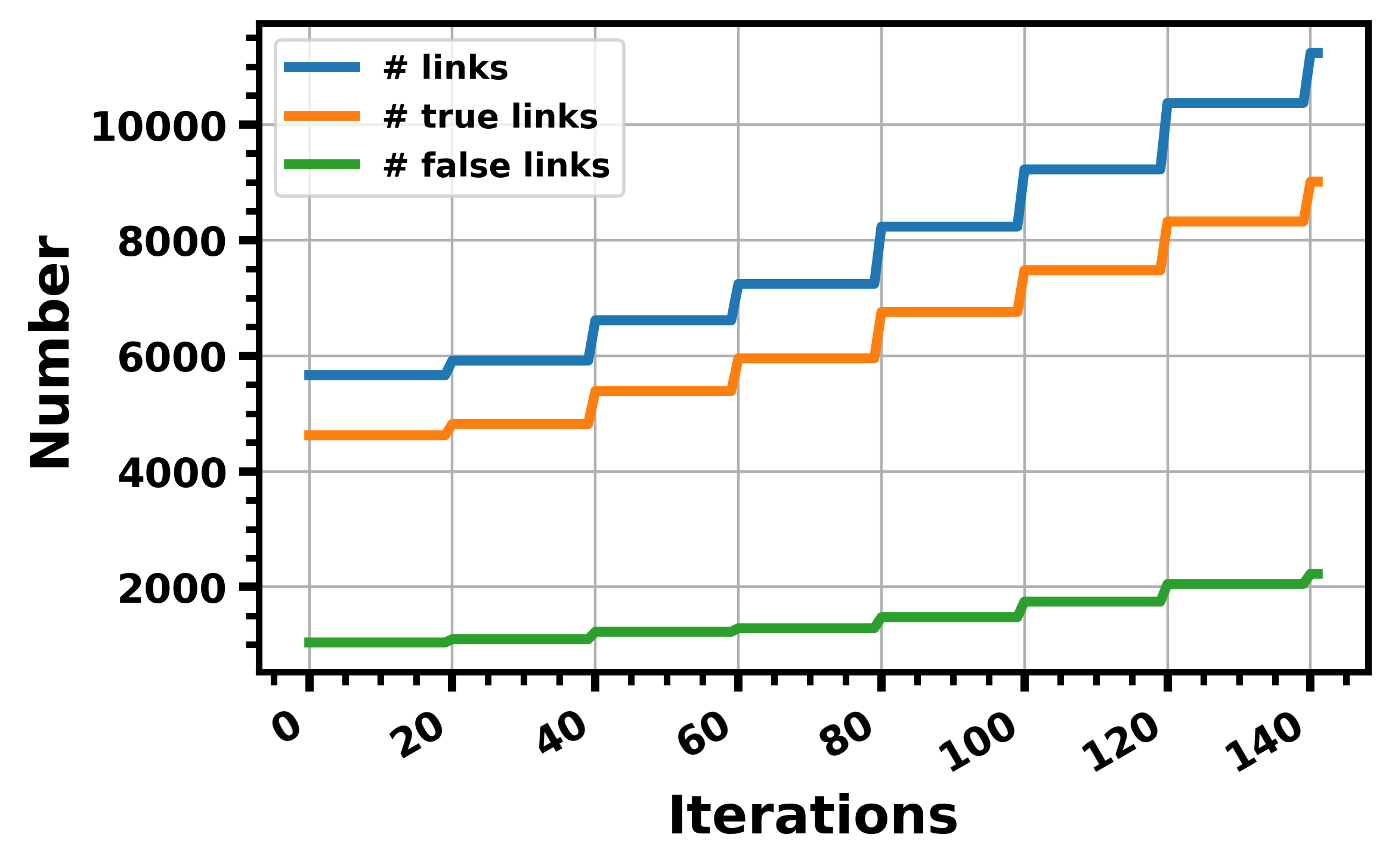}
    \caption{\# links $A_{clus}^{self}$}
  \end{subfigure} \hfil
  \begin{subfigure}[b]{0.33\textwidth}
     \includegraphics[width=\linewidth]{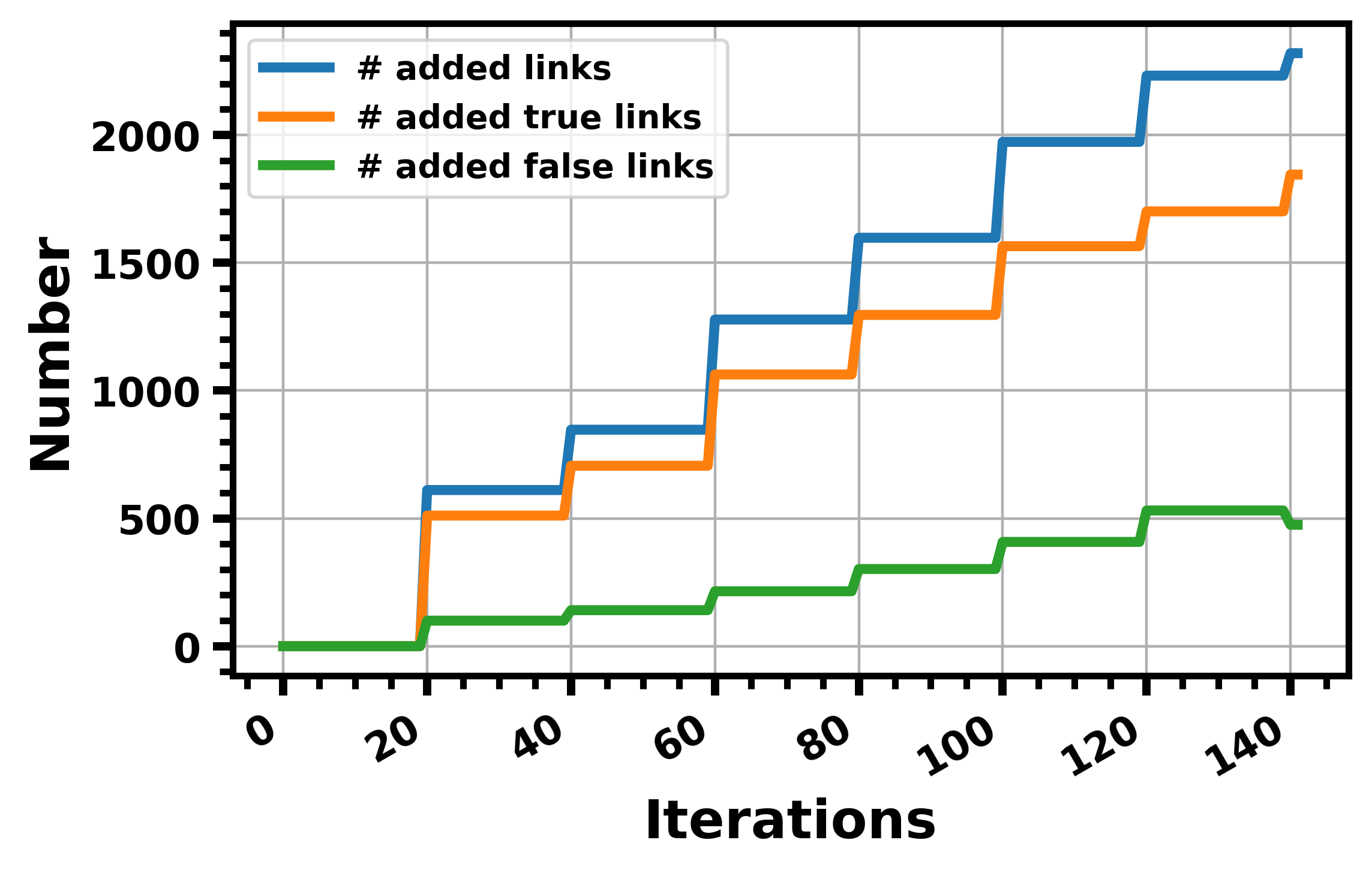}
     \caption{\# added links $A_{clus}^{self}$}
  \end{subfigure} \hfil
  \begin{subfigure}[b]{0.33\textwidth}
    \includegraphics[width=\linewidth]{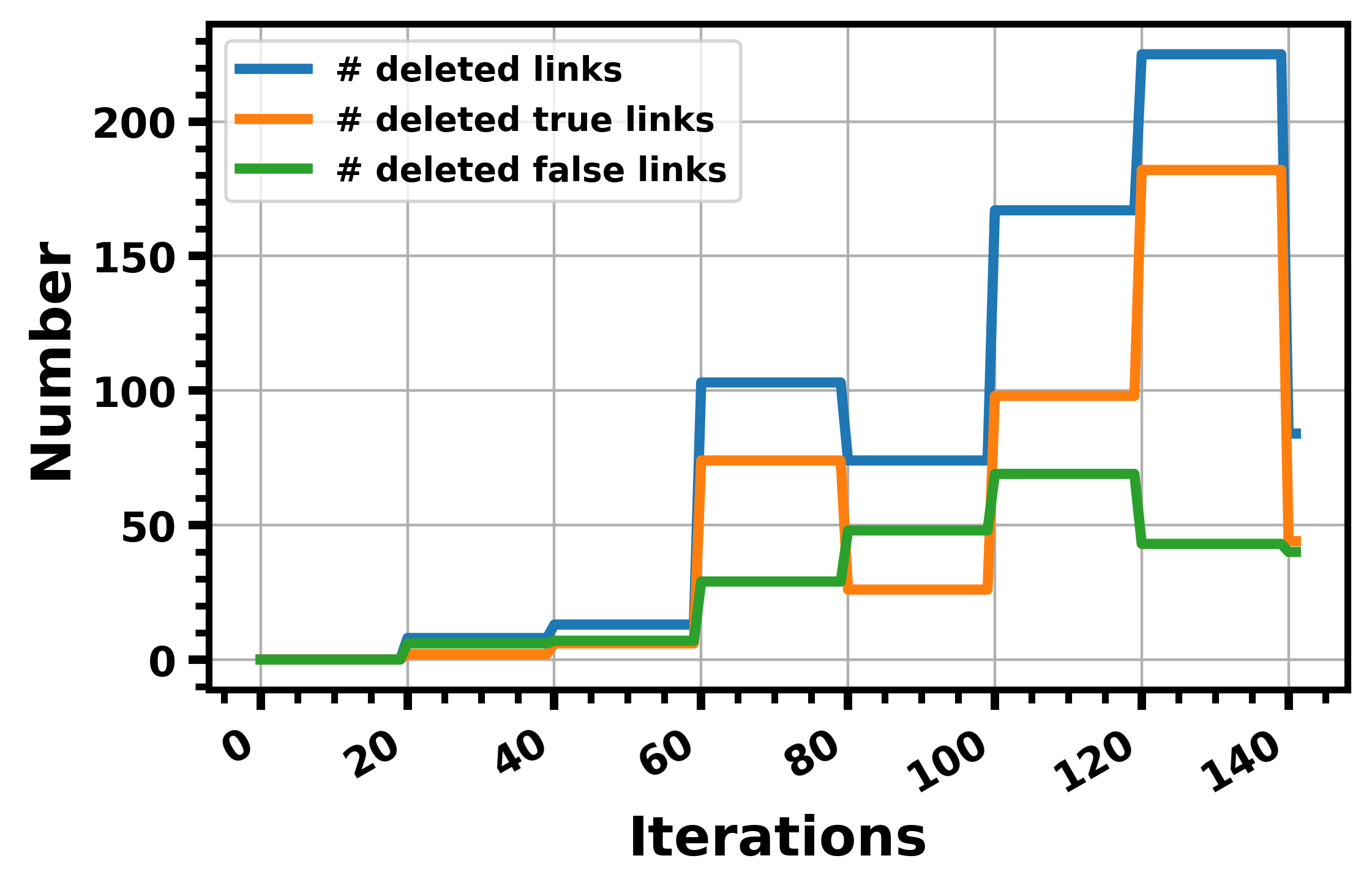}
    \caption{\# deleted links $A_{clus}^{self}$}
  \end{subfigure} \hfil
  \caption{Learning dynamics of R-GMM-VGAE on Cora}
  \label{fig:lr_dyn_R-GMM-VGAE}
\end{figure*}

\textbf{Visualisation of $Z$:} In Figure \ref{fig:vis_tnse}, we visualize the latent representations of GMM-VGAE and R-GMM-VGAE, during training on Cora. It is noteworthy that both models share the same pretraining weights. At epoch $40$, we observe that R-GMM-VGAE makes minor modifications to the embedded representations compared with GMM-VGAE. At this level, GMM-VGAE has already formed some well-separated clusters. Unlike GMM-VGAE, R-GMM-VGAE only uses the decidable nodes for performing embedded clustering. Therefore, it takes more iterations to obtain clustering-friendly representations. Finally, at epoch 120, we observe that R-GMM-VGAE has better separability between the different clusters than GMM-VGAE. Mainly, R-MM-VGAE is more able to separate between the red and purple groups. Furthermore, unlike R-GMM-VGAE, GMM-VGAE can not separate between the blue and pink clusters. These results confirm the importance of our operator $\Xi$ in building high-quality clusters.

\begin{figure*}[!h]
  \centering
  \begin{subfigure}[b]{0.24\textwidth}
    \includegraphics[width=\linewidth]{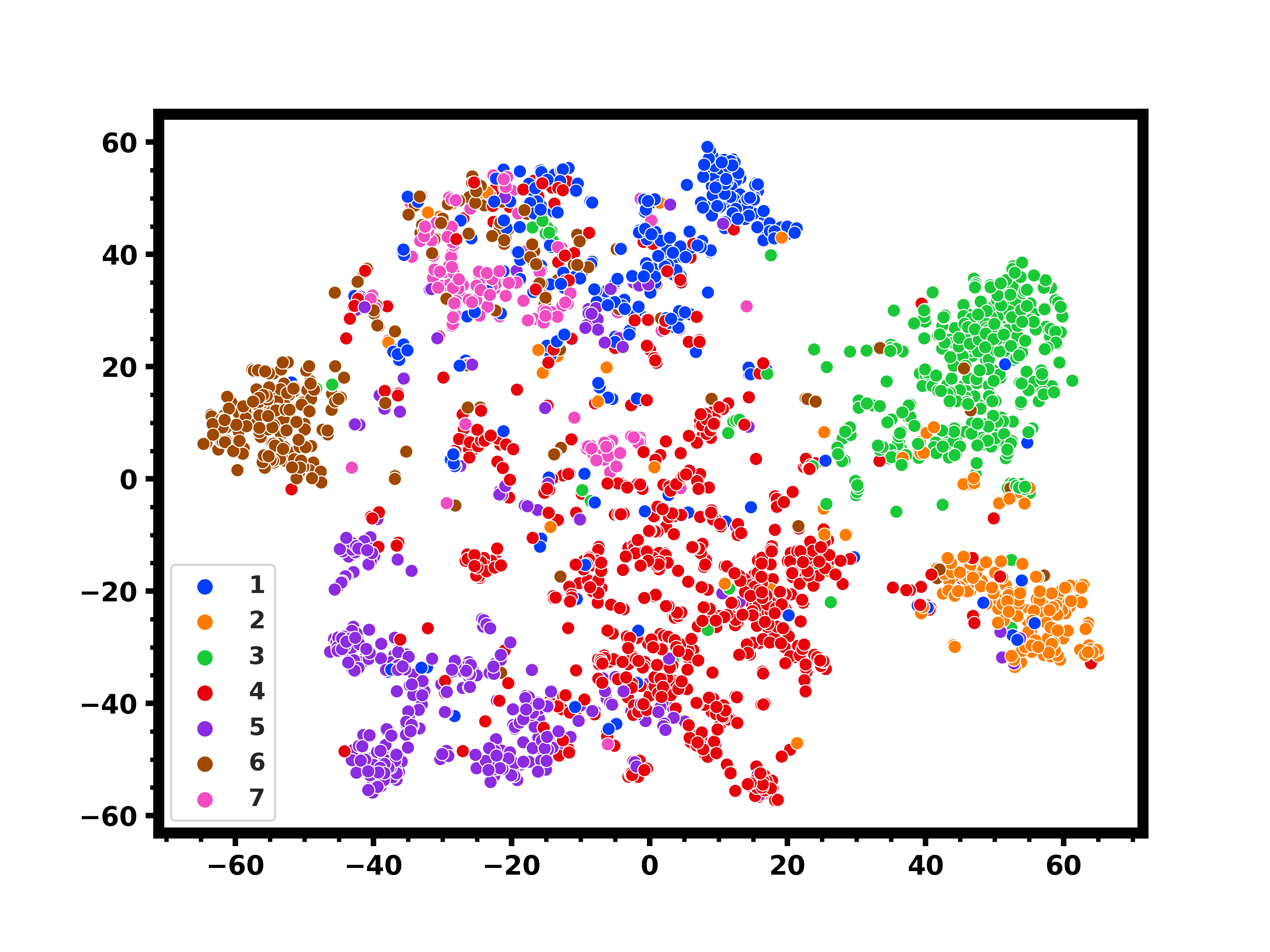}
    \caption{Epoch 0}
 \end{subfigure} \hfil
  \begin{subfigure}[b]{0.24\textwidth}
     \includegraphics[width=\linewidth]{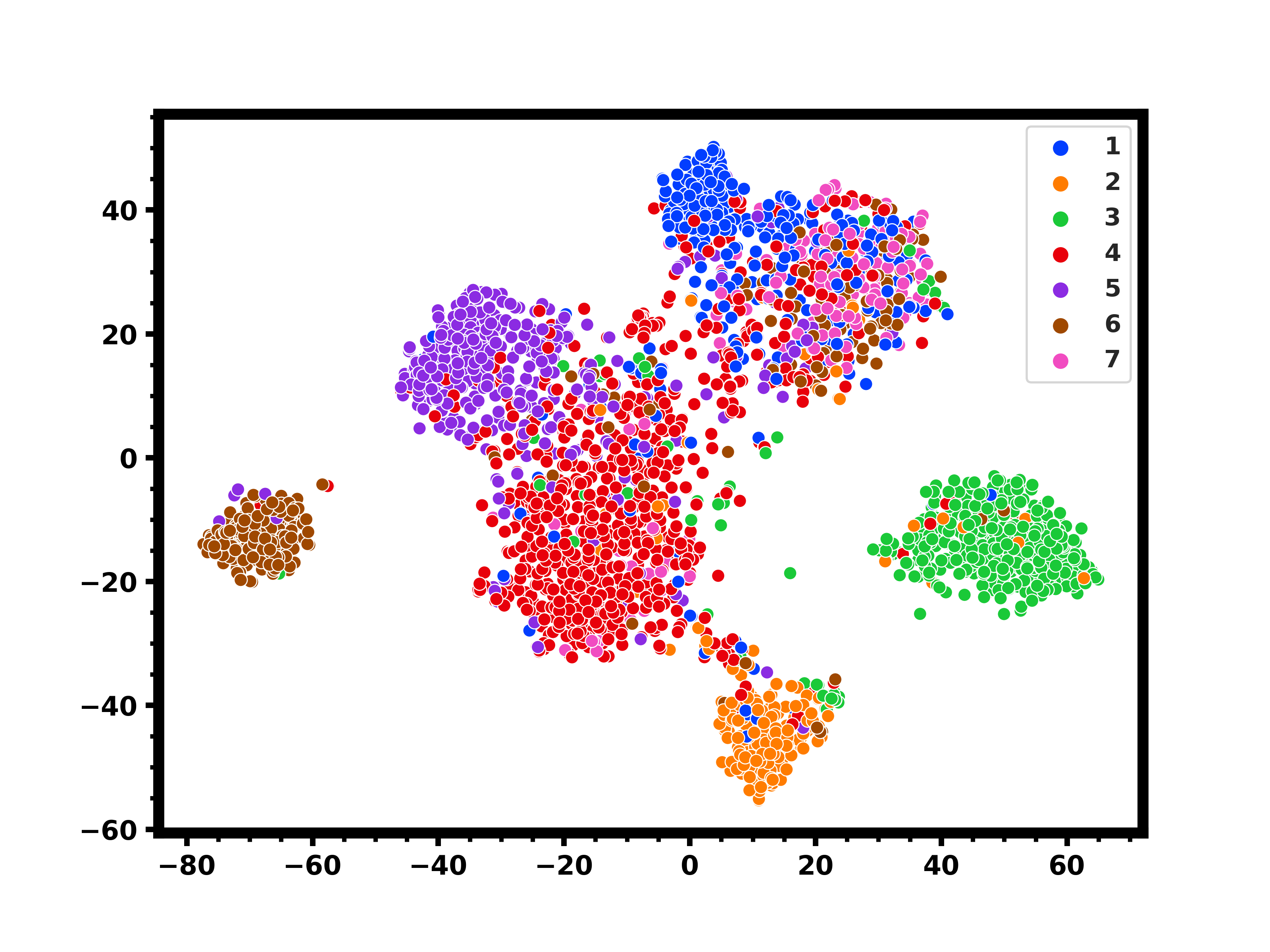}
     \caption{Epoch 40}
  \end{subfigure} \hfil
  \begin{subfigure}[b]{0.24\textwidth}
    \includegraphics[width=\linewidth]{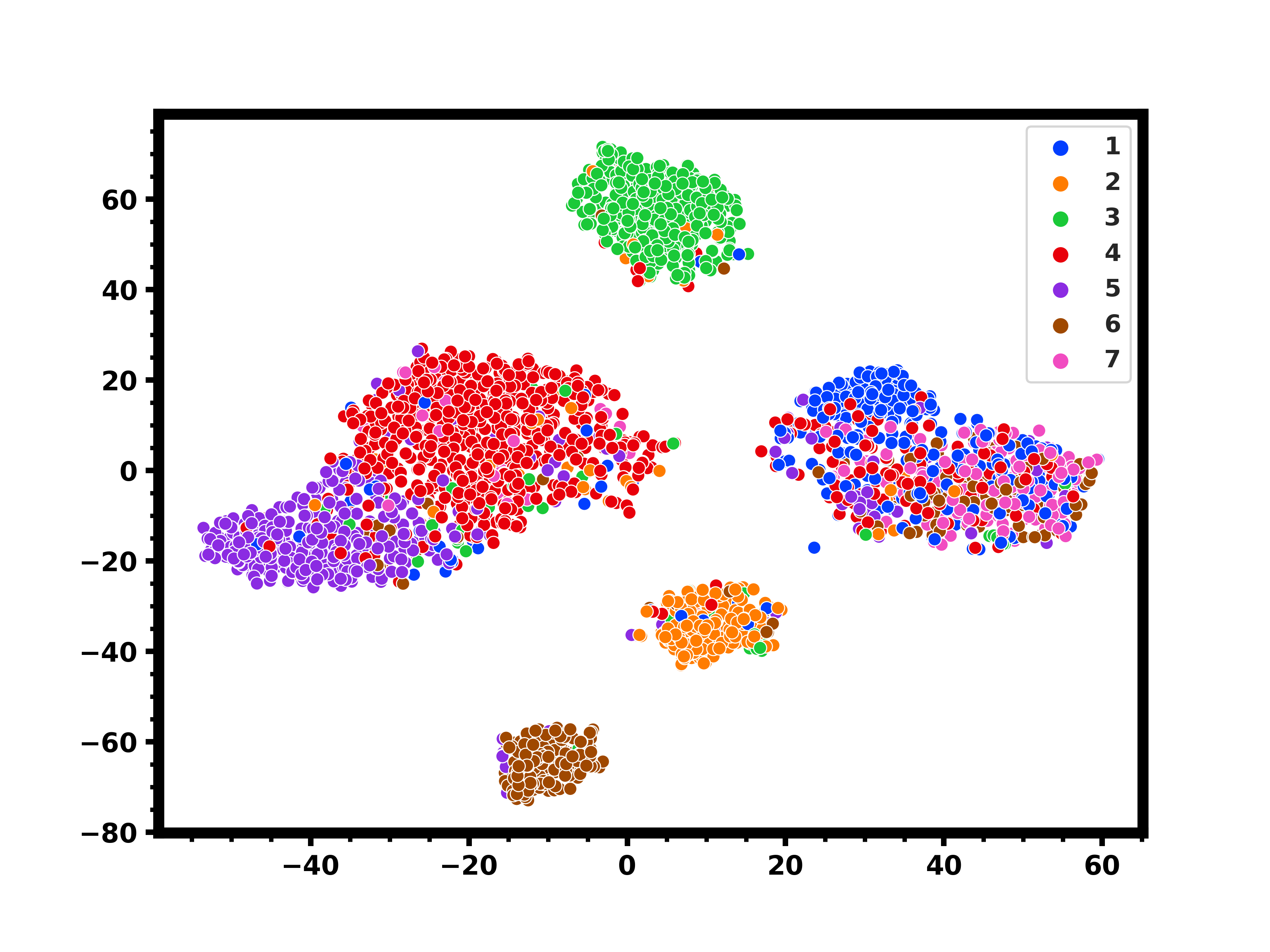}
    \caption{Epoch 80}
  \end{subfigure} \hfil
  \begin{subfigure}[b]{0.24\textwidth}
    \includegraphics[width=\linewidth]{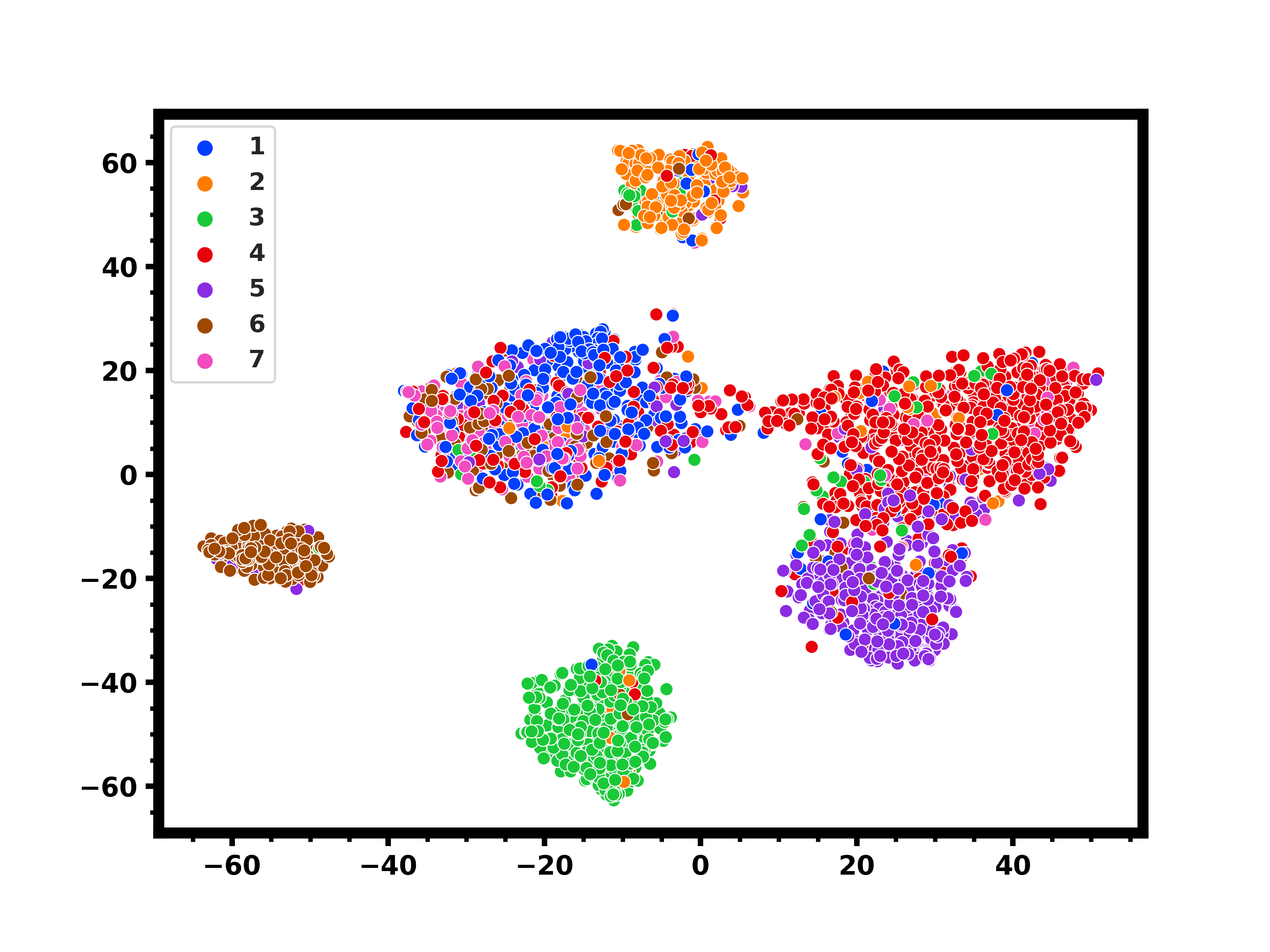}
    \caption{Epoch 120}
  \end{subfigure} \hfil
  
  \medskip
  \begin{subfigure}[b]{0.24\textwidth}
    \includegraphics[width=\linewidth]{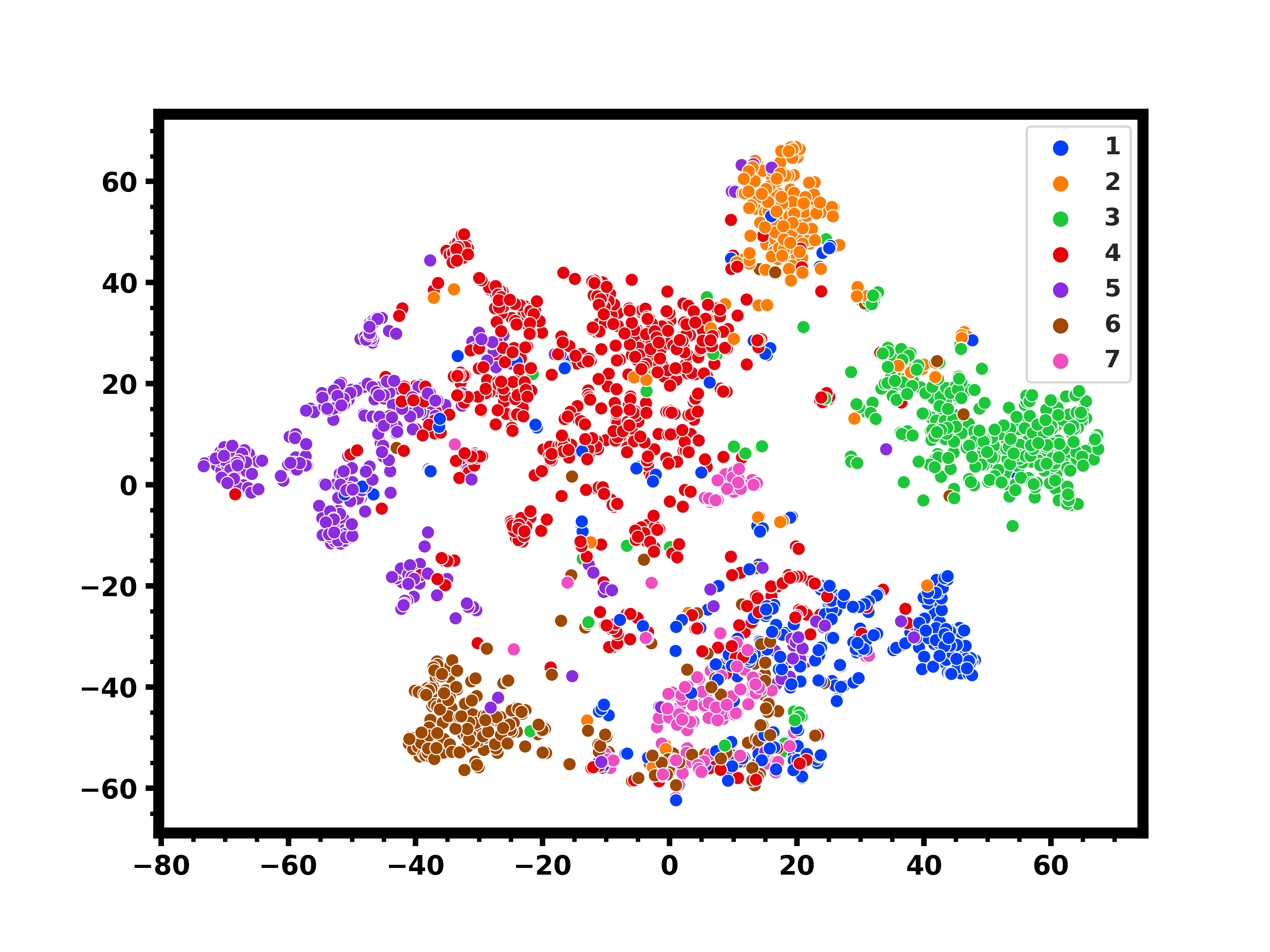}
    \caption{Epoch 0}
  \end{subfigure} \hfil
  \begin{subfigure}[b]{0.24\textwidth}
     \includegraphics[width=\linewidth]{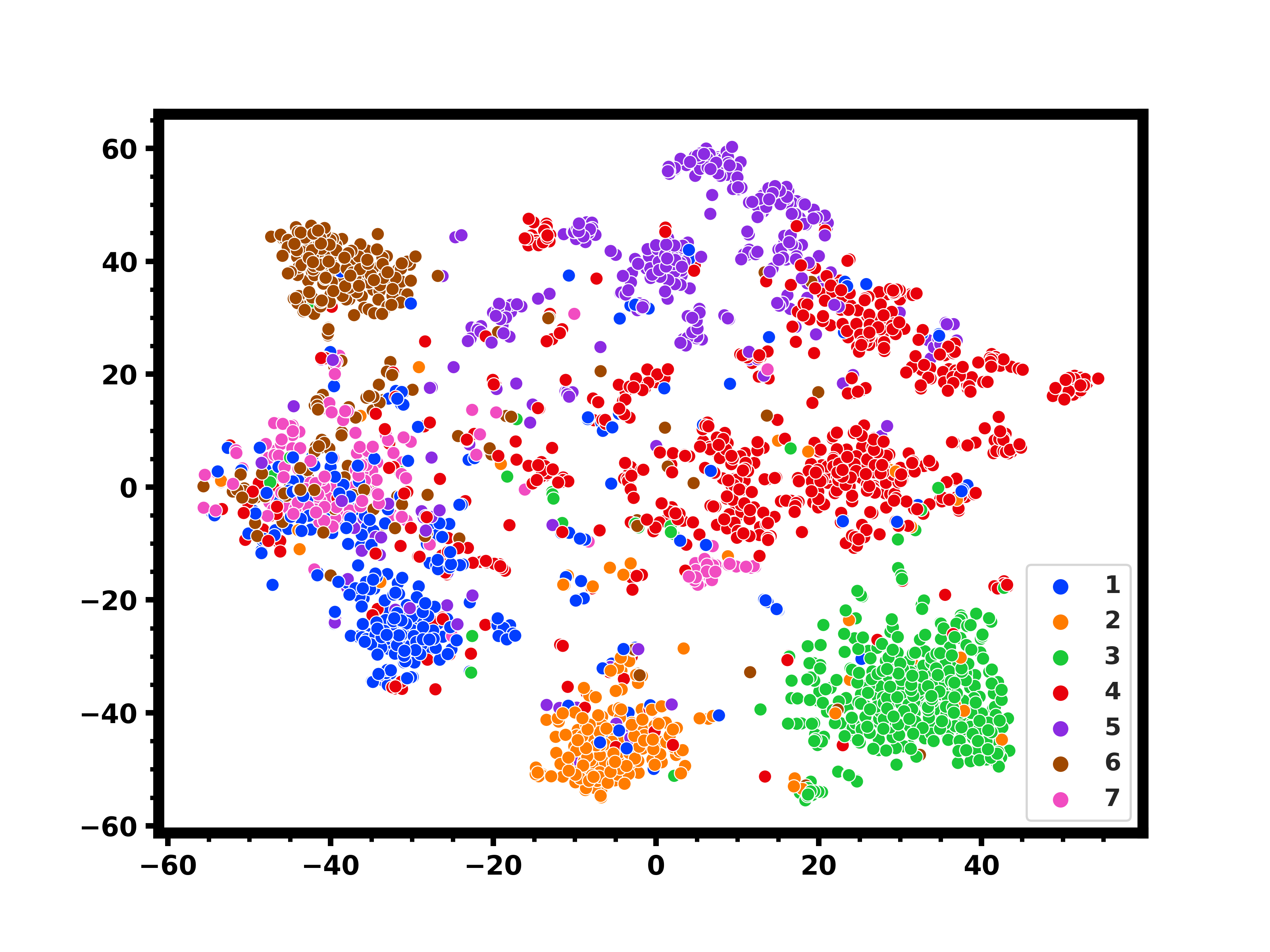}
     \caption{Epoch 40}
  \end{subfigure} \hfil
  \begin{subfigure}[b]{0.24\textwidth}
    \includegraphics[width=\linewidth]{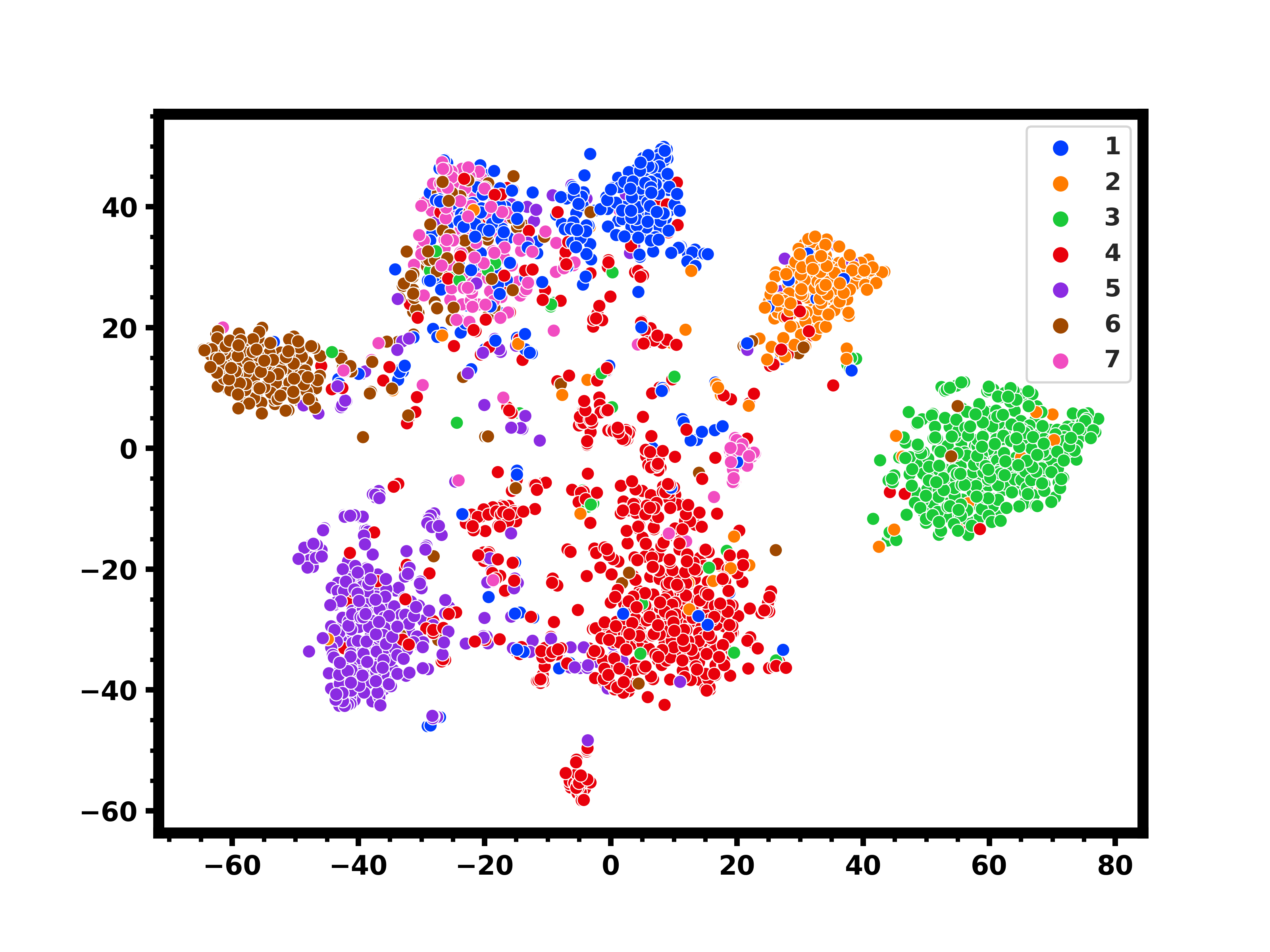}
    \caption{Epoch 80}
  \end{subfigure} \hfil
  \begin{subfigure}[b]{0.24\textwidth}
    \includegraphics[width=\linewidth]{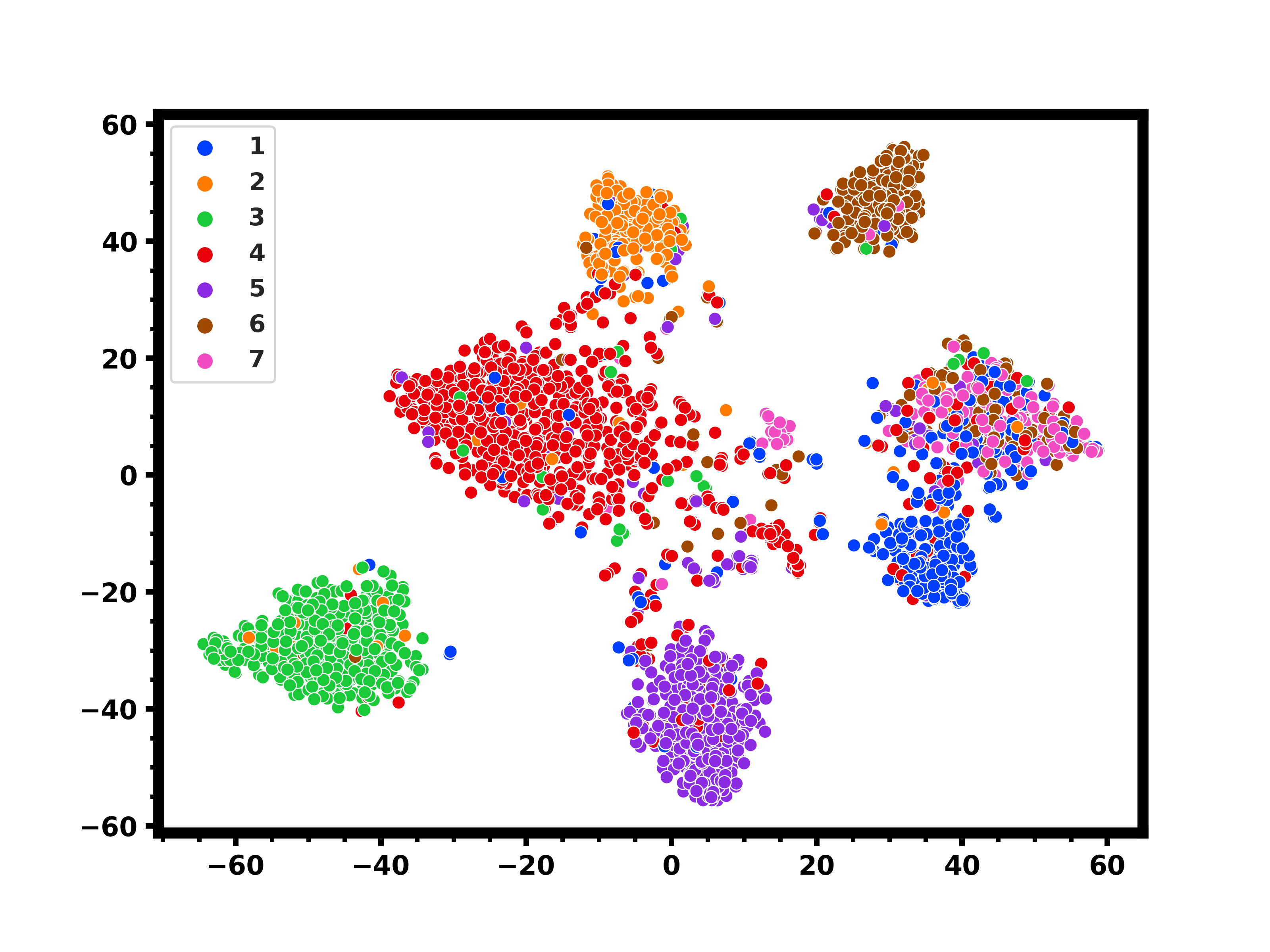}
    \caption{Epoch 120}
  \end{subfigure} \hfil
\vskip 0.1in
  \caption{2D visualizations of the latent representations of GMM-VGAE and R-GMM-VGAE, on Cora using T-SNE. Top row: latent representations of GMM-VGAE; bottom row: latent representations of R-GMM-VGAE.}
  \label{fig:vis_tnse}
\end{figure*}

\textbf{Sensitivity to the confidence thresholds:} In Figures \ref{fig:sens_confidence_cora_R-GMM-VGAE} and \ref{fig:sens_confidence_cora_R-DGAE}, we illustrate the sensitivity of R-GMM-VGAE and R-DGAE, respectively, to the confidence thresholds $\alpha_{1}$ and $\alpha_{2}$ on Cora. For $\alpha_{1}$, we try several values from the set $\left \{0.1, 0.2, 0.3, 0.4 \right \}$. We find that setting $\alpha_{1}$ higher than $0.4$ leads to an empty set $\Omega$. Therefore, $0.4$ is the highest value we can try for $\alpha_{1}$. Our strategy for setting $\alpha_{1}$ consists of choosing the highest value that can give birth to a nonempty set $\Omega$. For $\alpha_{2}$, we evaluate several values from the set $\left \{0.05, 0.1, 0.15, 0.20, 0.25 \right \}$. We find that setting $\alpha_{2}$ higher than $0.25$ leads to an empty set $\Omega$. As we can see from both figures (i.e., Figure \ref{fig:sens_confidence_cora_R-GMM-VGAE} and Figure \ref{fig:sens_confidence_cora_R-DGAE}), R-GMM-VGAE and R-DGAE give reasonable results in a wide range of parameters.

\begin{figure*}[!h]
  \vspace*{-3mm}
  \begin{subfigure}[b]{0.33\textwidth}
    \includegraphics[width=\linewidth]{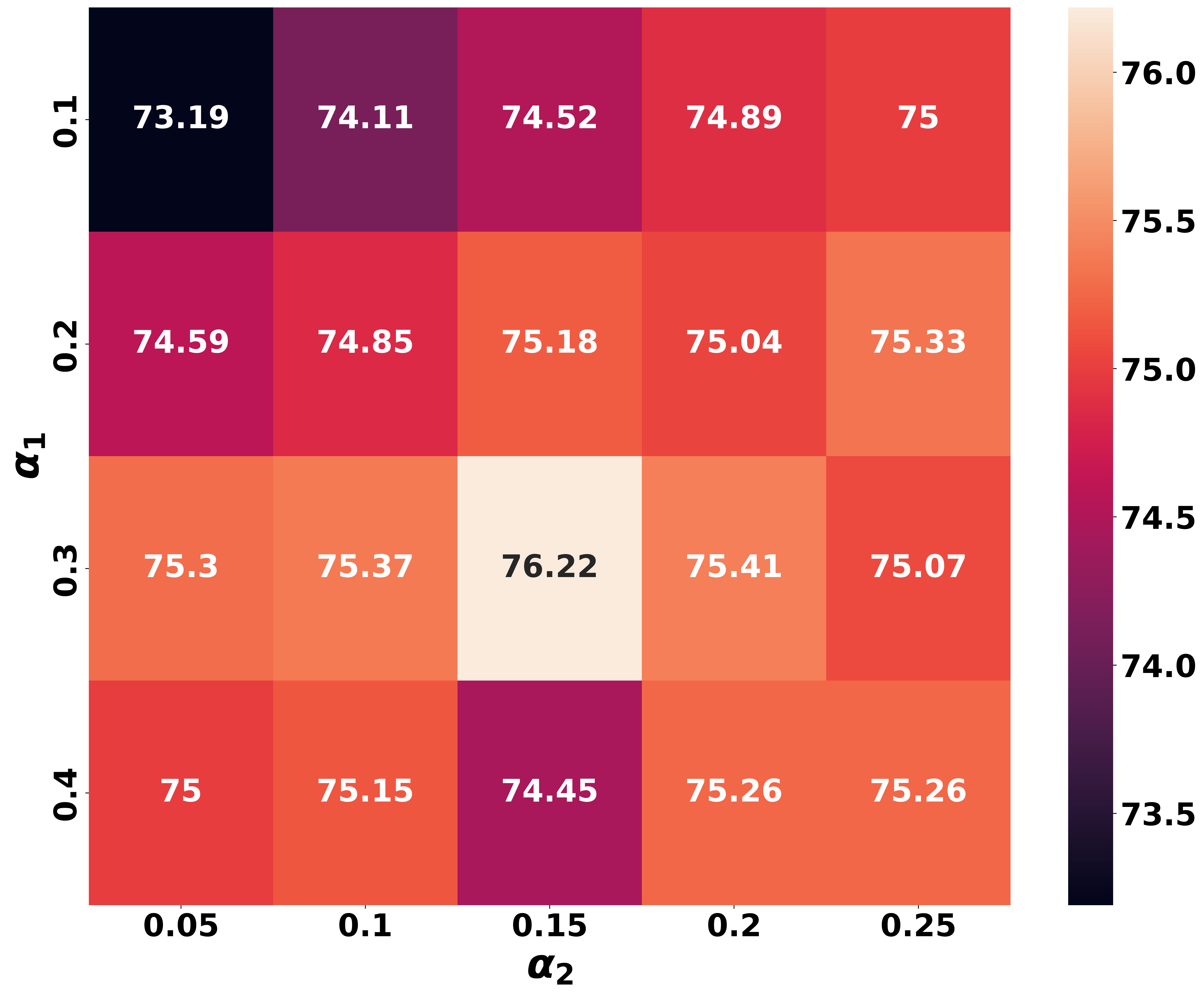}
    \caption{ACC}
  \end{subfigure}
  \begin{subfigure}[b]{0.33\textwidth}
     \includegraphics[width=\linewidth]{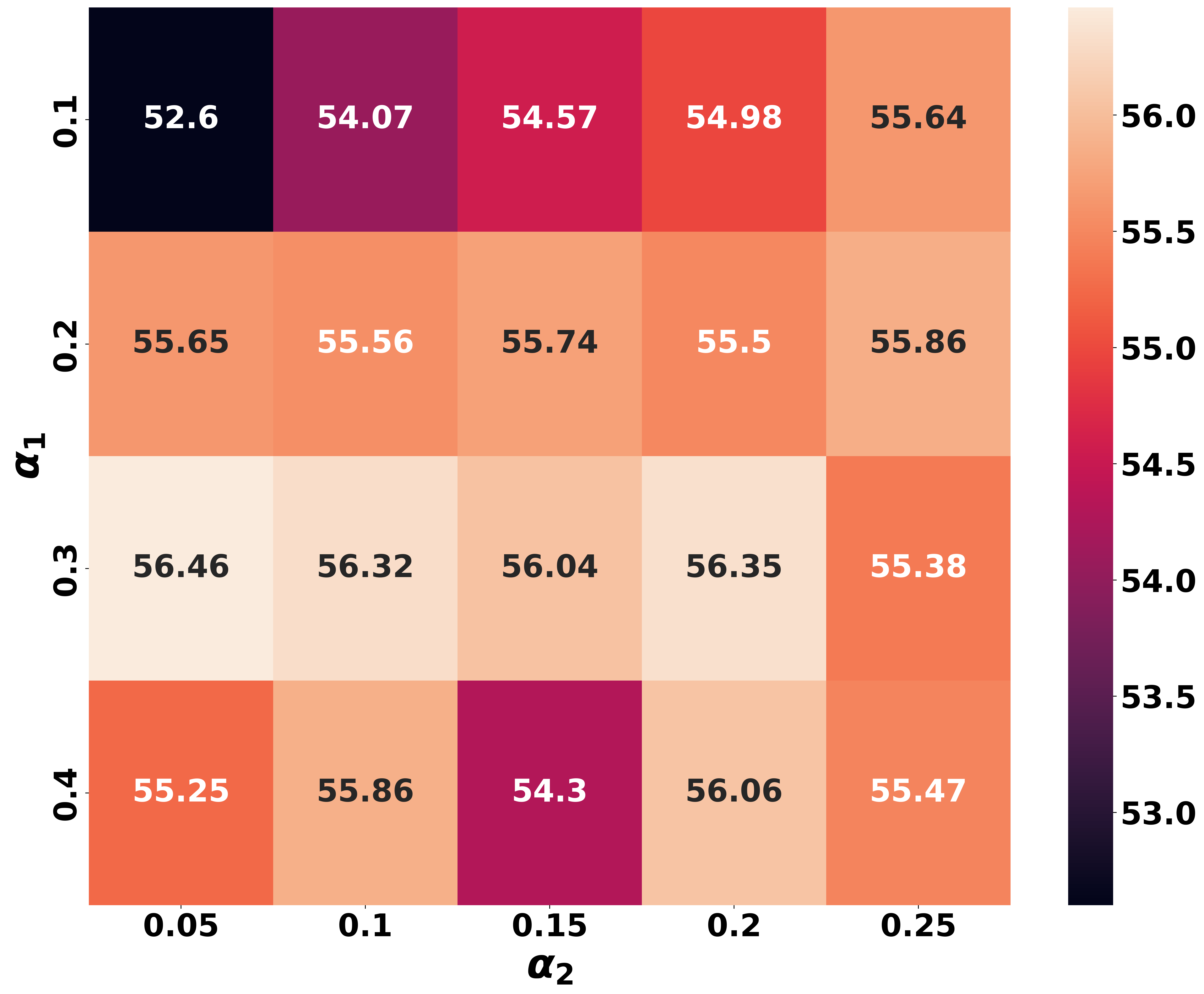}
     \caption{NMI}
  \end{subfigure}
  \begin{subfigure}[b]{0.33\textwidth}
     \includegraphics[width=\linewidth]{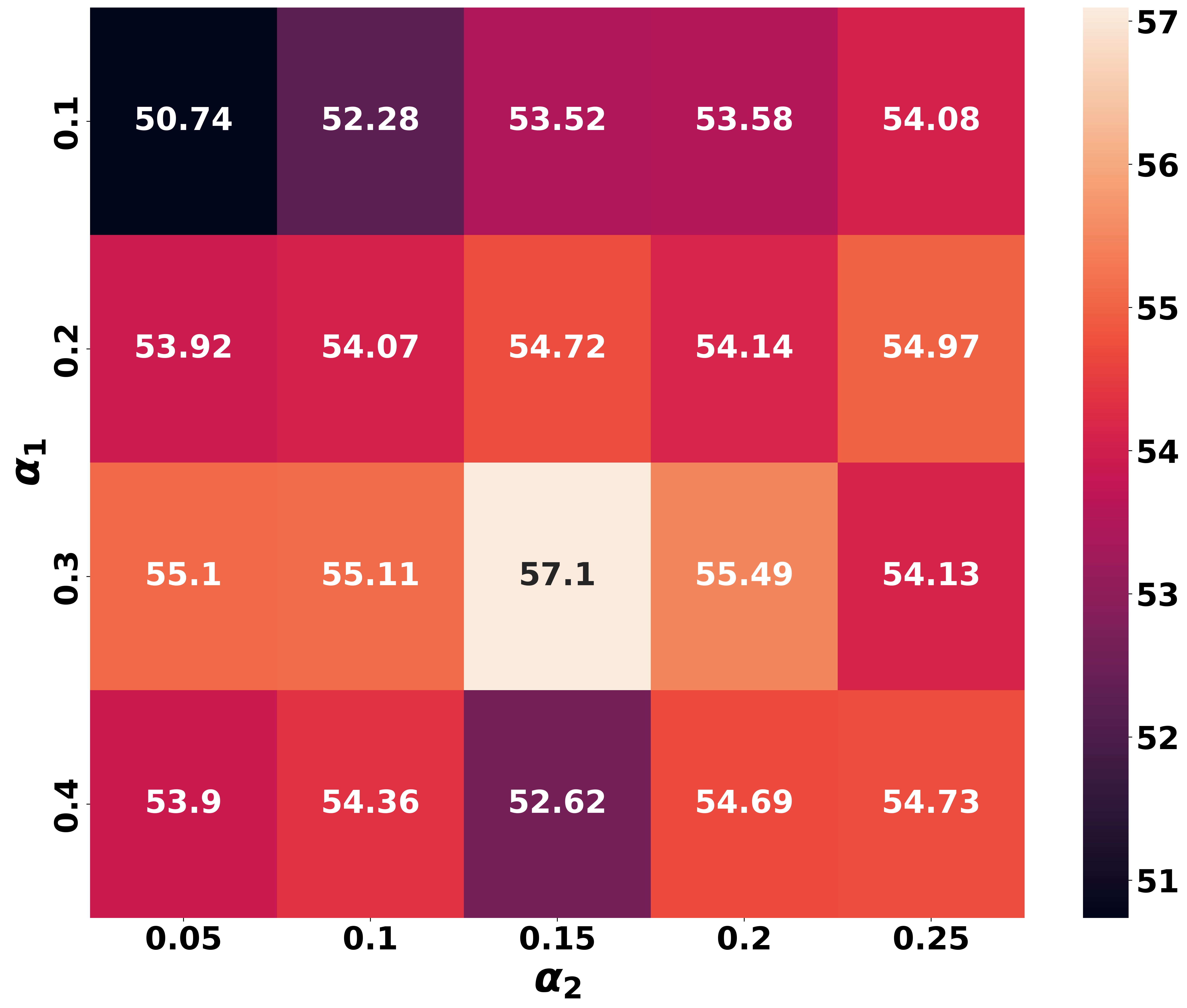}
     \caption{ARI}
  \end{subfigure}
  \caption{Influence of $\alpha_{1}$ and $\alpha_{2}$ values on ACC, NMI, and ARI for R-GMM-VGAE on Cora.}
  \label{fig:sens_confidence_cora_R-GMM-VGAE}
\end{figure*}

\begin{figure*}[!h]
  \begin{subfigure}[b]{0.33\textwidth}
    \includegraphics[width=\linewidth]{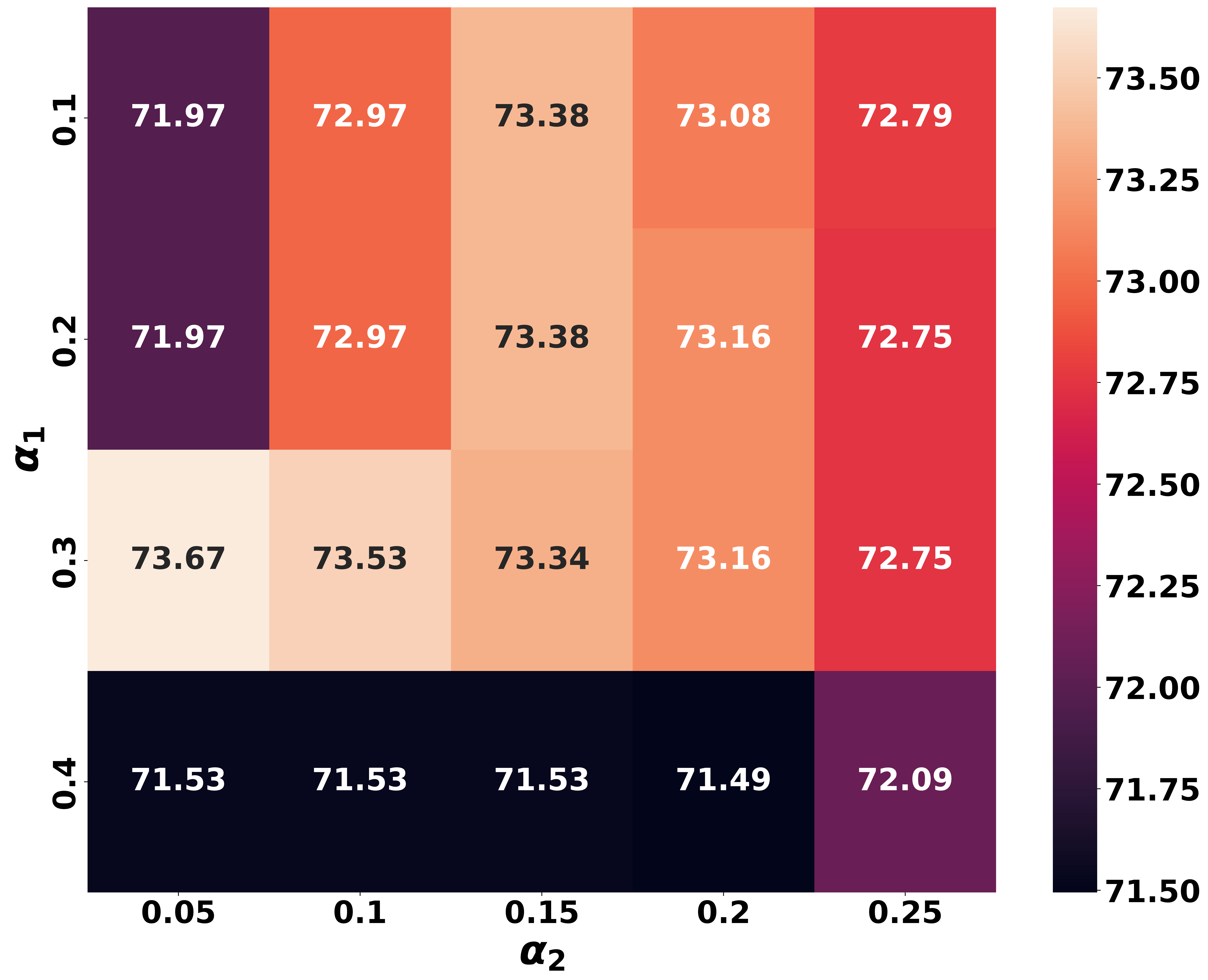}
  \end{subfigure}
  \begin{subfigure}[b]{0.33\textwidth}
     \includegraphics[width=\linewidth]{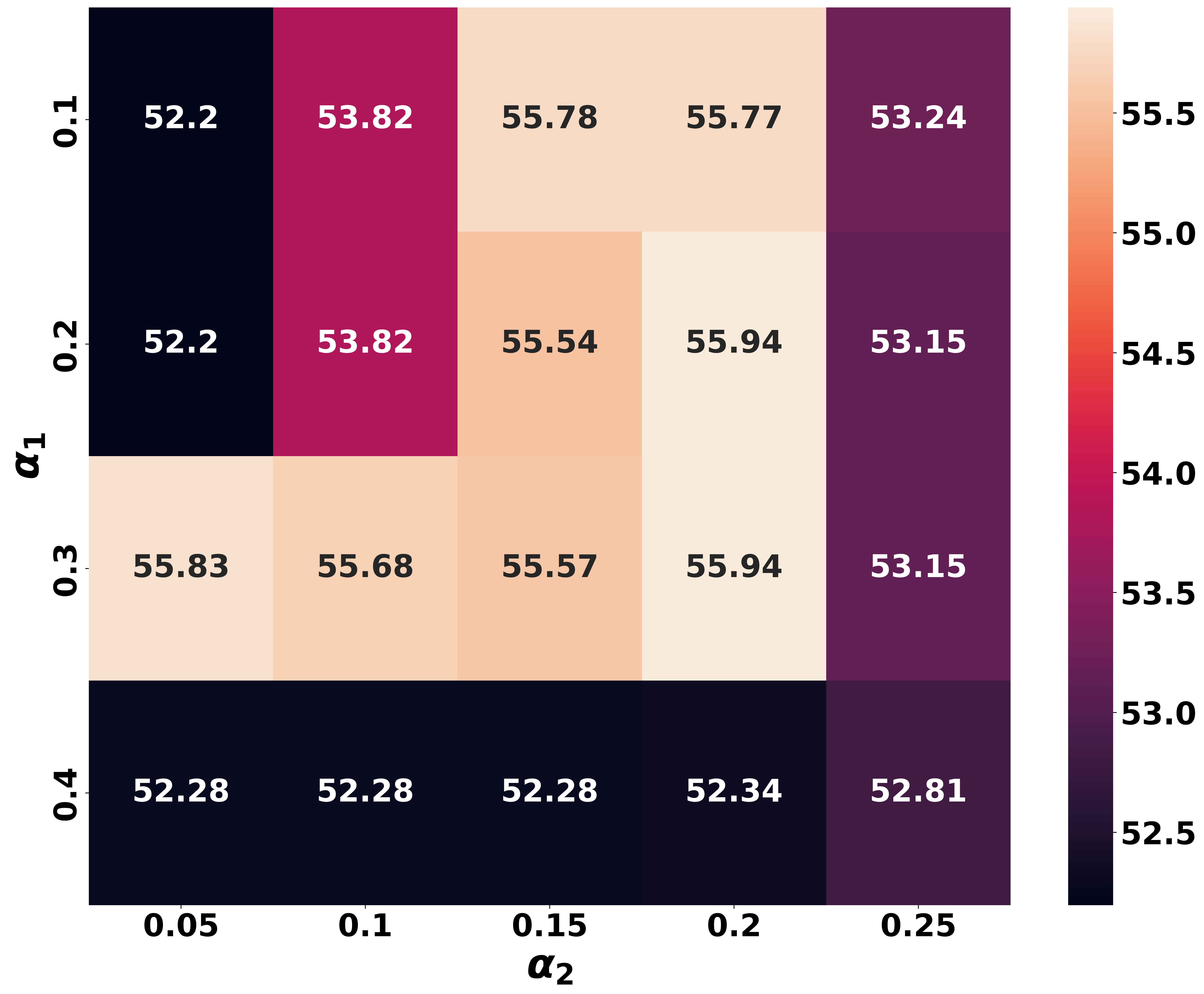}
  \end{subfigure}
  \begin{subfigure}[b]{0.33\textwidth}
     \includegraphics[width=\linewidth]{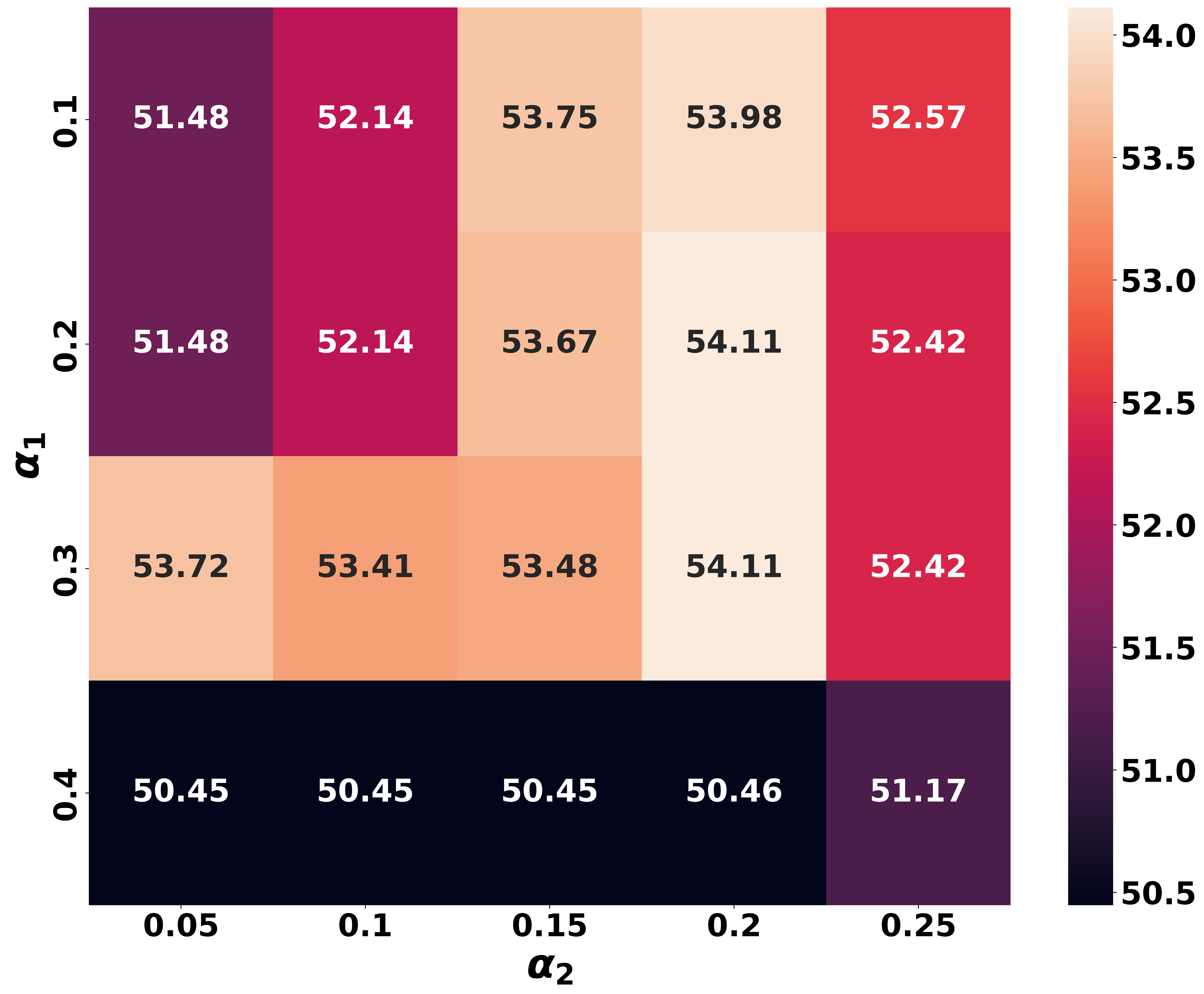}
  \end{subfigure}
  \caption{Influence of $\alpha_{1}$ and $\alpha_{2}$ values on ACC, NMI, and ARI for R-DGAE on Cora.}
  \label{fig:sens_confidence_cora_R-DGAE}
\end{figure*}

\textbf{Sensitivity to the balancing hyper-parameter:} In Figure \ref{fig:sens_hyperparameter_gamma_GMM-VGAE}, we assess the sensitivity of R-GMM-VGAE and GMM-VGAE to the balancing hyper-parameter $\gamma$ on Cora. As we can see from this figure, R-GMM-VGAE is less sensitive to $\gamma$ than GMM-VGAE. By transforming the reconstruction loss into a clustering-oriented loss, the competition between the optimized functions of R-GMM-VGAE (i.e., $L_{clus}(P(\Xi(Z(\theta))))$ and $L_{bce}(\hat{A}(Z(\theta)), \, \Upsilon(A, P(\Xi(Z(\theta))), \Omega))$) is less pronounced than the competition between the optimized functions of GMM-VGAE (i.e., $L_{clus}(P(Z(\theta)))$ and $L_{bce}(\hat{A}(Z(\theta)), \, A)$). 

\begin{figure*}[!h]
  \begin{subfigure}[b]{0.5\textwidth}
    \includegraphics[width=\linewidth]{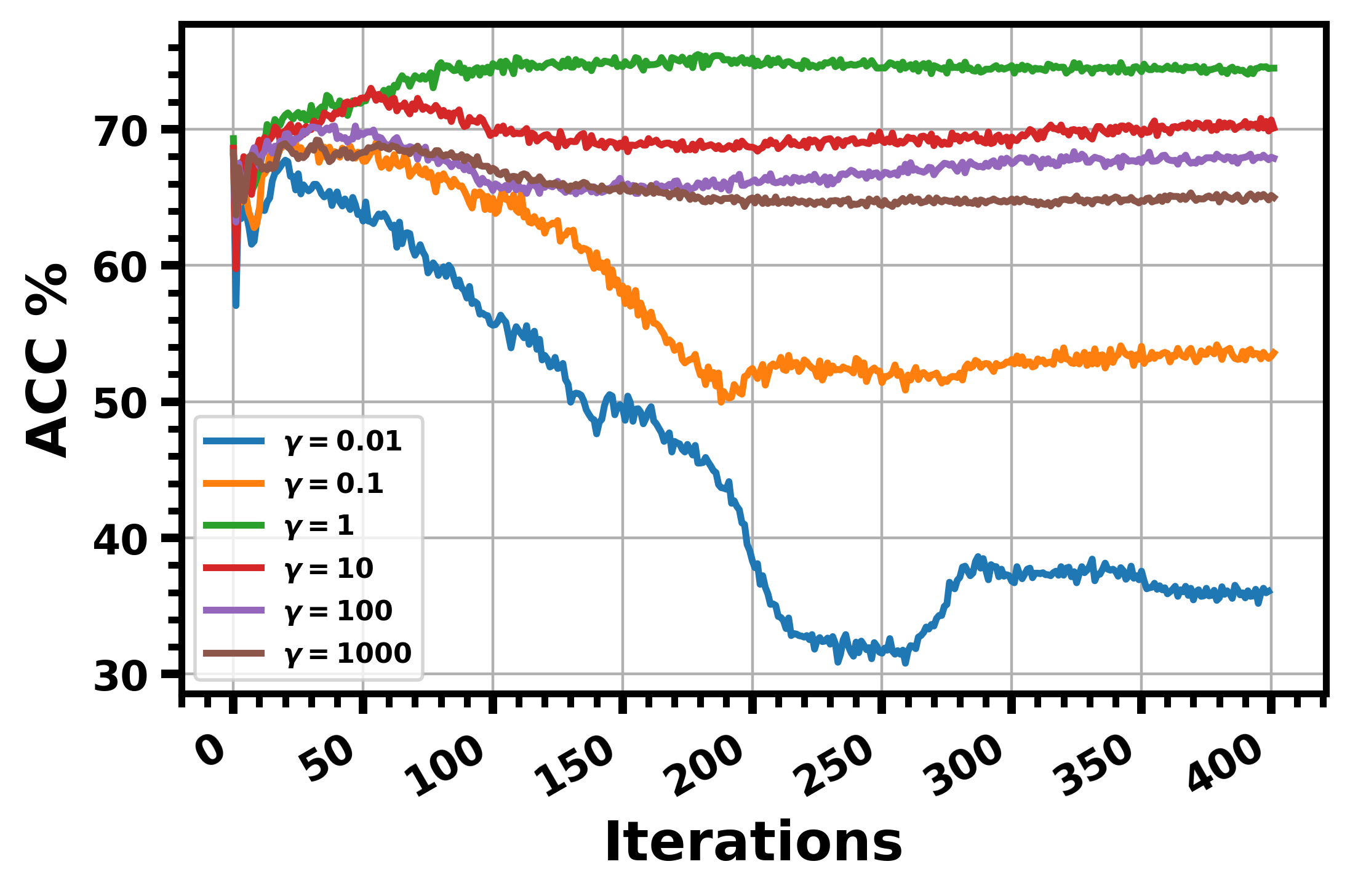}
    \caption{R-GMM-VGAE}
  \end{subfigure}
  \begin{subfigure}[b]{0.5\textwidth}
     \includegraphics[width=\linewidth]{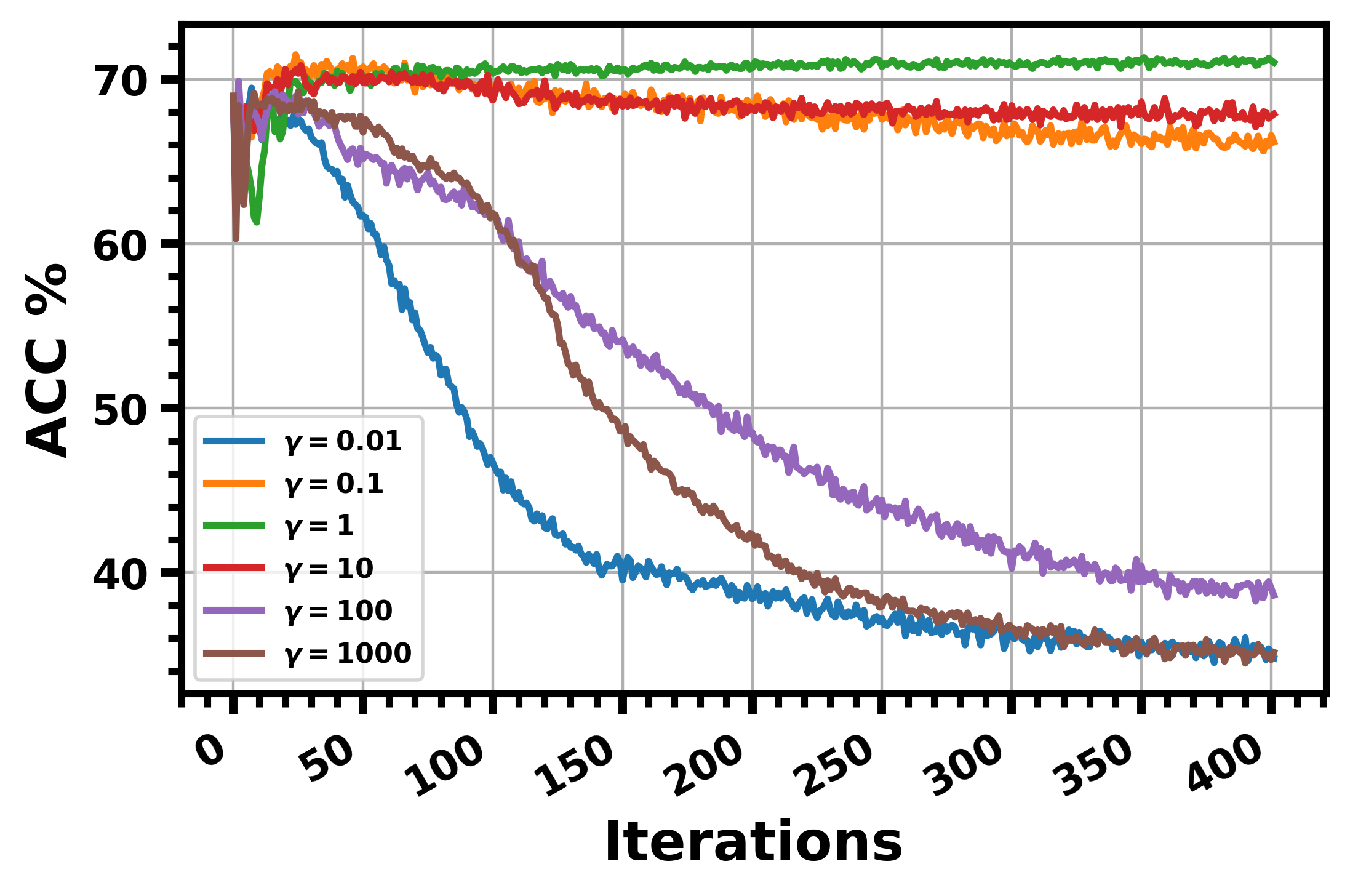}
     \caption{GMM-VGAE}
  \end{subfigure}
  \caption{Sensitivity of R-GMM-VGAE and GMM-VGAE to the balancing hyper-parameter on Cora.}
  \label{fig:sens_hyperparameter_gamma_GMM-VGAE}
\end{figure*}

\clearpage

\onecolumn

\section{Proof of Proposition 1} \label{Appendix_A}
\newtheorem*{P1}{Proposition~\ref{proposition_1}}
\begin{P1}
The reconstruction loss for a GAE model can be expressed as:
$$ L_{bce}(\hat{A}(Z(\theta)), A^{self}) = L_{\mathcal{C}}(Z(\theta), A^{self}) + L_{\mathcal{R}}(Z(\theta), A^{self}), $$
\begin{equation*}
\begin{split}
L_{\mathcal{R}}(Z(\theta), A^{self}) &=  \sum_{i,j}  \Big(log(1 + exp(z_{i}^{T} z_{j})) - \frac{1}{2}  a_{ij}^{self} (\norm{z_{i}}_{2}^{2} + \norm{z_{j}}_{2}^{2})\Big). \\
\end{split}
\end{equation*}
\end{P1}

\begin{proof}
\begin{equation*} 
\begin{split}
        L_{bce}(\hat{A}(Z(\theta)), A^{self}) & = - \sum_{1\leqslant i,j \leqslant N} \bigg( \; a_{ij}^{self} \; log(\frac{1}{1 + e^{-z_{i}^{T}z_{j}}}) + \; (1-a_{ij}^{self}) \; log(\frac{e^{-z_{i}^{T}z_{j}}}{1 + e^{-z_{i}^{T}z_{j}}}) \bigg), \\ 
        & =  \sum_{1\leqslant i,j \leqslant N} \; \bigg( a_{ij}^{self} \; log(e^{-z_{i}^{T}z_{j}}) + log(1 + e^{-z_{i}^{T}z_{j}}) - log(e^{-z_{i}^{T}z_{j}}) \bigg),\\
        & = \sum_{1\leqslant i,j \leqslant N} \;  \bigg( (1-a_{ij}^{self}) \; z_{i}^{T}z_{j} + log(1 + e^{-z_{i}^{T}z_{j}}) \bigg), \\ 
        & = \frac{1}{2} \; \sum_{1\leqslant i,j \leqslant N} \; (a_{ij}^{self} - 1) (z_{i}-z_{j})^{T}(z_{i}-z_{j}) - \frac{1}{2} \; \sum_{1\leqslant i,j \leqslant N} \; (a_{ij}^{self} - 1) (z_{i}^{T}z_{i} + z_{j}^{T}z_{j}) + \; \sum_{1\leqslant i,j \leqslant N} \; log(1+e^{-z_{i}^{T}z_{j}}) , \\ 
        & = \frac{1}{2} \; \sum_{1\leqslant i,j \leqslant N} \; (a_{ij}^{self} - 1) \norm{z_{i}-z_{j}}_{2}^{2} - \frac{1}{2} \; \sum_{1\leqslant i,j \leqslant N} \; (a_{ij}^{self} - 1) (\norm{z_{i}}_{2}^{2} + \norm{z_{j}}_{2}^{2}) + \; \sum_{1\leqslant i,j \leqslant N} \; log(1+e^{-z_{i}^{T}z_{j}}) , \\ 
\end{split}
\end{equation*}

And since \\

\begin{equation*} 
\begin{split}
        \sum_{1 \leqslant i,j \leqslant N} log(1+e^{-z_{i}^{T}z_{j}}) & =  \sum_{1 \leqslant i,j \leqslant N} log(1+ e^{\frac{1}{2}\norm{z_{i}-z_{j}}_{2}^{2} - \frac{1}{2}(\norm{z_{i}}_{2}^{2} + \norm{z_{j}}_{2}^{2})}),\\
        & = \sum_{1 \leqslant i,j \leqslant N}log \bigg( \frac{e^{-\frac{1}{2}  \norm{z_{i}-z_{j}}_{2}^{2}}+e^{-\frac{1}{2}(\norm{z_{i}}_{2}^{2}+\norm{z_{j}}_{2}^{2})}}{e^{-\frac{1}{2} \norm{z_{i}-z_{j}}_{2}^{2}}} \bigg),\\
        & = \sum_{1 \leqslant i,j \leqslant N}  log(e^{-\frac{1}{2} \norm{z_{i}-z_{j}}_{2}^{2}}+e^{-\frac{1}{2}(\norm{z_{i}}_{2}^{2}+\norm{z_{j}}_{2}^{2})}) + \frac{1}{2} \sum_{1 \leqslant i,j \leqslant N} \norm{z_{i}-z_{j}}_{2}^{2},\\
        & = \sum_{1 \leqslant i,j \leqslant N}  log(e^{-\frac{1}{2} (\norm{z_{i}}_{2}^{2}+\norm{z_{j}}_{2}^{2} - 2 z_{i}^{T}z_{j})}+e^{-\frac{1}{2}(\norm{z_{i}}_{2}^{2}+\norm{z_{j}}_{2}^{2})}) + \frac{1}{2} \sum_{1 \leqslant i,j \leqslant N} \norm{z_{i}-z_{j}}_{2}^{2},\\
        & = \sum_{1 \leqslant i,j \leqslant N}  log(e^{-\frac{1}{2} (\norm{z_{i}}_{2}^{2}+\norm{z_{j}}_{2}^{2})} (1 + e^{z_{i}^{T}z_{j}})) + \frac{1}{2} \sum_{1 \leqslant i,j \leqslant N} \norm{z_{i}-z_{j}}_{2}^{2},\\
        & = \sum_{1 \leqslant i,j \leqslant N}  \bigg(log(e^{-\frac{1}{2} (\norm{z_{i}}_{2}^{2}+\norm{z_{j}}_{2}^{2})})+  log(1 + e^{z_{i}^{T}z_{j}}) \bigg)+ \frac{1}{2} \sum_{1 \leqslant i,j \leqslant N} \norm{z_{i}-z_{j}}_{2}^{2},\\
        & = \sum_{1 \leqslant i,j \leqslant N}  log(1 + e^{z_{i}^{T}z_{j}})+ \frac{1}{2} \sum_{1 \leqslant i,j \leqslant N} \norm{z_{i}-z_{j}}_{2}^{2} - \frac{1}{2} \sum_{1 \leqslant i,j \leqslant N} (\norm{z_{i}}_{2}^{2} + \norm{z_{j}}_{2}^{2}).\\
\end{split}
\end{equation*}

Thus \\ 

\begin{equation*} 
\begin{split}
        L_{bce}(\hat{A}(Z(\theta)), A^{self}) &= \frac{1}{2} \; \sum_{1\leqslant i,j \leqslant N} \; a_{ij}^{self} \norm{z_{i}-z_{j}}_{2}^{2} - \frac{1}{2} \; \sum_{1\leqslant i,j \leqslant N} \; a_{ij}^{self} (\norm{z_{i}}_{2}^{2} + \norm{z_{j}}_{2}^{2}) + \; \sum_{1\leqslant i,j \leqslant N} \; log(1+e^{z_{i}^{T}z_{j}}),\\ 
        &= L_{\mathcal{C}}(Z(\theta), A^{self}) + L_{\mathcal{R}}(Z(\theta), A^{self}). \\
\end{split}
\end{equation*}
\end{proof}

\section{Proof of Proposition 2} \label{Appendix_B}
\newtheorem*{P2}{Proposition~\ref{proposition_2}}
\begin{P2}
The k-means clustering loss applied to the embedded representations can be expressed as: 
\begin{equation*} 
\begin{split}
L_{clus}(Z(\theta)) = L_{\mathcal{C}}(Z(\theta), A^{clus}).
\end{split}
\end{equation*}
\end{P2}

\begin{proof}
\begin{equation*} 
\begin{split}
L_{clus}(Z(\theta)) & = \sum_{k=1}^{K} \sum_{i \in C_{k}^{clus}} \norm{z_{i} - \mu_{k}}_{2}^{2}, \\ 
& = \sum_{k=1}^{K} \sum_{i \in C_{k}^{clus}} \norm{ \frac{1}{\left | C_{k}^{clus} \right |}\sum_{j \in C_{k}^{clus}}z_{i} -\frac{1}{\left | C_{k}^{clus} \right |}\sum_{j \in C_{k}^{clus}} z_{j}}_{2}^{2}, \\
& = \sum_{k=1}^{K} \frac{1}{\left | C_{k}^{clus} \right |^{2}} \sum_{i \in C_{k}^{clus}} \norm{\sum_{j \in C_{k}^{clus}}(z_{i}-  z_{j})}_{2}^{2},\\
& = \sum_{k=1}^{K} \frac{1}{\left | C_{k}^{clus} \right |^{2}} \sum_{i \in C_{k}^{clus}} \bigg(\sum_{j \in C_{k}^{clus}}(z_{i}-  z_{j}) \bigg)^{T} \bigg(\sum_{j \in C_{k}^{clus}}(z_{i}-  z_{j})\bigg),\\
& = \sum_{k=1}^{K} \frac{1}{\left | C_{k}^{clus} \right |^{2}} \sum_{i \in C_{k}^{clus}} \bigg(\sum_{j \in C_{k}^{clus}}(z_{i}-  z_{j})^{T} \bigg) \bigg(\sum_{j' \in C_{k}^{clus}}(z_{i}-  z_{j'})\bigg),\\
& = \sum_{k=1}^{K} \frac{1}{\left | C_{k}^{clus} \right |^{2}} \left [ \sum_{i,j \in C_{k}^{clus}} (z_{i}-z_{j})^{T}(z_{i}-z_{j}) +  \sum_{\substack{i,j,j' \in C_{k}^{clus} \\ j \neq j', i \neq j, i \neq j'}} (z_{i}-z_{j})^{T}(z_{i}-z_{j'}) \right ],\\
& = \sum_{k=1}^{K} \frac{1}{\left | C_{k}^{clus} \right |^{2}} \left [ \sum_{i,j \in C_{k}^{clus}} \norm{z_{i}-z_{j}}_{2}^{2} +  \sum_{\substack{i,j,j' \in C_{k}^{clus} \\ j \neq j', i \neq j, i \neq j' }} (z_{i}^{T}z_{i}-z_{i}^{T}z_{j}-z_{i}^{T}z_{j'}+z_{j}^{T}z_{j'}) \right ],\\
& = \sum_{k=1}^{K} \frac{1}{\left | C_{k}^{clus} \right |^{2}} \left [ \sum_{i,j \in C_{k}^{clus}} \norm{z_{i}-z_{j}}_{2}^{2} + \frac{1}{2} \sum_{\substack{i,j,j' \in C_{k}^{clus} \\ j \neq j', i \neq j, i \neq j' }} \bigg( \norm{z_{i} - z_{j}}_{2}^{2} + \norm{z_{i} - z_{j'}}_{2}^{2} - \norm{z_{j} - z_{j'}}_{2}^{2} \bigg) \right ],\\
& = \sum_{k=1}^{K} \frac{1}{\left | C_{k}^{clus} \right |^{2}} \left [ \sum_{i,j \in C_{k}^{clus}} \norm{z_{i}-z_{j}}_{2}^{2} + \frac{1}{2} \sum_{\substack{i,j,j' \in C_{k}^{clus} \\ j \neq j', i \neq j, i \neq j' }}  \norm{z_{i} - z_{j}}_{2}^{2} + \frac{1}{2} \sum_{\substack{i,j,j \in C_{k}^{clus} \\ j \neq j', i \neq j, i \neq j' }}\norm{z_{i} - z_{j'}}_{2}^{2} - \frac{1}{2} \sum_{\substack{i,j,j \in C_{k}^{clus} \\ j \neq j', i \neq j, i \neq j' }}\norm{z_{j} - z_{j'}}_{2}^{2}  \right ] ,\\
& = \sum_{k=1}^{K} \frac{1}{\left | C_{k}^{clus} \right |^{2}} \left [ \sum_{i,j \in C_{k}^{clus}} \norm{z_{i}-z_{j}}_{2}^{2} + \frac{1}{2} \sum_{\substack{i,j,j \in C_{k}^{clus} \\ j \neq j', i \neq j, i \neq j' }}  \norm{z_{i} - z_{j}}_{2}^{2}  \right ],\\
& = \sum_{k=1}^{K} \frac{1}{\left | C_{k}^{clus} \right |^{2}} \left [ \sum_{i,j \in C_{k}^{clus}} \norm{z_{i}-z_{j}}_{2}^{2} + \frac{1}{2} \sum_{\substack{i,j \in C_{k}^{clus} }}  \sum_{\substack{j' \in C_{k}^{clus} \\ j'\neq i, j'\neq j}}\norm{z_{i} - z_{j}}_{2}^{2}  \right ],\\
& = \sum_{k=1}^{K} \frac{1}{\left | C_{k}^{clus} \right |^{2}} \left [ \sum_{i,j \in C_{k}^{clus}} \norm{z_{i}-z_{j}}_{2}^{2} + \frac{\left | C_{k}^{clus} \right | - 2}{2} \sum_{\substack{i,j \in C_{k}^{clus} }}  \norm{z_{i} - z_{j}}_{2}^{2}  \right ],\\
& = \sum_{k=1}^{K} \frac{1}{2 \left | C_{k}^{clus} \right |}  \sum_{i,j \in C_{k}^{clus}} \norm{z_{i}-z_{j}}_{2}^{2},\\
\end{split}
\end{equation*}

Let the matrix $A^{clus} = (a_{ij}^{clus})_{1 \leqslant i,j \leqslant N} \in \mathbb{R}^{N \times N}$ be defined as $a_{ij}^{clus} = \left\{\begin{matrix}
\frac{1}{\left | C_{k}^{clus} \right |} \:\: \text{if} \:\: \exists\: k \text{ s.th } i,j \in C_{k}^{clus}  \\ 
0 \:\: \text{otherwise.}
\end{matrix}\right.$

Hence, $L_{clus}(Z(\theta)) = \frac{1}{2}\sum_{1 \leqslant i,j \leqslant N}  a_{ij}^{clus} \; \norm{z_{i}-z_{j}}_{2}^{2} = L_{\mathcal{C}}(Z(\theta), A^{clus}).$
\end{proof}

\section{Proof of Theorem 1} \label{Appendix_C}

\newtheorem*{T1}{Theorem~\ref{theorem_1}}
\begin{T1}
The linear combination between reconstruction and embedded k-means for a GAE model can be expressed as:
\begin{equation*} 
\begin{split}
L_{clus}(Z(\theta)) & + \;  \; \gamma \; L_{bce}(\hat{A}(Z(\theta)), \,A^{self})) = L_{\mathcal{C}}(Z(\theta), A^{clus} + \gamma A^{self} ) + \; \gamma \;  L_{\mathcal{R}}(Z(\theta), A^{self}).\\
\end{split}
\end{equation*}
\end{T1}

\begin{proof}
Based on Proposition \ref{proposition_1} and Proposition \ref{proposition_2}, we can conclude that
\begin{equation*} 
\begin{split}
L_{clus}(Z(\theta)) + \gamma L_{bce}(\hat{A}(Z(\theta)), \,A^{self})) &= \frac{1}{2} \sum_{1 \leqslant i,j \leqslant N}  a_{ij}^{clus}  \; \norm{z_{i}-z_{j}}_{2}^{2} + \frac{\gamma }{2}\sum_{1 \leqslant i,j \leqslant N} a_{ij}^{self} \; \norm{z_{i}-z_{j}}_{2}^{2} \;  \\
& - \frac{\gamma}{2}\sum_{1 \leqslant i,j \leqslant N}  a_{ij}^{self} (\norm{z_{i}}_{2}^{2} + \norm{z_{j}}_{2}^{2})  + \gamma \sum_{1 \leqslant i,j \leqslant N}  log(1 + exp(z_{i}^{T} z_{j})), \\ 
& = \frac{1}{2}\sum_{1 \leqslant i,j \leqslant N}  (a_{ij}^{clus} + \gamma a_{ij}^{self}) \; \norm{z_{i}-z_{j}}_{2}^{2} \; \\
&  - \frac{\gamma}{2}\sum_{1 \leqslant i,j \leqslant N}  a_{ij}^{self} (\norm{z_{i}}_{2}^{2} + \norm{z_{j}}_{2}^{2}) \; + \gamma \sum_{1 \leqslant i,j \leqslant N}  log(1 + exp(z_{i}^{T} z_{j})), \\
& =  L_{\mathcal{C}}(Z(\theta), A^{clus} + \gamma A^{self} ) + \; \gamma \;  L_{\mathcal{R}}(Z(\theta), A^{self}). \\
\end{split}
\end{equation*}
\end{proof}

\section{Proof of Proposition 3} \label{Appendix_D}

\newtheorem*{P3}{Proposition~\ref{proposition_3}}
\begin{P3}
The gradient of the reconstruction loss $L_{bce}(\hat{A}(Z(\theta)), A)$ w.r.t. the embedded representation $z_{i}$ can be expressed as:
\begin{equation*} 
\begin{split}
\frac{\partial L_{bce}(\hat{A}(Z(\theta)), A^{self})}{\partial z_{i}} = \sum_{1 \leqslant j \leqslant N} (\hat{a}_{ij}-a_{ij}^{self}) z_{j}.
\end{split}
\end{equation*}
\end{P3}

\begin{proof}
\begin{equation*}
\begin{split}
\frac{\partial L_{bce}(\hat{A}(Z(\theta)), A^{self})}{\partial z_{i}} &=  \frac{\partial \bigg( \frac{1}{2} \; \sum\limits_{1\leqslant i,j \leqslant N} \; a_{ij}^{self} \norm{z_{i}-z_{j}}_{2}^{2} - \frac{1}{2} \; \sum\limits_{1\leqslant i,j \leqslant N} \; a_{ij}^{self} (\norm{z_{i}}_{2}^{2} + \norm{z_{j}}_{2}^{2}) + \; \sum\limits_{1\leqslant i,j \leqslant N} \; log(1+e^{z_{i}^{T}z_{j}})\bigg)}{\partial z_{i}}, \\
&= \frac{1}{2} \sum_{1\leqslant i,j \leqslant N} a_{ij}^{self} \frac{\partial (z_{i}-z_{j})^{T}(z_{i}-z_{j}) }{\partial z_{i}} - \frac{1}{2} \sum_{1\leqslant i,j \leqslant N} a_{ij}^{self} \bigg(\frac{\partial z_{i}^{T}z_{i}}{\partial z_{i}} +  \frac{\partial z_{j}^{T}z_{j}}{\partial z_{i}} \bigg) +  \sum_{1\leqslant i,j \leqslant N} \frac{\partial log(1+e^{z_{i}^{T}z_{j}})}{\partial z_{i}}, \\
&= \frac{1}{2} \sum_{1\leqslant i,j \leqslant N} \; a_{ij}^{self} \frac{\partial (z_{i}^{T}z_{i} -2 z_{i}^{T}z_{j} +  z_{j}^{T}z_{j})}{\partial z_{i}} - \frac{1}{2} \sum_{1\leqslant i,j \leqslant N} a_{ij}^{self} \bigg(\frac{\partial z_{i}^{T}z_{i}}{\partial z_{i}} + \frac{\partial z_{j}^{T}z_{j}}{\partial z_{i}} \bigg) + \sum_{1\leqslant i,j \leqslant N} \frac{\partial log(1+e^{z_{i}^{T}z_{j}})}{\partial z_{i}}, \\
&= - \sum_{1\leqslant i,j \leqslant N} \; a_{ij}^{self} \frac{\partial z_{i}^{T}z_{j}}{\partial z_{i}} \; + \; \sum_{1\leqslant i,j \leqslant N} \;  \frac{\partial log(1+e^{z_{i}^{T}z_{j}})}{\partial z_{i}}, \\
&= - \sum_{1\leqslant i,j \leqslant N} \; a_{ij}^{self} z_{j}\; + \; \sum_{1\leqslant i,j \leqslant N} \; \frac{e^{z_{i}^{T}z_{j}}}{1 + e^{z_{i}^{T}z_{j}}} z_{j}, \\ 
&= - \sum_{1\leqslant i,j \leqslant N} \; a_{ij}^{self} z_{j}\; + \; \sum_{1\leqslant i,j \leqslant N} \; \frac{1}{1 + e^{-z_{i}^{T}z_{j}}} z_{j}, \\ 
&=  - \sum_{1\leqslant i,j \leqslant N} \; a_{ij}^{self} z_{j}\; + \; \sum_{1\leqslant i,j \leqslant N} \; \text{Sigmoid }(z_{i}^{T}z_{j})  z_{j}, \\ 
&= - \sum_{1\leqslant i,j \leqslant N} \; a_{ij}^{self} \; z_{j}\; + \; \sum_{1\leqslant i,j \leqslant N} \; \hat{a}_{ij} \; z_{j}, \\ 
&= \sum_{1\leqslant i,j \leqslant N} \; (\hat{a}_{ij} - a_{ij}^{self}) \; z_{j}. \\ 
\end{split}
\end{equation*}
\end{proof}

\section{Proof of Proposition 4} \label{Appendix_E}

\newtheorem*{P4}{Proposition~\ref{proposition_4}}
\begin{P4}
The gradient of the clustering loss $L_{clus}(Z(\theta))$ w.r.t. the embedded representation $z_{i}$ can be expressed as:
\begin{equation*} 
\begin{split}
\frac{\partial L_{clus}(Z(\theta))}{\partial z_{i}} = \sum_{1 \leqslant j \leqslant N} a_{ij}^{clus} (z_{i} - z_{j}).
\end{split}
\end{equation*}
\end{P4}

\begin{proof}
\begin{equation*}
\begin{split}
\frac{\partial L_{clus}(Z(\theta))}{\partial z_{i}} &= \frac{\partial \bigg(\frac{1}{2} \; \sum\limits_{1\leqslant i,j \leqslant N} \; a_{ij}^{clus} \norm{  z_{i}-z_{j}}_{2}^{2}\bigg)}{\partial z_{i}}, \\
&= \frac{1}{2} \; \sum_{1\leqslant i,j \leqslant N} \; a_{ij}^{clus} \frac{\partial (z_{i}-z_{j})^{T}(z_{i}-z_{j}) }{\partial z_{i}}, \\
&= \frac{1}{2} \; \sum_{1\leqslant i,j \leqslant N} \; a_{ij}^{clus} \frac{\partial (z_{i}^{T}z_{i} -2 z_{i}^{T}z_{j} +  z_{j}^{T}z_{j})}{\partial z_{i}}, \\
&= \frac{1}{2} \; \sum_{1\leqslant i,j \leqslant N} \; a_{ij}^{clus} (2 z_{i} - 2 z_{j}), \\
&= \sum_{1\leqslant i,j \leqslant N} \; a_{ij}^{clus} (z_{i} - z_{j}).
\end{split}
\end{equation*}
\end{proof}

\section{Proof of Theorem 2} \label{Appendix_F}

\newtheorem*{T2}{Theorem~\ref{theorem_2}}
\begin{T2} 
Given two GAE models $\mathcal{Q}_{1}$ and $\mathcal{Q}_{2}$, which have the same GCN architecture and weights. $\mathcal{Q}_{1}$ optimizes the objective function in Equation (\ref{Q_11}) and $\mathcal{Q}_{2}$ minimizes the loss function in Equation (\ref{Q_21}), where $f \in \mathcal{NN}(d, d', L)$ and $d' \ll d$. Let $\tau_{1}^{*}$ be the Lipschitz constant of $f$, $\bar{Z}_{i} = (z_{jj'} - z_{ij'})_{j,j'} \in \mathbb{R}^{N \times d}$, $\zeta_{i} = (\left\| z_{j} - z_{i} \right\|_{2})_{j} \in \mathbb{R}^{N}$, and $a_{i}$ is the $i^{th}$ row of A. 
\begin{equation}  \label{Q_11}
L_{\mathcal{Q}_{1}} = L_{clus}(Z(\theta)) + \gamma L_{bce}(\hat{A}(Z(\theta)), \,A^{self}),
\end{equation}
\begin{equation}  \label{Q_21}
L_{\mathcal{Q}_{2}} = L_{clus}(f(Z(\theta))) + \gamma L_{bce}(\hat{A}(Z(\theta)), \,A^{self}).
\end{equation}
\begin{itemize}
  \setlength\itemsep{1em}
  \item $   \Lambda'_{FD}(\mathcal{Q}_{2}, z_{i}) = \Lambda'_{FD}(\mathcal{Q}_{1}, z_{i}).$
  \item $ \text{If }$ $$\tau_{1}^{*} \leqslant \sqrt{\frac{(\bar{Z}_{i}^{T} a_{i}^{sup})^{T}  (\bar{Z}_{i}^{T} a_{i}^{clus})}{(\zeta_{i}^{T} \: a_{i}^{sup})(\zeta_{i}^{T}a_{i}^{clus})}},$$ 
$\text{then } \;\;$  $$\Lambda'_{FR}(\mathcal{Q}_{2}, z_{i}) \leqslant \Lambda'_{FR}(\mathcal{Q}_{1}, z_{i}).$$
\end{itemize}
\end{T2}

\begin{proof}
\begin{equation*}
\begin{split}
\Lambda'_{FD}(\mathcal{Q}_{2}, z_{i}) - \Lambda'_{FD}(\mathcal{Q}_{1}, z_{i}) &=  \inp*{\frac{\partial \sum_{j}  \tilde{a}_{ij}^{self} \; \norm{z_{i}-z_{j}}_{2}^{2}}{\partial z_{i}}}{\frac{\partial \sum_{j}  a_{ij}^{sup} \; \norm{z_{i} - z_{j}}_{2}^{2}}{\partial z_{i}}} -  \inp*{\frac{\partial \sum_{j}  \tilde{a}_{ij}^{self} \; \norm{z_{i}-z_{j}}_{2}^{2}}{\partial z_{i}}}{\frac{\partial \sum_{j}  a_{ij}^{sup} \; \norm{z_{i} - z_{j}}_{2}^{2}}{\partial z_{i}}} \\
&= 0. \\
\end{split}
\end{equation*}

\begin{equation*}
\begin{split}
\Lambda'_{FR}(\mathcal{Q}_{2}, z_{i}) - \Lambda'_{FR}(\mathcal{Q}_{1}, z_{i}) &=  \inp*{\frac{\partial \sum_{j}  a_{ij}^{clus} \; \norm{f(z_{i})-f(z_{j})}_{2}^{2}}{\partial z_{i}}}{\frac{\partial \sum_{j}  a_{ij}^{sup} \; \norm{f(z_{i})-f(z_{j})}_{2}^{2}}{\partial z_{i}}} 
\\ & -  \inp*{\frac{\partial \sum_{j}  a_{ij}^{clus} \; \norm{z_{i}-z_{j}}_{2}^{2}}{\partial z_{i}}}{\frac{\partial \sum_{j}  a_{ij}^{sup} \; \norm{z_{i}-z_{j}}_{2}^{2}}{\partial z_{i}}}, \\
& = \sum_{j,j'} a_{ij}^{clus}a_{ij'}^{sup}(f(z_{i})-f(z_{j}))^{T}(f(z_{i})-f(z_{j'})) - \sum_{j,j'} a_{ij}^{clus}a_{ij'}^{sup}(z_{i}-z_{j})^{T}(z_{i}-z_{j'}), \\
& \leqslant  \sum_{j,j'} a_{ij}^{clus}a_{ij'}^{sup}\norm{f(z_{i})-f(z_{j})}_{2} \norm{f(z_{i})-f(z_{j'})}_{2} - \sum_{j,j'} a_{ij}^{clus}a_{ij'}^{sup}(z_{i}-z_{j})^{T}(z_{i}-z_{j'}), \\
& \leqslant (\tau_{1}^{*})^{2} \; \sum_{j,j'} a_{ij}^{clus}a_{ij'}^{sup}\norm{z_{i}-z_{j}}_{2} \norm{z_{i}-z_{j'}}_{2} - \sum_{j,j'} a_{ij}^{clus}a_{ij'}^{sup}(z_{i}-z_{j})^{T}(z_{i}-z_{j'}), \\
\end{split}
\end{equation*}

\begin{equation*}
\begin{split}
\Lambda'_{FR}(\mathcal{Q}_{2}, z_{i}) - \Lambda'_{FR}(\mathcal{Q}_{1}, z_{i}) & \leqslant (\tau_{1}^{*})^{2} \; \sum_{j,j'} a_{ij}^{clus}a_{ij'}^{sup}\norm{z_{i}-z_{j}}_{2} \norm{z_{i}-z_{j'}}_{2} - \sum_{j,j'} a_{ij}^{clus}a_{ij'}^{sup}(z_{i}-z_{j})^{T}(z_{i}-z_{j'}), \\
& \leqslant \sum_{j,j'} a_{ij}^{clus}a_{ij'}^{sup} \big((\tau_{1}^{*})^{2} \norm{z_{i}-z_{j}}_{2} \norm{z_{i}-z_{j'}}_{2}  - (z_{i}-z_{j})^{T}(z_{i}-z_{j'})\big). \\
\end{split}
\end{equation*}

We have
\begin{equation*}
\begin{split}
\tau_{1}^{*} \leqslant \sqrt{\frac{(\bar{Z}_{i}^{T} a_{i}^{sup})^{T}  (\bar{Z}_{i}^{T} a_{i}^{clus})}{(\zeta_{i}^{T} \: a_{i}^{sup})(\zeta_{i}^{T}a_{i}^{clus})}} 
\implies (\tau_{1}^{*})^{2} & \leqslant \frac{(\bar{Z}_{i}^{T} a_{i}^{sup})^{T} (\bar{Z}_{i}^{T} a_{i}^{clus})}{(\zeta_{i}^{T} \: a_{i}^{sup})(\zeta_{i}^{T}a_{i}^{clus})}, \\
& \leqslant \frac{(a_{i}^{sup})^{T} \bar{Z}_{i}\bar{Z}_{i}^{T}(a_{i}^{clus})}{(a_{i}^{sup})^{T}\zeta_{i}\zeta_{i}^{T}(a_{i}^{clus})}, \\
& \leqslant \frac{tr\big((a_{i}^{sup})^{T}  \bar{Z}_{i}\bar{Z}_{i}^{T} (a_{i}^{clus})\big)}{tr\big((a_{i}^{sup})^{T}\zeta_{i}\zeta_{i}^{T}(a_{i}^{clus})\big)}, \\
& \leqslant \frac{tr\big((\bar{Z}_{i}\bar{Z}_{i}^{T})^{T} a_{i}^{sup} (a_{i}^{clus})^{T} \big)}{tr\big((\zeta_{i} \zeta_{i}^{T})^{T} a_{i}^{sup} (a_{i}^{clus})^{T} \big)}, \\
& \leqslant \frac{\sum\limits_{j,j'} \big((a_{i}^{sup}(a_{i}^{clus})^{T}) \circ (\bar{Z}_{i}\bar{Z}_{i}^{T})\big)_{jj'}}{\sum\limits_{j,j'}\big((a_{i}^{sup}(a_{i}^{clus})^{T}) \circ (\zeta_{i}\zeta_{i}^{T})\big)_{jj'}}, \\
& \leqslant \frac{\sum\limits_{j,j'} a_{ij}^{sup}a_{ij'}^{clus} (z_{i}-z_{j})^{T}(z_{i}-z_{j'})}{\sum\limits_{j,j'}a_{ij}^{sup}a_{ij'}^{clus} \norm{z_{i}-z_{j}}_{2}\norm{z_{i}-z_{j'}}_{2}}. \\
\end{split}
\end{equation*}

Thus
\begin{equation*}
\begin{split}
& (\tau_{1}^{*})^{2} \sum_{j,j'}a_{ij}^{sup}a_{ij'}^{clus} \norm{z_{i}-z_{j}}_{2} \norm{z_{i}-z_{j'}}_{2} \leqslant  \sum_{j,j'} a_{ij}^{sup}a_{ij'}^{clus} (z_{i}-z_{j})^{T}(z_{i}-z_{j'}), \\
& \implies \sum_{j,j'} a_{ij}^{clus}a_{ij'}^{sup} \big((\tau_{1}^{*})^{2} \norm{z_{i}-z_{j}}_{2} \norm{z_{i}-z_{j'}}_{2}  - (z_{i}-z_{j})^{T}(z_{i}-z_{j'})\big) \leqslant 0, \\
& \implies \Lambda'_{FR}(\mathcal{Q}_{2}, z_{i}) \leqslant \Lambda'_{FR}(\mathcal{Q}_{1}, z_{i}). 
\end{split}
\end{equation*}

\end{proof}

\section{Proof of Theorem 3} \label{Appendix_G}

\newtheorem*{T3}{Theorem~\ref{theorem_3}}
\begin{T3} 
Given two GAE models $\mathcal{Q}_{1}$ and $\mathcal{Q}_{2}$, which have the same GCN architecture and weights. $\mathcal{Q}_{1}$ optimizes the objective function in Equation (\ref{Q_31}) and $\mathcal{Q}_{2}$ minimizes the loss function in Equation (\ref{Q_41}), where $f \in \mathcal{NN}(d, d', L)$ an injective function and $d' \gg d$. Let $\tau_{2}^{*}$ be the Lipschitz constant of $f^{-1}:f(\mathbb{R}^{d}) \rightarrow \mathbb{R}^{d}$, $\bar{Z}_{i}^{'} = ((f(z_{j}))_{j'} - (f(z_{i}))_{j'})_{j,j'} \in \mathbb{R}^{N \times d'}$, $\zeta_{i}^{'} = (\left\| f(z_{j}) - f(z_{i}) \right\|_{2})_{j} \in \mathbb{R}^{N} $, and $a_{i}$ is the $i^{th}$ row of A.

\begin{equation}  \label{Q_31}
L_{\mathcal{Q}_{1}} = L_{clus}(Z(\theta)) + \gamma L_{bce}(\hat{A}(Z(\theta)), \,A^{self}),
\end{equation}
\begin{equation}  \label{Q_41}
L_{\mathcal{Q}_{2}} = L_{clus}(Z(\theta)) + \gamma L_{bce}(\hat{A}(f(Z(\theta))), \,A^{self}).
\end{equation}

\vspace{5mm}

\begin{itemize}
  \setlength\itemsep{1em}
  \item $ \Lambda'_{FR}(\mathcal{Q}_{2}, z_{i}) = \Lambda'_{FR}(\mathcal{Q}_{1}, z_{i}).$
  \item $ \text{If }$ $$\tau_{2}^{*} \leqslant  \sqrt{\frac{(\bar{Z}_{i}^{'T} a_{i}^{sup})^{T}  (\bar{Z}_{i}^{'T} \tilde{a}_{i}^{self})}{(\zeta_{i}^{'T} \: a_{i}^{sup})(\zeta_{i}^{'T}\tilde{a}_{i}^{self})}},$$ 
  $\text{then } \;\;$  $$\Lambda'_{FD}(\mathcal{Q}_{2}, z_{i}) \geqslant \Lambda'_{FD}(\mathcal{Q}_{1}, z_{i}).$$
\end{itemize}
\end{T3}

\begin{proof}
\begin{equation*}
\begin{split}
\Lambda'_{FR}(\mathcal{Q}_{2}, z_{i}) - \Lambda'_{FR}(\mathcal{Q}_{1}, z_{i}) &=  \inp*{\frac{\partial \sum_{j}  a_{ij}^{clus} \; \norm{z_{i}-z_{j}}_{2}^{2}}{\partial z_{i}}}{\frac{\partial \sum_{j} a_{ij}^{sup} \; \norm{z_{i} - z_{j}}_{2}^{2}}{\partial z_{i}}} -  \inp*{\frac{\partial \sum_{j} a_{ij}^{clus} \; \norm{z_{i}-z_{j}}_{2}^{2}}{\partial z_{i}}}{\frac{\partial \sum_{j} a_{ij}^{sup} \; \norm{z_{i} - z_{j}}_{2}^{2}}{\partial z_{i}}} \\
&= 0.\\
\end{split}
\end{equation*}

\begin{equation*}
\begin{split}
\Lambda'_{FD}(\mathcal{Q}_{1}, z_{i}) - \Lambda'_{FD}(\mathcal{Q}_{2}, z_{i}) &= 
\inp*{\frac{\partial \sum_{j} \tilde{a}_{ij}^{self} \; \norm{z_{i}-z_{j}}_{2}^{2}}{\partial z_{i}}}{\frac{\partial \sum_{j}  a_{ij}^{sup} \; \norm{z_{i}-z_{j}}_{2}^{2}}{\partial z_{i}}} \\ 
& - \inp*{\frac{\partial \sum_{j} \tilde{a}_{ij}^{self} \; \norm{f(z_{i})-f(z_{j})}_{2}^{2}}{\partial z_{i}}}{\frac{\partial \sum_{j}  a_{ij}^{sup} \; \norm{f(z_{i})-f(z_{j})}_{2}^{2}}{\partial z_{i}}}, \\
& = \sum_{j,j'} \tilde{a}_{ij}^{self} a_{ij'}^{sup}(z_{i}-z_{j})^{T}(z_{i}-z_{j'}) - \sum_{j,j'} \tilde{a}_{ij}^{self}a_{ij'}^{sup} (f(z_{i})-f(z_{j}))^{T}(f(z_{i})-f(z_{j'})), \\
& \leqslant  \sum_{j,j'} \tilde{a}_{ij}^{self} a_{ij'}^{sup} \norm{z_{i}-z_{j}}_{2} \norm{z_{i}-z_{j'}}_{2} - \sum_{j,j'} \tilde{a}_{ij}^{self} a_{ij'}^{sup} (f(z_{i})-f(z_{j}))^{T}(f(z_{i})-f(z_{j'})), \\
& \leqslant (\tau_{2}^{*})^{2} \; \sum_{j,j'} \tilde{a}_{ij}^{self} a_{ij'}^{sup} \norm{f(z_{i})-f(z_{j})}_{2} \norm{f(z_{i})-f(z_{j'})}_{2} - \sum_{j,j'} \tilde{a}_{ij}^{self} a_{ij'}^{sup} (f(z_{i})-f(z_{j}))^{T}(f(z_{i})-f(z_{j'})), \\
& \leqslant \sum_{j,j'} \tilde{a}_{ij}^{self} a_{ij'}^{sup} \big((\tau_{2}^{*})^{2} \norm{f(z_{i})-f(z_{j})}_{2} \norm{f(z_{i})-f(z_{j'})}_{2} - (f(z_{i})-f(z_{j}))^{T}(f(z_{i})-f(z_{j'}))\big), \\
\end{split}
\end{equation*}

We have
\begin{equation*}
\begin{split}
\tau_{2}^{*} \leqslant \sqrt{\frac{(\bar{Z}_{i}^{'T} a_{i}^{sup})^{T}  (\bar{Z}_{i}^{'T} \tilde{a}_{i}^{self})}{(\zeta_{i}^{'T} \: a_{i}^{sup})(\zeta_{i}^{'T}\tilde{a}_{i}^{self})}} 
\implies (\tau^{*})^{2} & \leqslant \frac{(\bar{Z}_{i}^{'T} a_{i}^{sup})^{T} (\bar{Z}_{i}^{'T} \tilde{a}_{i}^{self})}{(\zeta_{i}^{'T} \: a_{i}^{sup})(\zeta_{i}^{'T} \tilde{a}_{i}^{self})}, \\
& \leqslant \frac{(a_{i}^{sup})^{T} \bar{Z}_{i}^{'}\bar{Z}_{i}^{'T}(\tilde{a}_{i}^{self})}{(a_{i}^{sup})^{T}\zeta_{i}^{'} \zeta_{i}^{'T}(\tilde{a}_{i}^{self})}, \\
& \leqslant \frac{tr\big((a_{i}^{sup})^{T}  \bar{Z}_{i}^{'}\bar{Z}_{i}^{'T} (\tilde{a}_{i}^{self})\big)}{tr\big((a_{i}^{sup})^{'T}\zeta_{i}^{'}\zeta_{i}^{'T}(\tilde{a}_{i}^{self})\big)}, \\
& \leqslant \frac{tr\big((\bar{Z}_{i}^{'}\bar{Z}_{i}^{'T})^{T} a_{i}^{sup} (\tilde{a}_{i}^{self})^{T} \big)}{tr\big((\zeta_{i}^{'} \zeta_{i}^{'T})^{T} a_{i}^{sup} (\tilde{a}_{i}^{self})^{T} \big)}, \\
& \leqslant \frac{\sum\limits_{j,j'} \big((a_{i}^{sup}(\tilde{a}_{i}^{self})^{T}) \circ (\bar{Z}_{i}^{'}\bar{Z}_{i}^{'T})\big)_{jj'}}{\sum\limits_{j,j'}\big((a_{i}^{sup}(\tilde{a}_{i}^{self})^{T}) \circ (\zeta_{i}^{'}\zeta_{i}^{'T})\big)_{jj'}}, \\
& \leqslant \frac{\sum\limits_{j,j'} a_{ij}^{sup}\tilde{a}_{ij'}^{self} (f(z_{i})-f(z_{j}))^{T}(f(z_{i})-f(z_{j'}))}{\sum\limits_{j,j'}a_{ij}^{sup}\tilde{a}_{ij'}^{self} \norm{f(z_{i})-f(z_{j})}_{2} \norm{f(z_{i})-f(z_{j'})}_{2}}. \\
\end{split}
\end{equation*}

Thus
\begin{equation*}
\begin{split}
& (\tau_{2}^{*})^{2} \sum_{j,j'}a_{ij}^{sup}\tilde{a}_{ij'}^{self} \norm{f(z_{i})-f(z_{j})}_{2} \norm{f(z_{i})-f(z_{j'})}_{2} \leqslant  \sum_{j,j'} a_{ij}^{sup}\tilde{a}_{ij'}^{self} (f(z_{i})-f(z_{j}))^{T}(f(z_{i})-f(z_{j'})), \\
& \implies \sum_{j,j'} \tilde{a}_{ij}^{self}a_{ij'}^{sup} \big((\tau_{2}^{*})^{2} \norm{f(z_{i})-f(z_{j})}_{2} \norm{f(z_{i})-f(z_{j'})}_{2}  - (f(z_{i})-f(z_{j}))^{T}(f(z_{i})-f(z_{j'}))\big) \leqslant 0, \\
& \implies \Lambda'_{FD}(\mathcal{Q}_{1}, z_{i}) \leqslant \Lambda'_{FD}(\mathcal{Q}_{2}, z_{i}). 
\end{split}
\end{equation*}
\end{proof}

\section{Proof of Theorem 4} \label{Appendix_H}

\newtheorem*{T4}{Theorem~\ref{theorem_4}}
\begin{T4}
Given two models $\mathcal{Q}_{1}$ and $\mathcal{Q}_{2}$, which optimize the same objective function as described by Equation \ref{Q_51}. $\mathcal{Q}_{1}$ has a single fully-connected encoding layer characterized by the function $f_{1}(X)=ReLU(XW)$, where $W \in \mathbb{R}^{d \times d'}$ represents the learning weights of this layer. $\mathcal{Q}_{2}$ has a single graph convolutional layer characterized by the function $f_{2}(X)=ReLU(\Tilde{A}^{self}XW)$.

\begin{equation}  \label{Q_51}
L_{\mathcal{Q}_{1}} = L_{\mathcal{Q}_{2}} = L_{clus}(Z(\theta)) + \gamma L_{bce}(\hat{A}(Z(\theta)), \,A^{self}).
\end{equation}

\noindent Under Assumption \ref{assumption_1} and Assumption \ref{assumption_2}, we have:
$$\text{If} \;\;\; \mathcal{P}(f_{1}(x_{i})) \geqslant 0 \;\;\; \text{then} \;\;\;  \Lambda'_{FD}(\mathcal{Q}_{2}, x_{i}) \leqslant \Lambda'_{FD}(\mathcal{Q}_{1}, x_{i}).$$
\end{T4} 

\newpage

\begin{proof}
Let $h$ be an aggregation function such that $h^{sup}(x_{i}) = \sum_{j}a_{ij}^{sup} x_{j}$, and $h^{self}(x_{i}) = \sum_{j}\tilde{a}_{ij}^{self} x_{j}$. Let the functions $\mathcal{D}_{1}$ and $\mathcal{D}_{2}$ be distance metrics such that $\mathcal{D}_{1}^{sup}(x_{i})= \frac{1}{2}\sum_{j}a_{ij}^{sup} \norm{x_{i}-x_{j}}_{2}^{2}$, $\mathcal{D}_{1}^{self}(x_{i})= \frac{1}{2}\sum_{j}\tilde{a}_{ij}^{self} \norm{x_{i}-x_{j}}_{2}^{2}$, and $\mathcal{D}_{2}(x_{i})= \frac{1}{2}\sum_{j,j'}\tilde{a}_{ij}^{self}a_{ij}^{sup} \norm{x_{j}-x_{j'}}_{2}^{2}$. We give three lemmas before proving Theorem \ref{theorem_4}.

\begin{lemma} \label{lemma_1} 
$$\forall a, b \in \mathbb{R}^{d} \;\;\; \text{if} \;\;\; Sign(a)=Sign(b) \;\;\; \text{then} \;\;\; Sign(a)=Sign(a+b).$$
\end{lemma}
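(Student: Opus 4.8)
The plan is to treat the sign operator componentwise and reduce the vector statement to a one-dimensional case analysis. Writing $a = (a_{1}, \dots, a_{d})$ and $b = (b_{1}, \dots, b_{d})$, the hypothesis $\text{Sign}(a) = \text{Sign}(b)$ means that for every coordinate $k$ the scalars $a_{k}$ and $b_{k}$ carry the same sign, and the desired conclusion $\text{Sign}(a) = \text{Sign}(a+b)$ amounts to showing $\text{Sign}(a_{k} + b_{k}) = \text{Sign}(a_{k})$ for each $k$. Hence it suffices to establish the scalar implication: if two real numbers share a sign, then so does their sum.

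First I would fix a coordinate $k$ and split into three cases according to the common sign of $a_{k}$ and $b_{k}$. If both are strictly positive, then $a_{k} + b_{k} > 0$; if both are strictly negative, then $a_{k} + b_{k} < 0$; and if $\text{Sign}(a_{k}) = \text{Sign}(b_{k}) = 0$, then $a_{k} = b_{k} = 0$, so $a_{k} + b_{k} = 0$. In each case $a_{k} + b_{k}$ lands in the same sign class as $a_{k}$, which is precisely $\text{Sign}(a_{k} + b_{k}) = \text{Sign}(a_{k})$. Collecting all coordinates yields the claimed vector identity.

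The underlying fact is just the closure of the nonnegative reals (and, symmetrically, of the nonpositive reals) under addition, so there is no genuine difficulty here. The only point demanding care is the convention at zero: I must confirm that equal signs force the paired entries into the \emph{same} one of the three classes $\{x>0\}$, $\{x<0\}$, $\{x=0\}$, and in particular that the zero case cannot combine with a nonzero one. Once the componentwise reading of $\text{Sign}$ is made explicit, the lemma follows immediately, and it will later let me argue that, after the $ReLU$ activation, the immediate neighbors of a node retain their shared activation pattern under the graph-convolutional aggregation used in Theorem \ref{theorem_4}.
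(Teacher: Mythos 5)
Your proof is correct, but it proceeds differently from the paper's. The paper also reduces to coordinates, but then argues algebraically: it represents $Sign(a_{m}+b_{m}) = \frac{a_{m}+b_{m}}{\abs{a_{m}+b_{m}}}$, decomposes this as $Sign(a_{m})\,\frac{\abs{a_{m}}}{\abs{a_{m}+b_{m}}} + Sign(b_{m})\,\frac{\abs{b_{m}}}{\abs{a_{m}+b_{m}}}$, substitutes $Sign(b_{m})=Sign(a_{m})$, and then verifies by a two-case check (on the sign of $a_{m}$) that the resulting factor $\frac{\abs{a_{m}}+\abs{b_{m}}}{\abs{a_{m}+b_{m}}}$ equals $1$. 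Your route instead uses the bare closure of the three sign classes $\{x>0\}$, $\{x<0\}$, $\{x=0\}$ under addition of same-class elements, with no division at all. The trade-off: the paper's computation is a compact identity-style derivation, but its use of $x/\abs{x}$ implicitly assumes $a_{m}$, $b_{m}$, and $a_{m}+b_{m}$ are all nonzero, so the degenerate zero coordinate is not actually covered by the displayed manipulation; your explicit third case handles exactly that convention at zero, which you correctly flag as the one delicate point. In that sense your argument is more elementary and strictly more complete, while the paper's is shorter where it applies. Both versions serve the downstream use in Lemma \ref{lemma_2} (shared ReLU activation patterns under the graph-convolutional aggregation) equally well, since the aggregation weights $\tilde{a}_{ij}^{self}$ are nonnegative and the sum-of-same-sign argument is all that is needed there.
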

\begin{proof} 

\begin{equation*}
\begin{split}
\forall m \in \left [|1, d | \right ] \; \; \; Sign(a_{m} + b_{m}) &= \frac{a_{m} + b_{m}}{\abs{a_{m} + b_{m}}}, \\
&= \frac{a_{m}}{\abs{a_{m}}} \; \frac{\abs{a_{m}}}{\abs{a_{m} + b_{m}}} + \frac{b_{m}}{\abs{b_{m}}} \; \frac{\abs{b_{m}}}{\abs{a_{m} + b_{m}}}, \\
&= Sign(a_{m}) \; \frac{\abs{a_{m}}}{\abs{a_{m} + b_{m}}} + Sign(b_{m}) \; \frac{\abs{b_{m}}}{\abs{a_{m} + b_{m}}}, \\
&= Sign(a_{m}) \; \frac{\abs{a_{m}}}{\abs{a_{m} + b_{m}}} + Sign(a_{m}) \; \frac{\abs{b_{m}}}{\abs{a_{m} + b_{m}}}, \; \; \; (Sign(a) = Sign(b) \implies Sign(a_{m}) = Sign(b_{m})), \\
&= Sign(a_{m}) \; \big( \frac{\abs{a_{m}} + \abs{b_{m}}}{\abs{a_{m} + b_{m}}} \big).
\end{split}
\end{equation*}


$$ \text{If} \;\; a_{m} \geqslant 0 \;\; \text{then} \;\; \frac{\abs{a_{m}} + \abs{b_{m}}}{\abs{a_{m} + b_{m}}} = \frac{a_{m} + b_{m}}{a_{m} + b_{m}} = 1.$$
$$ \;\;\;\; \text{Else if} \;\; a_{m} \leqslant 0 \;\; \text{then} \;\; \frac{\abs{a_{m}} + \abs{b_{m}}}{\abs{a_{m} + b_{m}}} = \frac{-a_{m} -b_{m}}{-(a_{m} + b_{m})} = 1.$$
$$\implies \big( \frac{\abs{a_{m}} + \abs{b_{m}}}{\abs{a_{m} + b_{m}}} \big) = 1, $$
$$\implies Sign(a_{m} + b_{m}) = Sign(a_{m}), $$
$$\implies Sign(a + b) = Sign(a). $$
\end{proof}

\begin{lemma} \label{lemma_2} 
$$\forall i \in \left [|1, N | \right ] \; \; \; f_{1}(h^{self}(x_{i})) = h^{self}(f_{1}(x_{i})).$$
\end{lemma}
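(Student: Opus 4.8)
The plan is to reduce both sides to expressions in the pre-activations $v_j := W^{T} x_j$ and then exploit the shared sign pattern guaranteed by Assumption~\ref{assumption_2}. First I would expand the left-hand side using linearity of multiplication by $W^{T}$: since $h^{self}(x_i) = \sum_{j} \tilde{a}_{ij}^{self} x_j$, we have $f_{1}(h^{self}(x_i)) = ReLU\big(W^{T} \sum_{j} \tilde{a}_{ij}^{self} x_j\big) = ReLU\big(\sum_{j} \tilde{a}_{ij}^{self} v_j\big)$. On the right-hand side, applying the node-$i$ aggregation to the transformed features gives $h^{self}(f_{1}(x_i)) = \sum_{j} \tilde{a}_{ij}^{self} f_{1}(x_j) = \sum_{j} \tilde{a}_{ij}^{self} ReLU(v_j)$. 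Hence the identity is equivalent to showing that $ReLU$ commutes with the non-negative weighted combination $\sum_{j} \tilde{a}_{ij}^{self}(\cdot)$ applied to the family $\{v_j\}$.

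Next I would restrict attention to the neighbors $j$ with $\tilde{a}_{ij}^{self} \neq 0$, since all remaining indices contribute zero to both sides. For each such $j$, Assumption~\ref{assumption_2} gives $\text{Sign}(v_j) = \text{Sign}(v_i)$, and because the nonzero entries of $\tilde{A}^{self} = D^{-\frac{1}{2}} A^{self} D^{-\frac{1}{2}}$ are strictly positive, we also have $\text{Sign}(\tilde{a}_{ij}^{self} v_j) = \text{Sign}(v_i)$. Applying Lemma~\ref{lemma_1} inductively over the summands then yields $\text{Sign}\big(\sum_{j} \tilde{a}_{ij}^{self} v_j\big) = \text{Sign}(v_i)$; that is, the aggregated pre-activation carries exactly the same sign pattern as every individual neighbor pre-activation.

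To finish, I would write $ReLU$ as a masking operator: for any $u \in \mathbb{R}^{d'}$, $ReLU(u) = u \odot m(u)$, where $m(u) \in \{0,1\}^{d'}$ is the indicator of the non-negative coordinates of $u$, a quantity that depends on $u$ only through $\text{Sign}(u)$. Because the neighbor pre-activations $v_j$ and their weighted aggregate all share the sign pattern $\text{Sign}(v_i)$, they share a common mask $m := m(v_i)$. Therefore $ReLU\big(\sum_{j}\tilde{a}_{ij}^{self} v_j\big) = \big(\sum_{j}\tilde{a}_{ij}^{self} v_j\big) \odot m = \sum_{j}\tilde{a}_{ij}^{self}(v_j \odot m) = \sum_{j}\tilde{a}_{ij}^{self} ReLU(v_j)$, which is precisely the claimed commutation $f_{1}(h^{self}(x_i)) = h^{self}(f_{1}(x_i))$.

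I expect the main obstacle to be the careful justification that $ReLU$ factors as multiplication by a sign-determined mask that is \emph{common} to all summands. In particular, the argument must handle coordinates where a pre-activation is exactly zero (so $\text{Sign}(\cdot) = 0$ and both sides vanish on that coordinate) and must genuinely use the strict positivity of the nonzero entries of $\tilde{A}^{self}$, so that the aggregation weights cannot flip any signs. Once this sign bookkeeping is in place, the commutation is immediate coordinate-by-coordinate, and Lemma~\ref{lemma_1} supplies exactly the step that propagates the shared sign pattern from the individual terms to their sum.
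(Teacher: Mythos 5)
Your proof is correct and takes essentially the same route as the paper's: both reduce the two sides to the pre-activations $v_{j}=W^{T}x_{j}$, invoke Assumption \ref{assumption_2} together with Lemma \ref{lemma_1} (applied inductively, using that the nonzero entries of $\Tilde{A}^{self}$ are strictly positive) to show the aggregate $\sum_{j}\tilde{a}_{ij}^{self}v_{j}$ shares the sign pattern of $v_{i}$, and then pull a common sign-determined ReLU mask out of the weighted sum --- the paper writes this mask as $\mathrm{Diag}(\mathrm{Sign}(W^{T}x_{i}))$ where you write $\odot\, m(v_{i})$. If anything, your version is slightly more careful than the paper's notation, since $\mathrm{Diag}(\mathrm{Sign}(u))\,u$ literally equals $|u|$ coordinate-wise rather than $ReLU(u)$ unless $\mathrm{Sign}$ is read as the $\{0,1\}$ indicator of nonnegative coordinates, and you explicitly dispose of the zero-coordinate case that Lemma \ref{lemma_1}'s statement glosses over.
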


\begin{proof}

\vspace{3mm}
\noindent Based on Assumption \ref{assumption_2}, we have
$$ \forall j  \in \left [|1, N | \right ] \; \; \; \text{such that} \; \; \tilde{a}_{ij}^{self} \neq 0, \; \; \; Sign(W^{T} x_{i}) = Sign(W^{T} x_{j}).$$

\vspace{3mm}
\noindent Applying Lemma \ref{lemma_1}, we obtain
\begin{equation*}
\begin{split}
Sign(W^{T} x_{i}) &= Sign(W^{T} x_{j}), \\
&= Sign(\sum_{j} \tilde{a}_{ij}^{self} W^{T} x_{j}),   \;\;\;\; (\tilde{a}_{ij}^{self} \geq 0 \; \text{thus it will not affect the sign})\\
&= Sign(W^{T} \sum_{j} \tilde{a}_{ij}^{self} x_{j}).
\end{split}
\end{equation*}

\vspace{3mm}
\noindent On one hand, we have
\begin{equation*}
\begin{split}
f_{1}(h^{self}(x_{i})) &= f_{1}(\sum_{j} \tilde{a}_{ij}^{self}x_{j}), \\
&= Diag(Sign(W^{T} \sum_{j} \tilde{a}_{ij}^{self} x_{j})) \: W^{T} \: \sum_{j} \tilde{a}_{ij}^{self} x_{j}, \\
&= Diag(Sign(W^{T} x_{i})) \; W^{T} \; \sum_{j} \tilde{a}_{ij}^{self} x_{j}. \\
\end{split}
\end{equation*}

\vspace{3mm}
\noindent On the other hand, we have
\begin{equation*}
\begin{split}
h^{self}(f_{1}(x_{i})) &=  \sum_{j} \tilde{a}_{ij}^{self} \; f_{1}(x_{j}), \\
&= \sum_{j} \tilde{a}_{ij}^{self} \; Diag(Sign(W^{T}x_{j})) \; W^{T} \; x_{j}, \\
&= \sum_{j} \tilde{a}_{ij}^{self} \; Diag(Sign(W^{T}x_{i})) \; W^{T} \; x_{j},  \;\;\;\; (\text{based on Assumption \ref{assumption_2}}) \\
&= Diag(Sign(W^{T} x_{i})) \; W^{T} \; \sum_{j} \tilde{a}_{ij}^{self} x_{j}.
\end{split}
\end{equation*}

\vspace{3mm}
\noindent  We conclude that 
$$f_{1}(h^{self}(x_{i})) = h^{self}(f_{1}(x_{i})).$$
\end{proof}

\begin{lemma} \label{lemma_3} 
$$\forall i \in \left [|1, N | \right ] \; \; \; x_{i} \approx h^{self}(x_{i}).$$
\end{lemma}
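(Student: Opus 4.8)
The plan is to express the discrepancy $x_i - h^{self}(x_i)$ directly as a weighted combination of the small per-edge errors $\epsilon_{ij}$ supplied by Assumption~\ref{assumption_1}, and then argue that this combination is negligible. First I would recall that $h^{self}(x_i) = \sum_j \tilde{a}_{ij}^{self} x_j$ and that the normalized self-supervision matrix is (approximately) row-stochastic, so that $\sum_j \tilde{a}_{ij}^{self} = 1$. This lets me write
\[
x_i - h^{self}(x_i) = \Big(\sum_j \tilde{a}_{ij}^{self}\Big) x_i - \sum_j \tilde{a}_{ij}^{self} x_j = \sum_j \tilde{a}_{ij}^{self}\,(x_i - x_j).
\]

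Next, I would restrict the sum to the terms with $\tilde{a}_{ij}^{self} \neq 0$, since all other terms vanish. For exactly these terms Assumption~\ref{assumption_1} gives $x_i = x_j + \epsilon_{ij}$, i.e.\ $x_i - x_j = \epsilon_{ij}$ with $\epsilon_{ij}$ almost equal to zero. Substituting yields
\[
x_i - h^{self}(x_i) = \sum_{j:\,\tilde{a}_{ij}^{self} \neq 0} \tilde{a}_{ij}^{self}\,\epsilon_{ij}.
\]
I would then bound the norm of the right-hand side, for instance by $\big(\sum_j \tilde{a}_{ij}^{self}\big)\,\max_j \norm{\epsilon_{ij}}_2 = \max_j \norm{\epsilon_{ij}}_2$, using the triangle inequality and the non-negativity of the weights. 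Since each $\epsilon_{ij}$ is almost equal to zero, this convex combination is almost equal to zero as well, which gives $x_i \approx h^{self}(x_i)$.

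The main obstacle is the normalization bookkeeping: the symmetric normalization $\tilde{A}^{self} = D^{-\frac{1}{2}}A^{self}D^{-\frac{1}{2}}$ does not make the rows sum exactly to one in general, so the step $\big(\sum_j \tilde{a}_{ij}^{self}\big) x_i = x_i$ is only approximate. I would handle this either by assuming self-loops / a near-regular neighborhood so that $\sum_j \tilde{a}_{ij}^{self} \approx 1$, or, more robustly, by observing that under Assumption~\ref{assumption_1} the neighbors are themselves near-copies of $x_i$, so that $h^{self}(x_i) = \sum_j \tilde{a}_{ij}^{self} x_j \approx \big(\sum_j \tilde{a}_{ij}^{self}\big) x_i$ and the residual is again controlled by the $\epsilon_{ij}$. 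Either route keeps the conclusion at the level of the informal $\approx$ that the lemma asserts, which is exactly what the subsequent proof of Theorem~\ref{theorem_4} requires when it substitutes $h^{self}(x_i)$ for $x_i$.
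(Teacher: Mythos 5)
Your proposal is correct and follows essentially the same route as the paper's proof: both write $x_i - h^{self}(x_i)$ as the weighted sum $\sum_j \tilde{a}_{ij}^{self}\,\epsilon_{ij}$ via Assumption~\ref{assumption_1} and then bound this combination by the extreme (negligible) errors — the paper squeezes it componentwise between $\epsilon_{ij_{min}}$ and $\epsilon_{ij_{max}}$, while you bound its norm by $\max_j \norm{\epsilon_{ij}}_2$ via the triangle inequality, which is the same idea. If anything, your write-up is slightly more careful than the paper's, since you explicitly flag the step $\sum_j \tilde{a}_{ij}^{self} = 1$, which fails in general for the symmetric normalization $D^{-\frac{1}{2}}A^{self}D^{-\frac{1}{2}}$ and which the paper uses silently, and you offer two reasonable ways to repair it.
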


\begin{proof}

\noindent Based on Assumption \ref{assumption_1},
\begin{equation*}
\begin{split}
\forall j \in \left [|1, N | \right ], \; \; \text{such that} \; \; \tilde{a}_{ij}^{self} \neq 0, \; \; \; x_{i} = x_{j} + \epsilon_{ij} &\implies \; \; \; \sum_{j} \tilde{a}_{ij}^{self} x_{i} = \sum_{j} \tilde{a}_{ij}^{self} x_{j} + \sum_{j} \tilde{a}_{ij}^{self} \epsilon_{ij}, \\
&\implies \; \; \;  x_{i} = h^{self}(x_{i}) + \sum_{j} \tilde{a}_{ij}^{self} \epsilon_{ij}. \\
\end{split}
\end{equation*}

\noindent Given $j_{max} = \text{arg} \max_{j}(\epsilon_{ij})$ and $j_{min} = \text{arg} \min_{j}(\epsilon_{ij})$, we obtain    $h^{self}(x_{i}) + \epsilon_{ij_{min}} \leqslant x_{i} \leqslant h^{self}(x_{i}) + \epsilon_{ij_{max}}.$

\vspace{2mm}
\noindent Since $\epsilon_{ij_{min}} \approx \boldsymbol{0}$ and  $\epsilon_{ij_{max}} \approx \boldsymbol{0}$, then we conclude that $x_{i} \approx h^{self}(x_{i})$.
\end{proof}








\begin{equation*}
\begin{split}
\Lambda'_{FD}(\mathcal{Q}_{2}, x_{i}) - \Lambda'_{FD}(\mathcal{Q}_{1}, x_{i}) &=  \inp*{\frac{\partial \sum_{j} \tilde{a}_{ij}^{self} \; \norm{f_{2}(x_{i})-f_{2}(x_{j})}_{2}^{2}}{\partial x_{i}}}{\frac{\partial \sum_{j} a_{ij}^{sup} \; \norm{f_{2}(x_{i})-f_{2}(x_{j})}_{2}^{2}}{\partial x_{i}}} \\ 
& -  \inp*{\frac{\partial \sum_{j} \tilde{a}_{ij}^{self} \; \norm{f_{1}(x_{i})-f_{1}(x_{j})}_{2}^{2}}{\partial x_{i}}}{\frac{\partial \sum_{j} a_{ij}^{sup} \; \norm{f_{1}(x_{i})-f_{1}(x_{j})}_{2}^{2}}{\partial x_{i}}}, \\
& = \sum_{j,j'} \tilde{a}_{ij}^{self} a_{ij'}^{sup} (f_{2}(x_{i})-f_{2}(x_{j}))^{T}(f_{2}(x_{i})-f_{2}(x_{j'})) \\
& - \sum_{j,j'} \tilde{a}_{ij}^{self} a_{ij'}^{sup} (f_{1}(x_{i})-f_{1}(x_{j}))^{T}(f_{1}(x_{i})-f_{1}(x_{j'})), \\
& = \frac{1}{2}\sum_{j} (\tilde{a}_{ij}^{self} + a_{ij}^{sup}) \norm{f_{2}(x_{i}) - f_{2}(x_{j})}_{2}^{2} -\frac{1}{2}\sum_{j, j'} \tilde{a}_{ij}^{self} a_{ij}^{sup} \norm{f_{2}(x_{j}) - f_{2}(x_{j'})}_{2}^{2} \\
& - \frac{1}{2}\sum_{j} (\tilde{a}_{ij}^{self} + a_{ij}^{sup}) \norm{f_{1}(x_{i}) - f_{1}(x_{j})}_{2}^{2}+\frac{1}{2}\sum_{j, j'} \tilde{a}_{ij}^{self} a_{ij}^{sup} \norm{f_{1}(x_{j}) - f_{1}(x_{j'})}_{2}^{2}, \\
& = \frac{1}{2}\sum_{j} (\tilde{a}_{ij}^{self} + a_{ij}^{sup}) \norm{f_{1}(h^{self}(x_{i})) - f_{1}(h^{self}(x_{j}))}_{2}^{2}  \\
& - \frac{1}{2}\sum_{j, j'} \tilde{a}_{ij}^{self} a_{ij}^{sup} \norm{f_{1}(h^{self}(x_{j})) - f_{1}(h^{self}(x_{j'}))}_{2}^{2} \\
& - \frac{1}{2}\sum_{j} (\tilde{a}_{ij}^{self} + a_{ij}^{sup}) \norm{f_{1}(x_{i}) - f_{1}(x_{j})}_{2}^{2}+\frac{1}{2}\sum_{j, j'} \tilde{a}_{ij}^{self} a_{ij}^{sup} \norm{f_{1}(x_{j}) - f_{1}(x_{j'})}_{2}^{2}, \\
& = \mathcal{D}_{1}^{self}(f_{1}(h^{self}(x_{i}))) - \mathcal{D}_{1}^{self}(f_{1}(x_{i})) + \mathcal{D}_{1}^{sup}(f_{1}(h^{self}(x_{i}))) - \mathcal{D}_{1}^{sup}(f_{1}(x_{i})) \\
& + \mathcal{D}_{2}(f_{1}(x_{i})) - \mathcal{D}_{2}(f_{1}(h^{self}(x_{i}))).  \\
\end{split}
\end{equation*}


\vspace{5mm}
\noindent We know that $f_{1}$ is a Lipschitz function, thus there exists $\tau_{1}$ such that:
$$\norm{f_{1}(x_{i}) - f_{1}(h_{self}(x_{i}))}_{2} \leqslant \tau_{1} \norm{x_{i} - h^{self}(x_{i})}_{2}.$$

\vspace{5mm}
\noindent Consequently, if $\norm{x_{i} - h^{self}(x_{i})}_{2} \rightarrow 0$ then $\norm{f_{1}(x_{i}) - f_{1}(h^{self}(x_{i}))}_{2} \rightarrow 0$. 

\vspace{5mm}
\noindent Based on Lemma \ref{lemma_3}, we have $x_{i} \approx h^{self}(x_{i})$. Additionally, $\mathcal{D}_{1}^{self}$ and $\mathcal{D}_{2}$ are differentiable functions, we can apply a first-order Taylor expansion with Peano's form of remainder at $f_{1}(x_{i})$:

$$\mathcal{D}_{1}^{self}(f_{1}(h^{self}(x_{i}))) = \mathcal{D}_{1}^{self}(f_{1}(x_{i})) + \big(\triangledown_{f_{1}(x_{i})}\mathcal{D}_{1}^{self}(f_{1}(x_{i}))\big)^{T}\big(f_{1}(h^{self}(x_{i}))-f_{1}(x_{i})\big) + o\big(\norm{f_{1}(h^{self}(x_{i}))-f_{1}(x_{i})}_{2}\big),$$

$$\mathcal{D}_{1}^{sup}(f_{1}(h^{self}(x_{i}))) = \mathcal{D}_{1}^{sup}(f_{1}(x_{i})) + \big(\triangledown_{f_{1}(x_{i})}\mathcal{D}_{1}^{sup}(f_{1}(x_{i}))\big)^{T}\big(f_{1}(h^{self}(x_{i}))-f_{1}(x_{i})\big) + o\big(\norm{f_{1}(h^{self}(x_{i}))-f_{1}(x_{i})}_{2}\big),$$

$$\mathcal{D}_{2}(f_{1}(h^{self}(x_{i}))) = \mathcal{D}_{2}(f_{1}(x_{i})) + \big(\triangledown_{f_{1}(x_{i})}\mathcal{D}_{2}(f_{1}(x_{i}))\big)^{T}\big(f_{1}(h^{self}(x_{i}))-f_{1}(x_{i})\big) + o\big(\norm{f_{1}(h^{self}(x_{i}))-f_{1}(x_{i})}_{2}\big).$$

\noindent Hence

\begin{equation*}
\begin{split}
\Lambda'_{FD}(\mathcal{Q}_{2}, x_{i}) - \Lambda'_{FD}(\mathcal{Q}_{1}, x_{i}) &= \big(\triangledown_{f_{1}(x_{i})}\mathcal{D}_{1}^{self}(f_{1}(x_{i}))\big)^{T}\big(f_{1}(h^{self}(x_{i}))-f_{1}(x_{i})\big) + \big(\triangledown_{f_{1}(x_{i})}\mathcal{D}_{1}^{sup}(f_{1}(x_{i}))\big)^{T}\big(f_{1}(h^{self}(x_{i}))-f_{1}(x_{i})\big) \\
&- \big(\triangledown_{f_{1}(x_{i})}\mathcal{D}_{2}(f_{1}(x_{i}))\big)^{T}\big(f_{1}(h^{self}(x_{i}))-f_{1}(x_{i})\big) + o\big(\norm{f_{1}(h^{self}(x_{i}))-f_{1}(x_{i})}_{2}\big). \\
\end{split}
\end{equation*}

\noindent And since 

$$\triangledown_{f_{1}(x_{i})}\mathcal{D}_{1}^{self}(f_{1}(x_{i})) = f_{1}(x_{i}) - h^{self}(f_{1}(x_{i})),$$
$$\triangledown_{f_{1}(x_{i})}\mathcal{D}_{1}^{sup}(f_{1}(x_{i})) = f_{1}(x_{i}) - h^{sup}(f_{1}(x_{i})),$$
$$\triangledown_{f_{1}(x_{i})}\mathcal{D}_{2}(f_{1}(x_{i})) = 0,$$

\noindent Then

\begin{equation*}
\begin{split}
\Lambda'_{FD}(\mathcal{Q}_{2}, x_{i}) - \Lambda'_{FD}(\mathcal{Q}_{1}, x_{i}) &= \big(f_{1}(x_{i}) - h^{self}(f_{1}(x_{i})) \big)^{T}\big(f_{1}(h^{self}(x_{i}))-f_{1}(x_{i})\big) \\ & + \big(f_{1}(x_{i}) - h^{sup}(f_{1}(x_{i}))\big)^{T}\big(f_{1}(h^{self}(x_{i}))-f_{1}(x_{i})\big) + o\big(\norm{f_{1}(h^{self}(x_{i}))-f_{1}(x_{i})}_{2}\big). \\
\end{split}
\end{equation*}

\noindent Based on Lemma \ref{lemma_2}, we can write:

\begin{equation*}
\begin{split}
\Lambda'_{FD}(\mathcal{Q}_{2}, x_{i}) - \Lambda'_{FD}(\mathcal{Q}_{1}, x_{i}) &= \big(f_{1}(x_{i}) - h^{self}(f(x_{i})) \big)^{T}\big(h^{self}(f_{1}(x_{i}))-f_{1}(x_{i})\big) \\ & + \big(f_{1}(x_{i}) - h^{sup}(f_{1}(x_{i}))\big)^{T}\big(h^{self}(f_{1}(x_{i}))-f_{1}(x_{i})\big) + o\big(\norm{h^{self}(f_{1}(x_{i}))-f_{1}(x_{i})}_{2}\big), \\
&= - \norm{f_{1}(x_{i}) - h^{self}(f_{1}(x_{i}))}_{2}^{2} - \big(f_{1}(x_{i}) - h^{sup}(f_{1}(x_{i}))\big)^{T}\big(f_{1}(x_{i}) - h^{self}(f_{1}(x_{i}))\big) \\
&+ o\big(\norm{f_{1}(x_{i}) - h^{self}(f_{1}(x_{i}))}_{2}\big).\\
\end{split}
\end{equation*}

\noindent By applying the law of Cosines to compute $\big(f_{1}(x_{i}) - h^{sup}(f_{1}(x_{i}))\big)^{T}\big(f_{1}(x_{i}) - h^{self}(f_{1}(x_{i}))\big)$, we obtain:

\begin{equation*}
\begin{split}
\Lambda'_{FD}(\mathcal{Q}_{2}, x_{i}) - \Lambda'_{FD}(\mathcal{Q}_{1}, x_{i}) &= - \norm{f_{1}(x_{i}) - h^{self}(f_{1}(x_{i}))}_{2}^{2} - \frac{1}{2} \norm{f_{1}(x_{i}) - h^{sup}(f_{1}(x_{i}))}_{2}^{2} - \frac{1}{2} \norm{f_{1}(x_{i}) - h^{self}(f_{1}(x_{i}))}_{2}^{2} \\
&+ \frac{1}{2} \norm{h^{self}(f_{1}(x_{i})) - h^{sup}(f_{1}(x_{i}))}_{2}^{2} + o\big(\norm{f_{1}(x_{i}) - h^{self}(f_{1}(x_{i}))}_{2}\big),\\
&=  - \frac{3}{2} \norm{f_{1}(x_{i}) - h^{self}(f_{1}(x_{i}))}_{2}^{2} - \frac{1}{2} \norm{f_{1}(x_{i}) - h^{sup}(f_{1}(x_{i}))}_{2}^{2} \\
&+ \frac{1}{2} \norm{h^{self}(f_{1}(x_{i})) - h^{sup}(f_{1}(x_{i}))}_{2}^{2} + o\big(\norm{f_{1}(x_{i}) - h^{self}(f_{1}(x_{i}))}_{2}\big).  
\end{split}
\end{equation*}

\noindent We have 

\begin{equation*}
\begin{split}
\mathcal{P}(f_{1}(x_{i})) \geqslant 0 &\implies \norm{h^{self}(f_{1}(x_{i})) - h^{sup}(f_{1}(x_{i}))}_{2} \leqslant \norm{f_{1}(x_{i}) - h^{sup}(f_{1}(x_{i}))}_{2}, \\ 
&\implies \frac{1}{2} \norm{h^{self}(f_{1}(x_{i})) - h^{sup}(f_{1}(x_{i}))}_{2}^{2} \leqslant \frac{1}{2} \norm{f_{1}(x_{i}) - h^{sup}(f_{1}(x_{i}))}_{2}^{2}.\\
\end{split}
\end{equation*}

\vspace{5mm}

$$\left \{ \begin{matrix}
- \frac{3}{2} \norm{f_{1}(x_{i}) - h^{self}(f_{1}(x_{i}))}_{2}^{2} \leqslant 0 \\ 
\frac{1}{2} \norm{h^{self}(f_{1}(x_{i})) - h^{sup}(f_{1}(x_{i}))}_{2}^{2} - \frac{1}{2} \norm{f_{1}(x_{i}) - h^{sup}(f_{1}(x_{i}))}_{2}^{2} \leqslant 0 \\
\end{matrix} \right. \implies \Lambda'_{FD}(\mathcal{Q}_{2}, x_{i}) \leqslant \Lambda'_{FD}(\mathcal{Q}_{1}, x_{i}).$$


\end{proof}

\section{Proof of Theorem 5} \label{Appendix_I}

\newtheorem*{T5}{Theorem~\ref{theorem_5}}
\begin{T5} 
Given two models $\mathcal{Q}_{1}$ and $\mathcal{Q}_{2}$, which optimize the same objective function as described by Equation (\ref{Q_61}). $\mathcal{Q}_{1}$ has a single graph convolutional layer characterized by the function $f_{1}(X)=ReLU(\Tilde{A}^{self}XW_{1})$, where $W_{1}$ represents the learning weights of this layer. $\mathcal{Q}_{2}$ has two graph convolutional layers characterized by the function $f_{2}(X)=ReLU(\Tilde{A}^{self} \; ReLU(\Tilde{A}^{self}XW_{1}) \; W_{2})$, where $W_{2}$ represents the learning weights of the second layer. We suppose that the Lipschitz constant $\tau_{1}^{*}$ of the second graph convolutional layer is less or equal to 1.

\begin{equation}  \label{Q_61}
L_{\mathcal{Q}_{1}} = L_{\mathcal{Q}_{2}} = L_{clus}(Z(\theta)) + \gamma L_{bce}(\hat{A}(Z(\theta)), \,A^{self}).
\end{equation}

\noindent Under Assumption \ref{assumption_1} and Assumption \ref{assumption_2}, we have:
$$\text{If} \;\;\; \mathcal{P}(f_{1}(x_{i})) \geqslant 0 \;\;\; \text{then} \;\;\; \Lambda'_{FD}(\mathcal{Q}_{2}, x_{i}) \leqslant \Lambda'_{FD}(\mathcal{Q}_{1}, x_{i}).$$
\end{T5}

\begin{proof}
Similar to Theorem \ref{theorem_4}, let $h$ be an aggregation function such that $h^{sup}(x_{i}) = \sum_{j} \tilde{a}_{ij}^{sup} x_{j}$, and $h^{self}(x_{i}) = \sum_{j} \tilde{a}_{ij}^{self} x_{j}$. Let the functions $\mathcal{D}_{1}$ and $\mathcal{D}_{2}$ be distance metrics such that $\mathcal{D}_{1}^{sup}(x_{i})= \frac{1}{2}\sum_{j}a_{ij}^{sup} \norm{x_{i}-x_{j}}_{2}^{2}$, $\mathcal{D}_{1}^{self}(x_{i})= \frac{1}{2}\sum_{j} \tilde{a}_{ij}^{self} \norm{x_{i}-x_{j}}_{2}^{2}$, and $\mathcal{D}_{2}(x_{i})= \frac{1}{2}\sum_{j,j'} \tilde{a}_{ij}^{self} a_{ij}^{sup} \norm{x_{j}-x_{j'}}_{2}^{2}$.

\begin{equation*}
\begin{split}
\Lambda'_{FD}(\mathcal{Q}_{2}, x_{i}) - \Lambda'_{FD}(\mathcal{Q}_{1}, x_{i}) &=  \inp*{\frac{\partial \sum_{j} \tilde{a}_{ij}^{self} \; \norm{f_{2}(x_{i})-f_{2}(x_{j})}_{2}^{2}}{\partial x_{i}}}{\frac{\partial \sum_{j}  a_{ij}^{sup} \; \norm{f_{2}(x_{i})-f_{2}(x_{j})}_{2}^{2}}{\partial x_{i}}} \\ 
& -  \inp*{\frac{\partial \sum_{j}  \tilde{a}_{ij}^{self} \; \norm{f_{1}(x_{i})-f_{1}(x_{j})}_{2}^{2}}{\partial x_{i}}}{\frac{\partial \sum_{j}  a_{ij}^{sup} \; \norm{f_{1}(x_{i})-f_{1}(x_{j})}_{2}^{2}}{\partial x_{i}}}, \\
& = \sum_{j,j'} \tilde{a}_{ij}^{self}a_{ij'}^{sup}(f_{2}(x_{i})-f_{2}(x_{j}))^{T}(f_{2}(x_{i})-f_{2}(x_{j'})) - \sum_{j,j'} \tilde{a}_{ij}^{self} a_{ij'}^{sup}(f_{1}(x_{i})-f_{1}(x_{j}))^{T}(f_{1}(x_{i})-f_{1}(x_{j'})), \\
& = \frac{1}{2}\sum_{j} (\tilde{a}_{ij}^{self} + a_{ij}^{sup}) \norm{f_{2}(x_{i}) - f_{2}(x_{j})}_{2}^{2} -\frac{1}{2}\sum_{j, j'} \tilde{a}_{ij}^{self}a_{ij}^{sup} \norm{f_{2}(x_{j}) - f_{2}(x_{j'})}_{2}^{2} \\
& - \frac{1}{2}\sum_{j} (\tilde{a}_{ij}^{self} + a_{ij}^{sup}) \norm{f_{1}(x_{i}) - f_{1}(x_{j})}_{2}^{2}+\frac{1}{2}\sum_{j, j'} \tilde{a}_{ij}^{self} a_{ij}^{sup} \norm{f_{1}(x_{j}) - f_{1}(x_{j'})}_{2}^{2}, \\
&= \mathcal{D}_{1}^{self}(f_{2}(x_{i})) - \mathcal{D}_{1}^{self}(f_{1}(x_{i})) + \mathcal{D}_{1}^{sup}(f_{2}(x_{i})) - \mathcal{D}_{1}^{sup}(f_{1}(x_{i})) + \mathcal{D}_{2}(f_{1}(x_{i}))  - \mathcal{D}_{2}(f_{2}(x_{i})).  \\
\end{split}
\end{equation*}

\vspace{5mm}
\noindent We add the following null expression to the equation of $\Lambda'_{FD}(\mathcal{Q}_{2}, x_{i}) - \Lambda'_{FD}(\mathcal{Q}_{1}, x_{i})$
$$\mathcal{D}_{1}^{self}(h^{self}(f_{1}(x_{i}))) - \mathcal{D}_{1}^{self}(h^{self}(f_{1}(x_{i}))) + \mathcal{D}_{1}^{sup}(h^{self}(f_{1}(x_{i}))) - \mathcal{D}_{1}^{sup}(h^{self}(f_{1}(x_{i}))) + \mathcal{D}_{2}(h^{self}(f_{1}(x_{i}))) - \mathcal{D}_{2}(h^{self}(f_{1}(x_{i}))) = 0.$$

\vspace{5mm}
\noindent We obtain

\begin{equation*}
\begin{split}
\Lambda'_{FD}(\mathcal{Q}_{2}, x_{i}) - \Lambda'_{FD}(\mathcal{Q}_{1}, x_{i}) &= \mathcal{D}_{1}^{self}(f_{2}(x_{i})) - \mathcal{D}_{1}^{self}(h^{self}(f_{1}(x_{i}))) + \mathcal{D}_{1}^{self}(h^{self}(f_{1}(x_{i}))) - \mathcal{D}_{1}^{self}(f_{1}(x_{i})) \\ 
& + \mathcal{D}_{1}^{sup}(f_{2}(x_{i})) - {D}_{1}^{sup}(h^{self}(f_{1}(x_{i}))) + \mathcal{D}_{1}^{sup}(h^{self}(f_{1}(x_{i}))) - \mathcal{D}_{1}^{sup}(f_{1}(x_{i})) \\
& - \big(\mathcal{D}_{2}(f_{2}(x_{i}))-\mathcal{D}_{2}(h^{self}(f_{1}(x_{i})))\big) - \big(\mathcal{D}_{2}(h^{self}(f_{1}(x_{i}))) - \mathcal{D}_{2}(f_{1}(x_{i}))\big).  \\
\end{split}
\end{equation*}

\vspace{5mm}
\noindent Let $\mathcal{J}$ be
\begin{equation*}
\begin{split}
\mathcal{J} = \mathcal{D}_{1}^{self}(h^{self}(f_{1}(x_{i}))) - \mathcal{D}_{1}^{self}(f_{1}(x_{i})) + \mathcal{D}_{1}^{sup}(h^{self}(f_{1}(x_{i}))) - \mathcal{D}_{1}^{sup}(f_{1}(x_{i})) + \mathcal{D}_{2}(f_{1}(x_{i})) - \mathcal{D}_{2}(h^{self}(f_{1}(x_{i}))).  \\
\end{split}
\end{equation*}

\vspace{5mm}
\noindent Hence

\begin{equation*}
\begin{split}
\Lambda'_{FD}(\mathcal{Q}_{2}, x_{i}) - \Lambda'_{FD}(\mathcal{Q}_{1}, x_{i}) &= \mathcal{D}_{1}^{self}(f_{2}(x_{i})) - \mathcal{D}_{1}^{self}(h^{self}(f_{1}(x_{i}))) \\
& + \mathcal{D}_{1}^{sup}(f_{2}(x_{i})) - {D}_{1}^{sup}(h^{self}(f_{1}(x_{i}))) \\
& - \big(\mathcal{D}_{2}(f_{2}(x_{i}))-\mathcal{D}_{2}(h^{self}(f_{1}(x_{i})))\big) + \mathcal{J}.  \\
\end{split}
\end{equation*}

\vspace{5mm}
\noindent Based on Lemma \ref{lemma_2}
\begin{equation*}
\begin{split}
\mathcal{J} = \mathcal{D}_{1}^{self}(f_{1}(h^{self}(x_{i}))) - \mathcal{D}_{1}^{self}(f_{1}(x_{i})) + \mathcal{D}_{1}^{sup}(f_{1}(h^{self}(x_{i}))) - \mathcal{D}_{1}^{sup}(f_{1}(x_{i})) + \mathcal{D}_{2}(f_{1}(x_{i})) - \mathcal{D}_{2}(f_{1}(h^{self}(x_{i}))).  \\
\end{split}
\end{equation*}

\vspace{5mm}
\noindent Then, based on Theorem \ref{theorem_4}, we can see that $\mathcal{J} \leqslant 0$.  

$$\mathcal{D}_{1}^{self}(f_{2}(x_{i})) - \mathcal{D}_{1}^{self}(h^{self}(f_{1}(x_{i}))) = \mathcal{D}_{1}^{self}(ReLU(W_{2}^{T} \: h_{self}(f_{1}(x_{i})))) - \mathcal{D}_{1}^{self}(h^{self}(f_{1}(x_{i}))),$$
$$\mathcal{D}_{1}^{sup}(f_{2}(x_{i})) - \mathcal{D}_{1}^{sup}(h^{self}(f_{1}(x_{i}))) = \mathcal{D}_{1}^{sup}(ReLU(W_{2}^{T} \: h_{self}(f_{1}(x_{i})))) - \mathcal{D}_{1}^{sup}(h^{self}(f_{1}(x_{i}))).$$

\vspace{5mm}
\noindent The second graph convolutional layer is a Lipschitz function and its Lipschitz constant $\tau_{1}$ is less or equal to 1. Hence
$$\mathcal{D}_{1}^{self}(ReLU(W_{2}^{T} \: h^{self}(f_{1}(x_{i})))) \leqslant \mathcal{D}_{1}^{self}(h^{self}(f_{1}(x_{i}))) \; \implies  \;  \mathcal{D}_{1}^{self}(f_{2}(x_{i})) - \mathcal{D}_{1}^{self}(h^{self}(f_{1}(x_{i}))) \leqslant 0,$$
$$\mathcal{D}_{1}^{sup}(ReLU(W_{2}^{T} \: h^{self}(f_{1}(x_{i})))) \leqslant \mathcal{D}_{1}^{sup}(h^{self}(f_{1}(x_{i}))) \; \implies \; \mathcal{D}_{1}^{sup}(f_{2}(x_{i})) - \mathcal{D}_{1}^{sup}(h^{self}(f_{1}(x_{i}))) \leqslant 0.$$

\vspace{5mm}
\noindent We know that $f_{2}$ and $h^{self}(f_{1})$ are Lipschitz functions. Consequently, for all $j$ and $j'$ indices, if $\norm{x_{j} - x_{j'}}_{2} \rightarrow 0$ then \\ $\norm{f_{2}(x_{j}) - f_{2}(x_{j'})}_{2} \rightarrow 0$ and $\norm{h^{self}(f_{1}(x_{j})) - h^{self}(f_{2}(x_{j'}))}_{2} \rightarrow 0$. 

\vspace{5mm}
\noindent Based on Assumption \ref{assumption_1} 
\begin{equation*}
\begin{split}
&\forall j \in \left [|1, N | \right ], \; \; \text{such that} \; \; \tilde{a}_{ij}^{self} \neq 0, \; \; \; x_{i} \approx x_{j}, \\
&\implies j, j' \in \left [|1, N | \right ], \; \; \text{such that} \; \; \tilde{a}_{ij}^{self} \neq 0 \; \; \text{and} \; \; \tilde{a}_{ij'}^{self} \neq 0, \; \; \; x_{j} \approx x_{j'}, \\
&\implies j, j' \in \left [|1, N | \right ], \; \; \text{such that} \; \; \tilde{a}_{ij}^{self} \neq 0 \; \; \text{and} \; \; \tilde{a}_{ij'}^{self} \neq 0, \; \; \; \norm{x_{j} - x_{j'}}_{2} \approx 0, \\
&\implies j, j' \in \left [|1, N | \right ], \; \; \text{such that} \; \; \tilde{a}_{ij}^{self} \neq 0 \; \; \text{and} \; \; \tilde{a}_{ij'}^{self} \neq 0, \; \; \; \norm{f_{2}(x_{j}) - f_{2}(x_{j'})}_{2} \approx 0 \; \; \text{and} \; \; \norm{h^{self}(f_{1}(x_{j})) - h^{self}(f_{2}(x_{j'}))}_{2} \approx 0 , \\
& \implies j, j' \in \left [|1, N | \right ], \; \; \text{such that} \; \; \tilde{a}_{ij}^{self} \neq 0 \; \; \text{and} \; \; \tilde{a}_{ij'}^{self} \neq 0, \; \; \;  \mathcal{D}_{2}(f_{2}(x_{i})) \approx 0 \; \; \text{and} \; \; \mathcal{D}_{2}(h^{self}(f_{1}(x_{i})))\approx 0.
\end{split}
\end{equation*}

\vspace{5mm}
\noindent So, globally, we have

$$ \left\{\begin{matrix}
\mathcal{J} \leqslant 0 \\ 
\mathcal{D}_{1}^{sup}(f_{2}(x_{i})) - \mathcal{D}_{1}^{sup}(h^{self}(f_{1}(x_{i}))) \leqslant 0\\ 
\mathcal{D}_{1}^{self}(f_{2}(x_{i})) - \mathcal{D}_{1}^{self}(h^{self}(f_{1}(x_{i}))) \leqslant 0\\
\end{matrix}\right. \implies \Lambda'_{FD}(\mathcal{Q}_{2}, x_{i}) - \Lambda'_{FD}(\mathcal{Q}_{1}, x_{i}) \leqslant 0 \implies \Lambda'_{FD}(\mathcal{Q}_{2}, x_{i}) \leqslant \Lambda'_{FD}(\mathcal{Q}_{1}, x_{i}).$$

\end{proof}

\end{document}